\theoremstyle{plain}
\newtheorem{theorem}{Theorem}[section]
\newtheorem{proposition}[theorem]{Proposition}
\newtheorem{Lemma}[theorem]{Lemma}
\crefname{Lemma}{Lemma}{Lemmas}
\newtheorem{corollary}[theorem]{Corollary}
\theoremstyle{definition}
\newtheorem{remark}[theorem]{Remark}
\newtheorem{setting}{Setting}
\newcommand{\danica}[2][noinline]{\todo[color=violet!20,#1]{Danica: #2}}
\DeclareMathOperator{\bigO}{\mathcal{O}}
\def\polylog{\operatorname{polylog}}
\DeclareMathOperator*{\E}{\mathbb{E}}
\newcommand{\dset}{\mathcal{D}}
\DeclareMathOperator{\tr}{tr}
\newcommand{\tp}{^\mathsf{T}}
\newcommand{\cX}{\mathcal{X}}
\newcommand{\cY}{\mathcal{Y}}
\newcommand{\N}{\mathcal{N}}
\newcommand{\R}{\mathbb{R}}
\newcommand{\WF}{\mathrm{WF}}
\newcommand{\eNTK}{eNTK}
\newcommand{\meNTK}{\Theta_\theta}
\newcommand{\pNTK}{pNTK}
\newcommand{\mpNTK}{\hat{\Theta}_\theta}
\newcommand{\abs}[1]{\lvert #1 \rvert}
\newcommand{\Abs}[1]{\left\lvert #1 \right\rvert}
\newcommand{\norm}[1]{\lVert #1 \rVert}
\DeclareRobustCommand\onedot{\futurelet\@let@token\@onedot}
\def\@onedot{\ifx\@let@token.\else.\null\fi\xspace}
\def\eg{\emph{e.g}\onedot}
\icmltitlerunning{A Fast, Well-Founded Approximation to the Empirical Neural Tangent Kernel%
}
\begin{document}

\twocolumn[
\icmltitle{A Fast, Well-Founded Approximation to the Empirical Neural Tangent Kernel}

\icmlsetsymbol{equal}{*}

\begin{icmlauthorlist}
\icmlauthor{Mohamad Amin Mohamadi}{ubc}
\icmlauthor{Wonho Bae}{ubc}
\icmlauthor{Danica J.\ Sutherland}{ubc,amii}
\end{icmlauthorlist}

\icmlaffiliation{ubc}{Computer Science Department, University of British Columbia, Vancouver, Canada}
\icmlaffiliation{amii}{Alberta Machine Intelligence Institute, Edmonton, Canada}

\icmlcorrespondingauthor{Mohamad Amin Mohamadi}{lemohama@cs.ubc.ca}
\icmlcorrespondingauthor{Wonho Bae}{whbae@cs.ubc.ca}
\icmlcorrespondingauthor{Danica J.\ Sutherland}{dsuth@cs.ubc.ca}

\icmlkeywords{Machine Learning, ICML}

\vskip 0.3in
]

\printAffiliationsAndNotice{}  %

\begin{abstract}
Empirical neural tangent kernels (eNTKs) can provide a good understanding of a given network's representation: they are often far less expensive to compute and applicable more broadly than infinite-width NTKs. For networks with $O$ output units (\eg\ an $O$-class classifier), however, the eNTK on $N$ inputs is of size $NO \times NO$, taking $\bigO\big( (N O)^2\big)$ memory and up to $\bigO\big( (N O)^3 \big)$ computation to use. Most existing applications have therefore used one of a handful of approximations yielding $N \times N$ kernel matrices, saving orders of magnitude of computation, but with limited to no justification. We prove that one such approximation, which we call ``sum of logits,'' converges to the true eNTK at initialization. Our experiments demonstrate the quality of this approximation for various uses across a range of settings.
\end{abstract}

\section{Introduction}

The pursuit of a theoretical foundation for deep learning has lead researches to uncover interesting connections between neural networks (NNs) and kernel methods.
It has long been known that randomly initialized NNs in the infinite width limit are Gaussian processes with the Neural Network Gaussian Process (NNGP) kernel,
and training the last layer with gradient flow under squared loss corresponds to the posterior mean
\citep{neal1996priors,williams1996computing,hazan2015steps,lee2017deep,matthews2018gaussian,novak2018bayesian,yang2019wide}.
More recently, \citet{ntk2018jacot} built off a line of closely related prior work to show that the same is true with a different kernel, the Neural Tangent Kernel (NTK), if we train all the parameters of the network.
\citet{yang2020tensor,yang2021tensor} showed this holds not just for fully-connected NNs but universally across architectures, including ResNets and Transformers.
\citet{linntk2019lee} also showed that the dynamics of training wide but finite-width NNs with gradient descent can be approximated by a linear model obtained from the first-order Taylor expansion of that network around its initialization.
Furthermore, they experimentally showed that this approximation approximation excellently holds even for networks that are not so wide.

In addition to theoretical insights,
NTKs have had significant impact in diverse practical settings.
\citet{arora2019harnessing} show very strong performance of NTK-based models on a variety of low-data classification and regression tasks.
The condition number of an NN's NTK has been shown correlation directly with the trainability and generalization capabilities of the NN \citep{xiao2018dynamical, xiao2020disentangling};
thus, \citet{park2020towards,chen2021vision} have used this to develop practical algorithms for neural architecture search.
\citet{wei2022more,bachmann2022generalization} estimate the generalization ability of a specific network, randomly initialized or pre-trained on a different dataset, with efficient cross-validation.
\citet{zhou2021meta} use NTK regression for efficient meta-learning, and \citet{wang2021deep,holzmuller2022framework,mohamadi:active-ntk} use NTKs for active learning.

\begin{figure*}
    \centering
        \includegraphics[width=0.32\textwidth]{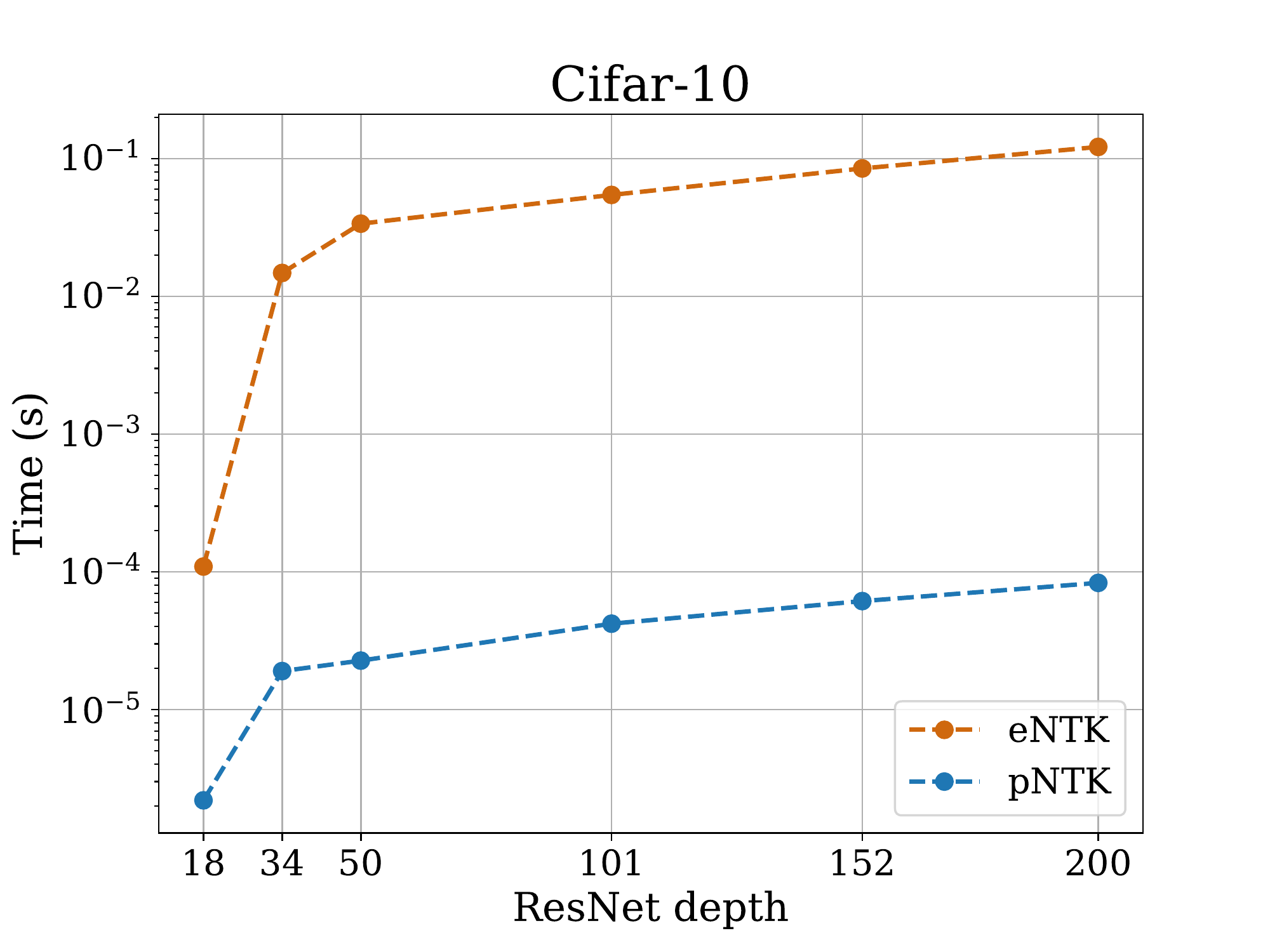}
    \hspace*{-1em}
        \includegraphics[width=0.32\textwidth]{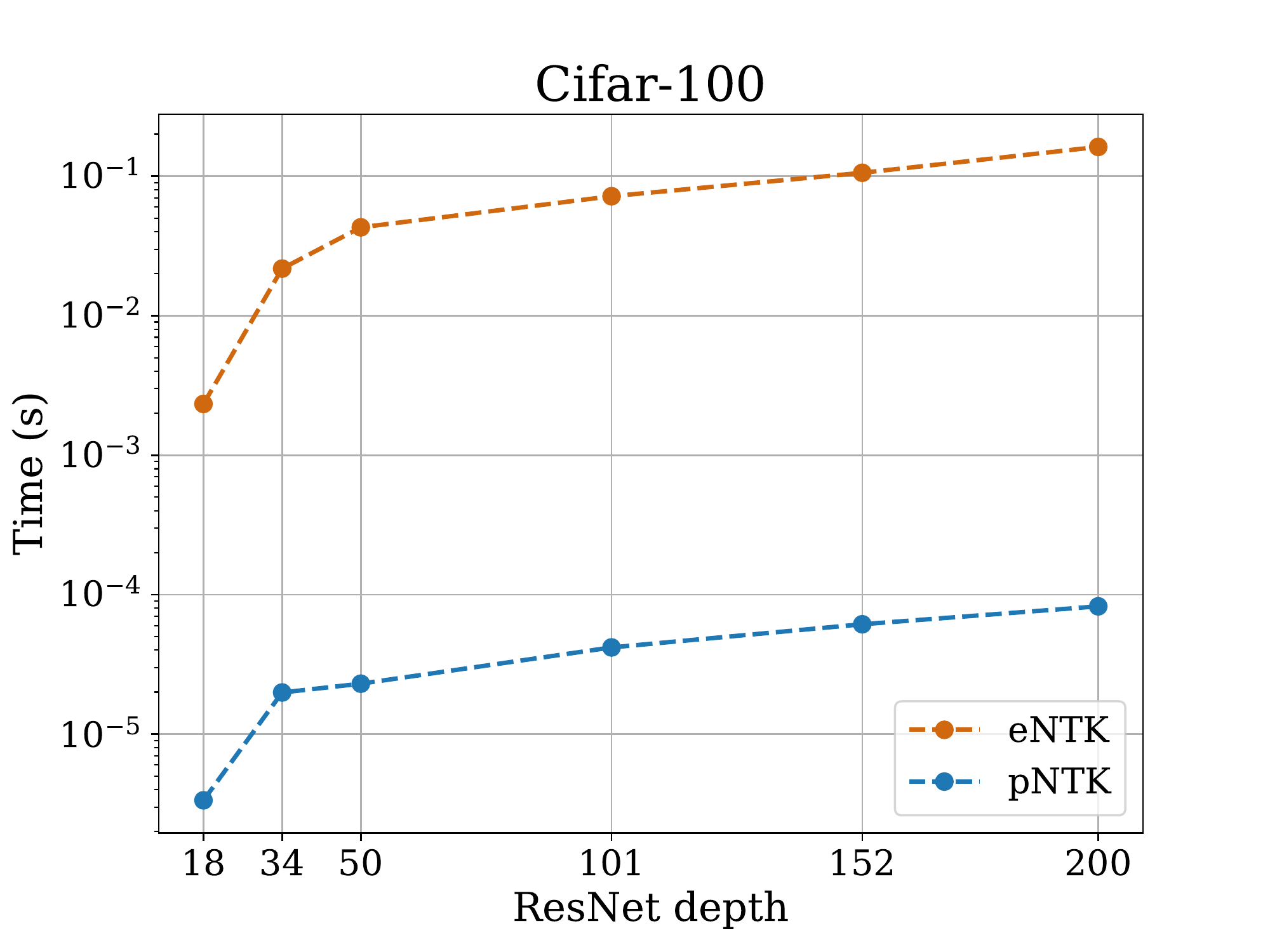}
    \hspace*{-1em}
        \includegraphics[width=0.32\textwidth]{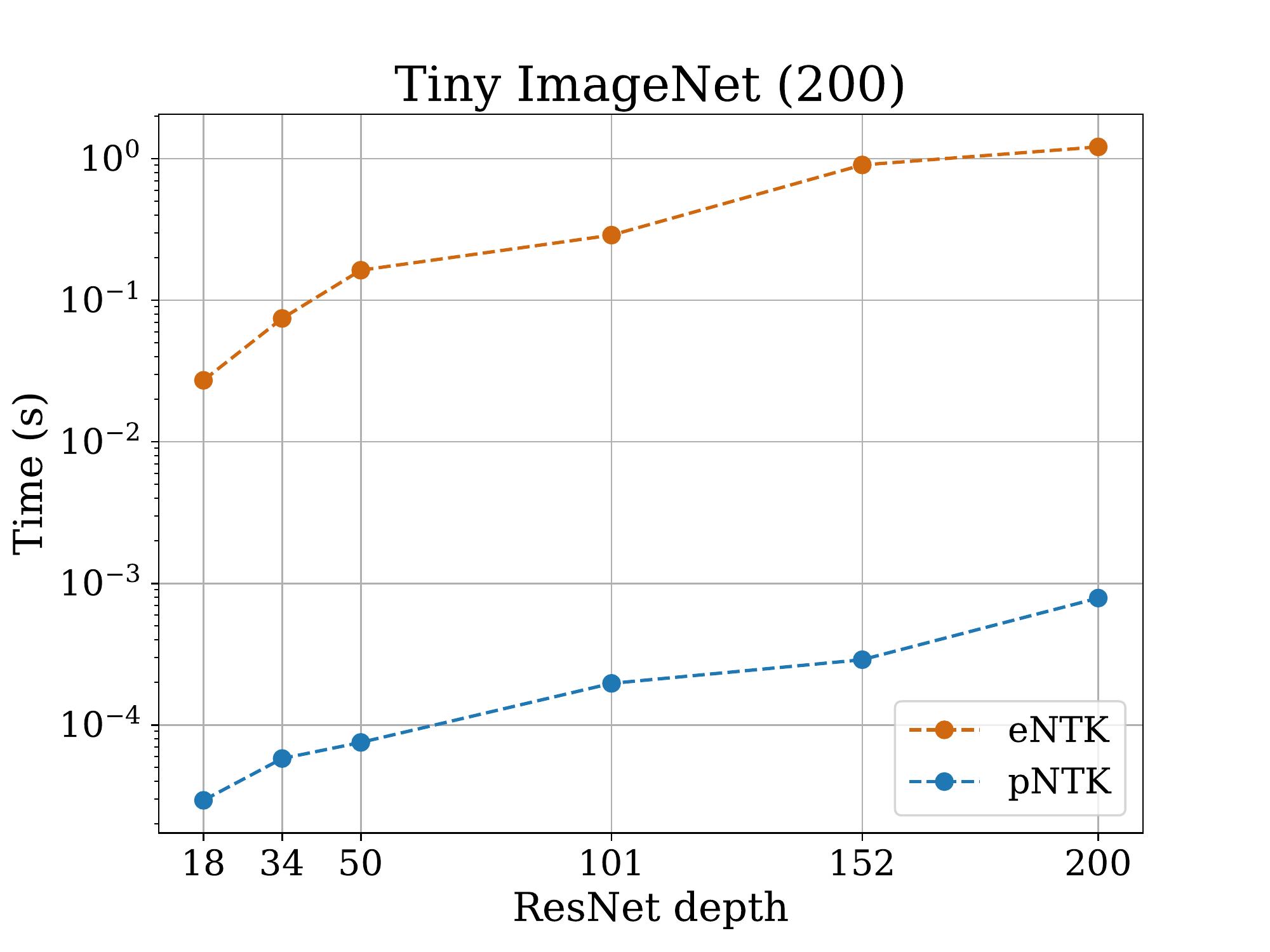}
    \caption{\textbf{Wall-clock time to evaluate the \eNTK~and \pNTK}~for one pair of inputs, across datasets and ResNet depths.}
    \label{fig:timing_experiments}
\end{figure*}

There has also been significant theoretical insight gained from empirical studies of networks' NTKs.
For instance,
\citet{fort2020deep} use NTKs to study how the loss geometry the NN evolves under gradient descent.
\citet{franceschi2021neural} employ NTKs to analyze the behaviour of Generative Adverserial Networks (GANs).
\citet{nguyen2020dataset,nguyen2021dataset} use NTKs for dataset distillation.
\citet{he2020bayesian,adlam2020exploring} use NTKs to predict and analyze the uncertainty of a NN's predictions.
\citet{tancik2020fourier} use NTKs to analyze the behaviour of MLPs in learning high frequency functions, leading to new insights into our understanding of neural radiance fields.
We thus believe NTKs will continue to be used in both theoretical and empirical deep learning.

Unfortunately, however, computing the NTK for practical networks is extremely challenging, and usually not computationally feasible.
The ``empirical'' NTK (eNTK; we discuss the difference from what others term ``the NTK'' shortly) is
\begin{equation} \label{eq:jac_ntk}
    \Theta_\theta(x_1, x_2) = \left[J_\theta \left(f_\theta(x_1) \right) \right] \left [ J_\theta \left (f_\theta(x_2) \right) \right ]^\top
,\end{equation}
where $J_\theta \left(f_\theta(x) \right)$ denotes the Jacobian of the function $f$ at a point $x$ with respect to the flattened vector of all its parameters, $\theta \in \R^P$.
If $D$ is the input dimension of $f$ and $O$ the number of outputs,
we have $J_\theta \left(f_\theta(x) \right) \in \R^{O \times P}$ and $\Theta_\theta(x_1, x_2) \in \R^{O \times O}$.
Thus, computing the eNTK between $N_1$ and $N_2$ data points
yields $N_1 N_2$ matrices, each of shape $O \times O$;
we usually arrange this as an $N_1 O \times N_2 O$ matrix.

When computing an eNTK on tasks involving large datasets and with multiple output neurons, \eg\ in a classification model with $O$ classes,
the eNTK quickly becomes impractical regardless of how fast each entry is computed due to its $NO \times NO$ size.
The full eNTK of a classification model even on the relatively small CIFAR-10 dataset~\citep{cifar102009krizhevsky}, stored in double precision, takes over 1.8 terabytes in memory. %
For practical usage, we need something better.

This work presents a simple trick for a strong approximation of the eNTK that removes the $O^2$ from the size of the kernel matrix,
resulting in a factor of $O^2$ improvement in the memory and up to $O^3$ in computation.
Since for typical classification datasets $O$ is at least $10$ (\eg\ CIFAR-10) and potentially $1{,}000$ or more \citep[\eg\ ImageNet,][]{deng2009imagenet},
this provides multiple orders of magnitude savings over the full \eNTK{}.
We prove this approximation converges to the original \eNTK{} at a rate of $\bigO(n^{-1/2})$ for a standard-initialization NN of depth $L$ and width $n$ in each layer,
and the predictions of kernel regression with the approximate kernel do the same. 
We also conduct diverse experimental investigations to support our theoretical results, across a range of architectures and settings.
We hope this approximation further enables researches to employ NTKs towards theoretical and empirical advances in wide networks.

\textbf{Infinite NTKs.}
In the infinite-width limit of appropriately initialized NNs, $\meNTK$ converges almost surely at initialization to a particular kernel, and remains constant through training.
Algorithms are available to compute this expectation exactly,
but they tend to be substantially more expensive than computing \eqref{eq:jac_ntk} directly for all but extremely wide networks.
The convergence to this infinite-width regime is slow in practice,
and moreover it eliminates some of the interest of the framework:
neural architecture search, predicting generalization of a pre-trained representation, and meta-learning are all considerably less interesting when we only consider infinite-width networks that do essentially no feature learning.
Thus we focus here only on the ``empirical'' eNTK as in \eqref{eq:jac_ntk}.

\section{Related Work}
Among the numerous recent works that have used eNTKs either to gain insights about various phenomenons in deep learning or to propose new algorithms, not many have publicized the computational costs and implementation details of computing eNTKs. Nevertheless, all are in agreement about the expense of such computations \citep{park2020towards,holzmuller2022framework,fort2020deep}.

Several recent works have,
mostly ``quietly,''
employed various techniques to avoid dealing with the full eNTK matrix; however,
to the best of our knowledge, none provide any rigorous justifications.
\citet[Section 2.3]{wei2022more} point out that if the final layer of a NN is randomly initialized, the \textit{expected} \eNTK~can be written as $K_0 \otimes I_O$ for some kernel $K_0$, where $I_O$ is the $O \times O$ identity matrix and $\otimes$ is the Kronecker product. Thus, they use the approximation in which they only compute the \eNTK~with respect to one of the logits of the NN. Although their approach to approximating the \eNTK~is similar to ours, they don't provide any rigorous bounds or empirical study of how closely the actual \eNTK~is approximated by its expectation in this regard.
\citet{wang2021deep} employs the same ``single-logit'' strategy, though they only mention the infinite-width limit as a motivation supporting their trick.
Despite these claims, we will see in our experiments that the \eNTK~is generally \emph{not} diagonal.
We will, however, prove upper bounds on distance of our approximation to the \eNTK,
and provide experimental support that this approximation captures the behaviour of the \eNTK~even when the NN's weights are not at initialization. 
\citet{park2020towards} and \citet{chen2021vision} also seem to use a form of ``single-logit'' approximation to \eNTK, without explicitly mentioning it. \citet{linntk2019lee}, by contrast, do use the full \eNTK, and hence never compute the kernel on more than 256 datapoints.

\citet{novak2021fast} recently performed an in-depth analysis of computational and memory complexity required for computing the \eNTK, and proposed two new approaches (depending on the NN architecture) to reduce the time complexity, but not the memory burden, of computing the \eNTK~over explicitly implementing \eqref{eq:jac_ntk}.
Our approaches are complementary;
in fact, we use their ``structured derivatives'' method to help compute our approximation.

\section{Pseudo-NTK} \label{sec:def-pseudo}
We define the pseudo-NTK (\pNTK), which we denote as $\mpNTK(x_1, x_2)$, as
\begin{equation}\label{eq:pntk_def}
    \underbrace{\left[\nabla_\theta\;
    \frac{1}{\sqrt{O}}\sum_{i=1}^O f^{(i)}_\theta(x_1)
    \right]}_{1 \times P}
    \underbrace{\left[ \nabla_\theta\;
    \frac{1}{\sqrt{O}}\sum_{i=1}^O f^{(i)}_\theta(x_2)
    \right]^\top}_{P \times 1}
,
\end{equation}
where $f^{(i)}_\theta(x)$ denotes the $i$-th output of $f_\theta$ on the input $x$. While the \eNTK~is a matrix-valued kernel for each pair of inputs, the \pNTK~ is a traditional scalar-valued kernel.

Some recent work \citep{cntk2019arora,yang2020tensor,wei2022more,wang2021deep} has pointed out that in the infinite width limit $\lim_{n \to \infty} \Theta(x_1, x_2)$, the NTK becomes a constant-diagonal matrix, where the class-class component becomes identity.
Thus, one can avoid computing the off-diagonal entries of the infinite-width NTK of each pair through using $\meNTK(x_1, x_2) = \mpNTK(x_1, x_2) \otimes I_O$, giving a drastic $\bigO(O^2)$ time and memory complexity decrease.

Practitioners have accordingly used the same approach in computing the \eNTK~of a finite width network, but with little to no further justification.
We see in our experiments that for finite width networks, the NTK is \textbf{not} diagonal.
In fact, we show that for most practical networks, it is very far from being diagonal,
casting doubts on the validity of arguments justifying the approximation with asymptotic diagonality.
We justify this category of approximation with theoretical bounds on
the difference of the true NTK from the approximation \eqref{eq:pntk_def}, which we also call ``sum of logits.''

Before our formal results and experimental evaluation,
we give some intuition.
First, suppose $f_\theta^{(i)}(x) = \phi(x) \cdot v_i$,
so that $v_i {{}\in \R^{n_{L-1}}}$ is the $i$th row of a linear read-out layer;
then
$\frac{1}{\sqrt{O}} \sum_{i=1}^O f_\theta^{(i)}(x) = \phi(x) \cdot \left[ \frac{1}{\sqrt O} \sum_{i=1}^O v_i \right]$.
If the $v_i \sim \N(0, \sigma^2 I_{n_{L-1}})$ are independent,
$\frac{1}{\sqrt O} \sum_{i=1}^O v_i$ %
has the same normal distribution as, say, $v_1$.
Thus, at initialization, our sum of logits approximation agrees in distribution with the first-logit approximation.
Our proof uses the sum-of-logits form, though,
and we believe it may be more sensible for
networks that are not at random initialization.

Calling this vector (whether the first logit or sum of logits) $v$,
we can think of \eqref{eq:pntk_def} as the NTK of a model with a single scalar output as a function of $\phi$, whose last layer has weights $v$.
When we linearize a network with that kernel for an $O$-class classification problem,
getting the formula \eqref{eq:pseudo_lin} discussed in \cref{sec:kernel-regression},
we end up effectively using a one-vs-rest classifier scheme.
Thus, we can think of the pseudo-NTK as approximating the process of training $O$ one-vs-rest classifiers,
rather than a single $O$-way classifier.

\section{Approximation Quality of Pseudo-NTK}

We will now study various aspects of the approximation of \eqref{eq:pntk_def} to \eqref{eq:jac_ntk},
both in theory and empirically.
Our experiments compare different characteristics of \pNTK{} and \eNTK{}, both at initialization and throughout training.
We evaluate four widely-used architectures:
FCN \citep[a fully-connected network of depth 3, as in][]{linntk2019lee, lee2020finite},
ConvNet \citep[a fully-convolutional network of depth 8, as in][]{cntk2019arora, arora2019harnessing, lee2020finite}, ResNet18 \citep{he2016deep},
and WideResNet-16-$k$ \citep{wide_resnet2016zagoruyko}.
We evaluate each architecture at different widths, as mentioned in the plot legends: we show exact widths for FCN, while for others we show a widening factor.
For consistency with most other recent papers studying NTKs and properties of NNs in general, we focus on data from CIFAR-10 \citep{cifar102009krizhevsky}. Each experiment is repeated using three seeds\danica{Still true?}; means and corresponding error bars are also shown, except when they interfered with clear interpretation of the plots. 
All models are trained for 200 epochs,
using stochastic gradient descent (SGD),
on 32GB NVIDIA V100 GPUs. %
More details on models and optimization are provided in \cref{app:expt-details}. The measured statistic for each experiment are reported after 0, 50, 100, 150, and 200 epochs.

\textbf{A Note On Parameterization} In order to be maximally applicable to practical implementations, both our experiments and our theoretical results are based on standard parameterization (``fan-in'' variance). Although most related work uses the so-called NTK parameterization (``fan-out'' variance), this is rarely used in practice while training NNs, mostly due to the poor generalization results achieved in comparison to training with standard parameterization \citep[Section I]{park19paramterization}. We encountered similarly poor behaviour when training NNs with NTK parameterization, but note that our theorems could also be adapted to the fan-out case.

\begin{figure*}[!htb]
    \centering
    \begin{subfigure}[b]{0.24\textwidth}
        \includegraphics[width=\textwidth]{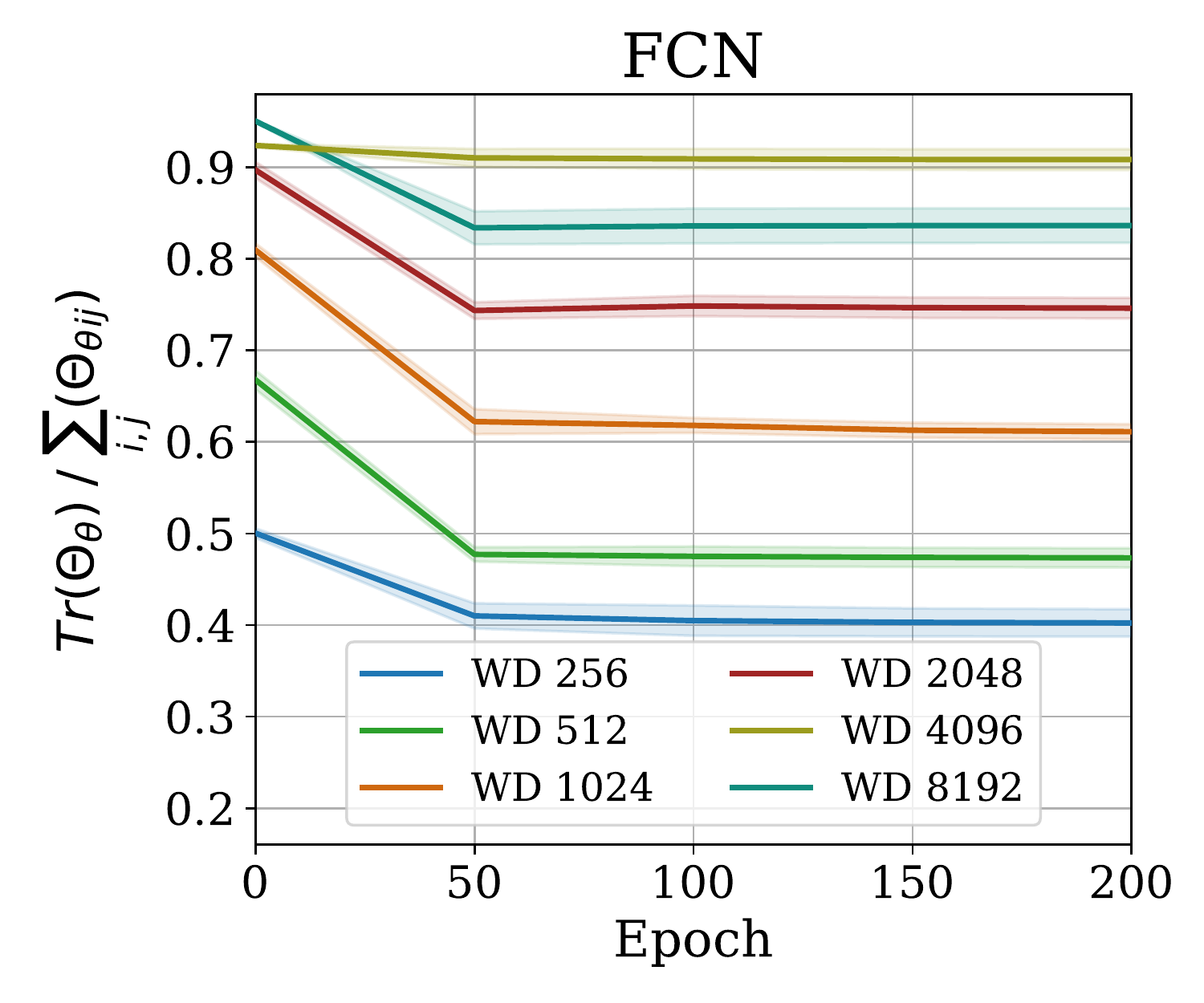}
    \end{subfigure}
    \hfill
    \begin{subfigure}[b]{0.24\textwidth}
        \includegraphics[width=\textwidth]{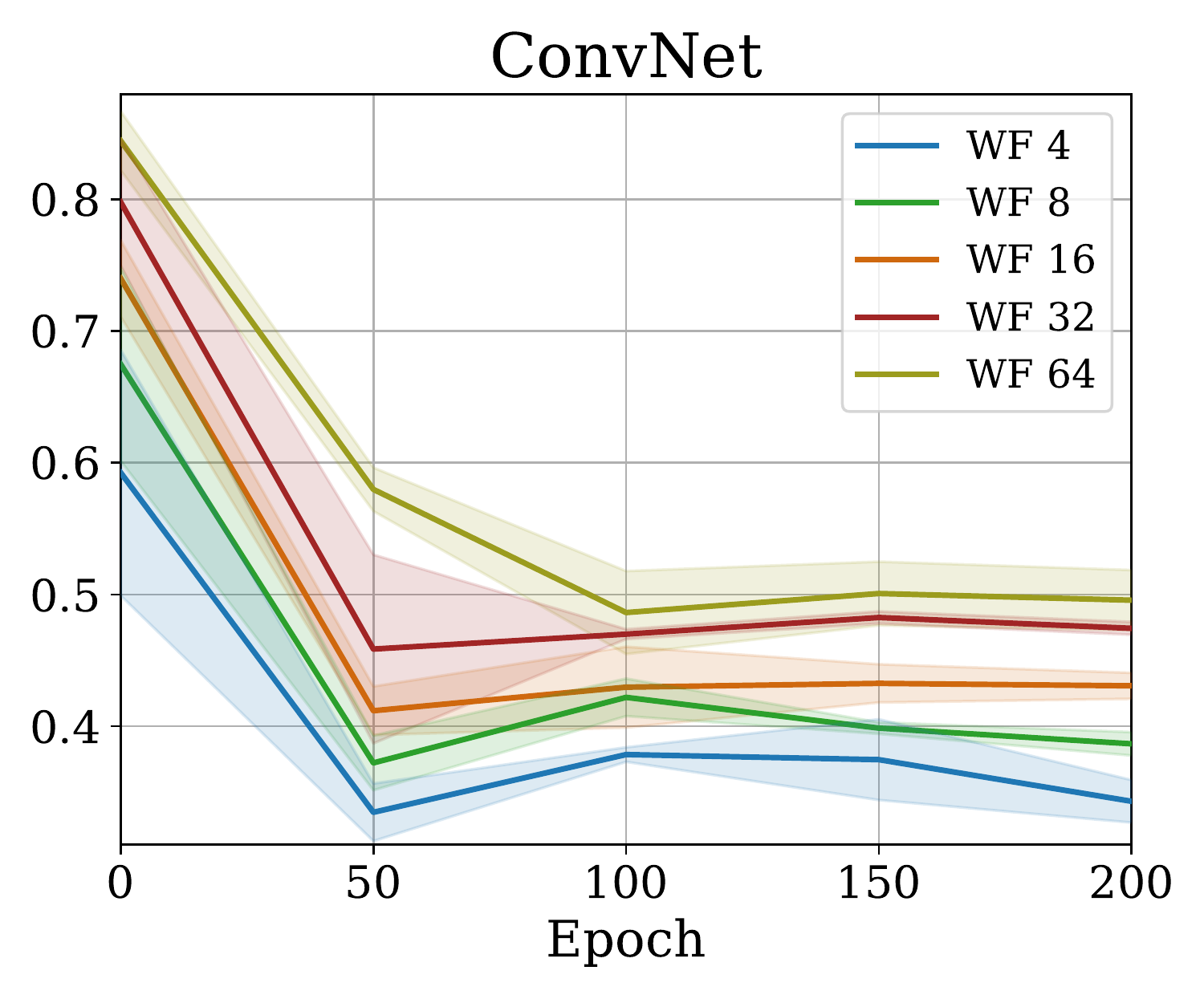}
    \end{subfigure}
    \hfill
    \begin{subfigure}[b]{0.24\textwidth}
        \includegraphics[width=\textwidth]{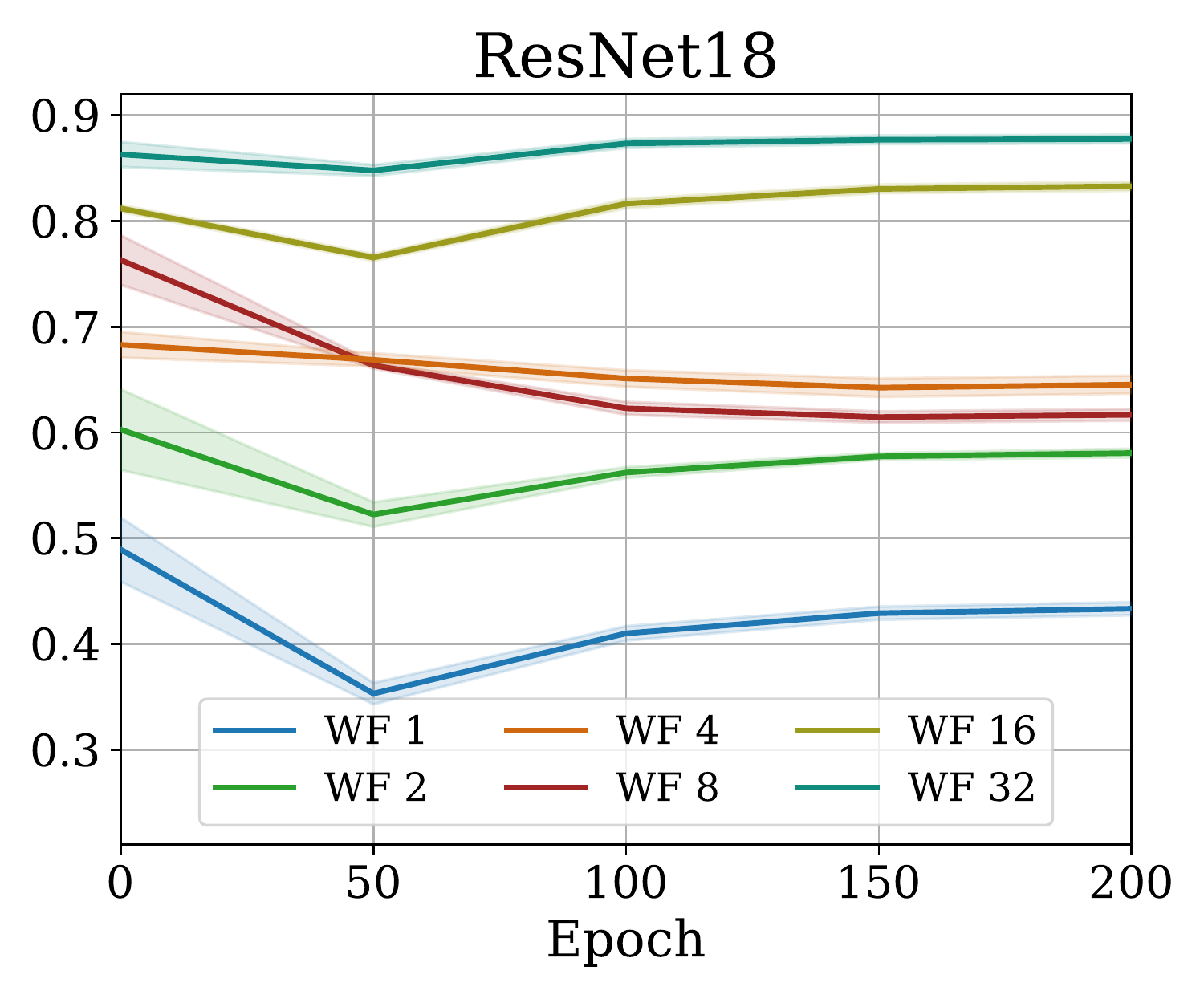}
    \end{subfigure}
    \hfill
    \begin{subfigure}[b]{0.24\textwidth}
        \includegraphics[width=\textwidth]{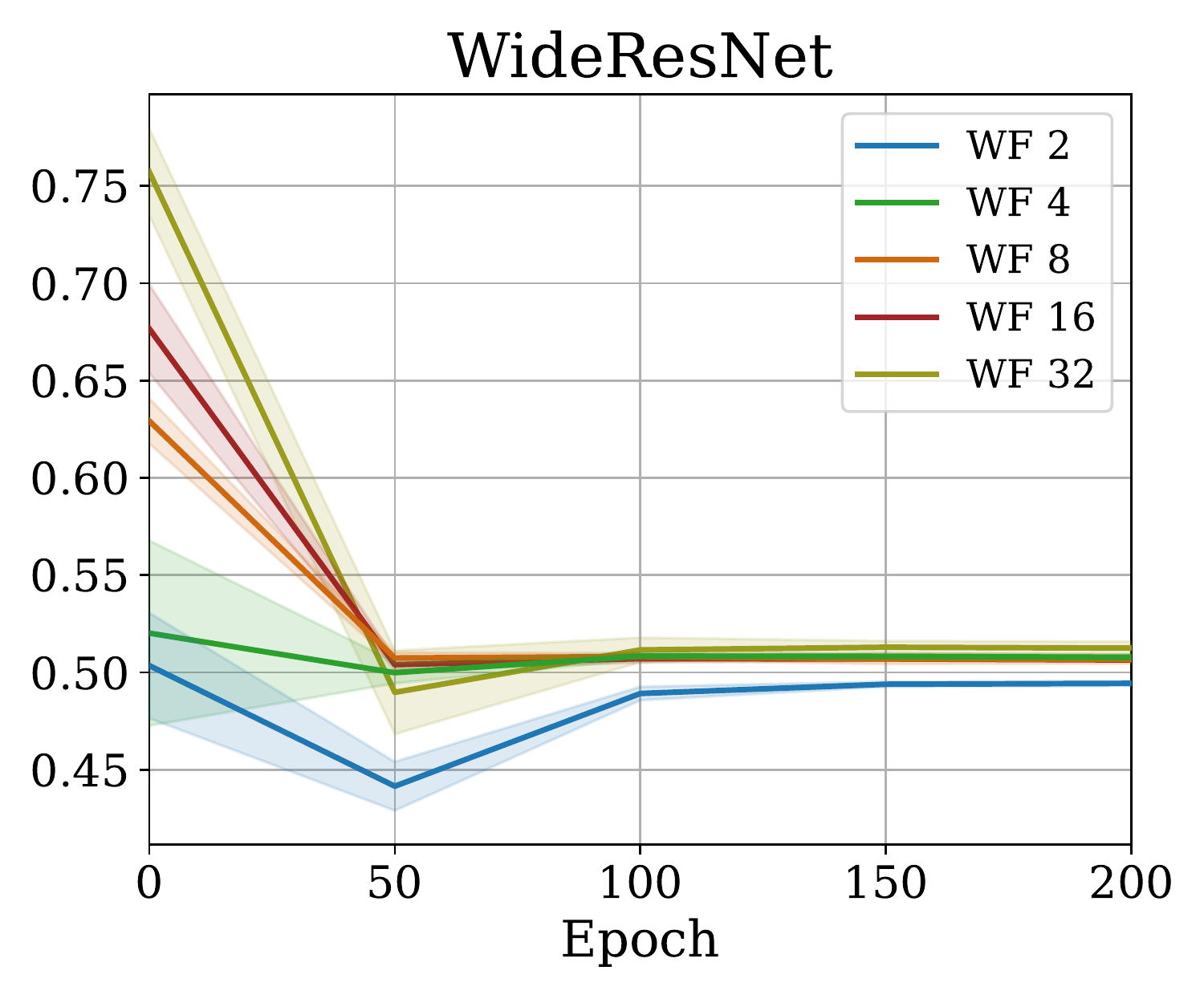}
    \end{subfigure}
    \vspace{-1mm}
    \caption{%
        \textcolor{black}{Comparing the \textbf{magnitude of sum of on-diagonal and off-diagonal elements of $\meNTK(\dset, \dset)$}} at initialization and throughout training,
        based on $\dset$ being $1000$ random points from CIFAR-10.
        The reported numbers are the average of $1000 \times 1000$ matrices, each having a shape of $10 \times 10$.
        The same subset has then been used to train the NN using SGD.
        As the NN's width grows, the \eNTK~converges to being diagonal at initialization among all different architectures.
    }
    \label{fig:diagonality}
    \vspace{-2mm}
\end{figure*}

\begin{figure*}[!htb]
    \centering
    \begin{subfigure}[b]{0.24\textwidth}
        \includegraphics[width=\textwidth]{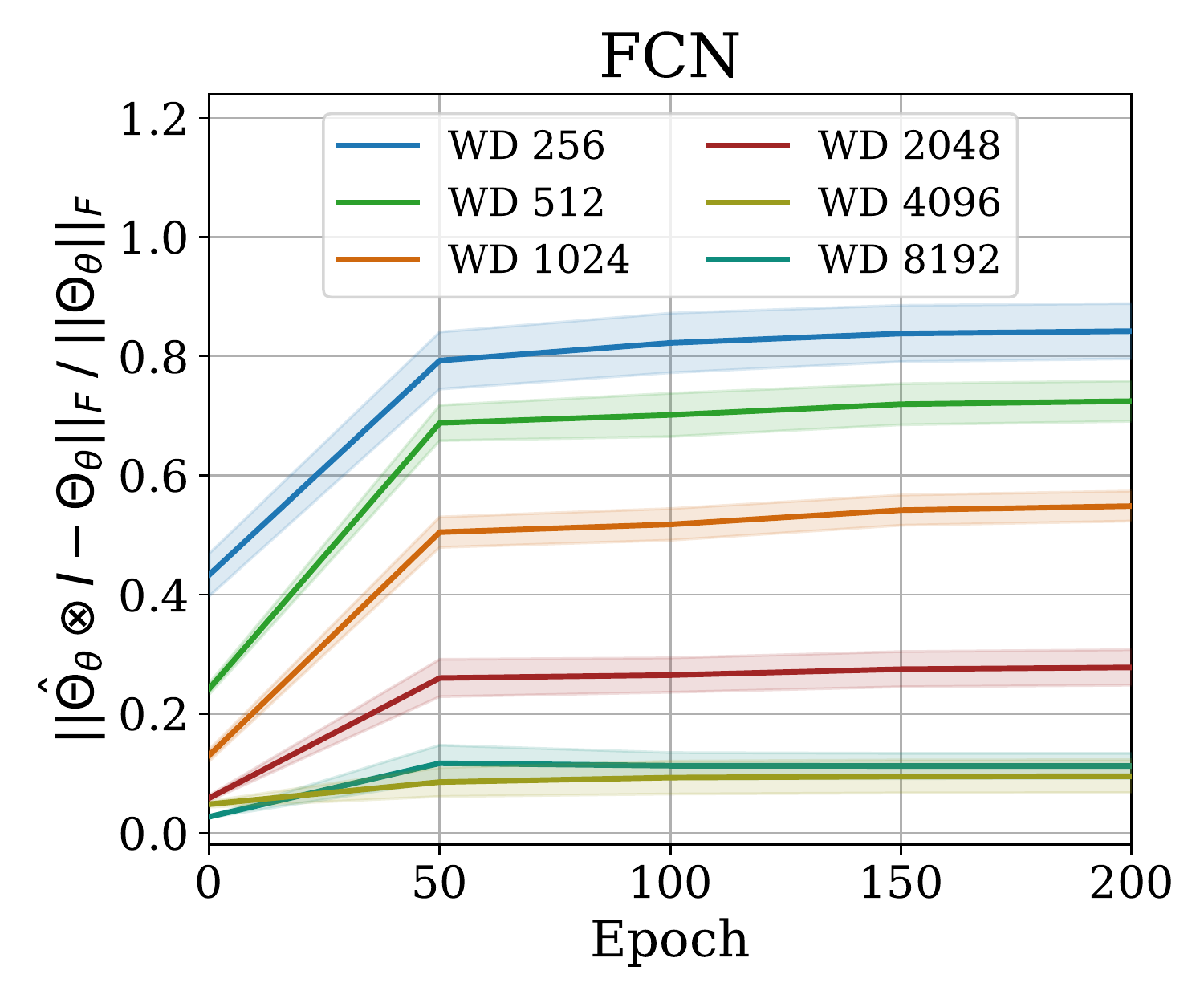}
    \end{subfigure}
    \hfill
    \begin{subfigure}[b]{0.24\textwidth}
        \includegraphics[width=\textwidth]{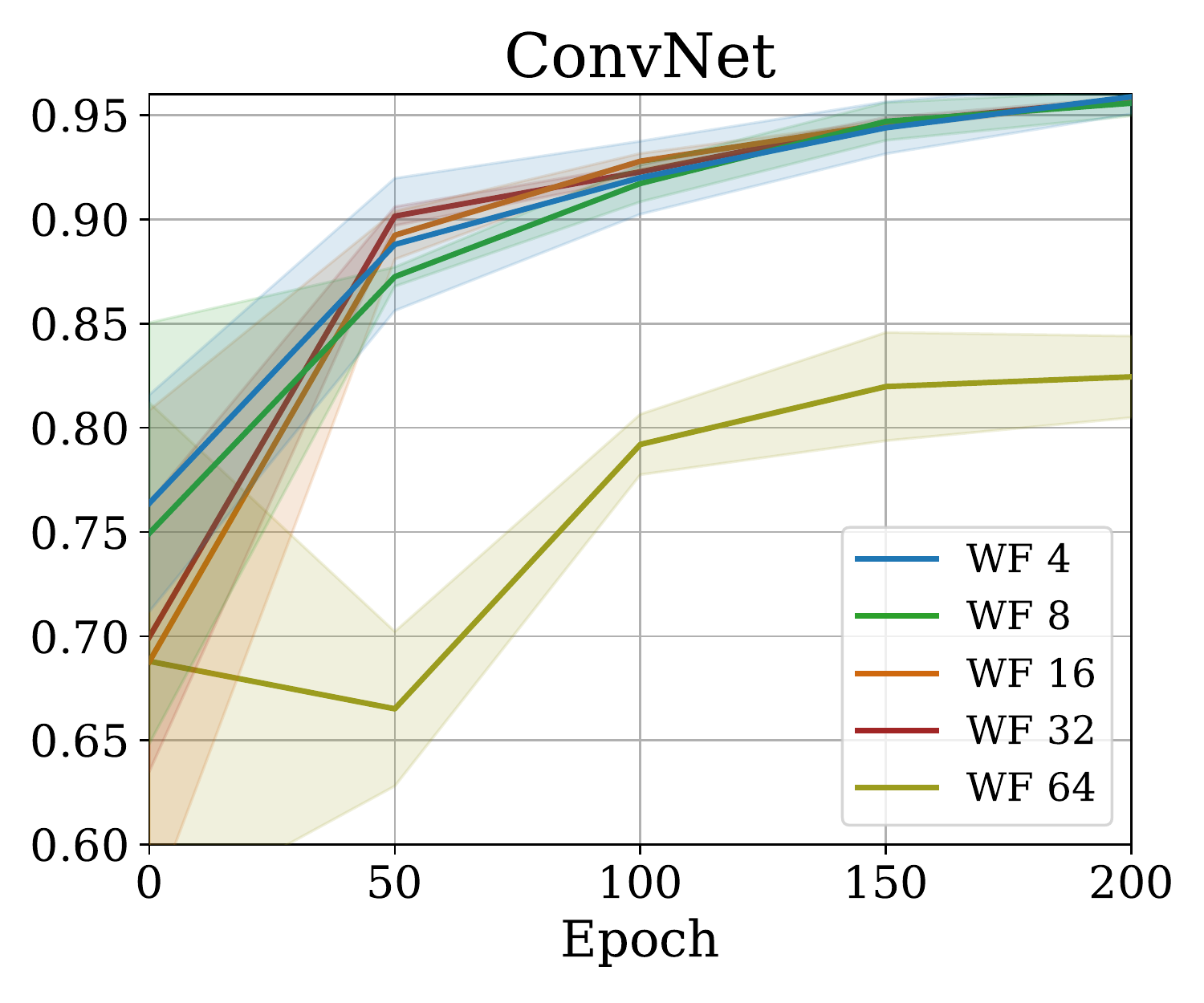}
    \end{subfigure}
    \hfill
    \begin{subfigure}[b]{0.24\textwidth}
        \includegraphics[width=\textwidth]{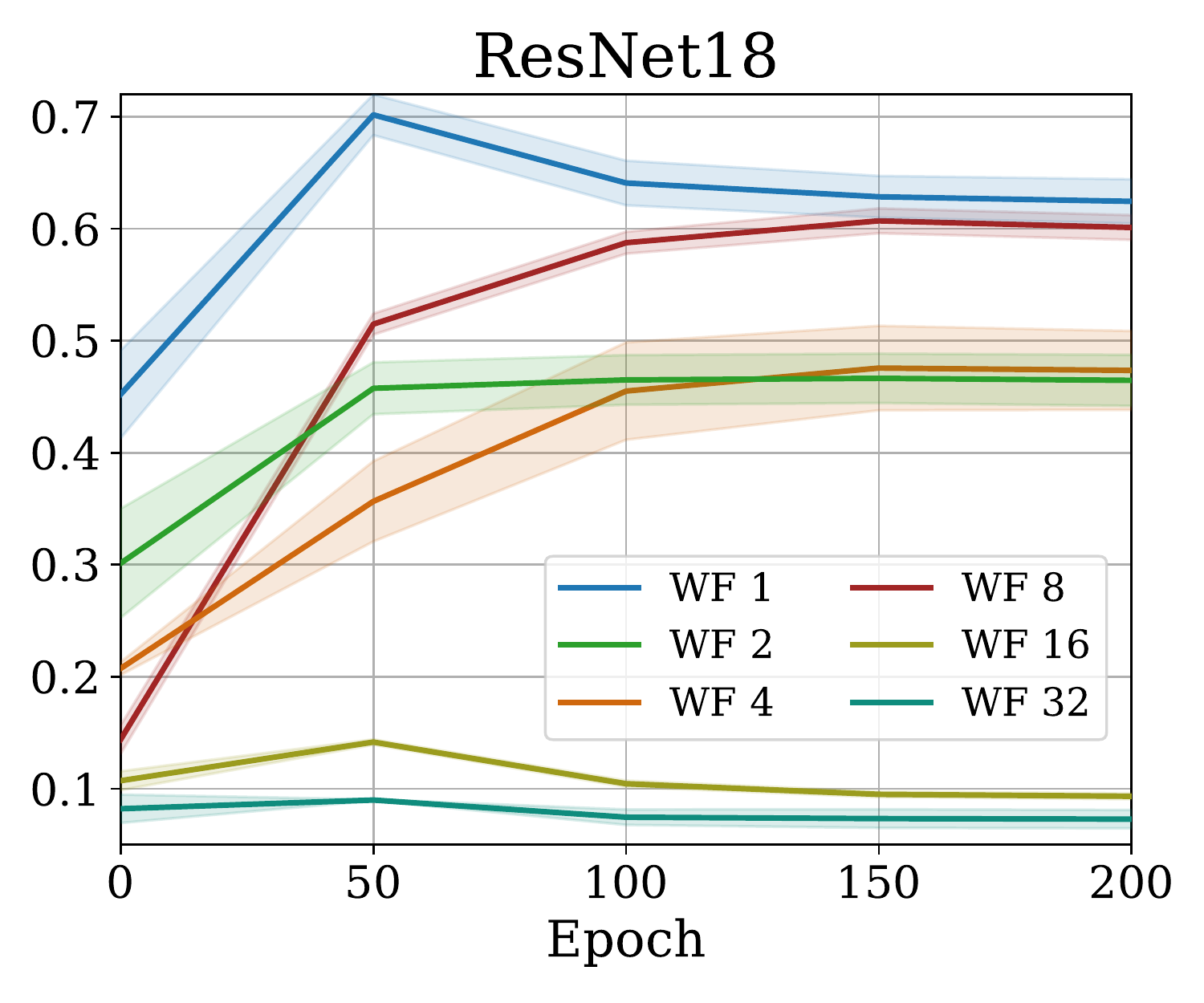}
    \end{subfigure}
    \hfill
    \begin{subfigure}[b]{0.24\textwidth}
        \includegraphics[width=\textwidth]{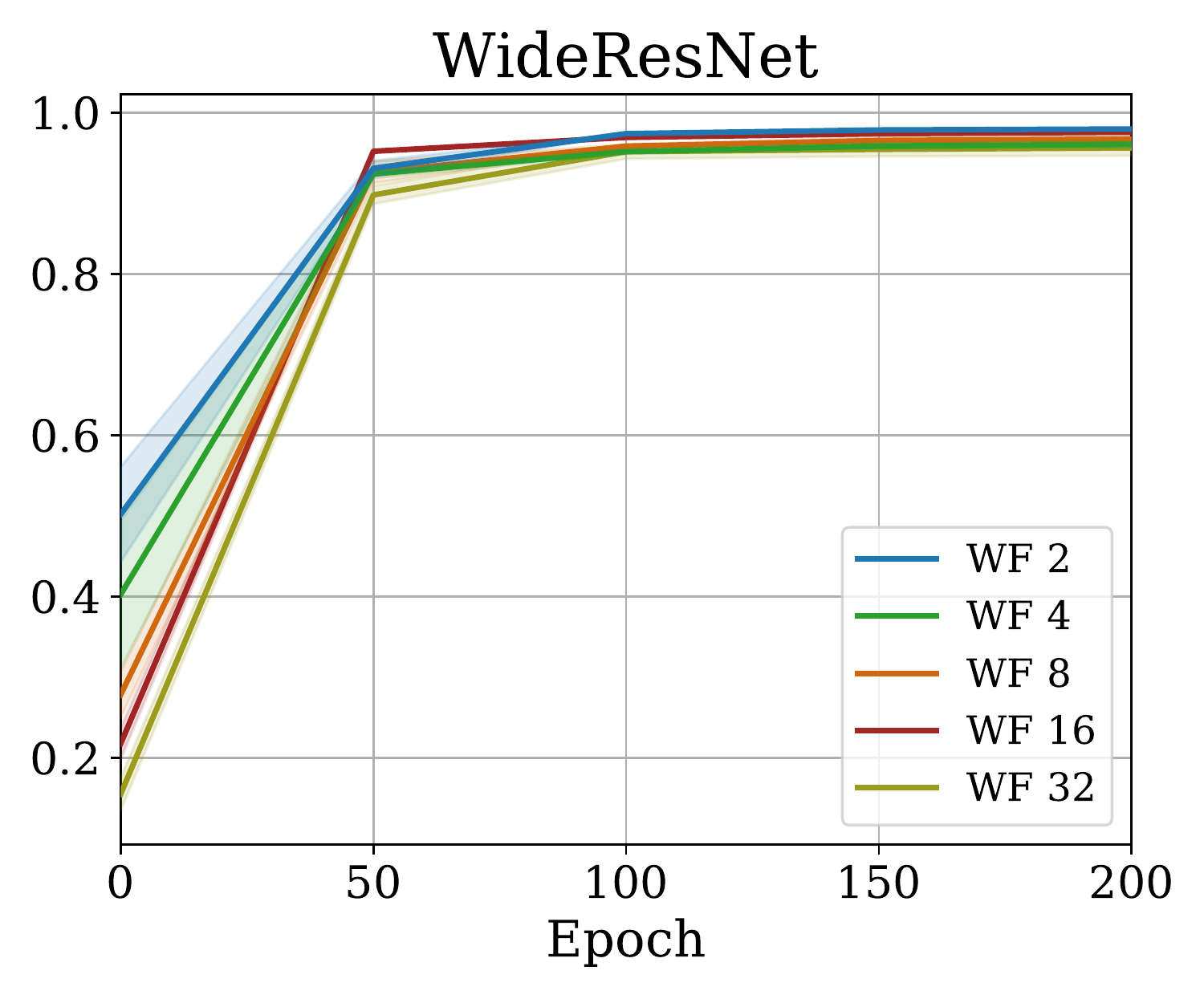}
    \end{subfigure}
    \vspace{-1mm}
    \caption{Evaluating the \textbf{relative difference of Frobenius norm of $\meNTK(\dset, \dset)$ and $\mpNTK(\dset, \dset) \otimes I_O$}
    at initialization and throughout training,
    based on $\dset$ being $1000$ random points from CIFAR-10.
    Wider nets have more similar 
    $\norm{\meNTK}_F$ and $\norm{\mpNTK \otimes I_O}_F$ 
    at initialization.}
    \label{fig:fro_norm_diff}
    \vspace{-2mm}
\end{figure*}

\subsection{pNTK Converges to eNTK as Width Grows}
\label{sec:frob}

The first crucial thing to verify is whether the \pNTK{} kernel matrix approximates the true \eNTK{} as a whole.
We study this first in terms of Frobenius norm.

\begin{theorem}[Informal] \label{theorem:pntk_fro_norm}
Let $f_\theta: \R^{D} \to \R^{O}$ be
a fully-connected network with layers of width $n$ whose parameters are initialized as in \citet{he2016kaiming},
with ReLU-type activations.
Let $\mpNTK(x_1, x_2)$ be the pNTK of $f_\theta$ as in \eqref{eq:pntk_def} and $\meNTK(x_1, x_2)$ the eNTK as in \eqref{eq:jac_ntk}
for a fixed pair of inputs $x_1$, $x_2$.
With high probability over the initialization,
\begin{equation}
    \frac{\norm{\mpNTK(x_1, x_2) \otimes I_O - \meNTK(x_1, x_2)}_F}{\norm{\meNTK(x_1, x_2)}_F} \in \bigO(n^{-\frac{1}{2}}).
\end{equation}
\end{theorem}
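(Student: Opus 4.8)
The key is to isolate the contribution of the read-out layer. Write $f_\theta(x) = W^{(L)} h(x) + b^{(L)}$, where $h(x) \in \R^{n}$ is the last hidden representation and $W^{(L)} \in \R^{O \times n}$ has i.i.d.\ Gaussian entries that are independent of all earlier parameters $\theta_{<L}$. Splitting each gradient $\nabla_\theta f^{(i)}_\theta(x)$ into its part along $W^{(L)}$ (which touches only the $i$-th row) and its part along $\theta_{<L}$ (which enters as $\sum_k W^{(L)}_{ik}\,\nabla_{\theta_{<L}} h_k(x)$) gives the exact identities
\begin{align}
  \meNTK(x_1,x_2) &= \langle h(x_1), h(x_2)\rangle\, I_O + W^{(L)} M (W^{(L)})\tp, \\
  \mpNTK(x_1,x_2) &= \langle h(x_1), h(x_2)\rangle + \bar w\tp M \bar w,
\end{align}
where $M \in \R^{n \times n}$ is the eNTK of the last \emph{hidden} layer with respect to $\theta_{<L}$, i.e.\ $M_{kl} = \langle \nabla_{\theta_{<L}} h_k(x_1), \nabla_{\theta_{<L}} h_l(x_2)\rangle$, and $\bar w = \tfrac{1}{\sqrt O}\sum_{i=1}^O W^{(L)}_{i,:}$, which has the same $\N(0,\sigma_L^2 n^{-1} I)$ law as any single row of $W^{(L)}$. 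Subtracting, the common term cancels:
\[
  \mpNTK(x_1,x_2)\, I_O - \meNTK(x_1,x_2) \;=\; (\bar w\tp M \bar w)\, I_O - W^{(L)} M (W^{(L)})\tp .
\]
Conditioned on $\theta_{<L}$, both terms have mean $\tfrac{\sigma_L^2}{n}(\tr M)\,I_O$, so the right-hand side is a pure fluctuation around zero.

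Next I would bound this fluctuation entrywise, conditioning on $\theta_{<L}$ so that $M$ is fixed and $W^{(L)}$ is a fresh Gaussian matrix. For $i \neq j$, the entry $(W^{(L)}_{i,:})\tp M (W^{(L)}_{j,:})$ is a mean-zero quadratic form in the $2n$-dimensional Gaussian $(W^{(L)}_{i,:},W^{(L)}_{j,:})$; Hanson--Wright bounds it by $\bigO(\tfrac1n \norm{M}_F)$ with high probability (using $\norm{M}_{\op}\le\norm{M}_F$ to absorb the lower-order term). For $i = j$, the quadratic forms $(W^{(L)}_{i,:})\tp M (W^{(L)}_{i,:})$ and $\bar w\tp M \bar w$ each concentrate around $\tfrac{\sigma_L^2}{n}\tr M$ with the same $\bigO(\tfrac1n\norm{M}_F)$ deviation, so their difference is again $\bigO(\tfrac1n\norm{M}_F)$. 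Since $O$ is fixed, a union bound over the $O^2$ entries shows that with high probability every entry of the difference is $\bigO(\tfrac1n\norm{M}_F)$, hence
\[
  \Norm{\mpNTK(x_1,x_2)\, I_O - \meNTK(x_1,x_2)}_F = \bigO\!\left(\frac{O}{n}\,\norm{M}_F\right).
\]

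It remains to compare this to $\norm{\meNTK(x_1,x_2)}_F$ via two facts about the hidden-layer eNTK. \textbf{(i)} $\norm{M}_F = \bigO(n^{3/2})$: writing $M = J(x_1) J(x_2)\tp$ with $J(x) = \nabla_{\theta_{<L}} h(x) \in \R^{n\times P_{<L}}$ gives $\norm{M}_F \le \norm{J(x_1)}_{\op}\,\norm{J(x_2)}_F = \big(\Norm{\Theta^{(L-1)}(x_1,x_1)}_{\op}\cdot \tr\Theta^{(L-1)}(x_2,x_2)\big)^{1/2}$, and the two factors are $\bigO(n)$ and $\bigO(n^2)$ respectively. \textbf{(ii)} $\norm{\meNTK(x_1,x_2)}_F = \Omega(\sqrt O\, n)$: each diagonal entry equals $\langle h(x_1),h(x_2)\rangle + (W^{(L)}_{i,:})\tp M (W^{(L)}_{i,:})$; for ReLU the first term is nonnegative and in fact $\Omega(n)$ with high probability (the hidden-layer NNGP kernel is bounded below by a positive constant times $n$ for inputs that are not exact negatives of one another), while the second concentrates at $\tfrac{\sigma_L^2}{n}\tr M = \Omega(n)$, so every diagonal entry is $\Omega(n)$. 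Combining (i), (ii) and the previous display, the ratio is $\bigO\!\big(\tfrac{O}{n}\,n^{3/2}/(\sqrt O\, n)\big) = \bigO(n^{-1/2})$, with the implied constant depending on $O$, $L$, $x_1$, $x_2$, and the target failure probability.

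The main obstacle is establishing the estimates underlying (i) and (ii): that $\tr\Theta^{(L-1)}(x,x)$ and $\tr M$ are of order $n^2$, that $\Norm{\Theta^{(L-1)}(x,x)}_{\op}$ is $\bigO(n)$, and that $\langle h(x_1),h(x_2)\rangle = \Omega(n)$, all with high probability. These are the finite-width counterparts of the classical statements that the hidden-layer NTK and NNGP kernels are well-behaved, but making them rigorous requires a layer-by-layer induction that carries Gaussian concentration of activation norms and Jacobian norms through the ReLU nonlinearities (using in particular that each hidden weight matrix has $\bigO(1)$ operator norm) and that handles the dependence between the preactivation signs $\sigma'(z^{(\ell)}_k)$ and the weight rows $W^{(\ell)}_{k,:}$. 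By contrast, the structural identity at the start and the last-layer Hanson--Wright argument — the genuinely new ingredients — are short.
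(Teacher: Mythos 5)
Your proposal takes essentially the same route as the paper's proof: cancel the shared read-out term, bound the remaining Gaussian quadratic-form fluctuation entrywise via Hanson--Wright with a union bound over the $O^2$ entries (this is exactly \cref{Lemma:supp:hw-main}), then combine an $\bigO(n^{3/2})$ bound on the hidden-layer kernel's Frobenius norm with an $\Omega(n)$ bound on the eNTK's diagonal entries to get the $\bigO(n^{-1/2})$ ratio. The auxiliary estimates you defer (linear growth of post-activation inner products, the $n^{3/2}$ control of $\norm{\Theta_g}_F$, and the $\Theta(n)$ scale of the eNTK) are precisely what the paper supplies in \cref{app:Lemma_relu_post_activation}, \cref{Lemma:supp:entk_fro_norm_upper_bound}, and the proof of \cref{supp:lower_bound_proof}, so your sketch matches the paper's argument up to minor variations in how those norm bounds are assembled.
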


\begin{remark}
All of the results in the paper can be straightforwardly extended to networks with different widths, as long as the consecutive layers' widths satisfy $n_{l+1} = \Theta(n_l)$.
Moreover, the results can be made architecturally universal with the techniques of \citet{yang2020tensor,yang2021tensor}. 
\end{remark}

\cref{theorem:pntk_fro_norm} provides the first upper bound on the convergence rate of \pNTK~towards \eNTK. A formal statement for \cref{theorem:pntk_fro_norm} and its proof are in \cref{supp:fro_norm_proof}. 

\begin{remark}
Based on the provided proof, it is straightforward that the ratio of information between off-diagonal and on-diagonal elements of the \eNTK{} matrix converges to zero with a rate of $\bigO(n^{-\frac{1}{2}})$ with high probability over random initialization, as depicted in \cref{fig:diagonality}.
\end{remark}

\begin{figure*}[!ht]
    \centering
    \begin{subfigure}[b]{0.24\textwidth}
        \includegraphics[width=\textwidth]{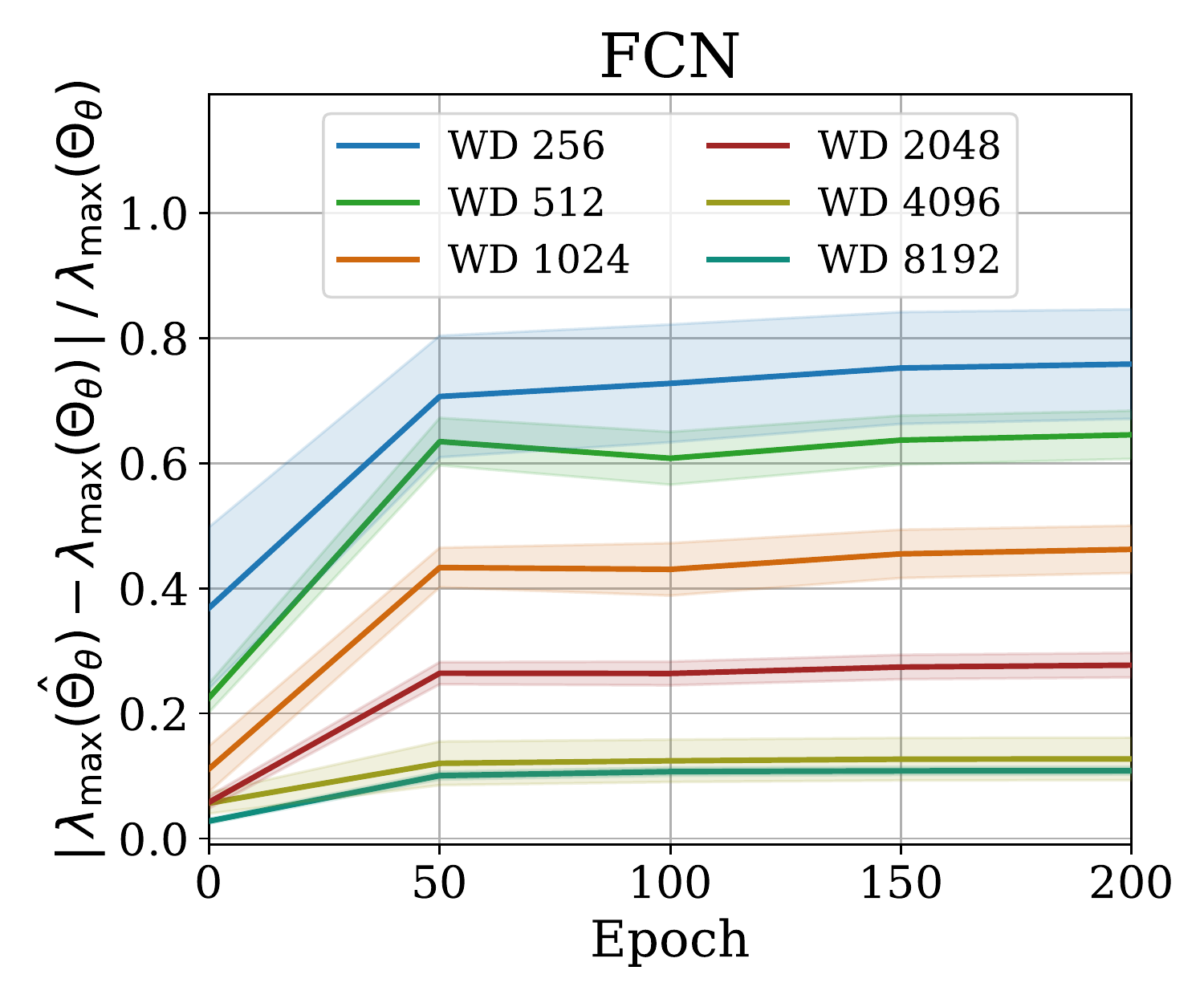}
    \end{subfigure}
    \hfill
    \begin{subfigure}[b]{0.24\textwidth}
        \includegraphics[width=\textwidth]{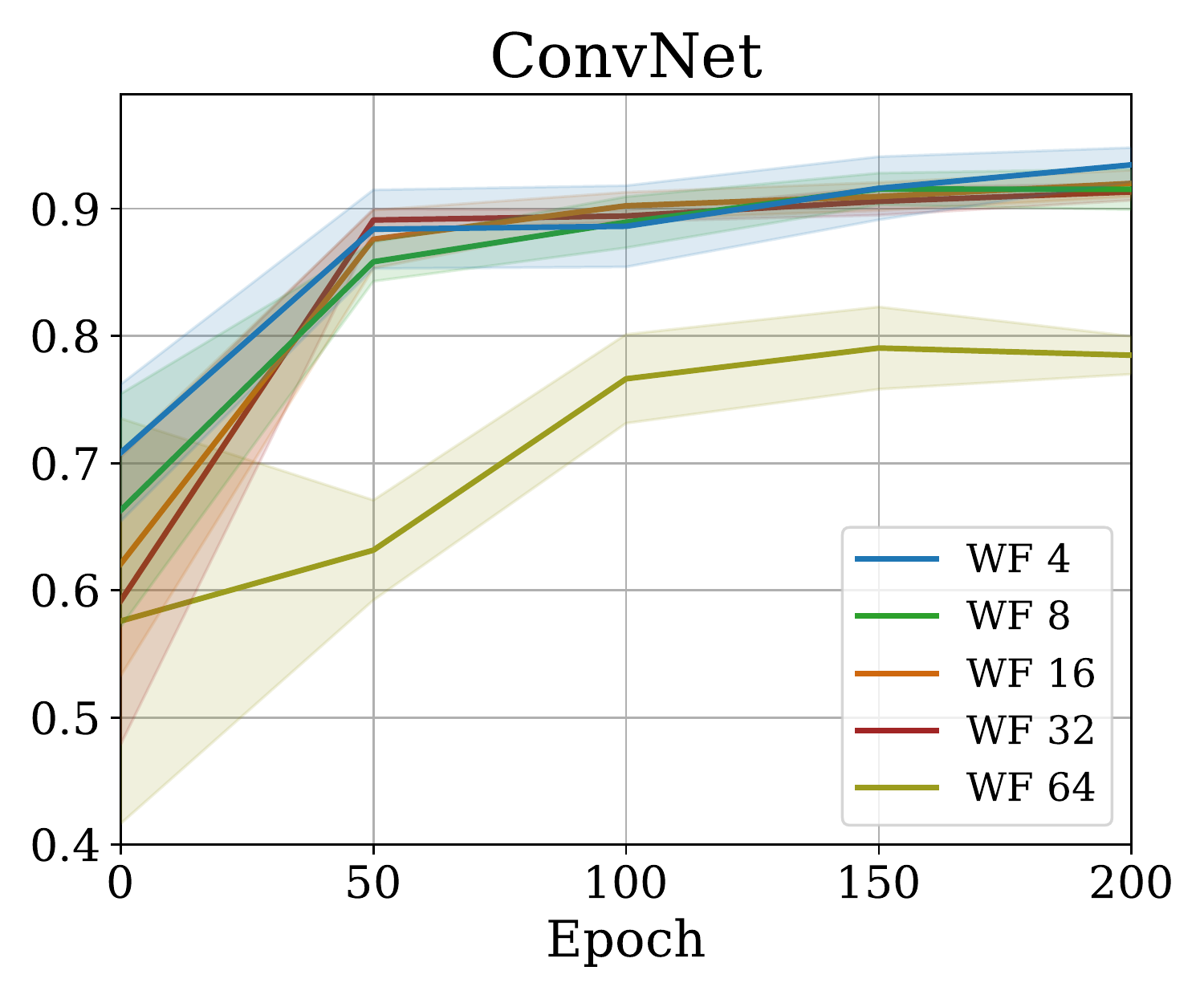}
    \end{subfigure}
    \hfill
    \begin{subfigure}[b]{0.24\textwidth}
        \includegraphics[width=\textwidth]{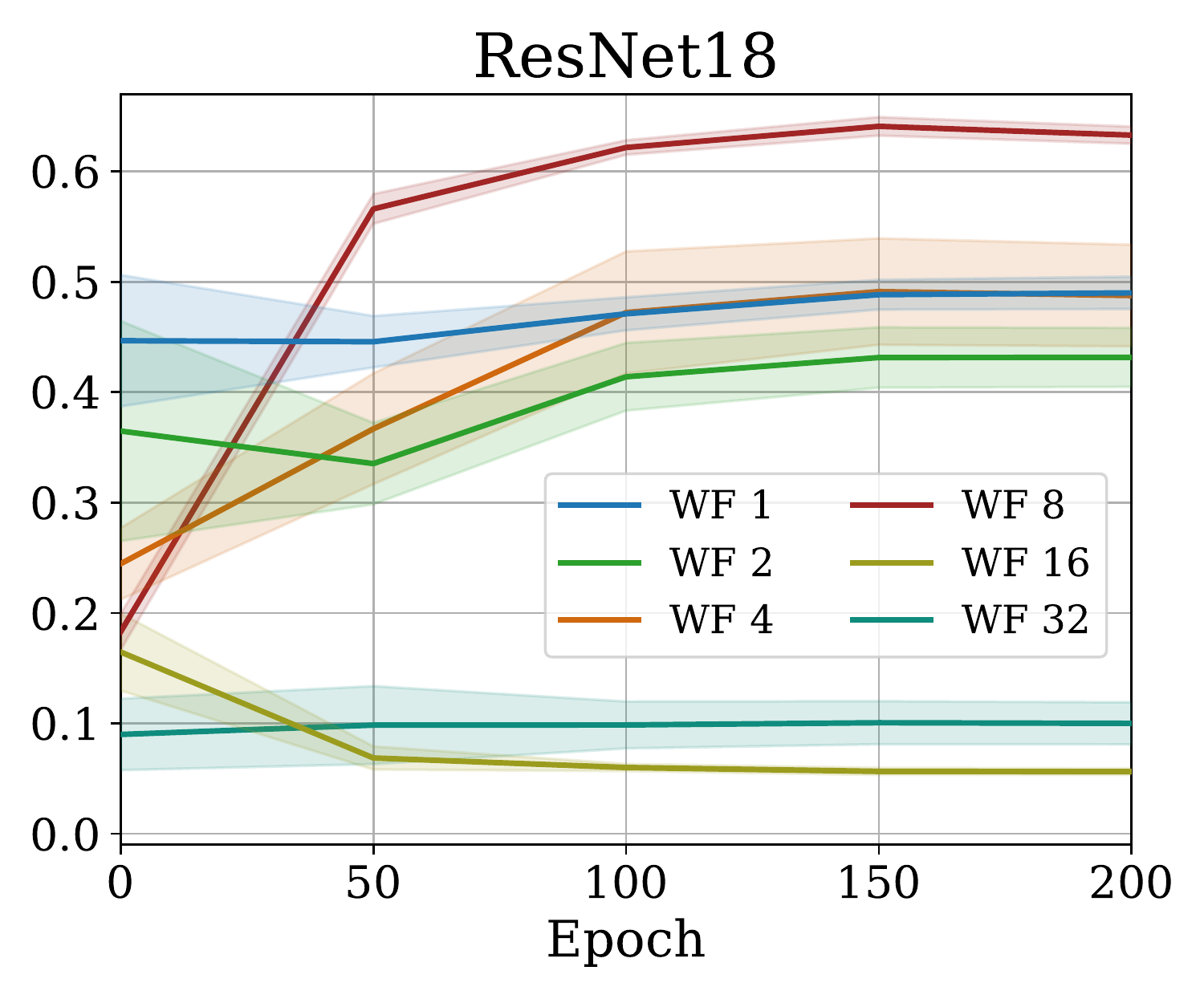}
    \end{subfigure}
    \hfill
    \begin{subfigure}[b]{0.24\textwidth}
        \includegraphics[width=\textwidth]{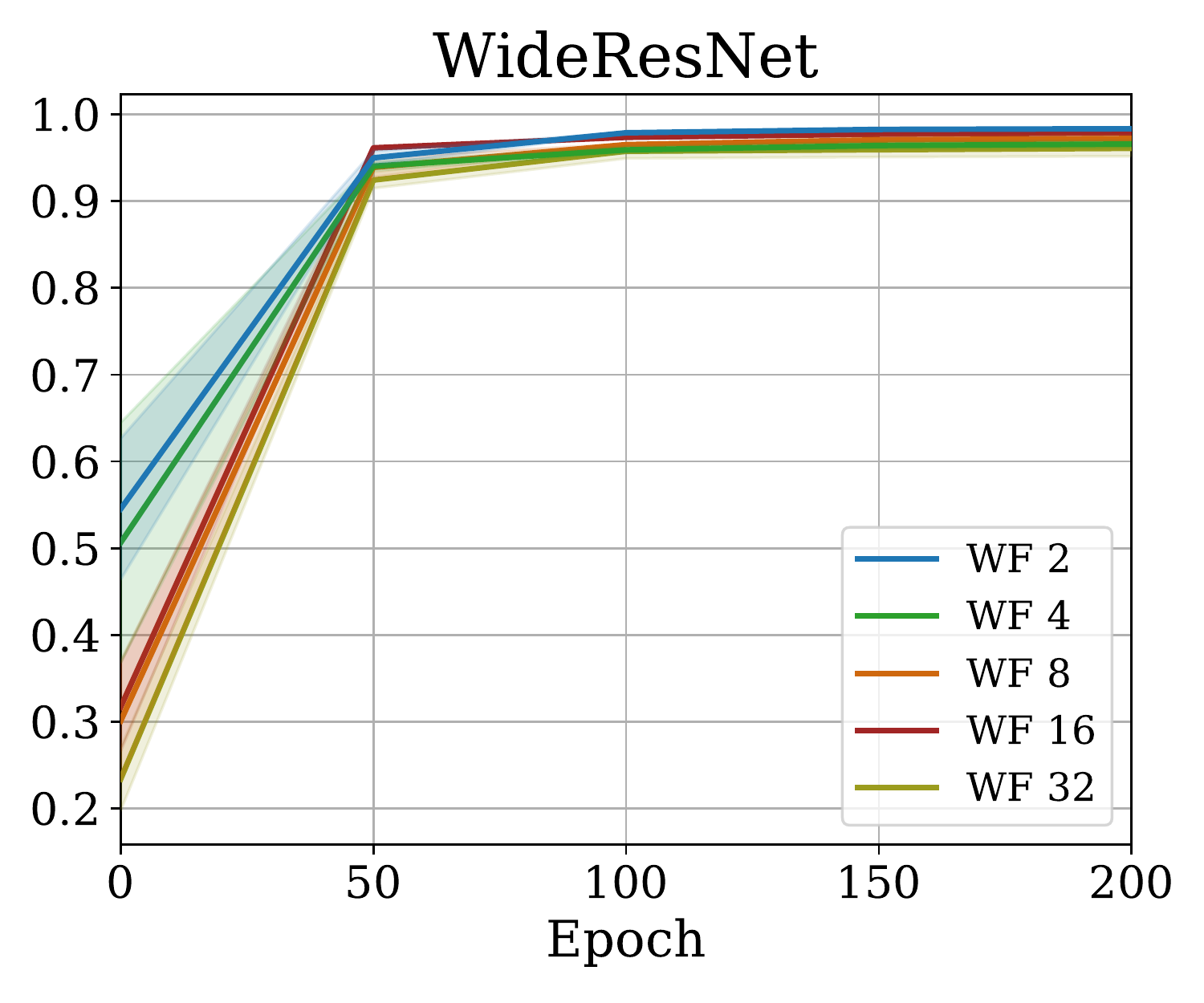}
    \end{subfigure}
    \vspace{-1mm}
    \caption{Evaluating the \textbf{relative difference of $\lambda_{\max}$ of $\meNTK(\dset, \dset)$ and $\mpNTK(\dset, \dset)$}
    at initialization and throughout training,
    based on $\dset$ being 1000 random points from CIFAR-10.
    Wider nets have more similar
    $\lambda_{\max} (\meNTK(\dset, \dset))$ and $\lambda_{\max} (\mpNTK(\dset, \dset))$.}
    \label{fig:max_eigval_diff}
    \vspace{-1mm}
\end{figure*}

\begin{figure*}[!ht]
    \centering
    \begin{subfigure}[b]{0.24\textwidth}
        \includegraphics[width=\textwidth]{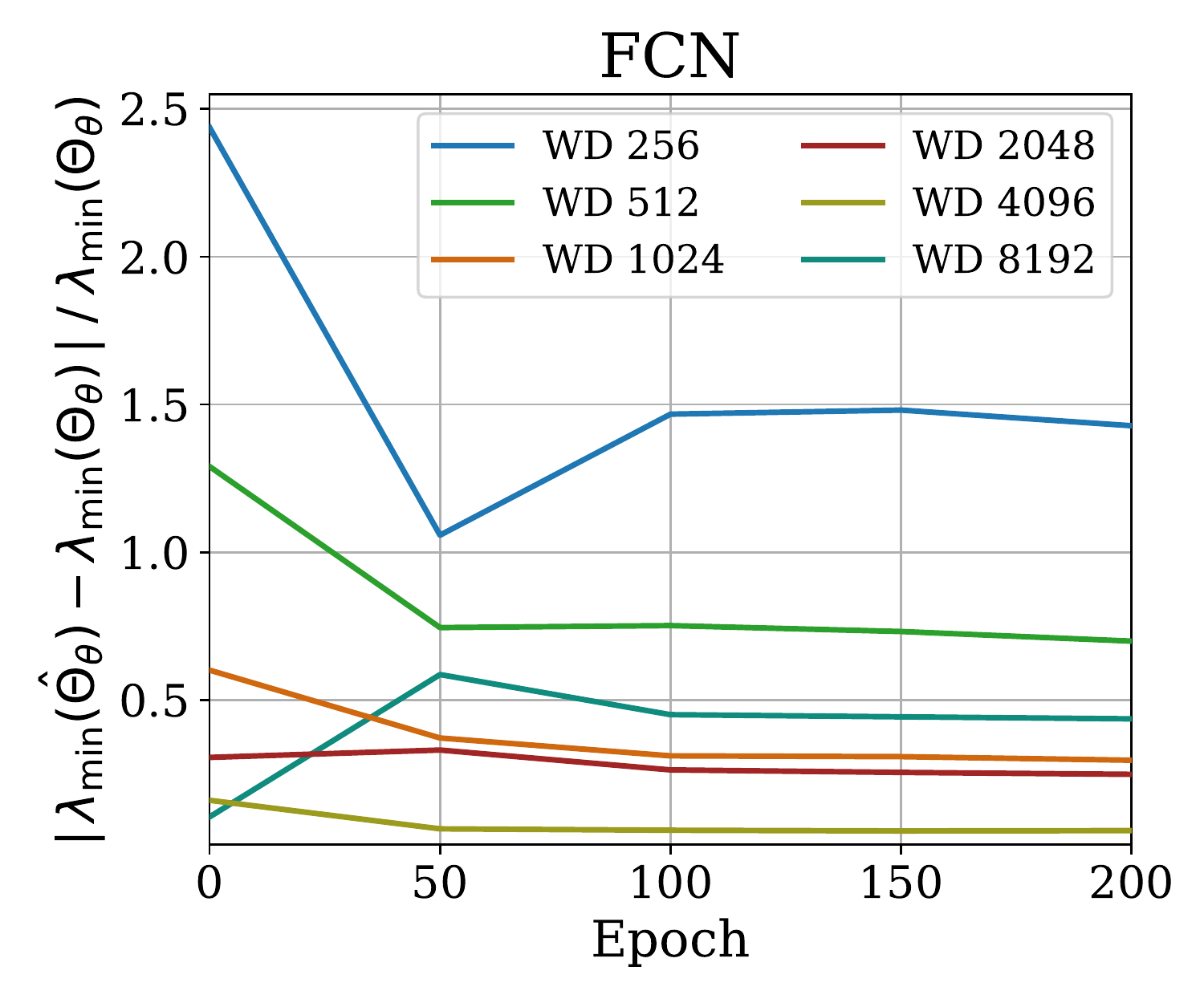}
    \end{subfigure}
    \hfill
    \begin{subfigure}[b]{0.24\textwidth}
        \includegraphics[width=\textwidth]{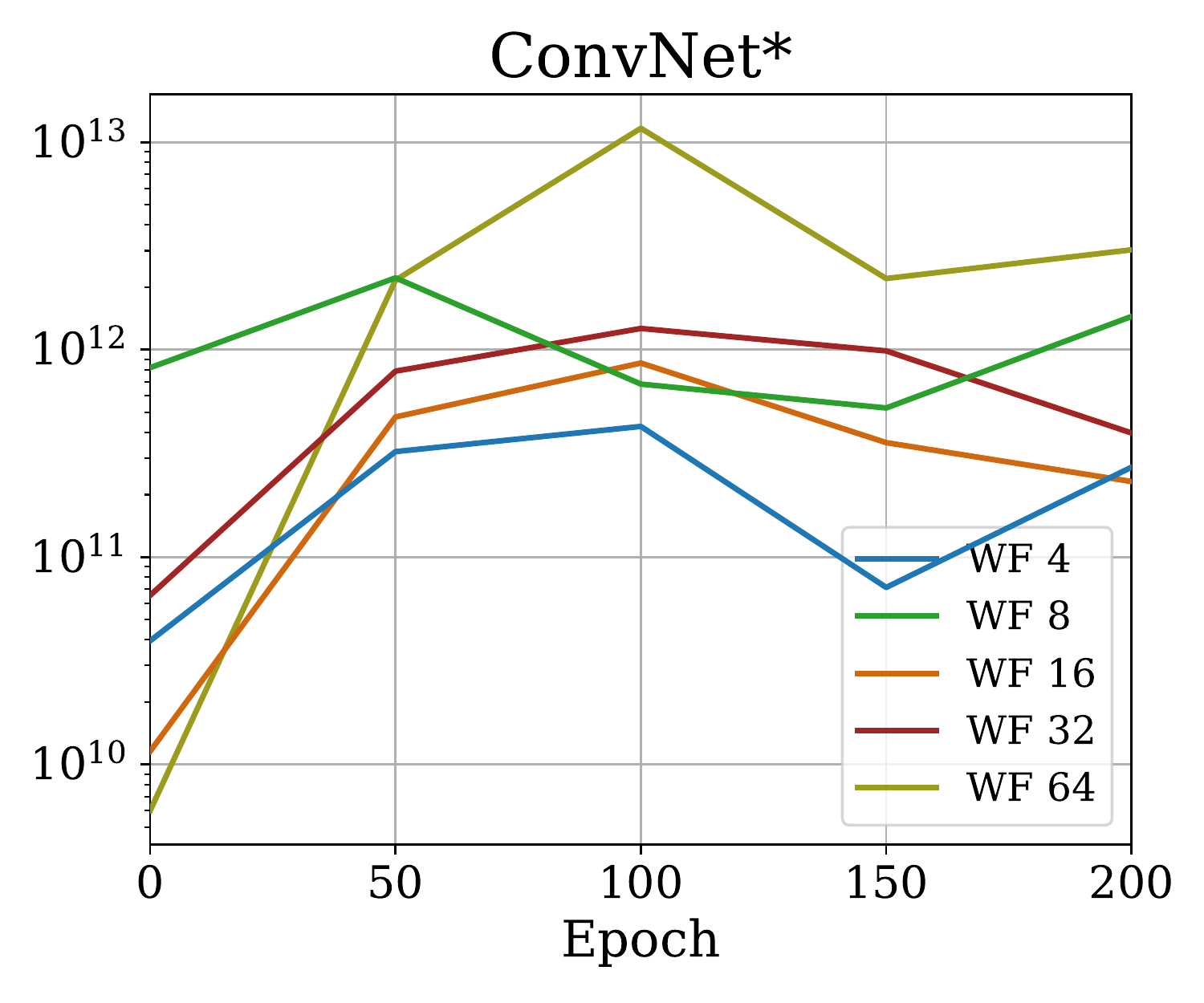}
    \end{subfigure}
    \hfill
    \begin{subfigure}[b]{0.24\textwidth}
        \includegraphics[width=\textwidth]{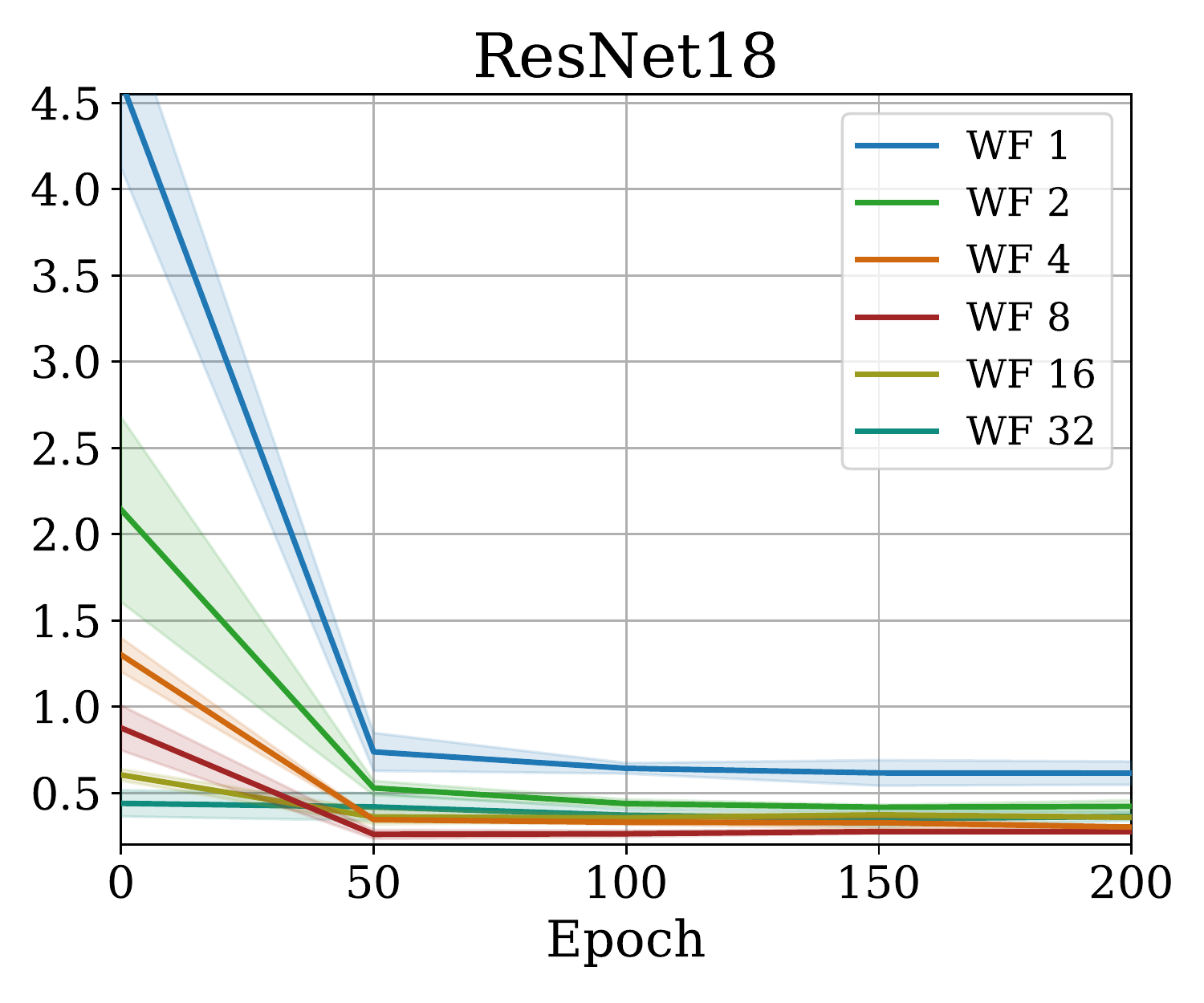}
    \end{subfigure}
    \hfill
    \begin{subfigure}[b]{0.24\textwidth}
        \includegraphics[width=\textwidth]{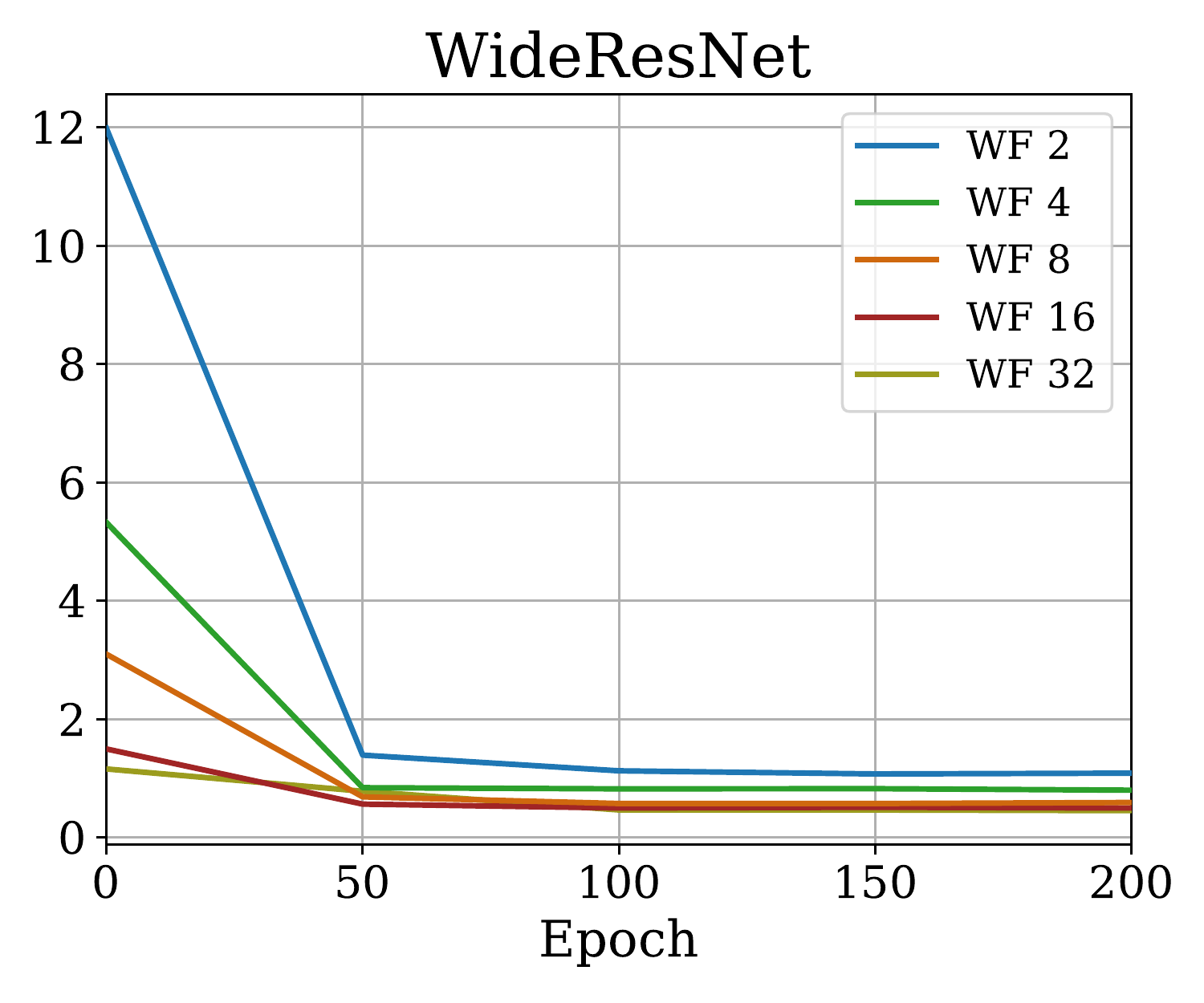}
    \end{subfigure}
    \vspace{-1mm}
    \caption{Evaluating the \textbf{relative difference of $\lambda_{\min}$ of $\meNTK(\dset, \dset)$ and $\mpNTK(\dset, \dset)$} based on $\dset$ being 1000 random points from CIFAR-10.
    Wider nets have more similar $\lambda_{\min}(\meNTK(\dset, \dset))$ and $\lambda_{\min}(\mpNTK(\dset, \dset))$.
    Note, though, the extremely large values reported for ConvNet; as observed by \citet{lee2020finite} and \citet{xiao2020disentangling}, it is ill-conditioned and $\lambda_{\min} (\meNTK(\dset, \dset)) \to 0$, while $\lambda_{\min} (\mpNTK(\dset, \dset)) > 0.001$, causing the huge discrepancy. More details in \cref{app:eigs}.}
    \label{fig:min_eigval_diff}
    \vspace{-1mm}
\end{figure*}

\Cref{fig:diagonality} shows that as the width grows, the sum of off-diagonal elements of $\meNTK(x_1, x_2)$ becomes small compared to the diagonal.
Furthermore, \cref{fig:fro_norm_diff} provides experimental support that as width grows, $\mpNTK \otimes I_O$ converges to $\meNTK$ in terms of relative Frobenius norm.

\Cref{theorem:pntk_fro_norm} only applies to epoch zero of these figures, as it assumes networks with weights at initialization.
As can be seen in the figures, these results don't necessarily apply to the NNs not at initialization (i.e., after a few epochs of training). This naturally gives rise to the question: \textit{Can the \pNTK~be used to analyze and represent NNs whose parameters are far from initialization?} We will now take various experimental approaches towards studying this question. %

\subsection{Largest Eigenvalue Converges as Width Grows}
\label{sec:eigs}

As discussed before, the conditioning of a network's eNTK has been shown to be closely related to generalization properties of the network, such as trainability and generalization risk \citep{xiao2018dynamical,xiao2020disentangling,wei2022more}.
Thus, we would like to know how well the pNTK's eigenspectrum approximates that of the eNTK. The following theorem gives a bound on the rate of convergence between the maximum eigenvalues of the two kernels.

\begin{theorem}[Informal] \label{theorem:pntk_max_eigval}
Let $f_\theta: \R^{D} \to \R^{O}$ be
a fully-connected network with layers of width $n$ whose parameters are initialized according to \citet{he2016kaiming} initialization,
with ReLU-type activations.
Let $\mpNTK(x_1, x_2)$ be the corresponding pNTK of $f_\theta$ as in \eqref{eq:pntk_def} and $\meNTK(x_1, x_2)$ the corresponding eNTK as in \eqref{eq:jac_ntk}
for a fixed pair of inputs $x_1$, $x_2$.
With high probability over the initialization,
\begin{equation*}
    \frac{\lambda_{\max} ( \mpNTK(x_1, x_2) \otimes I_O ) - \lambda_{\max} \left( \meNTK \left(x_1, x_2 \right) \right) }{ \lambda_{\max} \left(\meNTK \left(x_1, x_2 \right) \right) } \in \bigO(n^{-\frac{1}{2}}).
\end{equation*}
\end{theorem}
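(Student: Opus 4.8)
The plan is to reduce this directly to \cref{theorem:pntk_fro_norm} via a standard eigenvalue--perturbation inequality, so that essentially no new probabilistic work is needed. Write $A = \mpNTK(x_1,x_2)\otimes I_O$ and $B = \meNTK(x_1,x_2)$, both $O\times O$. Since $\mpNTK(x_1,x_2)$ is a scalar, $A = \mpNTK(x_1,x_2)\, I_O$ is a multiple of the identity, so each of its eigenvalues equals $\mpNTK(x_1,x_2)$; in particular $\lambda_{\max}(A) = \mpNTK(x_1,x_2)$. Because the eNTK Gram matrix is symmetric positive semidefinite, Weyl's inequality gives $\abs{\lambda_{\max}(A)-\lambda_{\max}(B)} \le \norm{A-B}_{\op} \le \norm{A-B}_F$. (For off-diagonal single-pair blocks, which need not be symmetric, one replaces Weyl's inequality by its singular-value analogue / Mirsky's inequality, reading $\lambda_{\max}$ as the largest singular value; the rest of the argument is unchanged.)

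Next I would relate the two normalizations used in the two theorems. For the $O\times O$ PSD matrix $B$ one has $\norm{B}_F^2 = \sum_i \lambda_i(B)^2 \le O\,\lambda_{\max}(B)^2$, hence $\lambda_{\max}(B) \ge \norm{B}_F/\sqrt{O}$. Combining this with the Weyl bound,
\begin{equation*}
\frac{\abs{\lambda_{\max}(A)-\lambda_{\max}(B)}}{\lambda_{\max}(B)}
\le \frac{\norm{A-B}_F}{\norm{B}_F/\sqrt{O}}
= \sqrt{O}\;\frac{\norm{\mpNTK(x_1,x_2)\otimes I_O - \meNTK(x_1,x_2)}_F}{\norm{\meNTK(x_1,x_2)}_F}.
\end{equation*}
On the same high-probability event of \cref{theorem:pntk_fro_norm}, the ratio on the right is $\bigO(n^{-1/2})$, and since $O$ does not depend on $n$ the factor $\sqrt{O}$ is absorbed into the constant. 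This bounds the absolute value of the numerator in the statement, which is exactly the (two-sided) $\bigO(n^{-1/2})$ claim.

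The only points needing care, and where I expect the minor subtleties to sit, are: (i) that $\lambda_{\max}(\meNTK(x_1,x_2))$ is bounded away from $0$ with high probability, so the division is legitimate and the estimate non-vacuous — this is already implicit in the formal version of \cref{theorem:pntk_fro_norm}, which itself divides by $\norm{\meNTK}_F$, and can be made explicit through the lower bound on $\norm{\meNTK}_F$ established there together with $\lambda_{\max}\ge\norm{\cdot}_F/\sqrt{O}$; and (ii) bookkeeping of the high-probability event, which is inherited verbatim from \cref{theorem:pntk_fro_norm}. There is no genuinely hard step: the entire content is that operator-norm (hence eigenvalue) perturbations are dominated by Frobenius-norm perturbations, and the latter have already been controlled.
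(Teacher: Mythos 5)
Your proposal is correct and takes essentially the same route as the paper's proof: the paper likewise bounds $\abs{\lambda_{\max}(\mpNTK(x_1,x_2)\otimes I_O)-\lambda_{\max}(\meNTK(x_1,x_2))}$ by the operator (hence Frobenius) norm of the difference matrix---via the triangle inequality on spectral norms rather than by quoting Weyl/Mirsky---and then invokes the Frobenius-norm convergence result. The only bookkeeping difference is the denominator, which the paper handles by reusing the $\Omega(n)$ lower bound on $\lambda_{\max}(\meNTK)$ from the proof of \cref{theorem:pntk_fro_norm} while you instead convert the relative Frobenius bound via $\lambda_{\max}(B)\ge \norm{B}_F/\sqrt{O}$ and absorb the $\sqrt{O}$ into the constant; your Mirsky caveat for the possibly non-symmetric single-pair block $\meNTK(x_1,x_2)$ with $x_1\ne x_2$ is in fact a point the paper glosses over by asserting symmetric PSD structure.
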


\begin{figure*}[!ht]
    \centering
    \begin{subfigure}[b]{0.24\textwidth}
        \includegraphics[width=\textwidth]{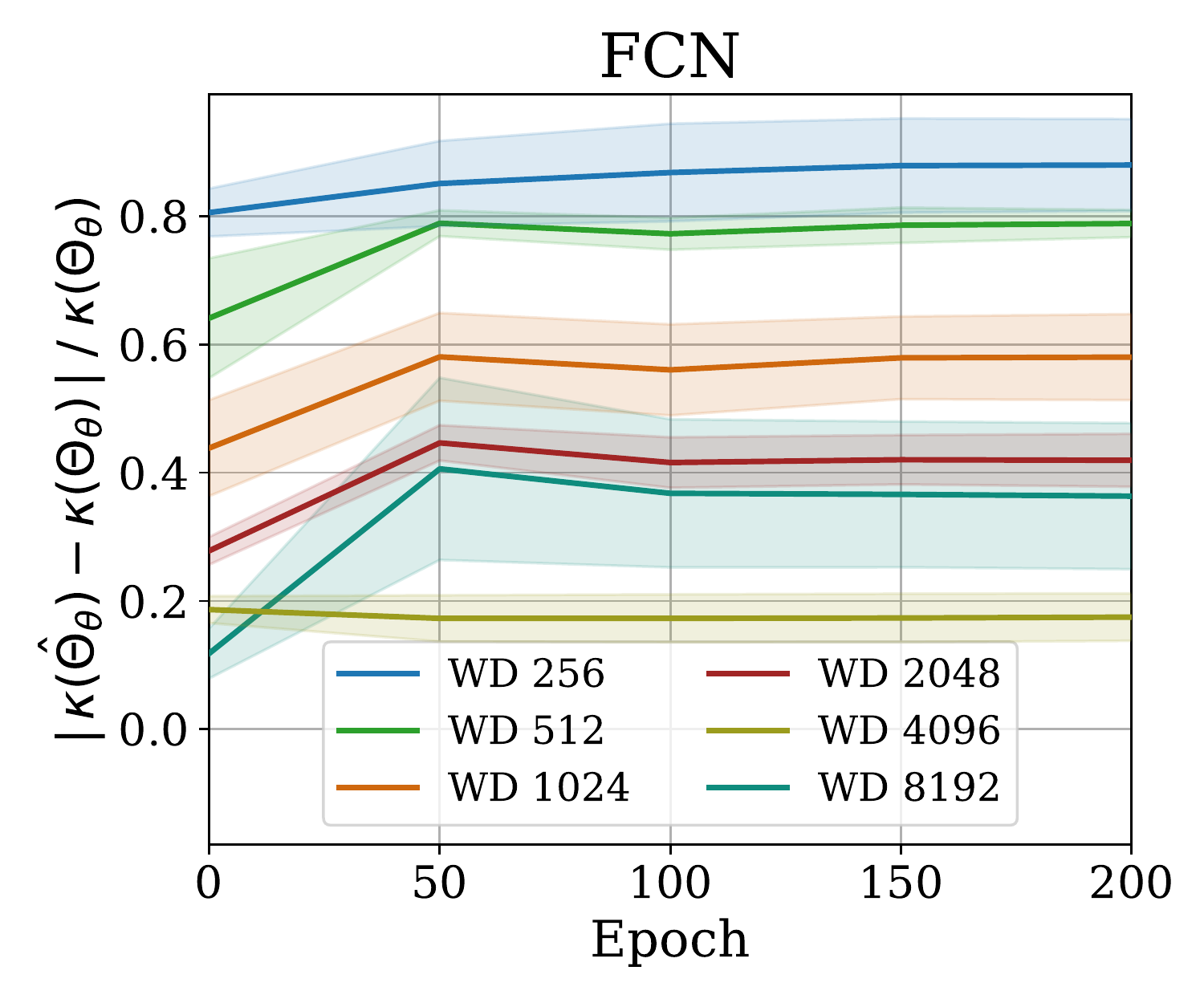}
    \end{subfigure}
    \hfill
    \begin{subfigure}[b]{0.24\textwidth}
        \includegraphics[width=\textwidth]{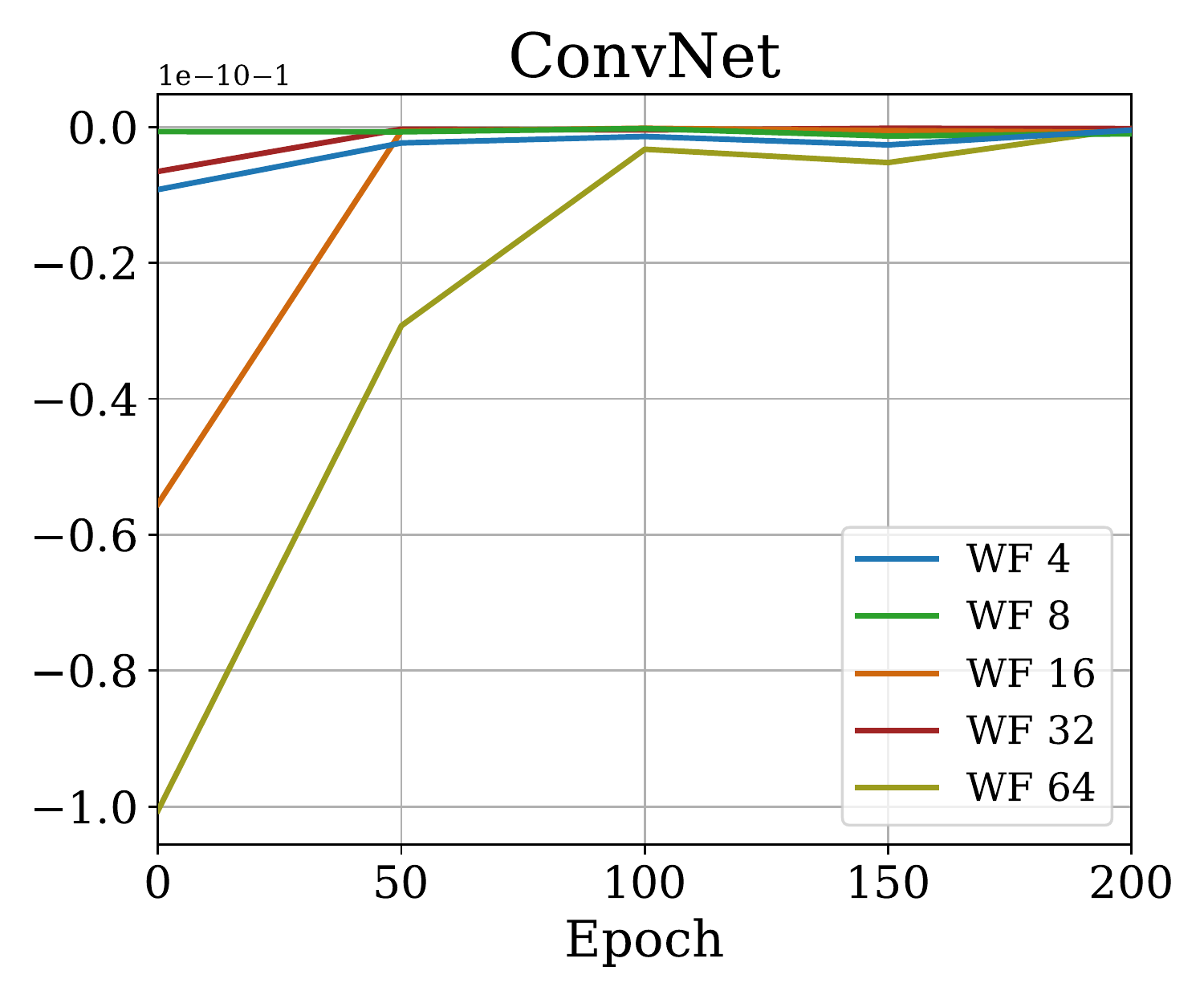}
    \end{subfigure}
    \hfill
    \begin{subfigure}[b]{0.24\textwidth}
        \includegraphics[width=\textwidth]{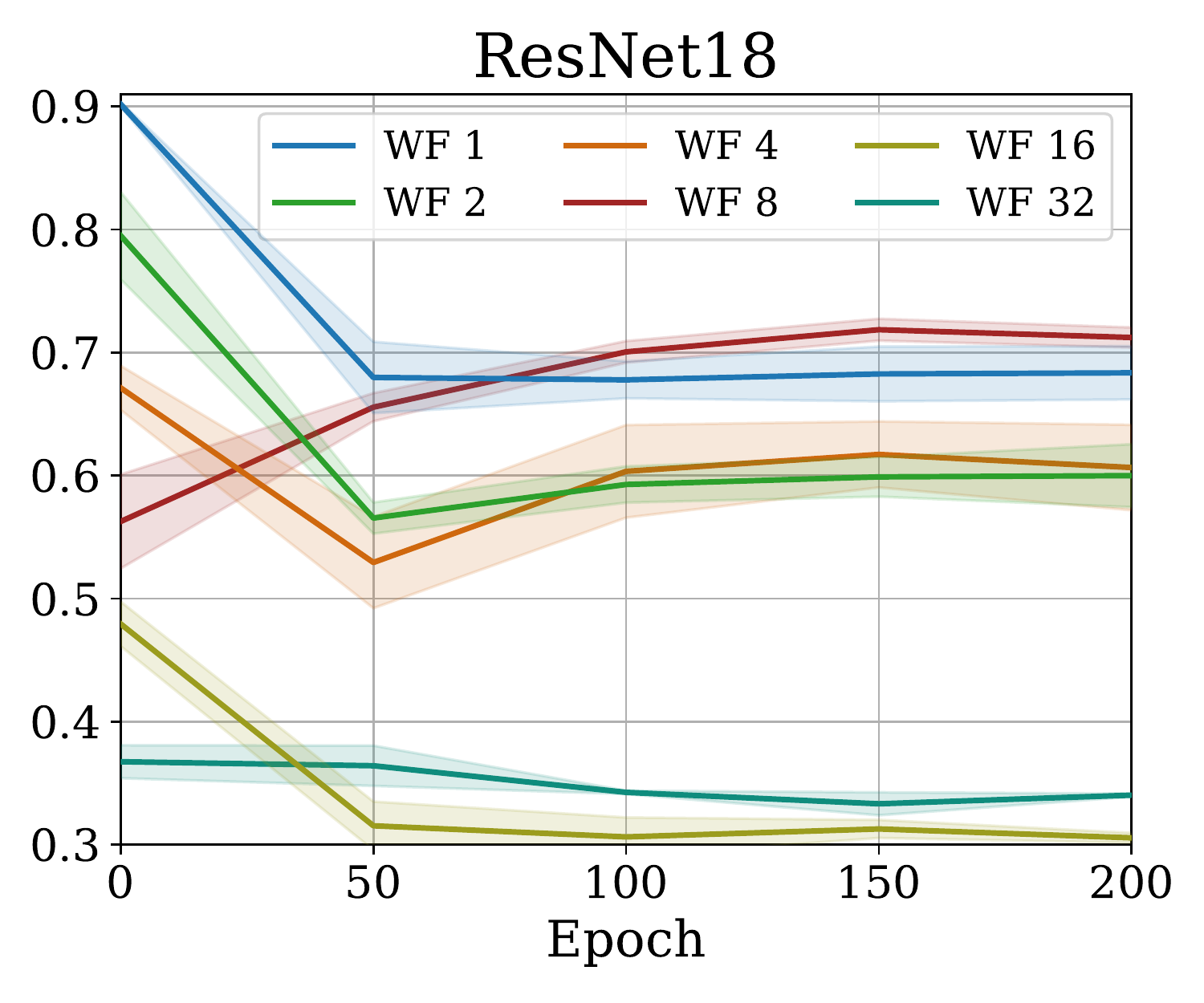}
    \end{subfigure}
    \hfill
    \begin{subfigure}[b]{0.24\textwidth}
        \includegraphics[width=\textwidth]{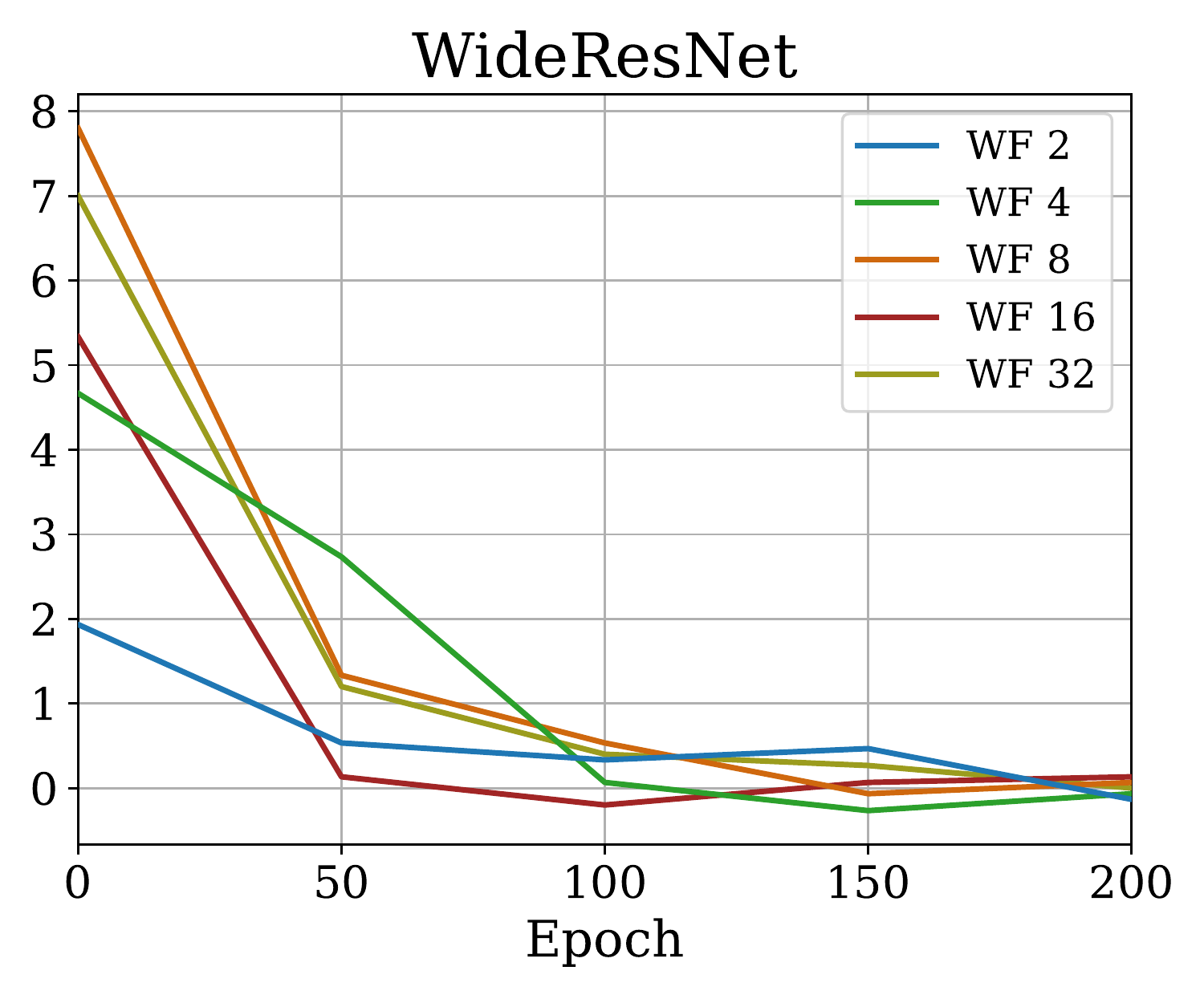}
    \end{subfigure}
    \vspace{-1mm}
    \caption{The \textbf{relative difference in condition number},
    $\kappa(K) = \lambda_{\max}(K) / \lambda_{\min}(K)$,
    decreases for wider nets.
    The strange ConvNet plot is due to the issue in \cref{fig:min_eigval_diff}; more details in \cref{app:eigs}.}
    \vspace{-1mm}
    \label{fig:cond_number_diff}
\end{figure*}

\cref{theorem:pntk_max_eigval} bounds the difference between the maximum eigenvalue of \pNTK~and the maximum eigenvalue of \eNTK~based on the NN's width, for networks at initialization. A formal statement for \cref{theorem:pntk_max_eigval} and its proof are given in \cref{app:eigs}.
\Cref{fig:max_eigval_diff} also supports this trend experimentally. %

\Cref{fig:min_eigval_diff} shows a similar trend for the minimum eigenvalues, although we have not found a proof of this convergence.
This suggests that the condition number $\kappa = \lambda_{\max} / \lambda_{\min}$ should become similar as width grows; this is also supported by results in \cref{fig:cond_number_diff}.

Interestingly, the rate of increase/decrease in the difference between maximum and minimum eigenvalues and the condition numbers between \pNTK~and \eNTK~do not \textcolor{blue}{necessarily have} a monotonic behaviour as the training goes on. Observing the exact values of $\lambda_{\min}$, $\lambda_{\max}$, and $\kappa$ for different architectures, widths at initialization and throughout training reveals that in ConvNet, WideResNet and ResNet18 architectures, $\lambda_{\min}$ is close to zero at initialization, but grows during training; the inverse phenomenon is observed with FCNs. Further investigations of these statistics might reveal interesting insights about the behaviour of NNs trained with SGD and the connections between \eNTK{} and trainibility of the architecture.

\subsection{Kernel Regression Using pNTK vs. eNTK} 
\label{sec:kernel-regression}

\begin{figure*}[!ht]
    \centering
    \begin{subfigure}[b]{0.24\textwidth}
        \includegraphics[width=\textwidth]{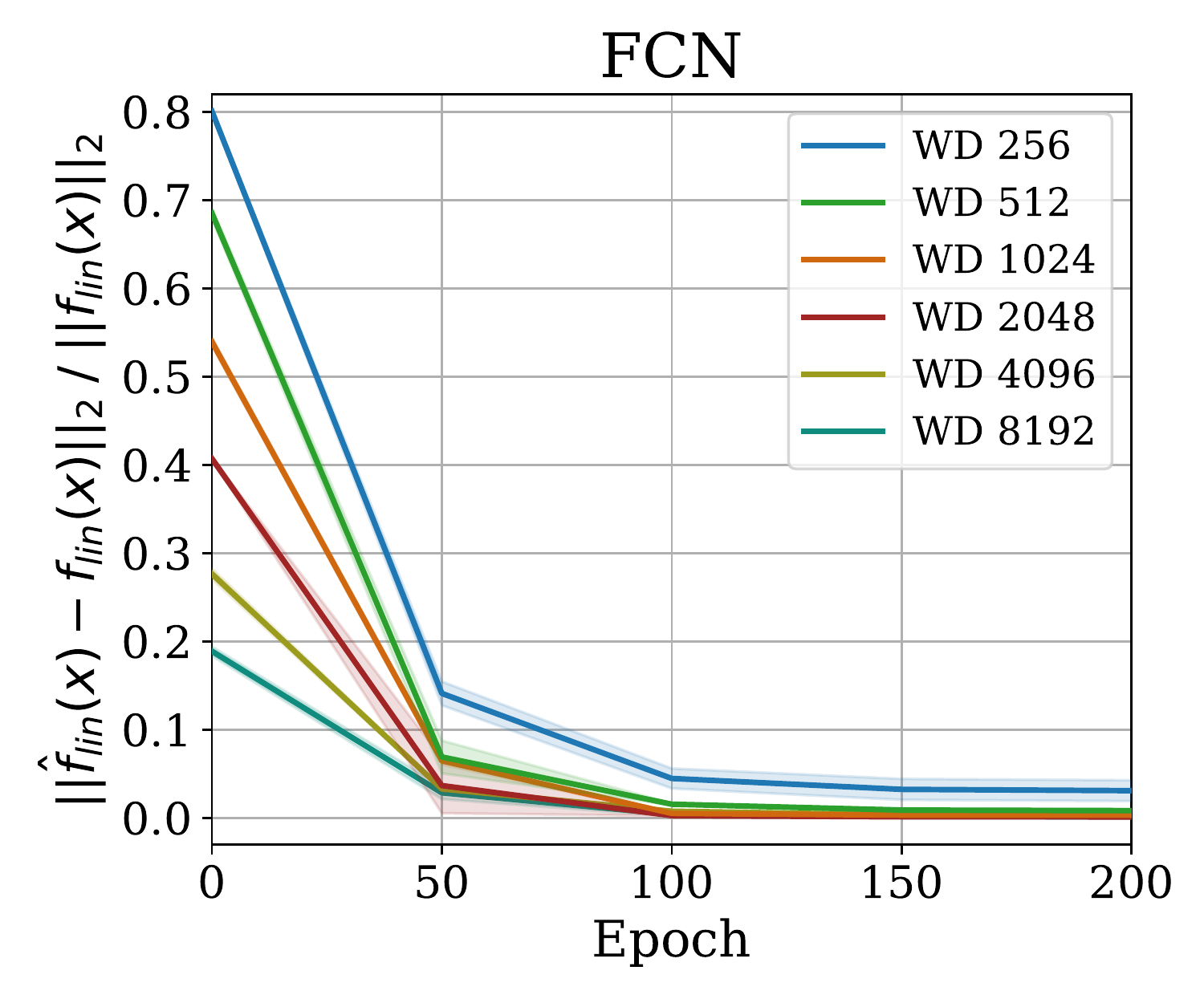}
    \end{subfigure}
    \hfill
    \begin{subfigure}[b]{0.24\textwidth}
        \includegraphics[width=\textwidth]{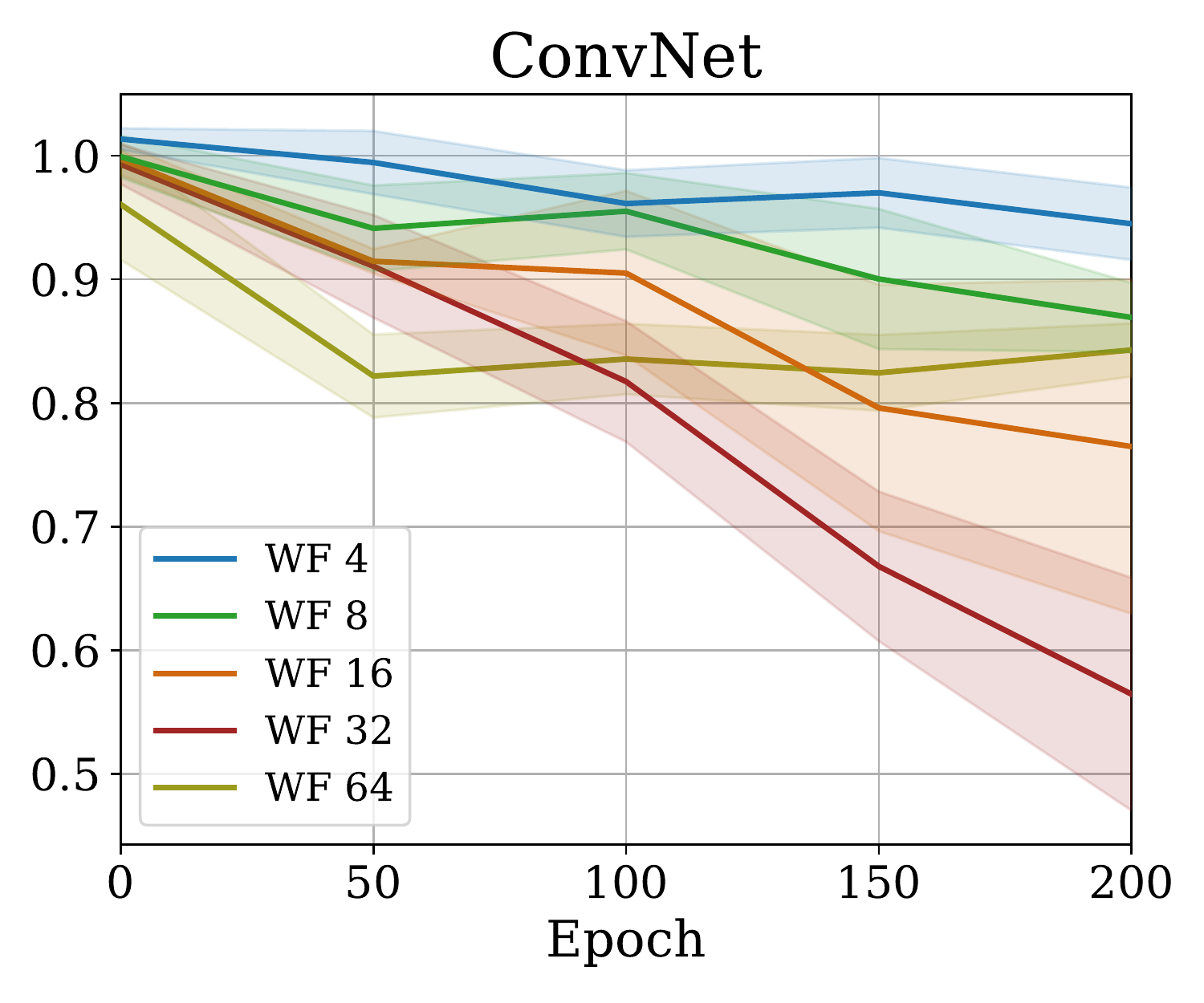}
    \end{subfigure}
    \hfill
    \begin{subfigure}[b]{0.24\textwidth}
        \includegraphics[width=\textwidth]{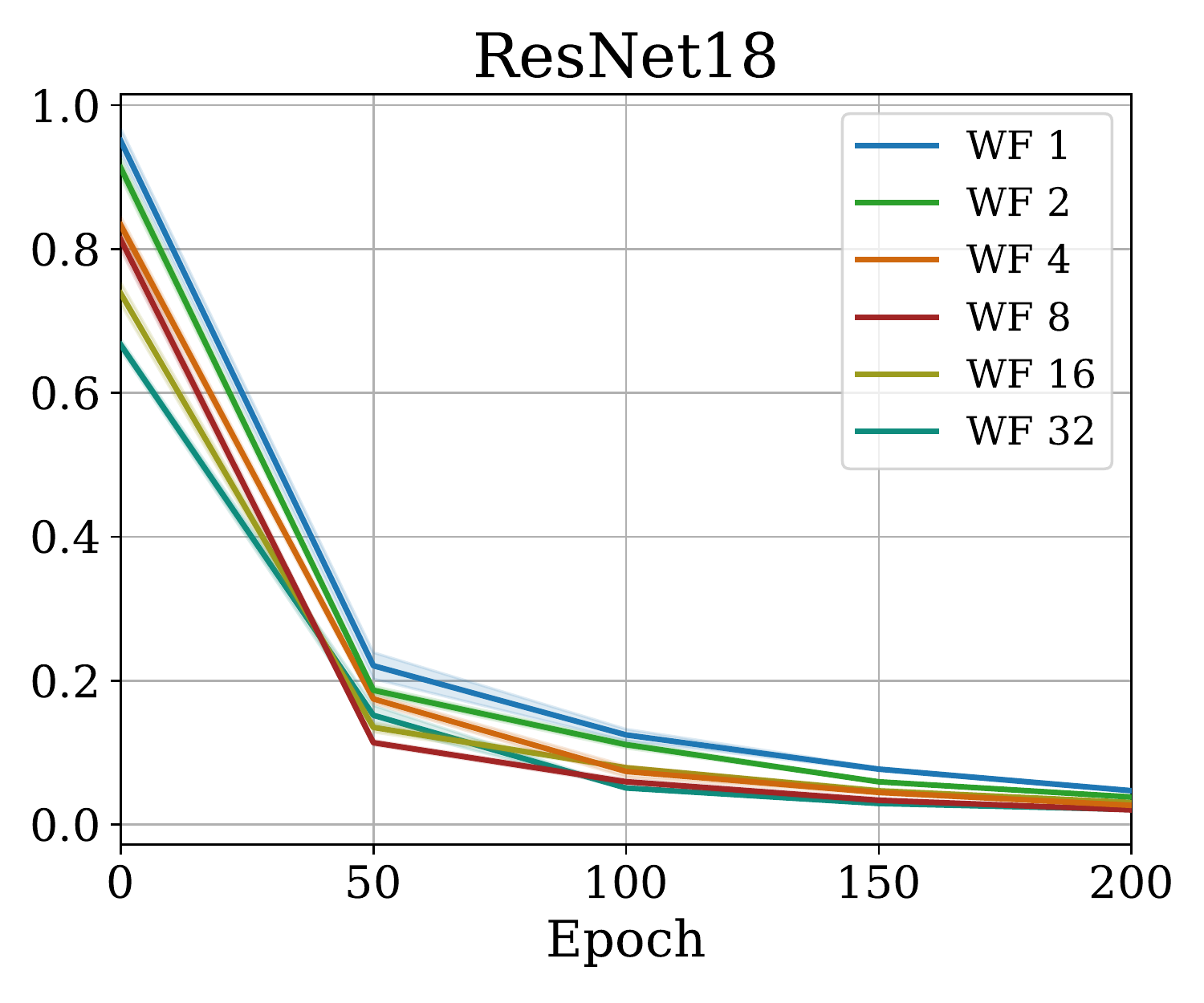}
    \end{subfigure}
    \hfill
    \begin{subfigure}[b]{0.24\textwidth}
        \includegraphics[width=\textwidth]{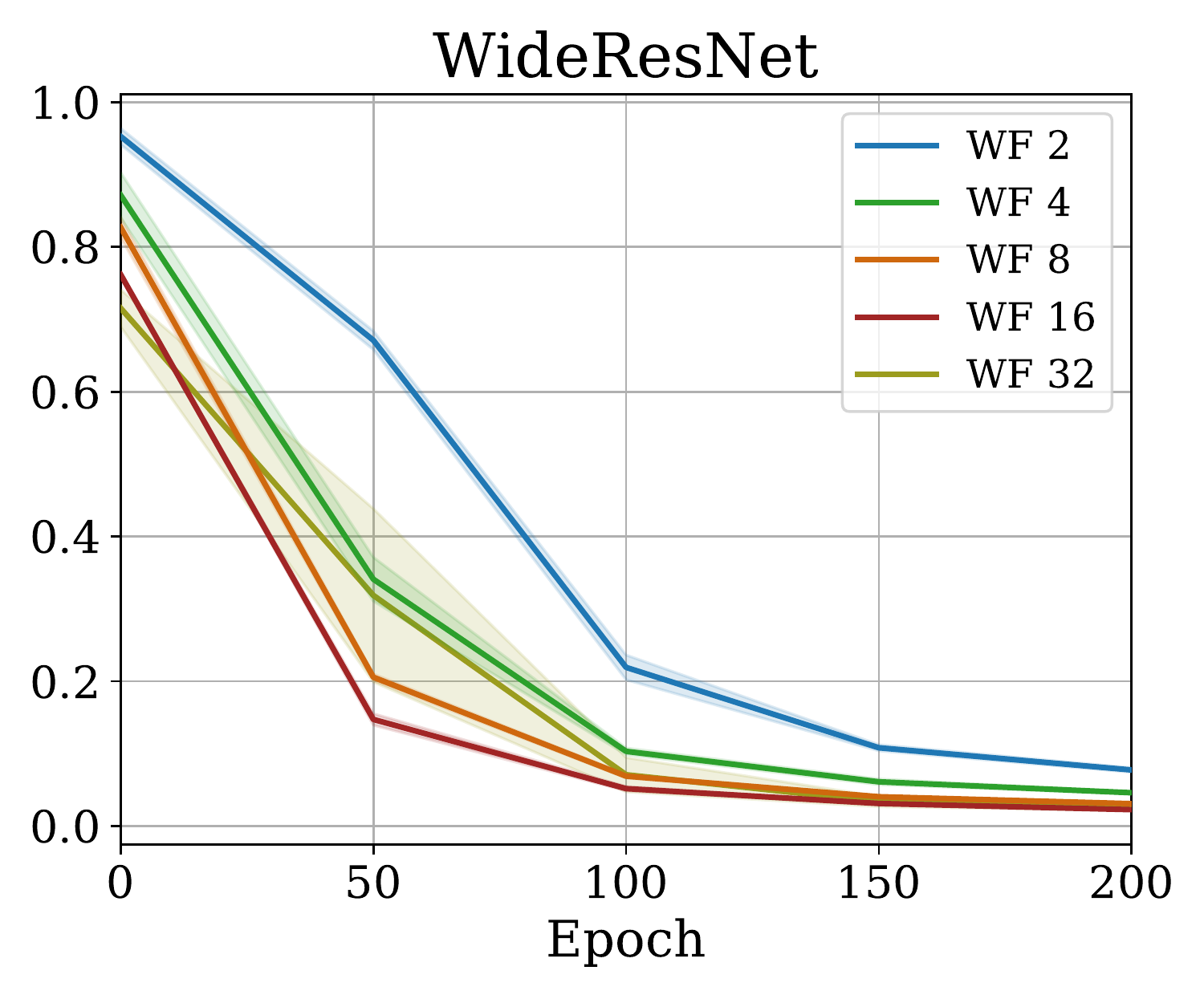}
    \end{subfigure}
    \vspace{-2mm}
    \caption{The \textbf{relative difference of kernel regression outputs}, \eqref{eq:f_lin} and \eqref{eq:pseudo_lin}, when training on $\abs{\dset} = 1000$ random CIFAR-10 points and testing on $\abs{\cX} = 500$. For wider NNs, the relative difference in $\hat{f}^\mathit{lin}(\cX)$ and $f^\mathit{lin}(\cX)$ decreases at initialization. Surprisingly, the difference between these two continues to quickly vanish while training the network.}
    \label{fig:kr_norm_diff_pseudo_full}
    \vspace{-2mm}
\end{figure*}

\begin{figure*}[!ht]
    \centering
    \begin{subfigure}[b]{0.24\textwidth}
        \includegraphics[width=\textwidth]{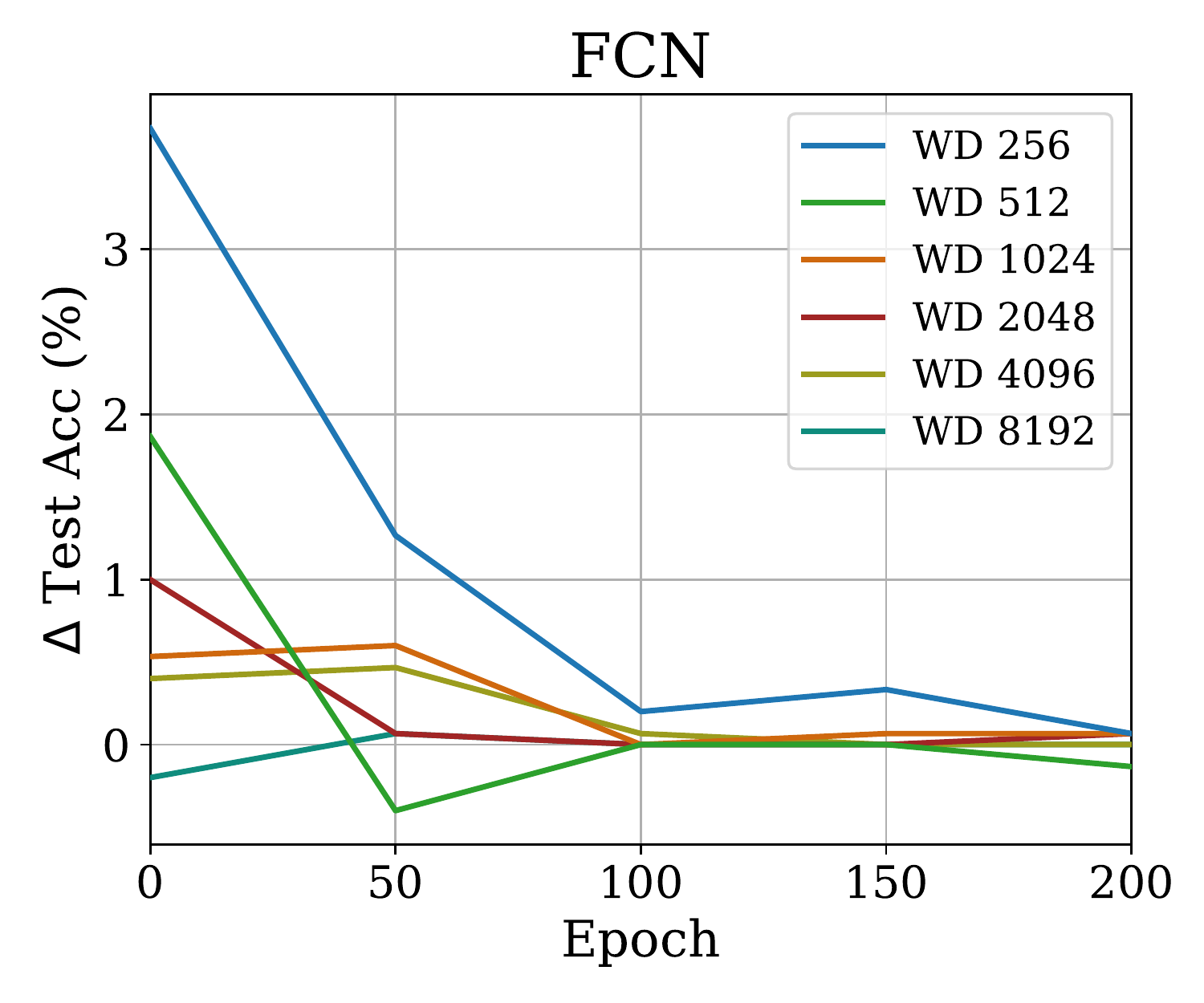}
    \end{subfigure}
    \hfill
    \begin{subfigure}[b]{0.24\textwidth}
        \includegraphics[width=\textwidth]{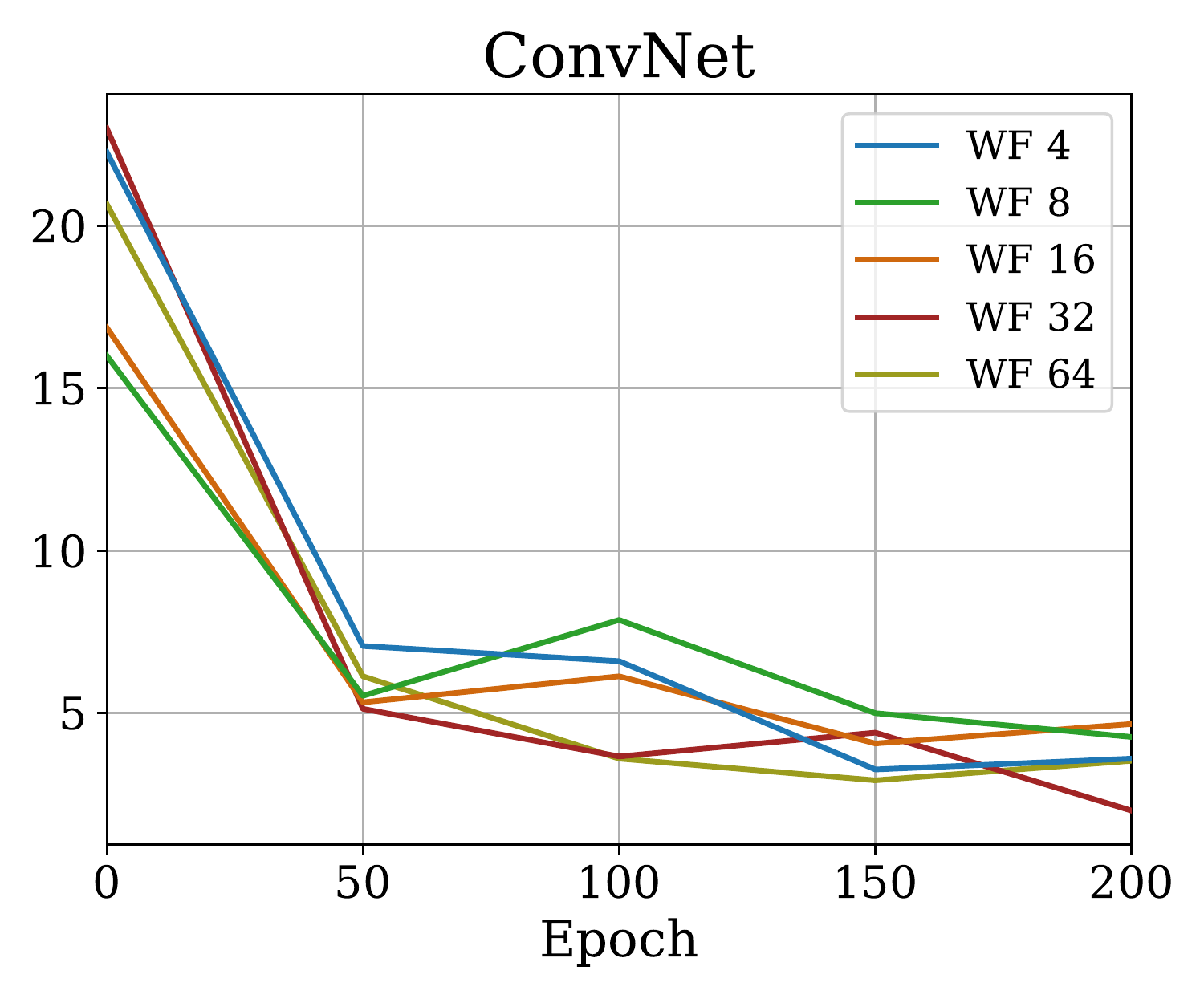}
    \end{subfigure}
    \hfill
    \begin{subfigure}[b]{0.24\textwidth}
        \includegraphics[width=\textwidth]{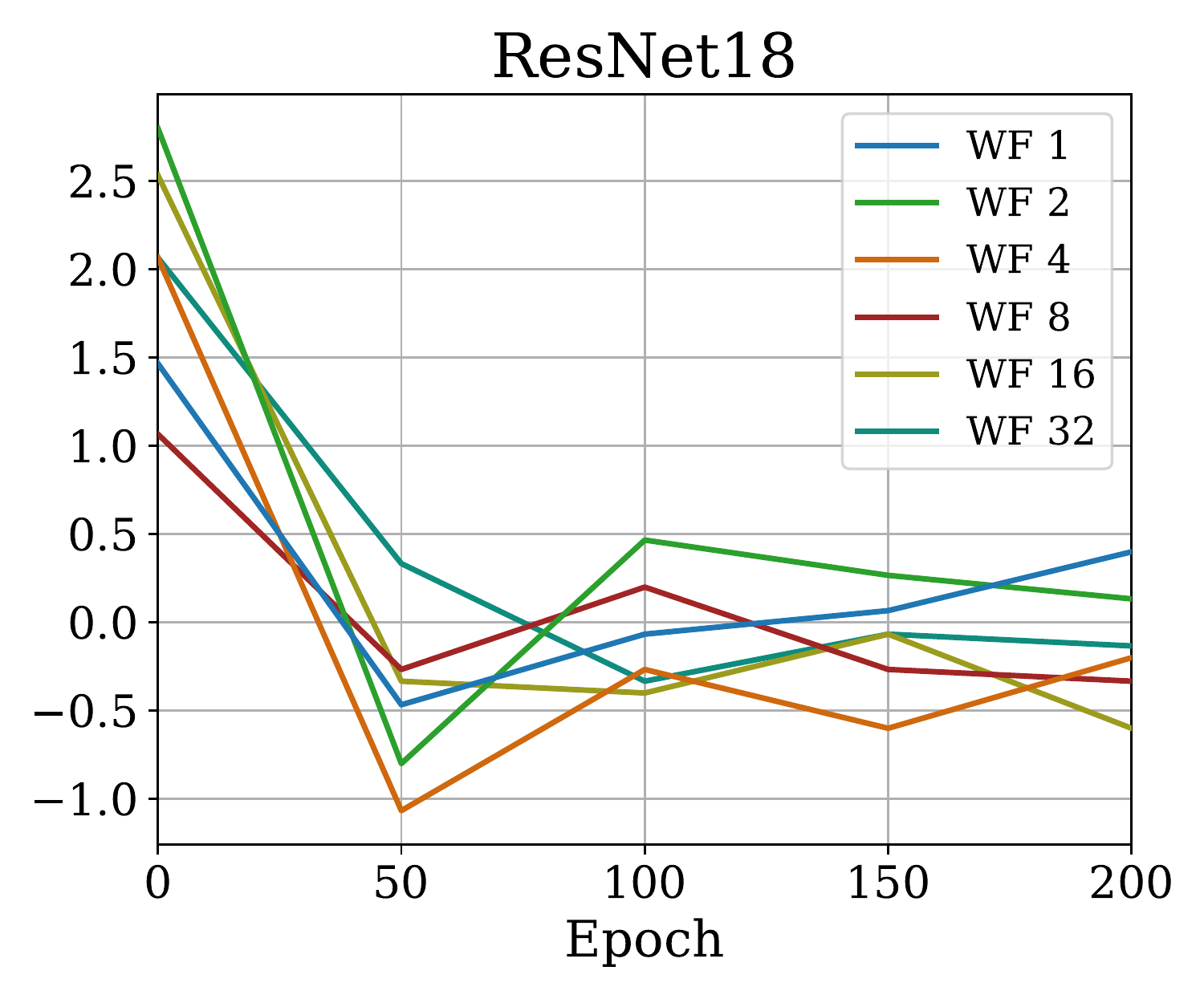}
    \end{subfigure}
    \hfill
    \begin{subfigure}[b]{0.24\textwidth}
        \includegraphics[width=\textwidth]{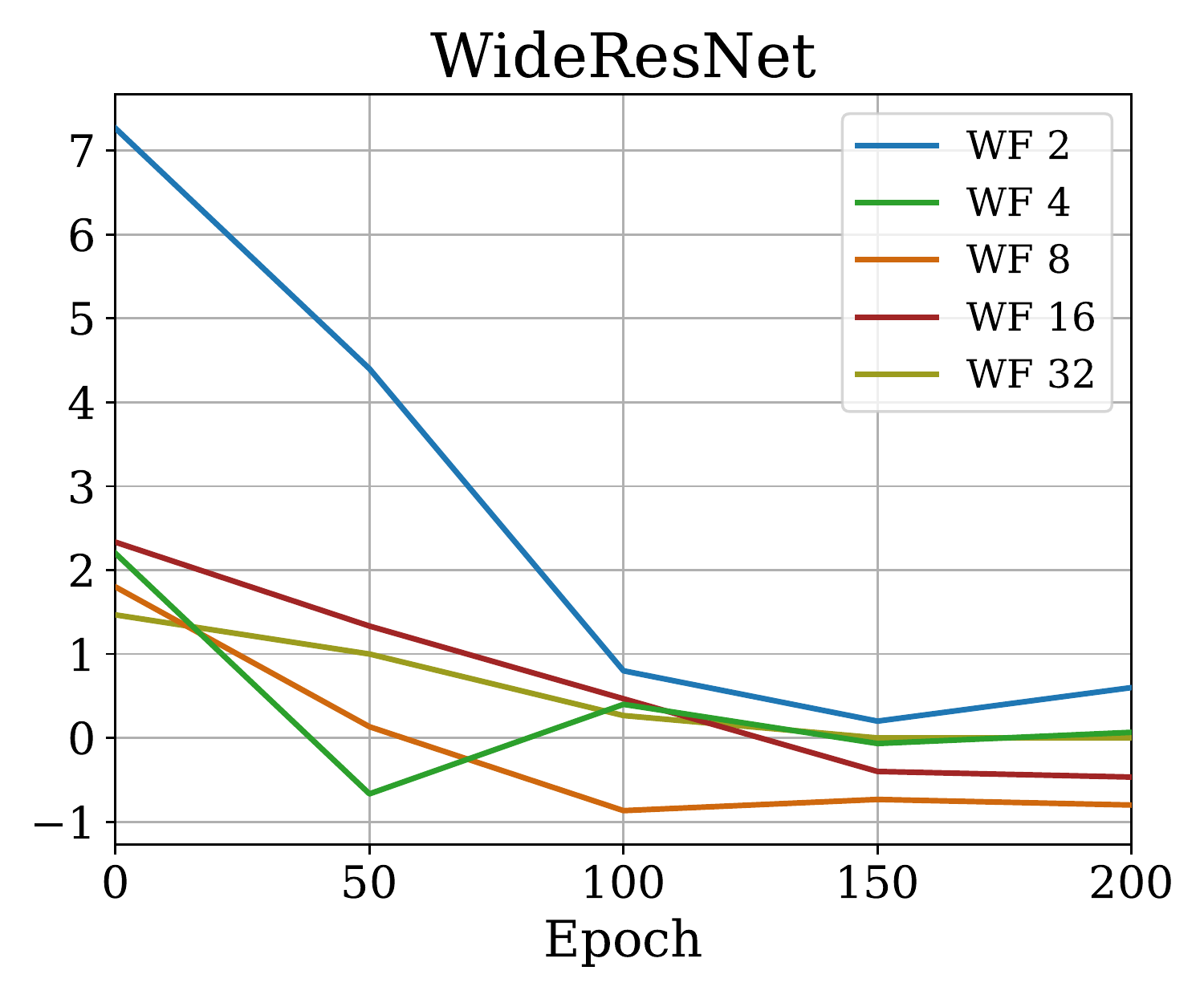}
    \end{subfigure}
    \vspace{-2mm}
    \caption{
    \textbf{Using \pNTK~in kernel regression} (as in \cref{fig:kr_norm_diff_pseudo_full}) \textbf{almost always achieves a higher test accuracy than using \eNTK.}
    Wider NNs and trained nets have more similar prediction accuracies of $\hat{f}^\mathit{lin}$ and $f^\mathit{lin}$ at initialization.
    Again, the difference between these two continues to vanish throughout the training process using SGD.
    }
    \label{fig:kr_acc_diff_pseudo_full}
    \vspace{-3mm}
\end{figure*}

\citet{linntk2019lee} proved that as a finite NN's width grows, its training dynamics can be well approximated using the first-order Taylor expansion of that NN around its initialization (a \textit{linearized neural network}). Informally, they showed that when $f$ is wide enough and trained on $\dset$ with a suitably small learning rate,
its predictions on $x$ can be approximated by those of the linearized network $f^\mathit{lin}$ given by
\begin{align} \label{eq:f_lin}
    \underbrace{f_0(x)}_{O \times 1} + \underbrace{\Theta_0(x, \dset)}_{O \times NO}
    \; \underbrace{{\Theta_0(\dset, \dset)}^{-1}}_{NO \times NO} 
    \, \underbrace{(\mathcal{Y}_\dset - f_0(\dset))}_{NO \times 1}
,\end{align}
where $\cY_{\dset}$ is the matrix of one-hot labels for the training points $\dset$, and $\Theta_0$ is the \eNTK~of $f$ at initialization $f_0$.
This is simply kernel regression on the training data $\dset$ using the kernel $\Theta_0$ and prior mean $f_0$.
\citet{wei2022more} use the same kernel in a generalized cross-validation estimator \citep{craven1978smoothing} to predict the generalization risk of the NN.
As discussed before, using the \eNTK{} in these applications is practically infeasible, due to huge time and memory complexity of the kernel,
but we show the \pNTK~approximates $f^\mathit{lin}(x)$ with much improved time and memory complexity.

\begin{theorem}[Informal] \label{theorem:kr_norm_diff}
Let $f_\theta: \R^{D} \to \R^{O}$ be
a fully-connected network with layers of width $n$ whose parameters are initialized as in \citet{he2016kaiming},
with ReLU-type activations.
Let $\mpNTK(x_1, x_2)$ be the corresponding pNTK of $f_\theta$ as in \eqref{eq:pntk_def} and $\meNTK(x_1, x_2)$ the corresponding eNTK as in \eqref{eq:jac_ntk}
for a fixed pair of inputs $x_1$, $x_2$.
Define $\hat f^\mathit{lin}(x)$ as
\begin{equation} \label{eq:pseudo_lin}
    \underbrace{f_0(x)}_{1 \times O} + \underbrace{\mpNTK(x, \dset)}_{1 \times N}
    \, \underbrace{{\mpNTK(\dset, \dset)}^{-1}}_{N \times N} 
    \underbrace{(\cY_\dset - f_0(\dset))}_{N \times O}
.\end{equation}
After proper reshaping, with high probability over random initialization,
\begin{equation}
    \norm{\hat{f}^\mathit{lin}(x) - f^\mathit{lin}(x)}_F \in \bigO(n^{-\frac{1}{2}}).
\end{equation}
\end{theorem}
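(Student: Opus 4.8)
The plan is to recast \eqref{eq:pseudo_lin} as ordinary matrix-valued kernel regression with the kernel $\hat K := \mpNTK(\cdot,\cdot)\otimes I_O$, and then run a deterministic matrix-perturbation argument on top of \cref{theorem:pntk_fro_norm}. By the mixed-product property of Kronecker products, $\bigl(\mpNTK(x,\dset)\otimes I_O\bigr)\bigl(\mpNTK(\dset,\dset)\otimes I_O\bigr)^{-1} = \bigl(\mpNTK(x,\dset)\,\mpNTK(\dset,\dset)^{-1}\bigr)\otimes I_O$, so after the reshaping alluded to in the statement, $\hat f^\mathit{lin}(x)$ is exactly \eqref{eq:f_lin} with $\meNTK$ replaced by $\hat K$ (in particular $\hat K(\dset,\dset)$ is invertible iff $\mpNTK(\dset,\dset)$ is). Since the prior-mean term $f_0(x)$ cancels,
\begin{equation*}
\hat f^\mathit{lin}(x) - f^\mathit{lin}(x) = \Bigl[\hat K(x,\dset)\,\hat K(\dset,\dset)^{-1} - \meNTK(x,\dset)\,\meNTK(\dset,\dset)^{-1}\Bigr]\bigl(\cY_\dset - f_0(\dset)\bigr),
\end{equation*}
so it suffices to bound the bracketed operator, as $\norm{\cY_\dset - f_0(\dset)}_F$ is a fixed finite quantity (one-hot labels, and $f_0(\dset)$ bounded with high probability at the initialization of \cref{theorem:pntk_fro_norm}).

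Next I would transfer the pairwise guarantee to all the blocks that appear. Applying \cref{theorem:pntk_fro_norm} to each of the $\bigO(N^2)$ pairs drawn from $\dset\cup\{x\}$ and union bounding — harmless since $N=\abs{\dset}$ is fixed, not growing with $n$ — yields, with high probability over the initialization,
\begin{equation*}
\norm{\hat K(\dset,\dset) - \meNTK(\dset,\dset)}_\op \le \norm{\hat K(\dset,\dset) - \meNTK(\dset,\dset)}_F \in \bigO\bigl(n^{-1/2}\bigr)\,\norm{\meNTK(\dset,\dset)}_F ,
\end{equation*}
and likewise for $\hat K(x,\dset) - \meNTK(x,\dset)$. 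Write $\Delta_1 := \hat K(x,\dset) - \meNTK(x,\dset)$ and $\Delta_2 := \hat K(\dset,\dset) - \meNTK(\dset,\dset)$, both of operator norm $\bigO(n^{-1/2})$ relative to the corresponding eNTK block norms.

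Now the deterministic perturbation step. Set $\lambda_0 := \lambda_{\min}\bigl(\meNTK(\dset,\dset)\bigr) > 0$ (positivity is needed even to state \eqref{eq:f_lin}). By Weyl's inequality and the bound on $\norm{\Delta_2}_\op$, for $n$ large enough $\lambda_{\min}\bigl(\hat K(\dset,\dset)\bigr) \ge \lambda_0 - \norm{\Delta_2}_\op \ge \tfrac12\lambda_0$, so $\hat K(\dset,\dset)$ is invertible with $\norm{\hat K(\dset,\dset)^{-1}}_\op \le 2/\lambda_0$. Using the exact identity $B^{-1} = A^{-1} - A^{-1}(B-A)B^{-1}$ with $A=\meNTK(\dset,\dset)$, $B=\hat K(\dset,\dset)$,
\begin{equation*}
\hat K(x,\dset)\,\hat K(\dset,\dset)^{-1} - \meNTK(x,\dset)\,\meNTK(\dset,\dset)^{-1} = \Delta_1\,\hat K(\dset,\dset)^{-1} - \meNTK(x,\dset)\,\meNTK(\dset,\dset)^{-1}\,\Delta_2\,\hat K(\dset,\dset)^{-1} .
\end{equation*}
Taking operator norms and using $\norm{\meNTK(\dset,\dset)^{-1}}_\op = 1/\lambda_0$, $\norm{\meNTK(x,\dset)}_\op \le \norm{\meNTK(x,\dset)}_F$, $\norm{\hat K(\dset,\dset)^{-1}}_\op \le 2/\lambda_0$, and the $\bigO(n^{-1/2})$ bounds on $\norm{\Delta_i}_\op$ bounds the right-hand side by $\bigO(n^{-1/2})$, with a constant that is polynomial in $\norm{\meNTK(x,\dset)}_F$, $\norm{\meNTK(\dset,\dset)}_F$, and $1/\lambda_0$; one checks these combine so that any overall $n$-scaling of the eNTK cancels between the $\meNTK(x,\dset)$ factor and the two inverse factors, leaving only the condition-number-like ratios. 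Multiplying by $\norm{\cY_\dset - f_0(\dset)}_F$ and combining with the first display gives $\norm{\hat f^\mathit{lin}(x) - f^\mathit{lin}(x)}_F \in \bigO(n^{-1/2})$; a fully formal version just tracks these constants explicitly.

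The concentration is entirely inherited from \cref{theorem:pntk_fro_norm}, so the remaining content is genuinely deterministic; the one delicate point is the lower bound $\lambda_0$ on $\lambda_{\min}\bigl(\meNTK(\dset,\dset)\bigr)$. Kernel regression is only defined when this is positive, and — as the ConvNet panels of \cref{fig:min_eigval_diff} show — $\lambda_0$ can be extremely small, so the formal statement must either assume $\lambda_0$ is bounded below on the relevant scale or carry $1/\lambda_0$ explicitly in the rate; control of $\lambda_{\min}(\mpNTK(\dset,\dset))$ then comes for free via Weyl's inequality. A lesser nuisance is that \cref{theorem:pntk_fro_norm} is stated as a \emph{relative} bound while the target is an absolute $\norm{\cdot}_F$ bound, but this is not a real obstacle because the eNTK's scale cancels in the regression map $\meNTK(x,\dset)\,\meNTK(\dset,\dset)^{-1}$.
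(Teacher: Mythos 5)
Your proposal is correct and follows essentially the same route as the paper's proof: reduce \eqref{eq:pseudo_lin} to Kronecker-structured kernel regression (the paper does this via a vec/reshaping identity, you via the mixed-product property), apply the blockwise convergence of \cref{theorem:pntk_fro_norm} to the kernel blocks, and run the standard inverse-perturbation identity $\hat M^{-1}-M^{-1}=-\hat M^{-1}(\hat M-M)M^{-1}$ with explicit dependence on the minimum eigenvalue of the Gram matrix. Your closing caveat about $\lambda_{\min}(\meNTK(\dset,\dset))$ is consistent with the paper, which likewise carries this eigenvalue factor explicitly (after a $1/n$ rescaling) rather than lower-bounding it.
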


\begin{figure*}[!ht]
    \centering
    \begin{subfigure}[b]{0.24\textwidth}
        \includegraphics[width=\textwidth]{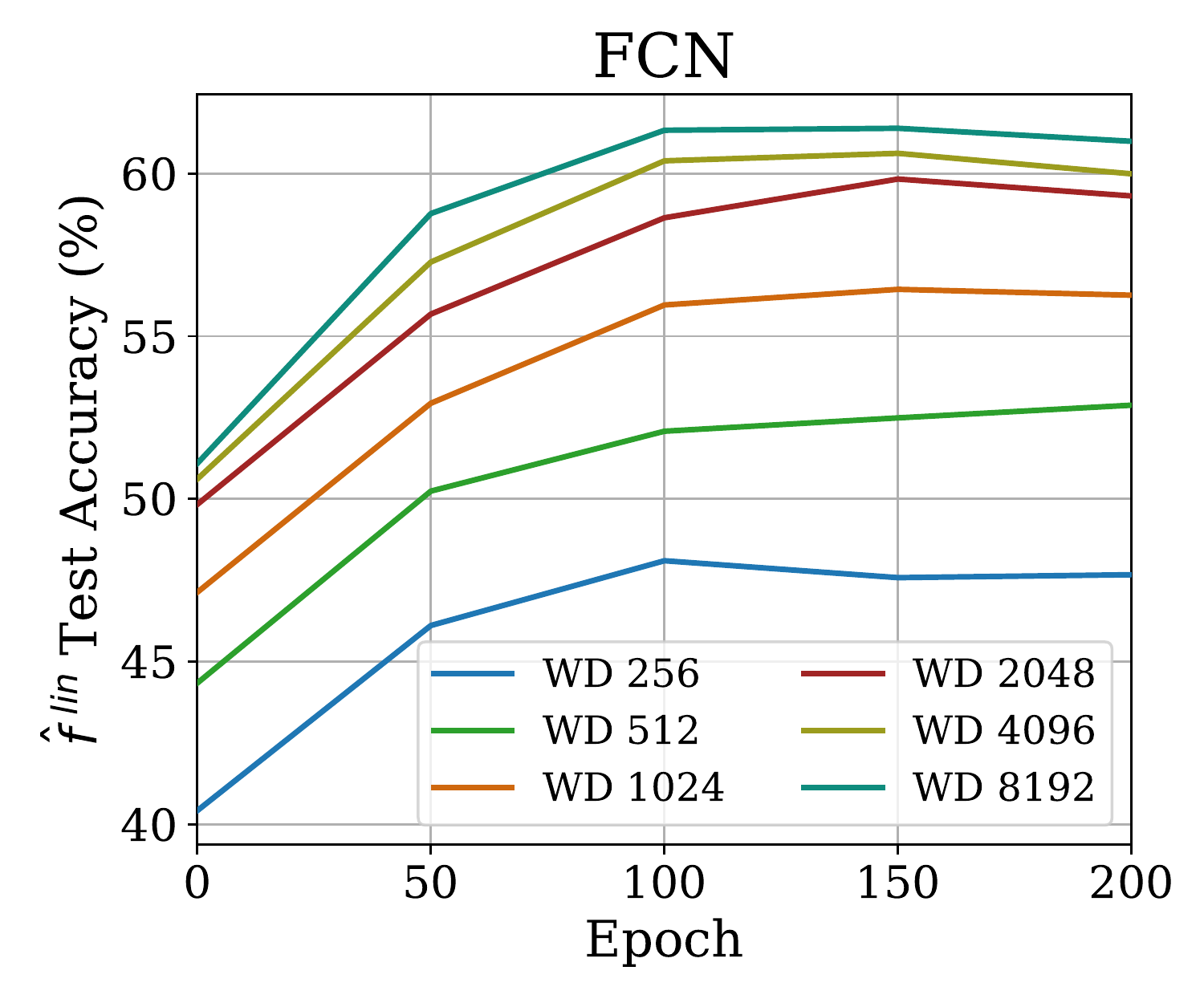}
    \end{subfigure}
    \hfill
    \begin{subfigure}[b]{0.24\textwidth}
        \includegraphics[width=\textwidth]{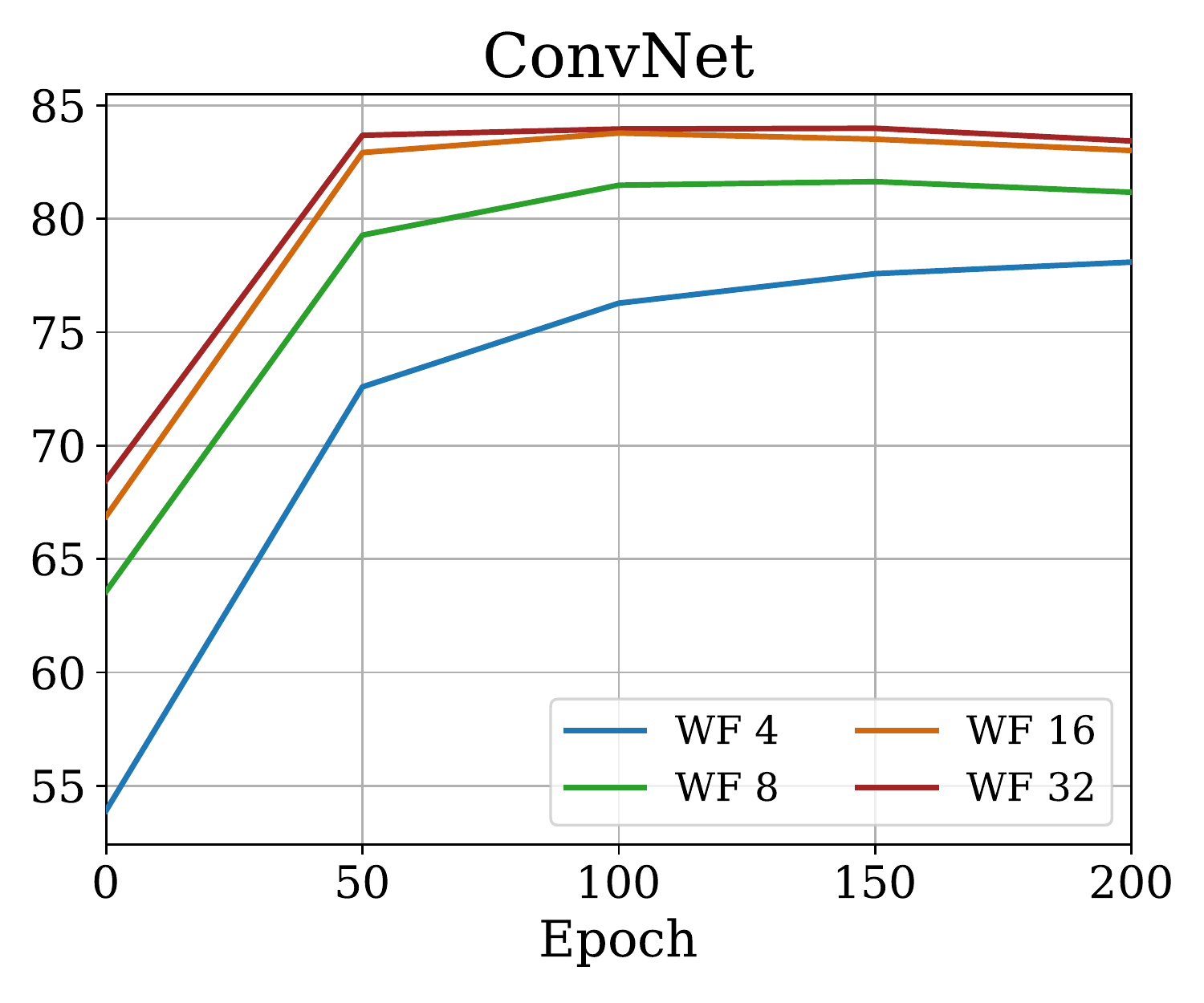}
    \end{subfigure}
    \hfill
    \begin{subfigure}[b]{0.24\textwidth}
    \includegraphics[width=\textwidth]{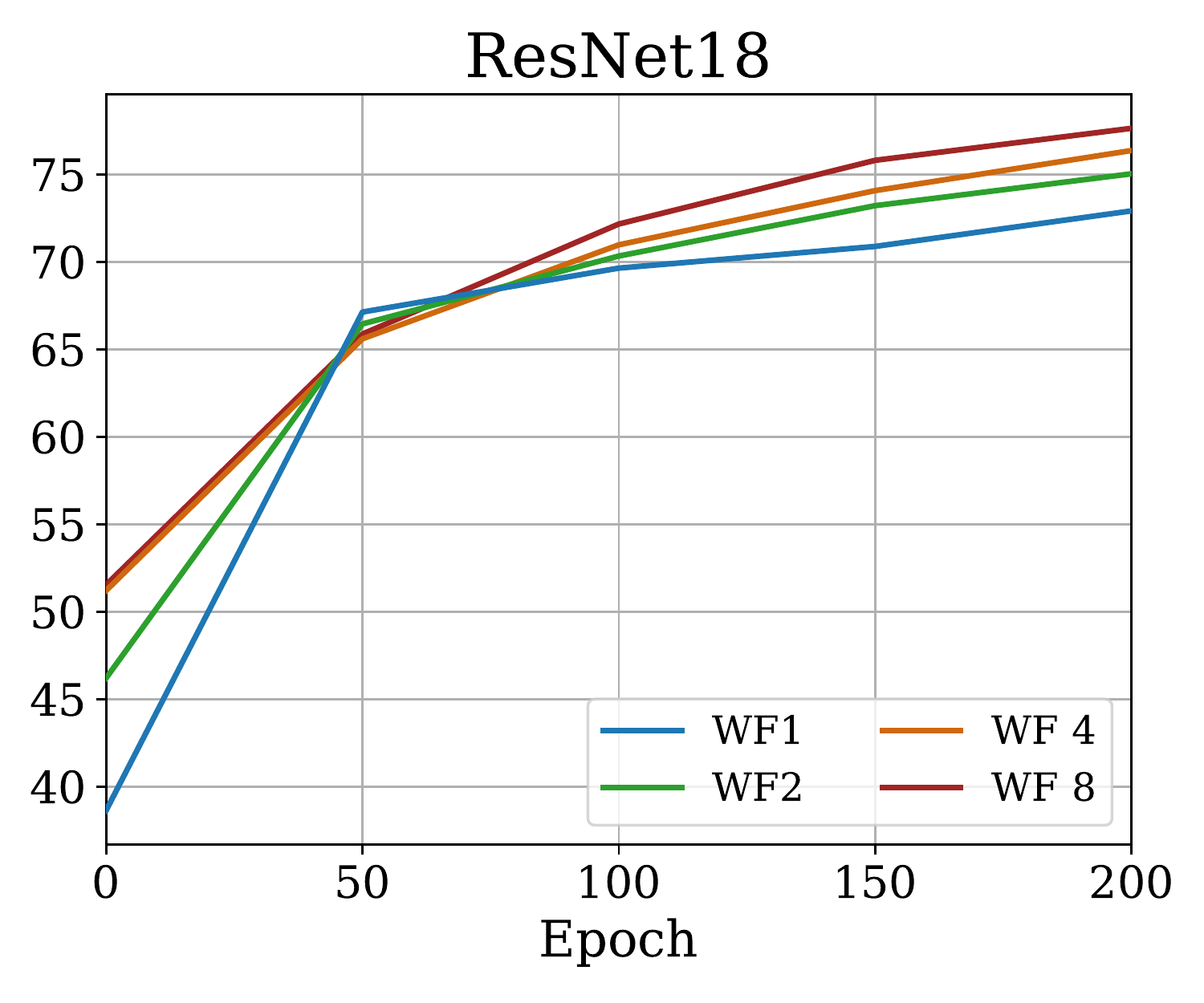}
    \end{subfigure}
    \hfill
    \begin{subfigure}[b]{0.24\textwidth}
        \includegraphics[width=\textwidth]{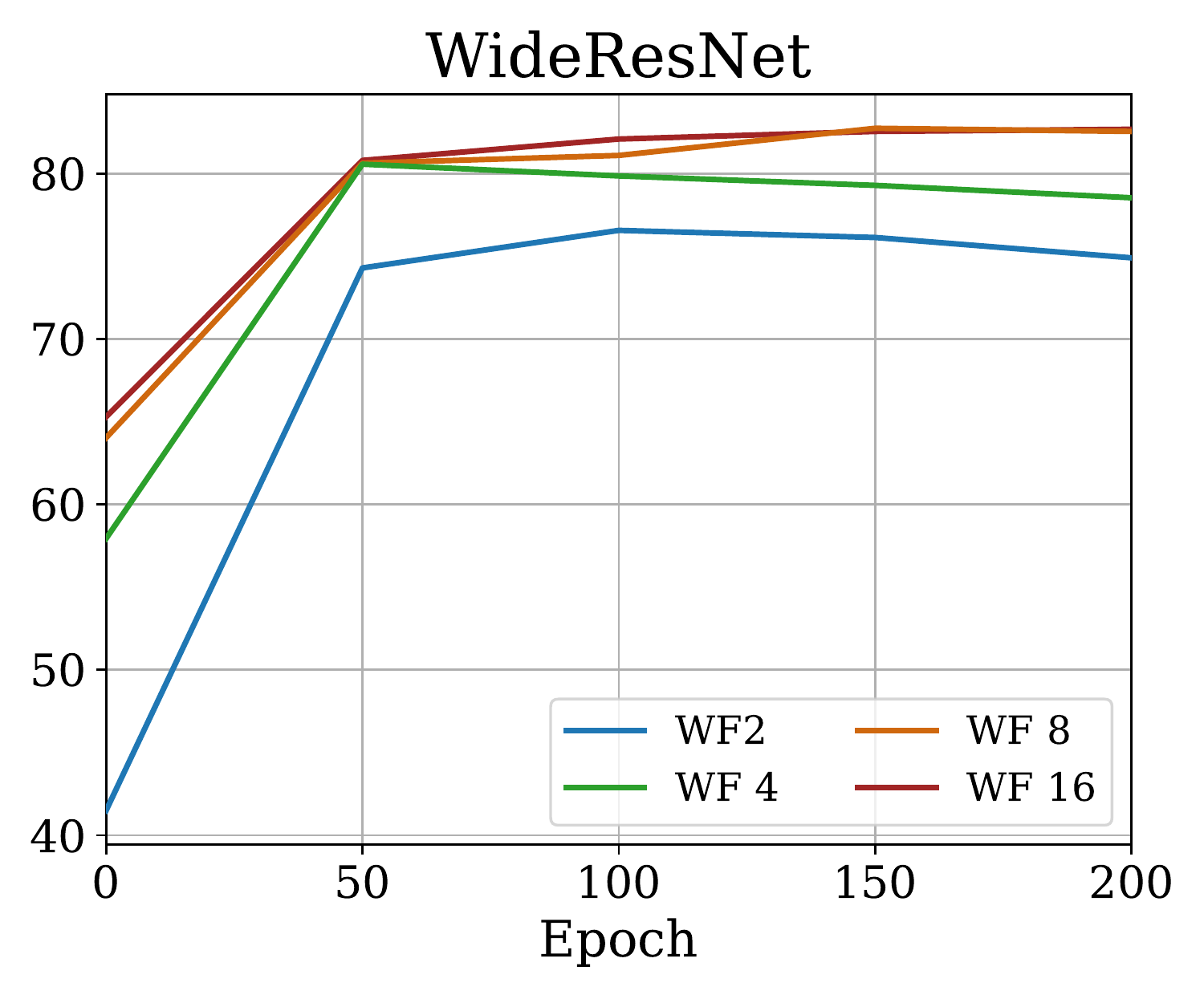}
    \end{subfigure}
    \vspace{-2mm}
    \caption{Evaluating the \textbf{test accuracy of kernel regression predictions using \pNTK~as in \eqref{eq:pseudo_lin} on the full CIFAR-10 dataset}.
    As the NN's width grows, the test accuracy of $\hat{f}^\mathit{lin}$ also improves, but eventually saturates with the growing width. Using trained weights in computation of \pNTK~results in improved test accuracy of $\hat{f}^\mathit{lin}$.}
    \label{fig:fhat_full_CIFAR-10_acc}
    \vspace{-2mm}
\end{figure*}

\begin{figure*}[!ht]
    \centering
    \begin{subfigure}[b]{0.24\textwidth}
        \includegraphics[width=\textwidth]{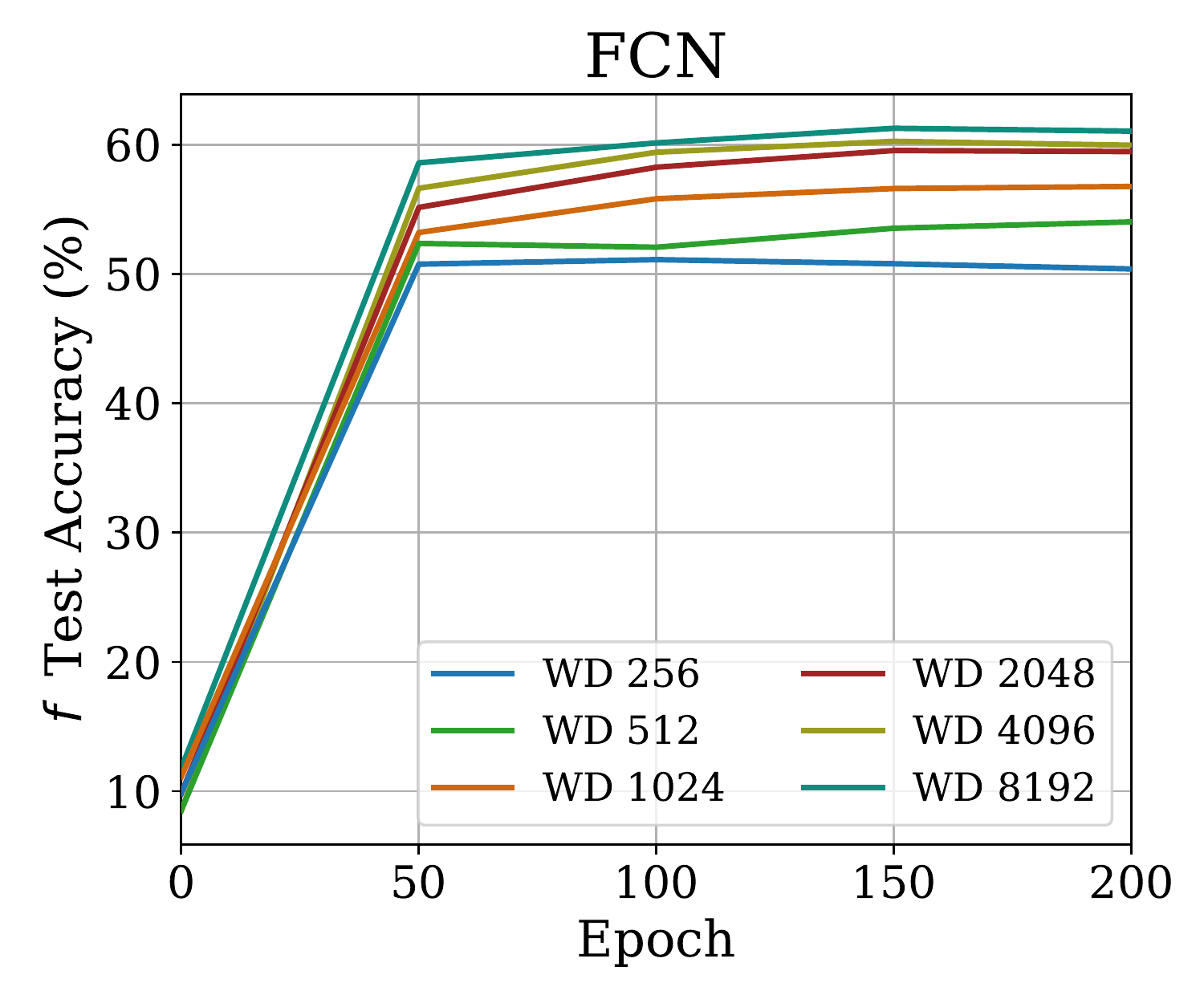}
    \end{subfigure}
    \hfill
    \begin{subfigure}[b]{0.24\textwidth}
        \includegraphics[width=\textwidth]{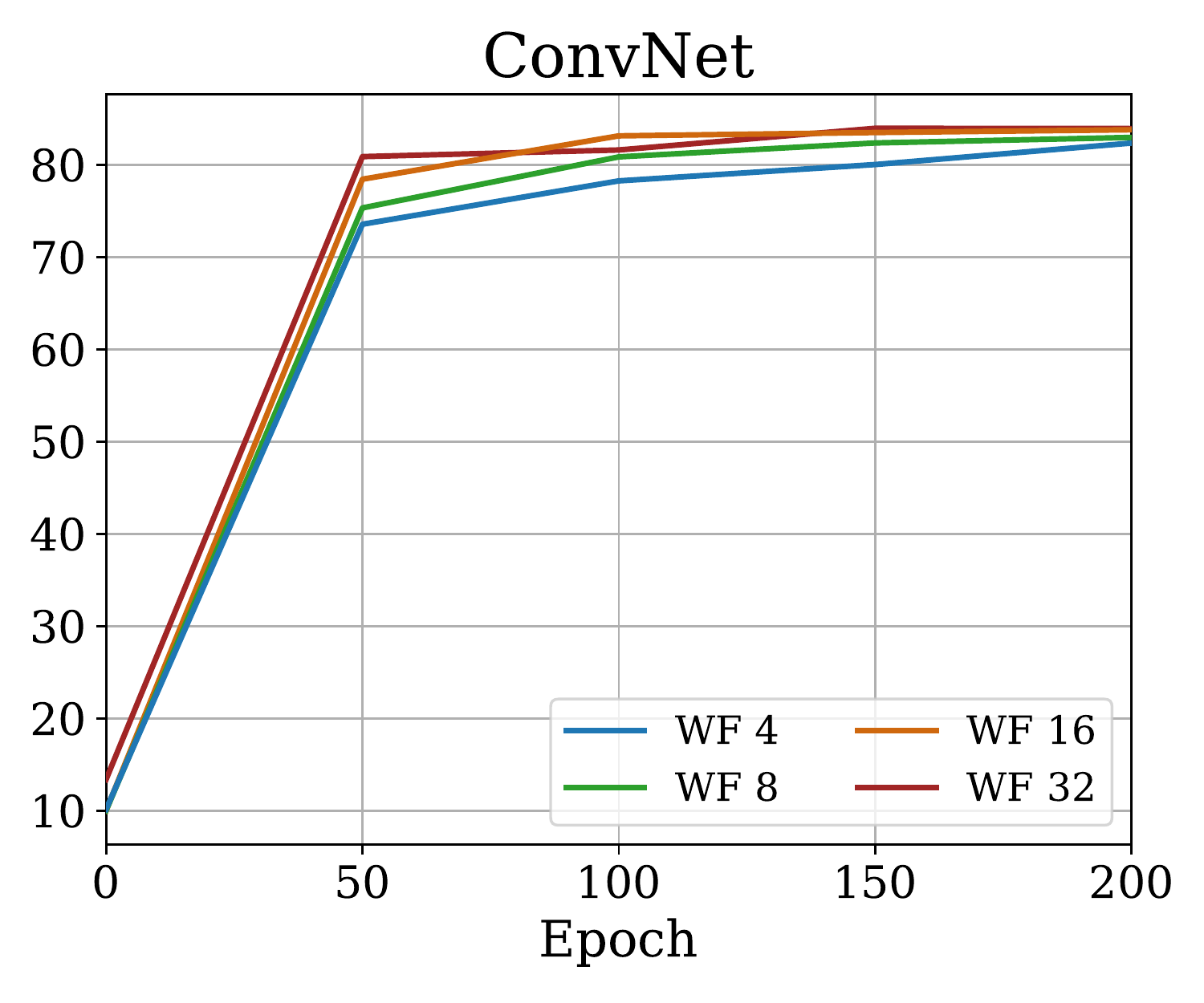}
    \end{subfigure}
    \hfill
    \begin{subfigure}[b]{0.24\textwidth}
        \includegraphics[width=\textwidth]{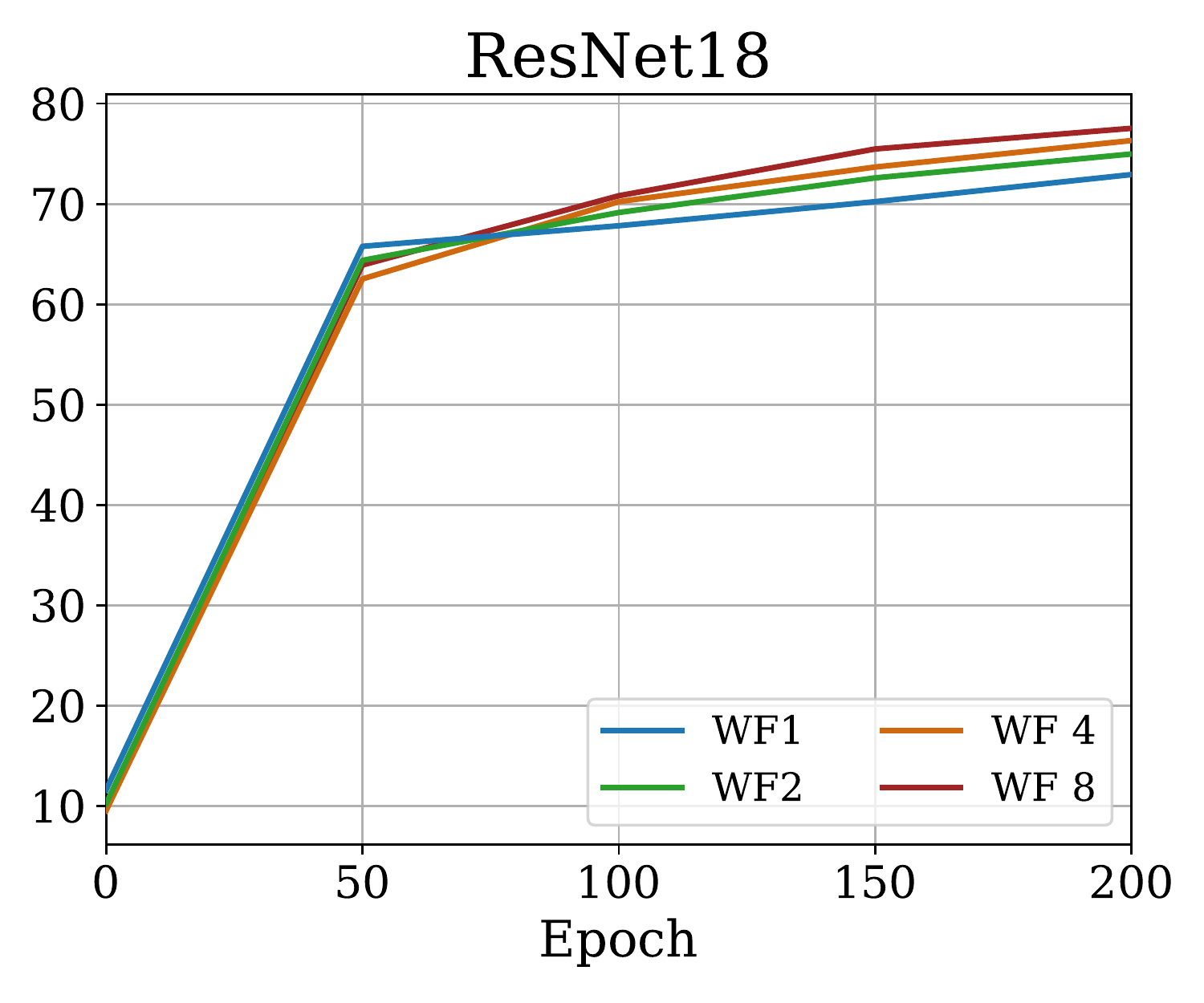}
    \end{subfigure}
    \hfill
    \begin{subfigure}[b]{0.24\textwidth}
        \includegraphics[width=\textwidth]{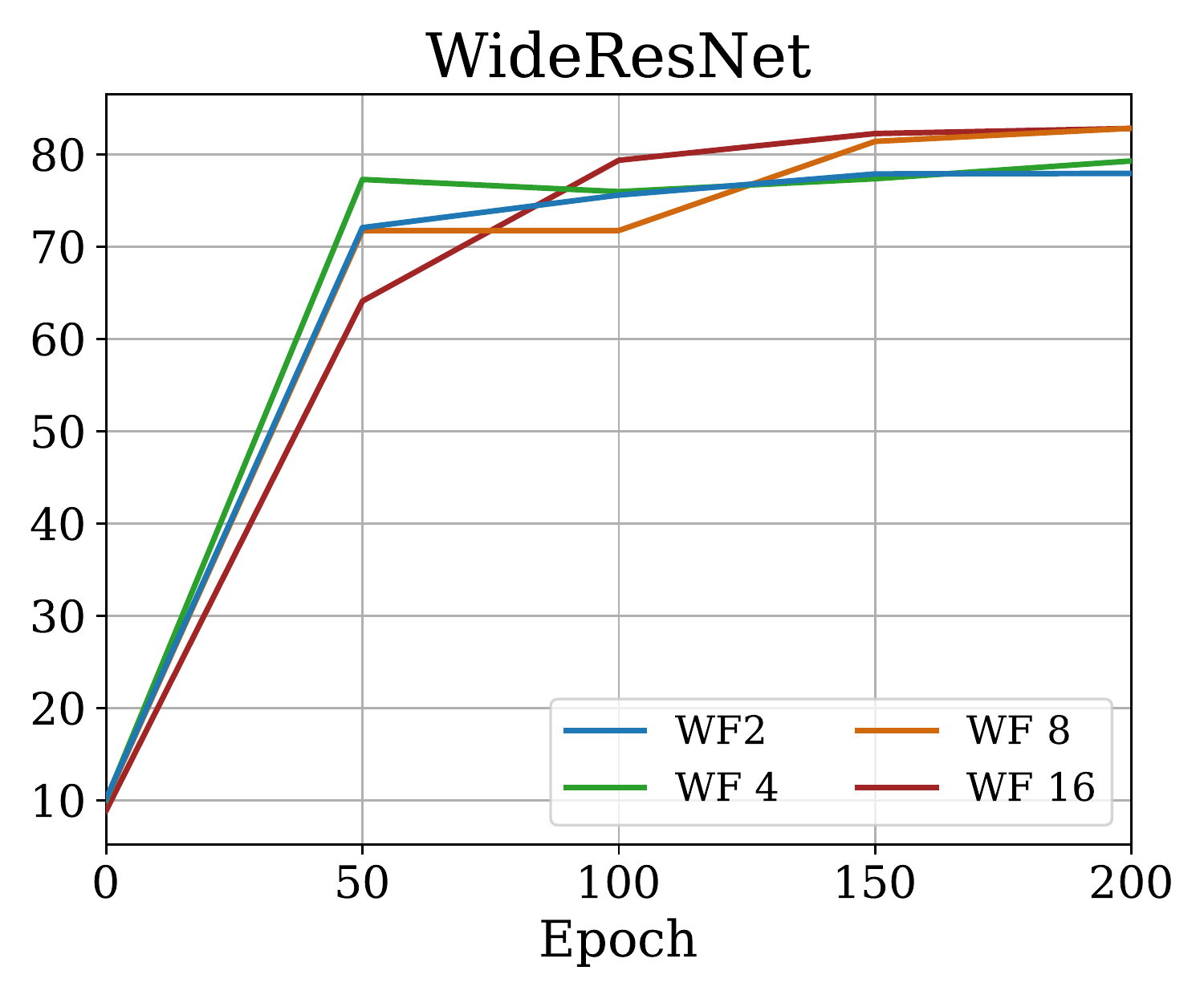}
    \end{subfigure}
    \vspace{-2mm}
    \caption{Evaluating the \textbf{test accuracy of model $f$ throughout SGD training on the full CIFAR-10 dataset}. In contrast to $\hat{f}^\mathit{lin}$, the test accuracy of $f$ does not significantly improve with growing width.}
    \label{fig:f_full_CIFAR-10_acc}
    \vspace{-2mm}
\end{figure*}

\begin{figure*}[!ht]
    \centering
    \begin{subfigure}[b]{0.24\textwidth}
        \includegraphics[width=\textwidth]{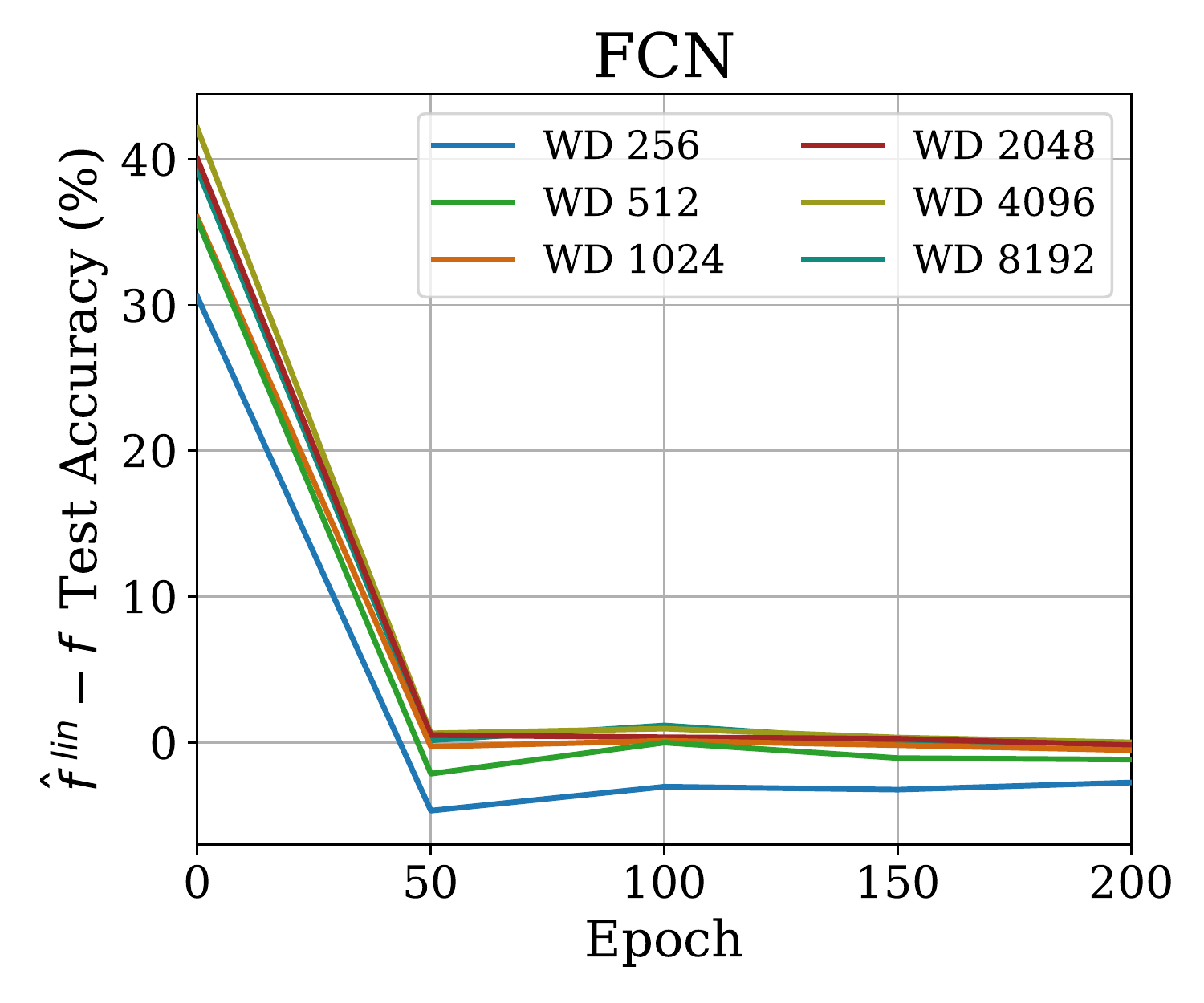}
    \end{subfigure}
    \hfill
    \begin{subfigure}[b]{0.24\textwidth}
        \includegraphics[width=\textwidth]{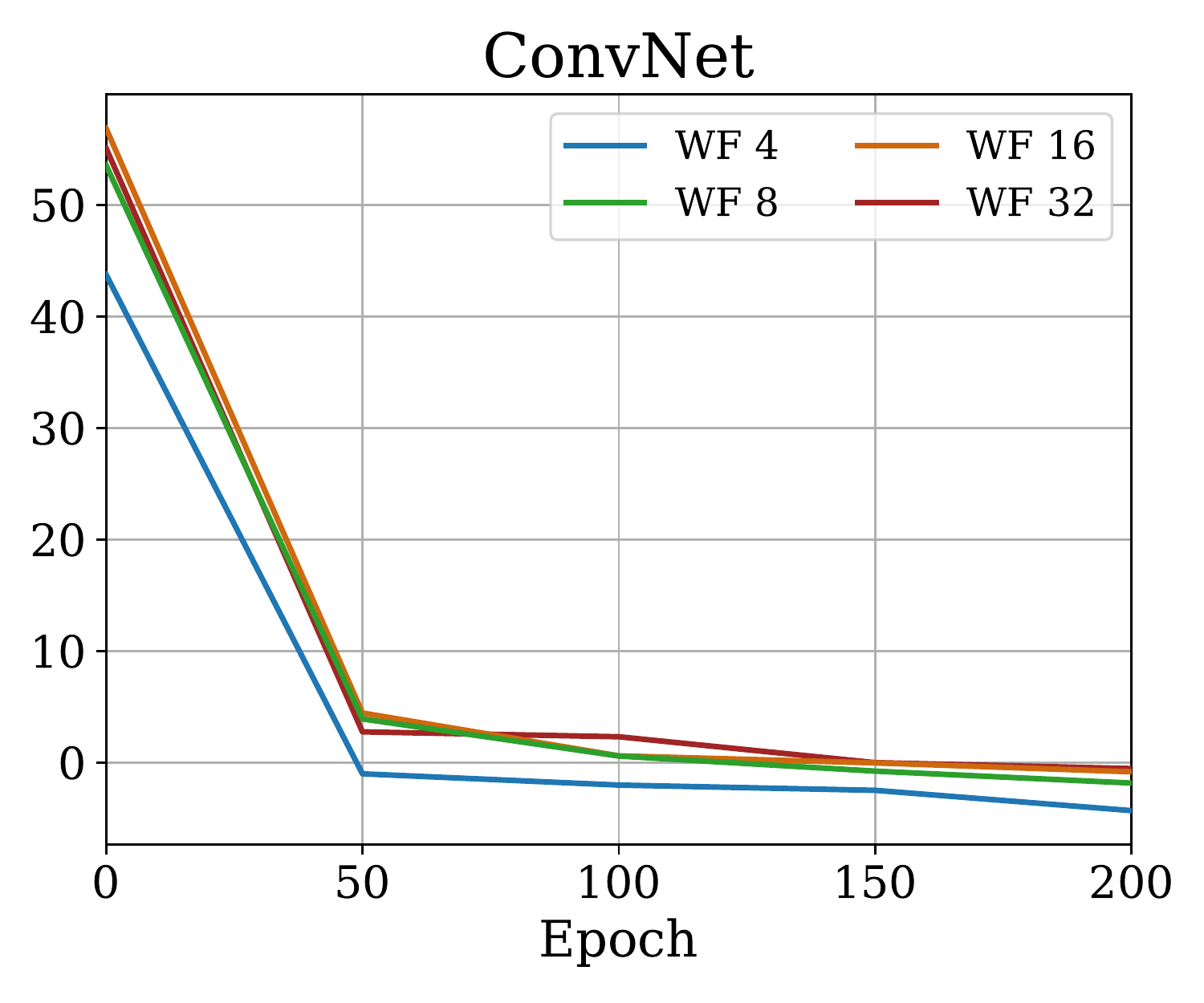}
    \end{subfigure}
    \hfill
    \begin{subfigure}[b]{0.24\textwidth}
        \includegraphics[width=\textwidth]{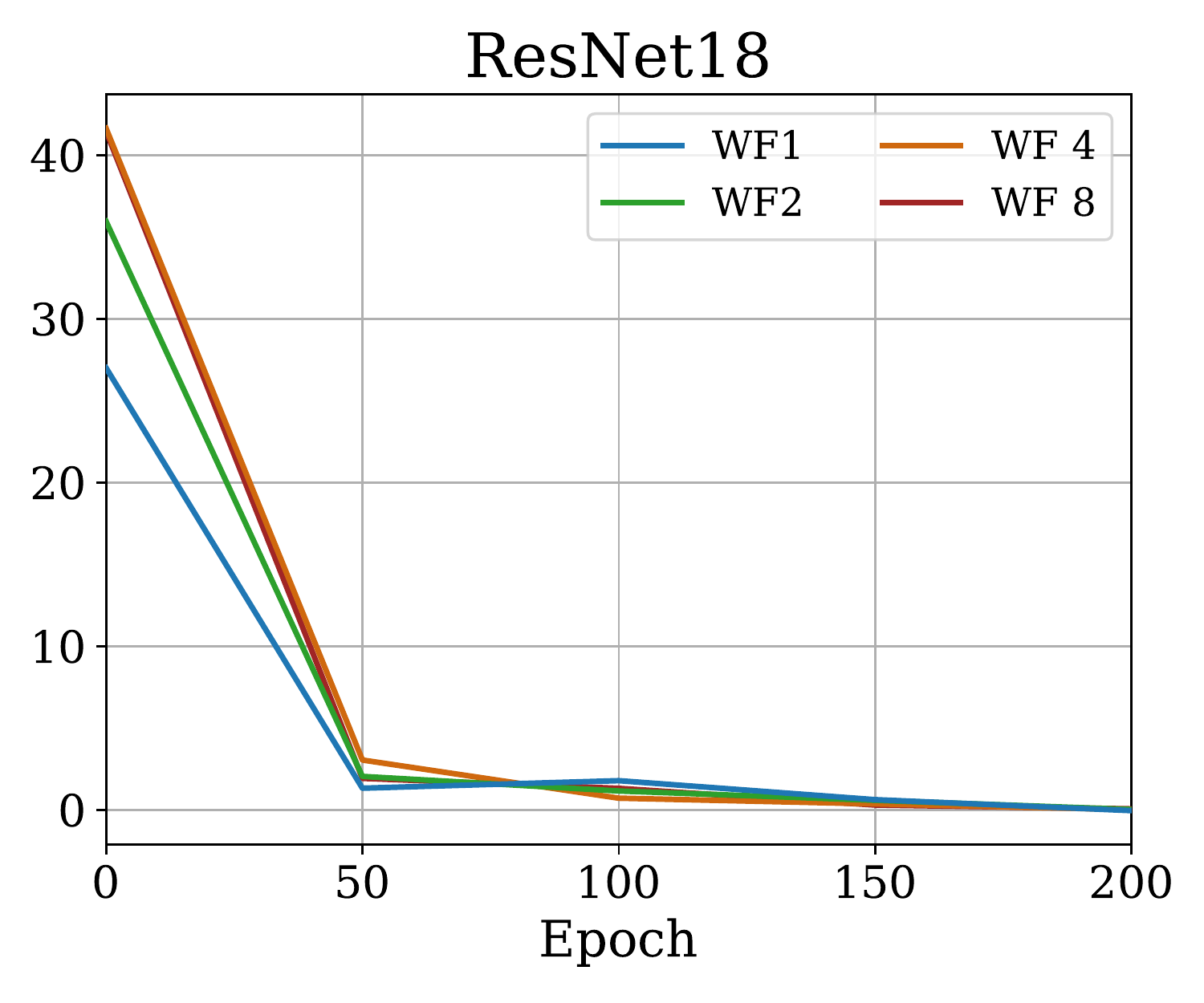}
    \end{subfigure}
    \hfill
    \begin{subfigure}[b]{0.24\textwidth}
        \includegraphics[width=\textwidth]{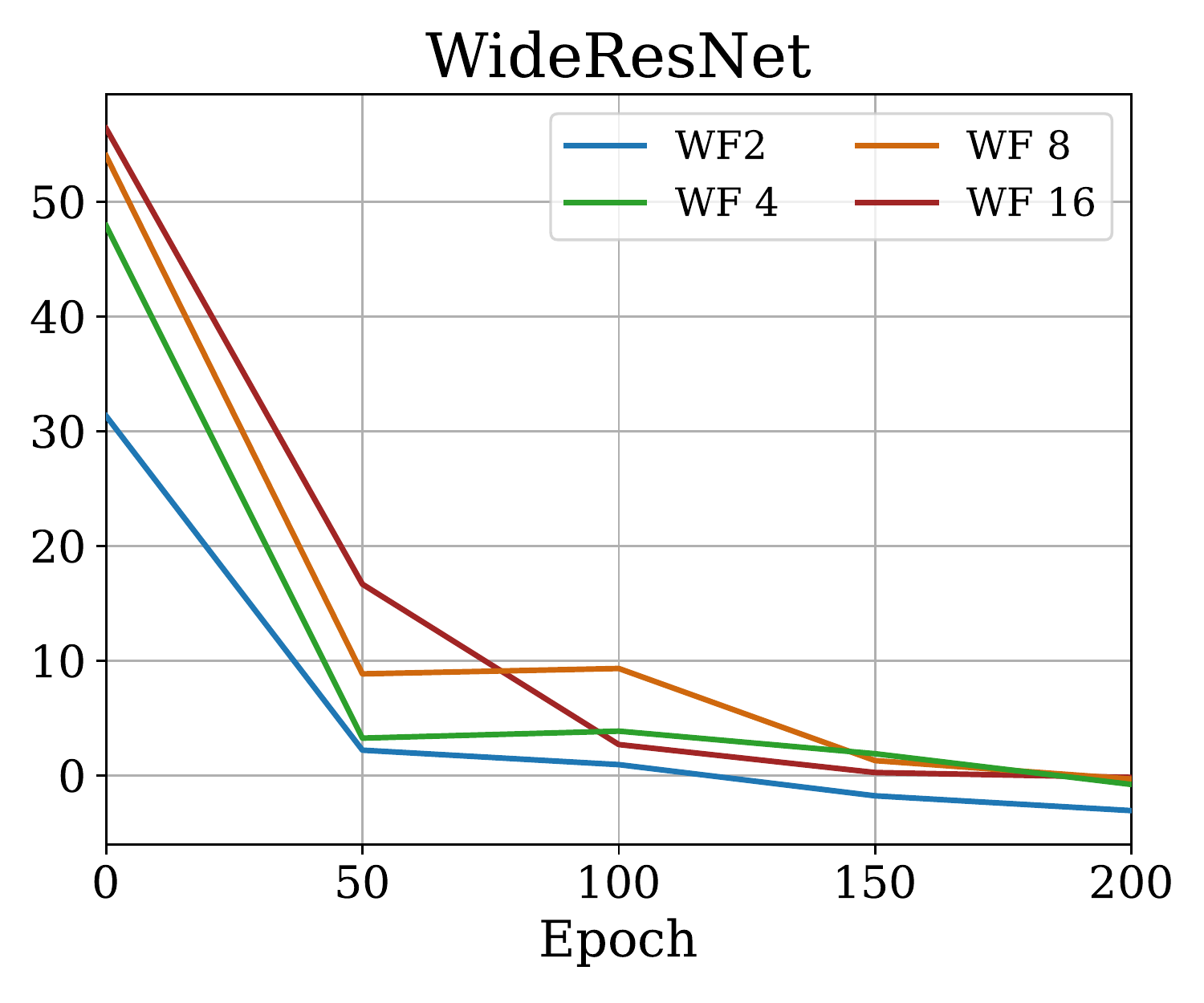}
    \end{subfigure}
    \vspace{-1mm}
    \caption{Evaluating the \textbf{difference in test accuracy of kernel regression using \pNTK~as in \eqref{eq:pseudo_lin} vs the current model $f$} throughout SGD training on the full CIFAR-10 dataset: how much does a linearized predictor with the current representation improve prediction accuracy over the current model obtained by SGD?}
    \label{fig:f_vs_fhat_full_CIFAR-10_acc}
    \vspace{-2mm}
\end{figure*}

A formal statement is given and proved in \cref{app:kernel-regression}.

\cref{fig:kr_norm_diff_pseudo_full} also supports that as width grows, the predictions of kernel regression using $\mpNTK$ converge to the prediction of those obtained from using $\meNTK$, \textit{while requiring orders of magnitude of less memory and time to compute}.
\cref{fig:kr_acc_diff_pseudo_full} shows similar results for the difference in prediction accuracies achieved using kernel regression through $\mpNTK$ and $\meNTK$ kernels.
\cref{app:kernel-regression} also shows further analysis of how well the \textit{linearized network} predicts the final accuracy of the trained model for each architecture and width pair.
Although $\norm{\hat{f}^\mathit{lin}(x) - f^\mathit{lin}(x)}_F$ decreases with width of the network in \cref{fig:kr_norm_diff_pseudo_full} at initialization, this does not \emph{necessarily} translate to a monotonic behaviour in prediction accuracies,
a non-smooth function of the vector of predictions;
we do see that the expected pattern more or less holds, however.

A surprising outcome depicted in \cref{fig:kr_norm_diff_pseudo_full,fig:kr_acc_diff_pseudo_full} is that while training the model's parameters, predictions of $\hat{f}^\mathit{lin}$ and $f^\mathit{lin}$ converge very quickly.
This is particularly intriguing, as it contrasts the results depicted in \cref{fig:diagonality,fig:fro_norm_diff,fig:max_eigval_diff,fig:cond_number_diff}. In other words, although the kernels $\meNTK$ and $\mpNTK \otimes I_O$ seem to be diverging in Frobenius norm, eigenspectrum, and so on, kernel regression using those two kernels converges quickly, so that after 50 epochs the difference in predictions almost totally vanishes. We believe further investigation of why this phenomenon is observed could lead to new interesting insights about the training dynamics of NNs.

\section{Application: Full Regression on CIFAR-10}
\label{sec:full-cifar}

Thanks to the reduction in time and memory complexity of \pNTK{} over \eNTK{}, motivated by \cref{theorem:kr_norm_diff} and experimental findings in \cref{fig:kr_acc_diff_pseudo_full}, we finally evaluate the corresponding \pNTK{}s of the four architectures that we have used in our experimental evaluations in different widths using full CIFAR-10 data, at initialization and throughout training the models under SGD.
As mentioned previously, running kernel regression with \eNTK{} on all of CIFAR-10 would require evaluating $25\times10^{10}$ Jacobian-vector products and more than $\approx 1.8$ terabytes of memory; using \pNTK, this can be done with a far more reasonable $25 \times 10^8$ Jacobian-vector products and $\approx 18$ gigabytes of memory.
This is still a heavy load compared to, say, direct SGD training,
but is within the reach of standard compute nodes.

\cref{fig:fhat_full_CIFAR-10_acc} shows the test accuracy of $\hat{f}^\mathit{lin}$ on the full train and test sets of CIFAR-10. 
In the infinite-width limit, the test accuracy of $\hat{f}^\mathit{lin}$ at initialization (and later, because the kernel stays constant in this regime) should match the final test accuracy of $f$: that is, the epoch 0 points in \cref{fig:fhat_full_CIFAR-10_acc} would agree with roughly the epoch 200 points in \cref{fig:f_full_CIFAR-10_acc}.
This comparison is plotted directly in \cref{app:full-cifar}.
Furthermore, the test accuracies of predictions of kernel regression using the \pNTK{} are lower than those achieved by the NTKs of infinite-width counterparts for fully-connected and fully-convolution networks.
This is consistent with results on eNTK by \citet{cntk2019arora} and \citet{lee2020finite}, 
although \citet{cntk2019arora} studied only a ``CIFAR-2'' version.

It is worth noting from \cref{fig:fhat_full_CIFAR-10_acc,fig:f_vs_fhat_full_CIFAR-10_acc} that, in contrast to the findings of \citet{fort2020deep}, we observe that the corresponding \pNTK~of the NN continues to change even after epoch 50 of SGD training.
Although for fully-connected networks and some versions of ResNet18, this change is not significant, in fully-convolutional networks and WideResNets, the \pNTK~continues to exhibit changes until epoch 150, where the training error has vanished.
We remark that \citet{fort2020deep} analyzed \eNTK{}s based on only 500 random samples from CIFAR-10, while the \pNTK{} approximation has enabled us to run our analysis on the $100$-times larger full dataset.

Lastly, to help the community better analyze the properties of NNs and their training dynamics, and avoid wasting computation by redoing this work, we plan to share computed \pNTK s for all the mentioned architectures and widths, as well as ResNets with 34, 50, 101 and 152 layers on CIFAR-10 and CIFAR-100 \citep{xiao2017fashion} datasets, both at initialization and using pretrained ImageNet \citep{deng2009imagenet} weights.
We hope that our contribution will enable further analyses and breakthroughs towards a stronger theoretical understanding of the training dynamics of deep NNs.

\begin{figure}[t]
    \centering
    \includegraphics[width=0.45\textwidth]{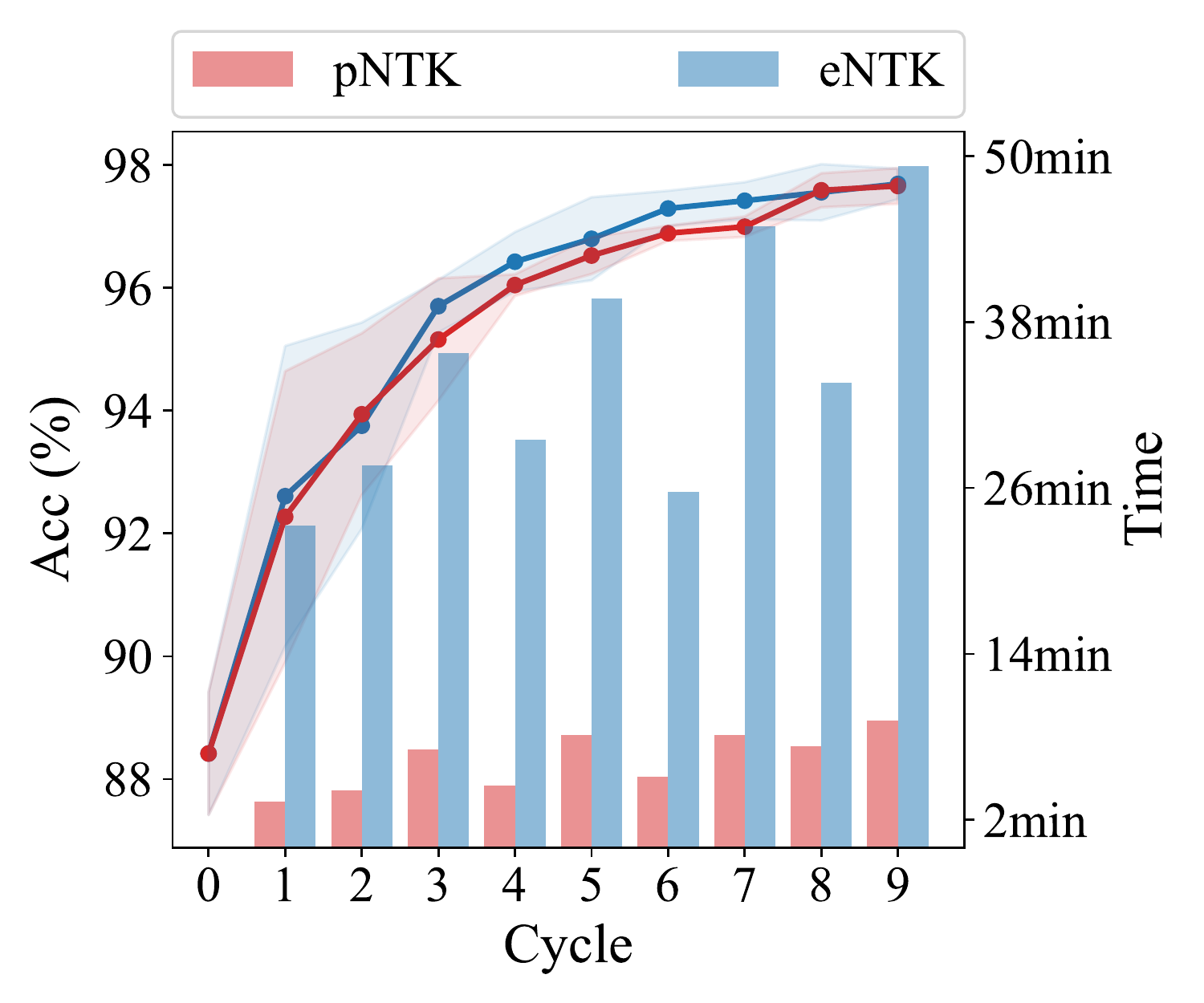}
    \vspace{-2mm}
    \caption{\textbf{Comparison of \pNTK{} with \eNTK{} on a look-ahead active learning task.} \pNTK{} is much faster than \eNTK{} without losing performance.}
    \label{fig:application_entk_pntk}
    \vspace{-5mm}
\end{figure}

\section{Application: Active Learning}
\label{sec:application}
One use case for \eNTK{} is in active learning, where the model requests annotations for specific data points in an ``active'' manner.
Recently, \citet{mohamadi:active-ntk} used kernel regression with the \pNTK{} to approximate a re-trained model for the computation of ``look-ahead'' data acquisition functions in active learning.
That is, a model requests annotations for data points where training on them is likely to make the model's output on test data change the most.
In \cref{fig:application_entk_pntk},
we compare the performance of their scheme using \pNTK{} (as they did in their work) to using the full \eNTK{},
in terms of both active learning performance on the MNIST dataset (left axis),
and wall-clock time needed to compute the acquisition function (right axis).
Similarly to that work,
we measure accuracy on the test set
for a model which begins with $100$ randomly trained points,
then acquires $20$ additional labelled points in each cycle.
The acquisition performance with the \pNTK{} matches that with the \eNTK{},
but computation is much faster,
taking about 15\% as long in total
on this problem with $O = 10$.
Active learning performance is measured in accuracy on the test set using a model trained on a labelled set.

\section{Discussion}
Our \pNTK{} approach to approximating the \eNTK{} has provable bounds, good empirical performance, and multiple orders of magnitude improvement in runtime speed and memory requirements over the direct \eNTK{}.
We evaluate our claims and the quality of the approximation under diverse settings,
giving new insights into the behaviour of the \eNTK{} with trained representations.
We help justify the correctness of recent approximation schemes, and hope that our rigorous results and thorough experimentation will help researchers develop a deeper understanding of the training dynamics of finite networks,
and develop new practical applications of the NTK theory.

One major remaining question is to 
theoretically analyze what happens to the \pNTK{} or \eNTK{} during SGD training of the network.
In particular, the fast convergence of $\hat f^\mathit{lin}$ and $f^\mathit{lin}$ when training the network, as seen in \cref{fig:kr_acc_diff_pseudo_full,fig:kr_norm_diff_pseudo_full},
runs counter to our expectations based on the approximation worsening in
Frobenius norm (\cref{fig:kr_norm_diff_pseudo_full}),
maximum eigenvalue (\cref{fig:max_eigval_diff}),
and condition number (\cref{fig:cond_number_diff}).
This seems likely to be important to practical use of the \pNTK.

Perhaps relatedly,
it is also unclear why 
\pNTK~consistently results in higher prediction accuracies than when kernel regression is done using \eNTK,
given that our motivation for \pNTK{} is entirely in terms of approximating the \eNTK{} (\cref{fig:kr_acc_diff_pseudo_full}).
Intuitively, this may be related to a regularization-type effect:
the \pNTK{} corresponds to a particularly limited choice of a ``separable'' operator-valued kernel \citep[see, \eg,][]{mauricio:op-val-review}.
Separable kernels are a common choice in that literature for both computational and regularization reasons;
by enforcing this particularly simple form,
we remove many degrees of freedom relating to the interaction between ``tasks'' (different classes) that may be unnecessary or hard to estimate accurately with the \eNTK.
This might, in some sense, correspond to a one-vs-one rather than one-vs-rest framework in the intuitive sense discussed in \cref{sec:def-pseudo}.
Understanding this question in more detail might require a more detailed understanding of the structure of the \eNTK{} at finite width,
and/or a much more detailed understanding of the interaction between classes in the dataset with learning in the NTK regime.
Finally, even the \pNTK{} is still rather expensive compared to running SGD on neural networks.
It might make for a better starting point than the full \eNTK{} for other speedup methods, however,
like kernel approximation or advanced linear algebra solvers \citep[\eg][]{falkon}.

\bibliography{icml2023_conference}

\begin{thebibliography}{46}
\providecommand{\natexlab}[1]{#1}
\providecommand{\url}[1]{\texttt{#1}}
\expandafter\ifx\csname urlstyle\endcsname\relax
  \providecommand{\doi}[1]{doi: #1}\else
  \providecommand{\doi}{doi: \begingroup \urlstyle{rm}\Url}\fi

\bibitem[Adlam et~al.(2020)Adlam, Lee, Xiao, Pennington, and
  Snoek]{adlam2020exploring}
Adlam, B., Lee, J., Xiao, L., Pennington, J., and Snoek, J.
\newblock Exploring the uncertainty properties of neural networks' implicit
  priors in the infinite-width limit.
\newblock \arXiv{2010.07355}, 2020.

\bibitem[{\'A}lvarez et~al.(2012){\'A}lvarez, Rosasco, and
  Lawrence]{mauricio:op-val-review}
{\'A}lvarez, M.~A., Rosasco, L., and Lawrence, N.~D.
\newblock Kernels for vector-valued functions: A review.
\newblock \emph{Foundations and Trends® in Machine Learning}, 4\penalty0
  (3):\penalty0 195--266, 2012.

\bibitem[Arora et~al.(2019)Arora, Du, Hu, Li, Salakhutdinov, and
  Wang]{cntk2019arora}
Arora, S., Du, S.~S., Hu, W., Li, Z., Salakhutdinov, R., and Wang, R.
\newblock On exact computation with an infinitely wide neural net.
\newblock In \emph{NeurIPS}, 2019.

\bibitem[Arora et~al.(2020)Arora, Du, Li, Salakhutdinov, Wang, and
  Yu]{arora2019harnessing}
Arora, S., Du, S.~S., Li, Z., Salakhutdinov, R., Wang, R., and Yu, D.
\newblock Harnessing the power of infinitely wide deep nets on small-data
  tasks.
\newblock In \emph{ICLR}, 2020.

\bibitem[Arpit \& Bengio(2019)Arpit and Bengio]{arpit2019overparameterization}
Arpit, D. and Bengio, Y.
\newblock The benefits of over-parameterization at initialization in deep
  {ReLU} networks.
\newblock \arXiv{1901.03611}, 2019.

\bibitem[Bachmann et~al.(2022)Bachmann, Hofmann, and
  Lucchi]{bachmann2022generalization}
Bachmann, G., Hofmann, T., and Lucchi, A.
\newblock Generalization through the lens of leave-one-out error.
\newblock In \emph{ICLR}, 2022.

\bibitem[Chen et~al.(2021)Chen, Hsieh, and Gong]{chen2021vision}
Chen, X., Hsieh, C.-J., and Gong, B.
\newblock When vision transformers outperform resnets without pre-training or
  strong data augmentations.
\newblock In \emph{ICLR}, 2021.

\bibitem[Craven \& Wahba(1978)Craven and Wahba]{craven1978smoothing}
Craven, P. and Wahba, G.
\newblock Smoothing noisy data with spline functions.
\newblock \emph{Numerische mathematik}, 31\penalty0 (4):\penalty0 377--403,
  1978.

\bibitem[Deng et~al.(2009)Deng, Dong, Socher, Li, Li, and
  Fei-Fei]{deng2009imagenet}
Deng, J., Dong, W., Socher, R., Li, L.-J., Li, K., and Fei-Fei, L.
\newblock {ImageNet}: A large-scale hierarchical image database.
\newblock In \emph{CVPR}, 2009.

\bibitem[Epperly(2022)]{hanson-wright}
Epperly, E.
\newblock Note to self: Hanson–wright inequality, 2022.
\newblock URL
  \url{https://www.ethanepperly.com/index.php/2022/10/04/note-to-self-hanson-wright-inequality/}.

\bibitem[Fort et~al.(2020)Fort, Dziugaite, Paul, Kharaghani, Roy, and
  Ganguli]{fort2020deep}
Fort, S., Dziugaite, G.~K., Paul, M., Kharaghani, S., Roy, D.~M., and Ganguli,
  S.
\newblock Deep learning versus kernel learning: an empirical study of loss
  landscape geometry and the time evolution of the neural tangent kernel.
\newblock In \emph{NeurIPS}, 2020.

\bibitem[Franceschi et~al.(2022)Franceschi, de~B{\'e}zenac, Ayed, Chen,
  Lamprier, and Gallinari]{franceschi2021neural}
Franceschi, J.-Y., de~B{\'e}zenac, E., Ayed, I., Chen, M., Lamprier, S., and
  Gallinari, P.
\newblock A neural tangent kernel perspective of {GAN}s.
\newblock In \emph{ICML}, 2022.

\bibitem[Hazan \& Jaakkola(2015)Hazan and Jaakkola]{hazan2015steps}
Hazan, T. and Jaakkola, T.
\newblock Steps toward deep kernel methods from infinite neural networks.
\newblock \arXiv{1508.05133}, 2015.

\bibitem[He et~al.(2020)He, Lakshminarayanan, and Teh]{he2020bayesian}
He, B., Lakshminarayanan, B., and Teh, Y.~W.
\newblock Bayesian deep ensembles via the neural tangent kernel.
\newblock \emph{NeurIPS}, 2020.

\bibitem[He et~al.(2015)He, Zhang, Ren, and Sun]{he2016kaiming}
He, K., Zhang, X., Ren, S., and Sun, J.
\newblock Delving deep into rectifiers: Surpassing human-level performance on
  {ImageNet} classification.
\newblock In \emph{ICCV}, 2015.

\bibitem[He et~al.(2016)He, Zhang, Ren, and Sun]{he2016deep}
He, K., Zhang, X., Ren, S., and Sun, J.
\newblock Deep residual learning for image recognition.
\newblock In \emph{CVPR}, 2016.

\bibitem[Holzm{\"u}ller et~al.(2023)Holzm{\"u}ller, Zaverkin, K{\"a}stner, and
  Steinwart]{holzmuller2022framework}
Holzm{\"u}ller, D., Zaverkin, V., K{\"a}stner, J., and Steinwart, I.
\newblock A framework and benchmark for deep batch active learning for
  regression.
\newblock \emph{JMLR}, 2023.

\bibitem[Jacot et~al.(2018)Jacot, Gabriel, and Hongler]{ntk2018jacot}
Jacot, A., Gabriel, F., and Hongler, C.
\newblock Neural tangent kernel: Convergence and generalization in neural
  networks.
\newblock In \emph{NeurIPS}, 2018.

\bibitem[Krizhevsky(2009)]{cifar102009krizhevsky}
Krizhevsky, A.
\newblock Learning multiple layers of features from tiny images.
\newblock 2009.

\bibitem[Lee et~al.(2018)Lee, Bahri, Novak, Schoenholz, Pennington, and
  Sohl-Dickstein]{lee2017deep}
Lee, J., Bahri, Y., Novak, R., Schoenholz, S.~S., Pennington, J., and
  Sohl-Dickstein, J.
\newblock Deep neural networks as gaussian processes.
\newblock In \emph{ICLR}, 2018.

\bibitem[Lee et~al.(2019)Lee, Xiao, Schoenholz, Bahri, Novak, Sohl-Dickstein,
  and Pennington]{linntk2019lee}
Lee, J., Xiao, L., Schoenholz, S., Bahri, Y., Novak, R., Sohl-Dickstein, J.,
  and Pennington, J.
\newblock Wide neural networks of any depth evolve as linear models under
  gradient descent.
\newblock In \emph{NeurIPS}, 2019.

\bibitem[Lee et~al.(2020)Lee, Schoenholz, Pennington, Adlam, Xiao, Novak, and
  Sohl-Dickstein]{lee2020finite}
Lee, J., Schoenholz, S., Pennington, J., Adlam, B., Xiao, L., Novak, R., and
  Sohl-Dickstein, J.
\newblock Finite versus infinite neural networks: an empirical study.
\newblock \emph{NeurIPS}, 2020.

\bibitem[Matthews et~al.(2018)Matthews, Rowland, Hron, Turner, and
  Ghahramani]{matthews2018gaussian}
Matthews, A. G. d.~G., Rowland, M., Hron, J., Turner, R.~E., and Ghahramani, Z.
\newblock Gaussian process behaviour in wide deep neural networks.
\newblock In \emph{ICLR}, 2018.

\bibitem[Mohamadi et~al.(2022)Mohamadi, Bae, and
  Sutherland]{mohamadi:active-ntk}
Mohamadi, M.~A., Bae, W., and Sutherland, D.~J.
\newblock Making look-ahead active learning strategies feasible with neural
  tangent kernels.
\newblock In \emph{NeurIPS}, 2022.

\bibitem[Neal(1996)]{neal1996priors}
Neal, R.~M.
\newblock \emph{Priors for infinite networks}, pp.\  29--53.
\newblock Springer, 1996.

\bibitem[Nguyen et~al.(2021{\natexlab{a}})Nguyen, Chen, and
  Lee]{nguyen2020dataset}
Nguyen, T., Chen, Z., and Lee, J.
\newblock Dataset meta-learning from kernel ridge-regression.
\newblock In \emph{ICLR}, 2021{\natexlab{a}}.

\bibitem[Nguyen et~al.(2021{\natexlab{b}})Nguyen, Novak, Xiao, and
  Lee]{nguyen2021dataset}
Nguyen, T., Novak, R., Xiao, L., and Lee, J.
\newblock Dataset distillation with infinitely wide convolutional networks.
\newblock \emph{NeurIPS}, 2021{\natexlab{b}}.

\bibitem[Novak et~al.(2019)Novak, Xiao, Lee, Bahri, Yang, Hron, Abolafia,
  Pennington, and Sohl-Dickstein]{novak2018bayesian}
Novak, R., Xiao, L., Lee, J., Bahri, Y., Yang, G., Hron, J., Abolafia, D.~A.,
  Pennington, J., and Sohl-Dickstein, J.
\newblock Bayesian deep convolutional networks with many channels are
  {G}aussian processes.
\newblock In \emph{ICLR}, 2019.

\bibitem[Novak et~al.(2022)Novak, Sohl-Dickstein, and
  Schoenholz]{novak2021fast}
Novak, R., Sohl-Dickstein, J., and Schoenholz, S.~S.
\newblock Fast finite width neural tangent kernel.
\newblock In \emph{ICML}, 2022.

\bibitem[Park et~al.(2019)Park, Sohl-Dickstein, Le, and
  Smith]{park19paramterization}
Park, D., Sohl-Dickstein, J., Le, Q., and Smith, S.
\newblock The effect of network width on stochastic gradient descent and
  generalization: an empirical study.
\newblock In \emph{ICML}, 2019.

\bibitem[Park et~al.(2020)Park, Lee, Peng, Cao, and
  Sohl-Dickstein]{park2020towards}
Park, D.~S., Lee, J., Peng, D., Cao, Y., and Sohl-Dickstein, J.
\newblock Towards {NNGP}-guided neural architecture search.
\newblock \arXiv{2011.06006}, 2020.

\bibitem[Rudi et~al.(2017)Rudi, Carratino, and Rosasco]{falkon}
Rudi, A., Carratino, L., and Rosasco, L.
\newblock {FALKON}: An optimal large scale kernel method.
\newblock In \emph{NeurIPS}, 2017.

\bibitem[Tancik et~al.(2020)Tancik, Srinivasan, Mildenhall, Fridovich-Keil,
  Raghavan, Singhal, Ramamoorthi, Barron, and Ng]{tancik2020fourier}
Tancik, M., Srinivasan, P., Mildenhall, B., Fridovich-Keil, S., Raghavan, N.,
  Singhal, U., Ramamoorthi, R., Barron, J., and Ng, R.
\newblock Fourier features let networks learn high frequency functions in low
  dimensional domains.
\newblock \emph{NeurIPS}, 2020.

\bibitem[Vershynin(2018)]{vershynin2018high}
Vershynin, R.
\newblock \emph{High-dimensional probability: An introduction with applications
  in data science}.
\newblock Cambridge University Press, 2018.

\bibitem[Wainwright(2019)]{wainwright_2019}
Wainwright, M.~J.
\newblock \emph{High-Dimensional Statistics: A Non-Asymptotic Viewpoint}.
\newblock Cambridge University Press, 2019.

\bibitem[Wang et~al.(2021)Wang, Huang, Margenot, Tong, and He]{wang2021deep}
Wang, H., Huang, W., Margenot, A., Tong, H., and He, J.
\newblock Deep active learning by leveraging training dynamics.
\newblock 2021.

\bibitem[Wei et~al.(2022)Wei, Hu, and Steinhardt]{wei2022more}
Wei, A., Hu, W., and Steinhardt, J.
\newblock More than a toy: Random matrix models predict how real-world neural
  representations generalize.
\newblock In \emph{ICML}, 2022.

\bibitem[Williams(1996)]{williams1996computing}
Williams, C.
\newblock Computing with infinite networks.
\newblock \emph{NeurIPS}, 9, 1996.

\bibitem[Xiao et~al.(2017)Xiao, Rasul, and Vollgraf]{xiao2017fashion}
Xiao, H., Rasul, K., and Vollgraf, R.
\newblock Fashion-{MNIST}: a novel image dataset for benchmarking machine
  learning algorithms.
\newblock \arXiv{1708.07747}, 2017.

\bibitem[Xiao et~al.(2018)Xiao, Bahri, Sohl-Dickstein, Schoenholz, and
  Pennington]{xiao2018dynamical}
Xiao, L., Bahri, Y., Sohl-Dickstein, J., Schoenholz, S., and Pennington, J.
\newblock Dynamical isometry and a mean field theory of {CNN}s: How to train
  10,000-layer vanilla convolutional neural networks.
\newblock In \emph{ICML}, 2018.

\bibitem[Xiao et~al.(2020)Xiao, Pennington, and
  Schoenholz]{xiao2020disentangling}
Xiao, L., Pennington, J., and Schoenholz, S.
\newblock Disentangling trainability and generalization in deep neural
  networks.
\newblock In \emph{ICML}, 2020.

\bibitem[Yang(2019)]{yang2019wide}
Yang, G.
\newblock Wide feedforward or recurrent neural networks of any architecture are
  {G}aussian processes.
\newblock \emph{NeurIPS}, 2019.

\bibitem[Yang(2020)]{yang2020tensor}
Yang, G.
\newblock Tensor {P}rograms {II}: Neural tangent kernel for any architecture.
\newblock \arXiv{2006.14548}, 2020.

\bibitem[Yang \& Littwin(2021)Yang and Littwin]{yang2021tensor}
Yang, G. and Littwin, E.
\newblock Tensor {P}rograms {IIb}: Architectural universality of neural tangent
  kernel training dynamics.
\newblock In \emph{ICML}, 2021.

\bibitem[Zagoruyko \& Komodakis(2016)Zagoruyko and
  Komodakis]{wide_resnet2016zagoruyko}
Zagoruyko, S. and Komodakis, N.
\newblock Wide residual networks.
\newblock \emph{BMVC}, 2016.

\bibitem[Zhou et~al.(2021)Zhou, Wang, Xian, Chen, and Xu]{zhou2021meta}
Zhou, Y., Wang, Z., Xian, J., Chen, C., and Xu, J.
\newblock Meta-learning with neural tangent kernels.
\newblock In \emph{ICLR}, 2021.

\end{thebibliography}
\bibliographystyle{icml2023}

\clearpage
\appendix
\onecolumn

\section{Details of Experimental Setup} \label{app:expt-details}

In this section, we present the details on the experimental setup used for the plots depicted in the main body of the paper. As mentioned, the exact width for FCNs have been reported. For WideResNet-16-k we use two block layers, and the initial convolution in the network has a width of $16 \WF$ where $\WF$ is the reported $\WF$. For instance, $\WF=16$ means that the first block layer has a width of 256 and the second block layer has a width of 512. For ResNet18, we also used the same approach, multiplying $\WF$ by 16. Thus, when $\WF=4$, the constructed network will have the exact architecture as the classical ResNet18 architecture reported. A $\WF$ of 16 means a ResNet18 with each layer being 4 times wider than the original width.

When training the neural networks using SGD, a constant batch size of 128 was used across all different networks and different dataset sizes used for training. The learning rate for all networks was also fixed to 0.1. However, not all networks were trainable with this fixed learning rate, as the gradients would sometimes blow up and give \texttt{NaN} training loss, typically for the largest width of each mentioned architecture. In those cases, we decreased the learning rate to 0.01 to train the networks.

Note that to be consistent with the literature on NTKs, techniques like data augmentation have been turned off, but a weight decay of 0.0001 along with a momentum of 0.9 for SGD is used. Data augmentation here plays an important role in the attained test accuracies of the fully trained networks.

\section{Relative Convergence of Kernel Matrices} \label{supp:fro_norm_proof}
This section will prove \cref{theorem:pntk_fro_norm},
in two parts.
First, we give a generic analysis
in \cref{sec:linear-readouts} bounding the difference between the $\pNTK$ and $\eNTK$
for any network whose last layer is linear and random,
in terms of the norm of the \eNTK{} of the previous parts of the network.
\Cref{sec:relu-ntk-growth} then bounds the growth of the \eNTK{} for ``fan-in'' ReLU networks;
their combination gives a $\bigO_p(1 / \sqrt n)$ bound on the Frobenius difference between the \eNTK{} and \pNTK{} for width-$n$ fan-in ReLU nets.
Later subsections apply these results to various applications.

\subsection{Linear Read-Out Layers} \label{sec:linear-readouts}

Towards this, we first define some notation and show a simple recursive formula for computing the tangent kernel that we take advantage of to prove the theorems. Consider a NN $f: \R^D \to \R^O$. We assume the final read-out layer of the NN $f$ is a dense layer with width $n$.
Assuming the NN $f$ has $L$ layers, we define $\theta_l$ to be the corresponding parameters of layer $l \in \{1, 2, \dots, L\}$.
Furthermore, let's define $g: \textcolor{black}{\R^D} \to \R^n$ as the output of the penultimate layer of the NN $f$, such that $f(x) = \theta_L g(x)$ for some $\theta_L \in \R^{O \times w}$. 

As noted by \citet{linntk2019lee} and \citet{yang2020tensor}, the NTK can be reformulated as the layer-wise sum of gradients (when the parameters of each layer $\theta_l$ are assumed to be vectorized) of the output with respect to $\theta_l$. Accordingly, we denote the \eNTK~of a NN $f$ as
\begin{equation} \label{eq:ntk_summation}
    \Theta_f(x_1, x_2) = \sum_{l=1}^{L} \nabla_{\theta_l} f(x_1) {\nabla_{\theta_l} f(x_2)}^\top
.\end{equation}

Now, noting that as the final layer of $f$ is a dense layer, we can use the chain rule to write $\nabla_{\theta_l} f(x)$ as $\frac{\partial f}{\partial g(x)} \frac{\partial g(x)}{\partial \theta_l}$ where $\frac{\partial f(x)}{\partial g(x)} = \theta_L$. Thus, we can rewrite \eqref{eq:ntk_summation} as
\begin{align} \label{eq:explained_entk}
    \begin{split}
        \Theta_f(x_1, x_2) &= \sum_{l=1}^{L-1} \theta_L \nabla_{\theta_l} g(x_1) \, {\nabla_{\theta_l} g(x_2)}^\top {\theta_L}^\top + \nabla_{\theta_L} f(x_1) {\nabla_{\theta_L} f(x_2)}^\top \\
        &= \theta_L \left( \sum_{l=1}^L \nabla_{\theta_l} g(x_1) \, \nabla_{\theta_l} g(x_2)^\top \right) {\theta_L}^\top + {g(x_1)}^\top g(x_2) \, I_O \\
        &= \theta_L \, \Theta_g(x_1, x_2) \, {\theta_L}^\top + {g(x_1)}^\top g(x_2) \, I_O.
    \end{split}
\end{align}

Recall that we can view the \pNTK~of a network $f$
as simply adding a fixed, non-trainable dense layer to the network with weights $v$,
where $v$ is either a standard basis vector for the single-logit approximation,
or $\frac{1}{\sqrt{O}} \mathbf{1}_O$ for the sum-of-logits form.
Then \eqref{eq:explained_entk} shows us that the \pNTK{} is simply a weighted sum of the \eNTK's elements,
\begin{equation} \label{eq:supp:pntk_is_sum_of_entk_lastlayer}
    \hat{\Theta}_f(x_1, x_2) = v\tp \Theta_f(x_1, x_2) v
;\end{equation}
note that the second term does not appear
since $v$ is not a trainable parameter.

The key result of this subsection, which holds fairly generally, is based on showing that,
when $\theta_L$ is at initialization,
off-diagonal entries of $\theta_L \Theta_g \theta_L\tp$ are near the \pNTK{}'s corresponding value of zero,
while diagonal entries are close to $\Theta_f$.
Specifically, we have the following result
for the single-logit approximation,
i.e.\ when $v$ in \eqref{eq:supp:pntk_is_sum_of_entk_lastlayer}
is one-hot.

\begin{Lemma} \label{Lemma:supp:hw-main}
Let $f$ be of the form $f(x) = W g(x)$,
where $W \in \R^{O \times n}$
and $g$ is an arbitrary deep network.
Suppose that $W$ has independent, zero-mean, sub-gaussian entries with variance parameter $\nu$.
Let  $\delta > 0$ be smaller than a constant depending only on $O$.
Let $x_1$, $x_2$ be arbitrary input points.
Then, with high probability over the random value of $W$,
\begin{equation}
    \norm{\Theta_f(x_1, x_2) - \hat\Theta_f(x_1, x_2) I_O}_F \le
    \bigO_p\left( \nu \norm{\Theta_g(x_1, x_2)}_F \right)
.\end{equation}
\end{Lemma}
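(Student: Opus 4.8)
The plan is to reduce the claim to two Hanson--Wright-type concentration estimates. Write $A := \Theta_g(x_1,x_2) \in \R^{n\times n}$ and $W := \theta_L$. Conditioning on the parameters of $g$ (which do not include $W$), the matrix $A$ and the scalar $g(x_1)\tp g(x_2)$ are fixed and the only randomness is in $W$ --- exactly the setting of the Lemma. By \eqref{eq:explained_entk} and \eqref{eq:supp:pntk_is_sum_of_entk_lastlayer}, together with $\norm v = 1$,
\[
  \Theta_f(x_1,x_2) = W A W\tp + g(x_1)\tp g(x_2)\, I_O,
  \qquad
  \hat{\Theta}_f(x_1,x_2) = v\tp W A W\tp v + g(x_1)\tp g(x_2),
\]
so the scalar $g(x_1)\tp g(x_2)$ cancels and, with the single-logit choice $v = e_j$, it remains to bound $\norm{M - M_{jj} I_O}_F$, where $M := W A W\tp \in \R^{O\times O}$.

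First I would identify the mean. By independence and zero mean of the entries of $W$, the off-diagonal entries $M_{ab} = W_a A W_b\tp$ ($a\neq b$, with $W_a$ the $a$-th row) have mean zero, while $M_{aa} = W_a A W_a\tp$ has mean $\sigma^2 \tr(A)$, with $\sigma^2 := \E[W_{11}^2] \le C\nu$ the common entry variance. Hence $\E[M] = \sigma^2 \tr(A)\, I_O =: \mu I_O$ is a multiple of the identity, and the triangle inequality gives $\norm{M - M_{jj}I_O}_F \le \norm{M - \mu I_O}_F + \sqrt O\, \abs{M_{jj} - \mu}$. Each term is a quadratic or bilinear form in sub-gaussian vectors minus its mean, which is precisely what Hanson--Wright controls: for a diagonal entry, $M_{aa} = W_a \tfrac{A + A\tp}{2} W_a\tp$, so Hanson--Wright yields $\abs{M_{aa} - \mu} \lesssim \nu\norm{A}_F \sqrt{\log(1/\delta')} + \nu\norm{A}_{\op}\log(1/\delta')$ with probability at least $1-\delta'$ (using $\Norm{\tfrac{A+A\tp}{2}}_F \le \norm{A}_F$ and $\norm{\cdot}_{\op}\le\norm{\cdot}_F$); for an off-diagonal entry the decoupled (bilinear) Hanson--Wright inequality gives the same tail around mean zero. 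A union bound over the $O^2$ entries with $\delta' = \delta/O^2$ then gives, with probability $\ge 1-\delta$, $\abs{M_{ab} - \E M_{ab}} \lesssim \nu\norm{A}_F \log(O^2/\delta)$ simultaneously for all $a,b$; since $\norm{M - \mu I_O}_F \le O\max_{a,b}\abs{M_{ab}-\E M_{ab}}$, combining the two pieces gives $\norm{\Theta_f(x_1,x_2) - \hat{\Theta}_f(x_1,x_2)I_O}_F \le \bigO_p\!\left(\nu\norm{\Theta_g(x_1,x_2)}_F\right)$, absorbing the factors depending only on $O$ and $\log(1/\delta)$ into $\bigO_p$. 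This is also where the hypothesis that $\delta$ be smaller than a constant depending only on $O$ enters, keeping the union bound and logarithmic factors benign. (Feeding in the growth bound on $\norm{\Theta_g}_F$ from \cref{sec:relu-ntk-growth} and $\nu = \Theta(1/n)$ for a He-initialized last layer then yields \cref{theorem:pntk_fro_norm}.)

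I do not expect a genuine obstacle, but the conceptually delicate point is the cancellation of the $g(x_1)\tp g(x_2)\, I_O$ term: this scalar is not small in general, so the Lemma would be false if it did not cancel exactly between $\Theta_f$ and $\hat{\Theta}_f I_O$, and one has to be careful that the pNTK --- which differentiates through the random last layer too --- carries precisely the same scalar. The remaining mild care is that $A = \Theta_g(x_1,x_2)$ is a cross-kernel and hence generally not symmetric, so Hanson--Wright is applied to its symmetrization on the diagonal and in bilinear/decoupled form off the diagonal; neither changes the $\norm{\cdot}_F$ and $\norm{\cdot}_{\op}$ bounds. Everything else is a routine union bound, which is exactly why the only real ``cost'' of the argument is the $O$- and $\delta$-dependence hidden in the $\bigO_p$.
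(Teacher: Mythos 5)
Your proposal is correct and takes essentially the same route as the paper's proof: cancel the $g(x_1)\tp g(x_2)\, I_O$ term via the last-layer decomposition \eqref{eq:explained_entk}--\eqref{eq:supp:pntk_is_sum_of_entk_lastlayer}, then control each entry of $W\,\Theta_g(x_1,x_2)\,W\tp$ by Hanson--Wright (decoupled/bilinear off the diagonal, quadratic on the diagonal) with a union bound over the $O^2$ entries. The only cosmetic difference is on the diagonal, where you center each $M_{aa}$ at the common mean $\sigma^2\tr(\Theta_g)$ and use a triangle inequality, while the paper bounds $M_{ii}-M_{11}$ directly as a single mean-zero quadratic form in the stacked vector $(W_i, W_1)$; both give the same $\bigO_p\left(\nu\,\norm{\Theta_g(x_1,x_2)}_F\right)$ rate.
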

For standard \citet{he2016kaiming} initialization,
$\nu = \frac1n$.
We will later show that for standard ReLU networks of width $n$ at initialization,
$\norm{\Theta_g(x_1, x_2)}_F = \bigO_p(\sqrt n)$,
implying that the difference between the \pNTK{} and the \eNTK{} is $\bigO_p(1 / \sqrt n)$.

For Gaussian weights $W$,
the same guarantees follow for the sum-of-logits approximation
($v = \frac{1}{\sqrt{O}} \mathbf{1}_O$)
by noting that for fixed $g$ and random $W$,
the joint distribution of $f(x)$ and $\hat f(x) = v\tp f(x)$
are identical for any unit-norm $v$.

Our proof is based on the following key tool.
\begin{proposition}[Hanson-Wright Inequality; \citealt{vershynin2018high}, Theorem 6.2.1] 
\label{lemma:supp:hanson_wright_ineq}
Let $x$ be a random vector with independent centered sub-gaussian entries with variance proxy $\nu$, and let $A$ be a square matrix.
Let $c > 0$ be a universal constant.
Then
\[
\Pr \Big( \big|x\tp A x - \E[x\tp A x] \big| \ge t \Big)
\le 2 \exp\left( -c \min\left( \frac{t^2}{\nu^2 \norm{A}_F^2}, \frac{t}{\nu \norm{A}} \right) \right)
.\]
\end{proposition}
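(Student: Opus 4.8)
The plan is to prove the inequality by the exponential-moment (Chernoff) method: control the moment generating function (MGF) of the centered quadratic form $S := x\tp A x - \E[x\tp A x]$ and then optimize over the Chernoff parameter. First I would reduce to the normalized case $\nu = 1$: replacing $x$ by $x/\sqrt\nu$ turns the coordinates into unit-variance-proxy sub-gaussians and multiplies the form by $\nu$, so a tail bound at level $t/\nu$ for the normalized form reproduces exactly the claimed $\min\!\big(t^2/(\nu^2\norm{A}_F^2),\, t/(\nu\norm{A})\big)$ dependence. It also costs nothing to symmetrize $A \mapsto \tfrac12(A + A\tp)$, since $x\tp A x = x\tp \tfrac12(A+A\tp) x$ while both $\norm{\cdot}_F$ and $\norm{\cdot}$ only decrease. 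The target estimate is then: for $\abs\lambda \le c/\norm{A}$, show $\E\exp(\lambda S) \le \exp(C\lambda^2 \norm{A}_F^2)$. Applying Markov's inequality to $\exp(\lambda S)$ and to $\exp(-\lambda S)$ yields a sub-gamma tail, and optimizing $\lambda$ over $[0,\, c/\norm{A}]$ produces the two-regime $\min(\cdot,\cdot)$, with the factor $2$ coming from the union over the two signs.

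Next I would split $A$ into its diagonal $D$ and off-diagonal $H$, so $S = \sum_i A_{ii}(x_i^2 - \E x_i^2) + \sum_{i\ne j} A_{ij} x_i x_j =: S_{\mathrm{diag}} + S_{\mathrm{off}}$, and bound the two MGFs separately, recombining at the end by Cauchy--Schwarz, $\E\exp(\lambda S) \le \sqrt{\E\exp(2\lambda S_{\mathrm{diag}})\,\E\exp(2\lambda S_{\mathrm{off}})}$. The diagonal part is routine: each $x_i^2 - \E x_i^2$ is a centered sub-exponential variable (the square of a sub-gaussian), the terms are independent, and a standard Bernstein-type MGF estimate gives $\E\exp(\lambda S_{\mathrm{diag}}) \le \exp(C\lambda^2\norm{D}_F^2)$ valid whenever $\abs\lambda \le c/\max_i\abs{A_{ii}}$, which is implied by $\abs\lambda \le c/\norm{A}$ because $\max_i\abs{A_{ii}} \le \norm{A}$ and $\norm{D}_F \le \norm{A}_F$.

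The heart of the argument, and the step I expect to be the main obstacle, is the off-diagonal chaos $S_{\mathrm{off}}$. Here I would (i) decouple, replacing $\sum_{i\ne j}A_{ij}x_i x_j$ by $\langle x, A x'\rangle = \sum_{i,j}A_{ij}x_i x_j'$ for an independent copy $x'$, at the cost of a universal constant in $\lambda$, via the standard decoupling inequality for quadratic chaos (de la Pe\~{n}a--Montgomery-Smith); (ii) condition on $x'$, so that $\langle x, Ax'\rangle$ is a sum of independent sub-gaussians with coefficients $(Ax')_i$, giving $\E_x\exp(\lambda\langle x, Ax'\rangle) \le \exp(C\lambda^2\norm{Ax'}_2^2)$; and (iii) take the $x'$-expectation of $\exp(C\lambda^2\norm{Ax'}_2^2)$ using the Gaussian linearization $\exp(\tfrac12\norm{u}_2^2) = \E_{g}\exp(\langle u, g\rangle)$ with $g\sim\N(0,I)$. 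This rewrites the quantity as a further sub-gaussian MGF in $x'$ with coefficient vector $A\tp g$, and after a second conditioning and a Gaussian chaos computation it reduces to $\E_g\exp(C'\lambda^2\norm{A\tp g}_2^2)$, whose value is $\prod_k(1 - c''\lambda^2\sigma_k^2)^{-1/2}$ in the singular values $\sigma_k$ of $A$. This product is bounded by $\exp(C\lambda^2\sum_k\sigma_k^2) = \exp(C\lambda^2\norm{A}_F^2)$ precisely when $\lambda^2\norm{A}^2$ stays bounded away from $1$, which is exactly the constraint $\abs\lambda \le c/\norm{A}$ in the theorem and the source of the linear-in-$t$ tail regime.

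Assembling the diagonal and off-diagonal MGF bounds gives $\E\exp(\lambda S) \le \exp(C\lambda^2\norm{A}_F^2)$ for all $\abs\lambda \le c/\norm{A}$, and the Chernoff optimization finishes the proof. The delicate points are threefold: justifying the decoupling step for general centered (not necessarily symmetric) sub-gaussian coordinates, tracking constants through the two nested linearization/conditioning steps so they remain universal, and verifying that the $\lambda$-ranges demanded by the diagonal and off-diagonal estimates hold simultaneously, all of which reduce to the single operator-norm threshold $c/\norm{A}$.
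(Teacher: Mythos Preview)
The paper does not prove this proposition: it is stated with a citation to \citet{vershynin2018high}, Theorem 6.2.1, and used as a black box (with only the remark that \citealt{hanson-wright} give a version with explicit constants). Your plan is the standard proof of Hanson--Wright---diagonal/off-diagonal split, decoupling of the off-diagonal chaos, conditioning plus Gaussian linearization to reduce to $\E_g\exp(C\lambda^2\norm{A\tp g}_2^2)$, and Chernoff optimization---which is essentially the argument in the cited references, so it is correct and appropriate.
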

(A version with explicit constants, but in a slightly less convenient form, is given by \citealt{hanson-wright}.)

The following form converts to an error probability of $\delta$,
and gives a slightly simpler but weaker bound using $\norm A \le \norm A_F$.
\begin{corollary} \label{lemma:supp:hanson_wright_bound}
    In the setup of \cref{lemma:supp:hanson_wright_ineq},
    it holds with probability at least $1 - \delta$ that
    \[
        \abs{x\tp A x - \E[x\tp A x]}
        \le \nu \norm{A}_F \max\left( \tfrac1c \log\tfrac2\delta, \sqrt{\tfrac1c \log\tfrac2\delta} \right)
    .\]
\end{corollary}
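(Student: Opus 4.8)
The plan is to invoke \cref{lemma:supp:hanson_wright_ineq} directly and simply choose the deviation level $t$ that drives the stated tail probability down to $\delta$. First I would weaken the bound by replacing the operator norm with the Frobenius norm: since $\norm{A} \le \norm{A}_F$, we have $\frac{t}{\nu\norm{A}} \ge \frac{t}{\nu\norm{A}_F}$, and $\min$ is monotone in each argument, so
\[
\Pr\Big( \abs{x\tp A x - \E[x\tp A x]} \ge t \Big) \le 2\exp\left( -c\min\left( \frac{t^2}{\nu^2\norm{A}_F^2}, \frac{t}{\nu\norm{A}_F} \right) \right).
\]
It therefore suffices to pick $t$ so that the right-hand side is at most $\delta$, i.e.\ so that $\min\big( t^2/(\nu^2\norm{A}_F^2),\, t/(\nu\norm{A}_F) \big) \ge \tfrac1c\log\tfrac2\delta$.

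Next I would substitute $u = t/(\nu\norm{A}_F)$ and set $L = \tfrac1c\log\tfrac2\delta$, so that the requirement becomes $\min(u^2, u) \ge L$. Since $\min(u^2,u) = u^2$ when $u \le 1$ and $\min(u^2,u) = u$ when $u \ge 1$, I would take $u = \max(L, \sqrt{L})$ and check the two regimes: if $L \le 1$ then $u = \sqrt L \le 1$ and $\min(u^2,u) = u^2 = L$; if $L \ge 1$ then $u = L \ge 1$ and $\min(u^2,u) = u = L$. In both cases $\min(u^2,u) \ge L$, as needed. Translating back, $t = \nu\norm{A}_F\max(L,\sqrt L) = \nu\norm{A}_F\max\big(\tfrac1c\log\tfrac2\delta, \sqrt{\tfrac1c\log\tfrac2\delta}\big)$ makes the tail probability at most $\delta$, which is exactly the claimed bound; taking the complementary event finishes the proof.

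There is essentially no hard step here; the only thing to be careful about is the bookkeeping around the $\min$/$\max$ — which branch of the inner $\min$ is active flips at $u=1$, equivalently at $L=1$ (that is, at $\delta = 2e^{-c}$), and the outer $\max$ in the statement is precisely what selects the correct branch in each regime. I would also remark that replacing $\norm A$ by $\norm A_F$ only loosens the inequality, so nothing is lost in passing to the simpler form, and that the universal constant $c$ is carried over unchanged from \cref{lemma:supp:hanson_wright_ineq}.
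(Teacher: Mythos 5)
Your proof is correct and is exactly the argument the paper intends: it states the corollary as a direct consequence of \cref{lemma:supp:hanson_wright_ineq}, obtained by the same two moves you make — weakening the tail via $\norm{A} \le \norm{A}_F$ and then choosing $t = \nu\norm{A}_F \max\bigl(\tfrac1c\log\tfrac2\delta, \sqrt{\tfrac1c\log\tfrac2\delta}\bigr)$ so the branch of the $\min$ that is active yields probability at most $\delta$. Your bookkeeping at the crossover $L=1$ is the only subtlety, and you handle it correctly.
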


\begin{corollary} \label{lemma:supp:hanson_wright_diff}
    Let $x$ and $y$ be independent random vectors, whose entries are independent, centered, and sub-gaussian with variance proxy $\nu$.
    Let $A$ be any square matrix,
    and $c > 0$ the universal constant of \cref{lemma:supp:hanson_wright_ineq}.
    Then
    \[
      \Pr\left( \abs{x\tp A y} \ge t \right)
      \le 2 \exp\left( -2c \min\left( \frac{t^2}{\nu^2 \norm{A}_F^2}, \frac{t}{\nu \norm{A}} \right) \right)
    .\]
    This implies that
    with probability at least $1 - \delta$,
    we have that
    \[
        \abs{x\tp A x - \E[x\tp A x]}
        \le \nu \norm{A}_F \max\left( \tfrac{1}{2c} \log\tfrac2\delta, \sqrt{\tfrac{1}{2c} \log\tfrac2\delta} \right)
    .\]
\end{corollary}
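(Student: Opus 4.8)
The plan is to reduce the tail bound to the Hanson--Wright inequality (\cref{lemma:supp:hanson_wright_ineq}) via a standard symmetric-dilation argument, and then to invert that bound to obtain the high-probability statement exactly as in \cref{lemma:supp:hanson_wright_bound}.

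First I would stack the two vectors into $z = (x\tp, y\tp)\tp$; since $x \perp y$ and each has independent, centered, sub-gaussian coordinates with variance proxy $\nu$, so does $z$. Then I would introduce the symmetric block matrix
\[
    B = \frac12 \begin{pmatrix} 0 & A \\ A\tp & 0 \end{pmatrix}
\]
and check the two elementary identities $z\tp B z = \tfrac12\big(x\tp A y + y\tp A\tp x\big) = x\tp A y$ --- using that $x\tp A y$ is a scalar, hence equal to its transpose $y\tp A\tp x$ --- and $\E[z\tp B z] = \E[x]\tp A \,\E[y] = 0$ by independence and centering. Thus $\abs{x\tp A y} = \abs{z\tp B z - \E[z\tp B z]}$, which is precisely the quantity controlled by \cref{lemma:supp:hanson_wright_ineq} applied to $z$ and $B$.

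Next I would compute the two norms of $B$ appearing in that bound. The Frobenius norm ignores the block structure, so $\norm{B}_F^2 = \tfrac14(\norm{A}_F^2 + \norm{A\tp}_F^2) = \tfrac12\norm{A}_F^2$; and the nonzero eigenvalues of $\begin{pmatrix} 0 & A \\ A\tp & 0\end{pmatrix}$ are $\pm\sigma_i(A)$, so $\norm{B} = \tfrac12\norm{A}$. Substituting these into \cref{lemma:supp:hanson_wright_ineq} turns $t^2/(\nu^2\norm{B}_F^2)$ into $2t^2/(\nu^2\norm{A}_F^2)$ and $t/(\nu\norm{B})$ into $2t/(\nu\norm{A})$; pulling the common factor of $2$ out of the $\min$ yields the claimed tail bound with $2c$ in the exponent. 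For the high-probability statement I would set the tail bound equal to $\delta$, forcing $\min\big(t^2/(\nu^2\norm{A}_F^2),\, t/(\nu\norm{A})\big) = \tfrac1{2c}\log\tfrac2\delta$, take $t^\star = \nu\norm{A}_F\max\big(\tfrac1{2c}\log\tfrac2\delta,\, \sqrt{\tfrac1{2c}\log\tfrac2\delta}\big)$, and conclude $\Pr(\abs{x\tp A y}\ge t^\star)\le\delta$ via a short case split on whether $\tfrac1{2c}\log\tfrac2\delta\ge 1$, using $\norm{A}\le\norm{A}_F$ --- the same computation as in \cref{lemma:supp:hanson_wright_bound}.

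There is no substantive obstacle here; the only points requiring care are fixing the symmetrization constant in $B$ so that $z\tp B z$ equals exactly $x\tp A y$ rather than $2\,x\tp A y$, and correctly tracking the identities $\norm{B}_F = \norm{A}_F/\sqrt2$ and $\norm{B} = \norm{A}/2$, since it is exactly the mismatch between these two scalings that upgrades the constant from $c$ to $2c$ relative to \cref{lemma:supp:hanson_wright_bound}.
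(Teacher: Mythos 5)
Your proposal is correct and follows essentially the same route as the paper's own proof: the same symmetric dilation $\tilde A = \begin{bmatrix} 0 & \tfrac12 A \\ \tfrac12 A\tp & 0\end{bmatrix}$ with $\norm{\tilde A}_F = \norm{A}_F/\sqrt 2$ and $\norm{\tilde A} = \norm{A}/2$, followed by applying \cref{lemma:supp:hanson_wright_ineq} and inverting as in \cref{lemma:supp:hanson_wright_bound}. Your eigenvalue argument for $\norm{\tilde A}$ is a minor cosmetic variant of the paper's supremum computation, but the substance is identical.
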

\begin{proof}
    Notice that
    \begin{equation} \label{eq:hw-diff-setup}
        \begin{bmatrix}x \\ y\end{bmatrix}\tp
        \underbrace{\begin{bmatrix}0 & \tfrac12 A \\ \tfrac12 A\tp & 0 \end{bmatrix}}_{\tilde A}
        \begin{bmatrix}x \\ y\end{bmatrix}
        = \tfrac12 x\tp A y + \tfrac12 y\tp A\tp x
        = x\tp A y
    .\end{equation}
    The vector $\begin{bmatrix}x \\ y\end{bmatrix}$
    satisfies the conditions for \cref{lemma:supp:hanson_wright_ineq}.
    We have
    $\norm{\tilde A}_F = \sqrt{\tfrac14 + \tfrac14} \, \norm{A}_F = \frac{1}{\sqrt 2} \norm{A}_F$,
    while
    \[
    \norm{\tilde A}
    = \sup_{\norm{x}^2 + \norm{y}^2 = 1}
      \sqrt{\norm{\tfrac12 A y}^2 + \norm{\tfrac12 A\tp x}^2}
    = \tfrac12 \norm{A} \sup_{\norm{x}^2 + \norm{y}^2 = 1} \sqrt{\norm{x}^2 + \norm{y}^2}
    = \tfrac12 \norm{A}
    .\]
    Noting that $\E x\tp A y = 0$,
    the proof follows by applying \cref{lemma:supp:hanson_wright_ineq,lemma:supp:hanson_wright_bound} to \eqref{eq:hw-diff-setup}.
\end{proof}

We are now ready to prove the previous result.
\begin{proof}[Proof of \cref{Lemma:supp:hw-main}]
Assume without loss of generality that $\hat\Theta_f$ uses the first-logit approximation,
i.e.\ $v = (1, 0, \dots, 0)$.
Also assume $O > 1$,
since for $O = 1$ the \eNTK{} and \pNTK{} are trivially identical.

Define the difference matrix between the two kernels as, recalling \eqref{eq:explained_entk} and \eqref{eq:supp:pntk_is_sum_of_entk_lastlayer},
\begin{align*}
D(x_1, x_2)
&= \Theta_f(x_1, x_2) - \hat\Theta_f(x_1, x_2) I_O
\\&= \left[W \Theta_g(x_1, x_2) W\tp + g(x_1)\tp g(x_2) I_O \right]
   - v\tp \left[ W \Theta_g(x_1, x_2) W - g(x_1)\tp g(x_2) \right] v I_O
\\&= W \Theta_g(x_1, x_2) W\tp - W_1\tp \Theta_g(x_1, x_2) W_1 I_O
.\end{align*} 
We will use the Hanson-Wright inequality (\cref{lemma:supp:hanson_wright_ineq,lemma:supp:hanson_wright_bound,lemma:supp:hanson_wright_diff}) to bound each entry of this matrix with high probability,
then use a union bound to bound the overall Frobenius norm of $D(x_1, x_2)$.

We begin by bounding the off-diagonal elements of $D$.
By \eqref{eq:explained_entk} and 
\eqref{eq:supp:pntk_is_sum_of_entk_lastlayer},
we can see that for any $i \ne j$,
$D_{ij}(x_1, x_2) = W_i\tp \Theta^{(L-1)} W_j$.
Because the matrix $D$ is symmetric,
there are $\frac{O (O-1)}{2}$ distinct off-diagonal entries to bound.
Using a union bound with
\cref{lemma:supp:hanson_wright_diff},
we obtain that for any $\delta < 2 O (O-1) e^{-2c}$,
it holds with probability at least $1 - \delta/2$ that
\begin{align} \label{eq:hw-offdiag}
\forall i \ne j, \quad |D_{ij}(x_1, x_2)| \le \nu \norm{\Theta_g(x_1, x_2)}_F \frac{1}{2c} \log \dfrac{2 O (O-1)}{\delta}.
\end{align}

$D_{11}(x_1, x_2)$ is identically zero.
For the other diagonal elements, we have that
\begin{align}
\begin{split}
D_{ii}(x_1, x_2) &= W_i^\top \Theta^{(L-1)} W_i -  W_1^\top \Theta_g W_1 \\
&= (W_i - W_1)^\top \Theta^{(L-1)} (W_i + W_1) \underbrace{{}- W_i^\top \Theta_g W_1 + W_1^\top \Theta_g W_i}_{\; =0 \text{, as } \Theta_g \text{ is symmetric}} \\
&= \begin{bmatrix} W_i \\ W_1 \end{bmatrix}^\top \underbrace{\Bigg( \begin{bmatrix} I_n \\ -I_n \end{bmatrix} \Theta_g \begin{bmatrix} I_n & I_n \end{bmatrix} \Bigg)}_{\Theta_g^*} \begin{bmatrix} W_i \\ W_1 \end{bmatrix}.
\end{split}
\end{align}
Noting that 
$\Theta_g^* = \begin{bmatrix} \Theta_g & \Theta_g \\ -\Theta_g & -\Theta_g  \end{bmatrix}$,
we have that
$\norm{\Theta_g^*}_F = 2 \norm{\Theta_g}_F$.
Thus, applying \cref{lemma:supp:hanson_wright_bound} to each of the $O-1$ entries and taking a union bound,
we find that as long as
$\delta < 4 (O - 1) e^{-c}$,
it holds with probability at least $1 - \delta/2$ that
\begin{equation} \label{eq:hw-diag}
\forall i, \quad \big|D_{ii}(x_1, x_2) \big| \le 2 \nu \norm{\Theta_g(x_1, x_2)}_F \frac{1}{c} \log \dfrac{4(O-1)}{\delta}
.\end{equation}

Combining \eqref{eq:hw-offdiag} and \eqref{eq:hw-diag},
we obtain that as long as
$\delta < 2 (O-1) e^{-c} \min( O e^{-c}, 2 )$,
\[
    \norm{D(x_1, x_2)}_F
    \le \nu \norm{\Theta_g(x_1, x_2)}_F \frac1c
    \sqrt{
        O (O-1) \left( \tfrac12 \log \frac{2 O (O-1)}{\delta} \right)^2
        + (O-1) \left( 2 \log \frac{4 (O-1)}{\delta} \right)^2
    }
.\qedhere\]
\end{proof}

\subsection{Background on sub-exponential variables}
The following proofs rely heavily on concentration inequalities for sub-exponential random variables; we will first review some background on these quantities.

A real-valued random variable $X$ with mean $\mu$ is called \textit{sub-exponential} \citep[see e.g.][]{wainwright_2019} if there are non-negative parameters $(\nu, \alpha)$ such that
\[
\E[e^{\lambda (X - \mu)}] \le e^{\frac{\nu^2 \lambda^2}{2}} \quad \text{ for all } \abs{\lambda} < \frac{1}{\alpha}.
\]
We use $X \sim SE(\nu, \alpha)$ to denote that $X$ is a sub-exponential random variable with parameters $(\nu, \alpha)$,
but note that this is not a particular distribution.

One famous sub-exponential random variable is the product of the absolute value of two standard normal distributions, $z_i \sim \N(0, 1)$, such that the two factors are either independent ($X_1 = \abs{z_1} \abs{z_2} \sim SE(\nu_p, \alpha_p)$ with mean $2/\pi$) or the same ($X_2 = z^2 \sim SE(2, 4)$ with mean 1). We now present a few lemmas regarding sub-exponential random variables that will come in handy in the later subsections of the appendix.

\begin{Lemma} \label{Lemma_scaled_sub_exponential}
If a random variable $X$ is  sub-exponential with parameters $(\nu, \alpha)$, then the random variable $sX$ where $s \in \R^+$ is also sub-exponential with parameters $(s \nu, s\alpha)$.
\end{Lemma}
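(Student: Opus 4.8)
The plan is to verify the defining moment-generating-function inequality directly for $sX$ by a simple rescaling of the exponent parameter. Write $\mu = \E[X]$; since $s$ is a deterministic positive scalar, $\E[sX] = s\mu$. By the definition of the sub-exponential class, what must be shown is that $\E[e^{\lambda(sX - s\mu)}] \le e^{(s\nu)^2 \lambda^2 / 2}$ holds for every $\lambda$ with $\abs{\lambda} < 1/(s\alpha)$.

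The key observation is that $\lambda(sX - s\mu) = (s\lambda)(X - \mu)$, so $\E[e^{\lambda(sX-s\mu)}] = \E[e^{(s\lambda)(X-\mu)}]$. Setting $\lambda' = s\lambda$, the constraint $\abs{\lambda} < 1/(s\alpha)$ is equivalent to $\abs{\lambda'} < 1/\alpha$ precisely because $s > 0$, and this is exactly the range on which the hypothesis $X \sim SE(\nu,\alpha)$ provides control. Invoking that hypothesis gives $\E[e^{\lambda'(X-\mu)}] \le e^{\nu^2 (\lambda')^2/2} = e^{\nu^2 s^2 \lambda^2 / 2} = e^{(s\nu)^2 \lambda^2/2}$, which is the required bound; hence $sX \sim SE(s\nu, s\alpha)$.

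There is essentially no obstacle here. The only points worth a moment's care are that (i) $s$ being strictly positive is what turns the rescaling of the admissible interval into an equivalence rather than a mere inclusion (negative $s$ is ruled out by $s \in \R^+$, and $s = 0$ is a degenerate case), and (ii) the centering shifts to $s\mu$, which is handled automatically by pulling $s$ through the expectation. Both are immediate, so the proof is a one-line change of variables.
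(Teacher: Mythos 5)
Your proof is correct and matches the paper's own argument: both verify the defining MGF inequality for $sX$ by the same change of variables in $\lambda$ (rescaling by $s$), using $s>0$ to translate the admissible interval to $\abs{\lambda} < 1/(s\alpha)$. No gaps; nothing further needed.
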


\begin{proof}
Consider $X \sim SE(\nu, \alpha)$ and $X' = sX$ with $\E[X'] = s \E[X]$, then according to the definition of a sub-exponential random variable

\begin{align}
\begin{split}
    &\E \left[ \exp \left(\lambda (X - \mu) \right) \right]  \le \exp(\frac{\nu^2 \lambda^2}{2}) \quad \text{ for all } \abs{\lambda} < \frac{1}{\alpha} \\
    &\Longrightarrow \E \left[ \exp \left(\frac{\lambda}{s} (sX - s\mu) \right) \right] \le \exp(\frac{\nu^2 s^2 \frac{\lambda^2}{s^2}}{2}) \quad \text{ for all } \abs{\frac{\lambda}{s}} < \frac{1}{s\alpha} \\
    & \xRightarrow{\lambda' = \frac{\lambda}{s}} \E \left[ \exp \left(\lambda' (X' - \mu') \right) \right] \le \exp(\frac{{\nu^2 s}^2 {\lambda'}^2}{2}) \quad \text{ for all } \abs{\lambda'} < \frac{1}{s\alpha}
\end{split}
\end{align}
Defining $\alpha' = s \alpha$ and $\nu' = s \nu$ we recover that $X' \sim SE(s\nu, s \alpha)$.
\end{proof}

\begin{proposition} \label{prop:supp:prob_sum_subexp}
If the random variables $X_i$ for $i \in [1 - N]$ for $N \in \mathbb{N}^+$ are all sub-exponential with parameters $(\nu_i, \alpha_i)$ and independent, then $\sum_{i=1}^N X_i \in SE(\sqrt{\sum_{i=1}^N \nu_i^2}, \max_i \alpha_i)$,
and 
$\frac{1}{N} \sum_{i=1}^N X_i \sim SE\left( \frac{1}{\sqrt N} \sqrt{\frac1N \sum_{i=1}^N \nu_i^2}, \frac1N \max_i \alpha_i\right)$.
\end{proposition}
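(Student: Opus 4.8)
The plan is to verify the moment-generating-function (MGF) bound from the definition of sub-exponentiality directly, using independence to factor the joint MGF; the averaged statement then falls out of the scaling lemma already established.

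\textbf{Step 1 (the sum).} Writing $\mu_i = \E[X_i]$, I would fix $\lambda$ with $|\lambda| < 1/(\max_i \alpha_i)$, so that $|\lambda| < 1/\alpha_i$ for every $i$ and hence each single-variable sub-exponential bound is in force. By independence,
\[
\E\Big[\exp\Big(\lambda \sum_{i=1}^N (X_i - \mu_i)\Big)\Big]
= \prod_{i=1}^N \E\big[\exp(\lambda(X_i - \mu_i))\big]
\le \prod_{i=1}^N \exp\!\Big(\tfrac{\nu_i^2 \lambda^2}{2}\Big)
= \exp\!\Big(\tfrac{\lambda^2}{2}\textstyle\sum_{i=1}^N \nu_i^2\Big),
\]
which is exactly the defining inequality for $SE\big(\sqrt{\sum_i \nu_i^2},\, \max_i \alpha_i\big)$ on the required range of $\lambda$.

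\textbf{Step 2 (the average).} I would then apply \cref{Lemma_scaled_sub_exponential} to $S = \sum_{i=1}^N X_i \sim SE\big(\sqrt{\sum_i \nu_i^2},\, \max_i \alpha_i\big)$ with scale $s = 1/N$, giving $\frac1N S \sim SE\big(\frac1N \sqrt{\sum_i \nu_i^2},\, \frac1N \max_i \alpha_i\big)$, and rewrite $\frac1N \sqrt{\sum_i \nu_i^2} = \frac{1}{\sqrt N}\sqrt{\frac1N \sum_i \nu_i^2}$ to match the claimed form.

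\textbf{Expected obstacle.} Honestly, there is no substantial obstacle here; this is a textbook computation. The only point requiring care is the bookkeeping on the admissible range of $\lambda$: each factor's bound holds only for $|\lambda| < 1/\alpha_i$, so the product bound holds only on $|\lambda| < \min_i 1/\alpha_i = 1/(\max_i \alpha_i)$, which is why the second parameter of the sum comes out as $\max_i \alpha_i$ rather than any kind of average. Everything else is a one-line manipulation of the MGF.
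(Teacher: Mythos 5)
Your argument is correct and is exactly the standard moment-generating-function factorization that the paper invokes by citing the discussion preceding Equation 2.18 of Wainwright (2019), combined with the scaling lemma for the average; you have simply written out the computation that the paper leaves to the reference. No gaps, and the care you take with the admissible range $\abs{\lambda} < 1/\max_i \alpha_i$ is precisely the point that makes the second parameter $\max_i \alpha_i$.
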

\begin{proof}
This is a simplification of the discussion prior to equation 2.18 of \citet{wainwright_2019}.
\end{proof}

\begin{proposition} \label{bernsetin_ineq}
For a random variable $X \sim SE(\nu, \alpha)$, the following concentration inequality holds:
\[
\Pr \left( \abs{X - \mu} \ge t \right) \le 2 \exp \left( - \min \left( \frac{t^2}{2\nu^2}, \frac{t}{2\alpha} \right) \right)
.\]
\end{proposition}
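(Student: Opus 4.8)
The plan is to run the standard Chernoff bounding argument, feeding it the sub-exponential moment generating function bound directly, exactly as in the classical proof of Bernstein's inequality (cf.\ \citet{wainwright_2019}). First I would note that for any $\lambda \in (0, 1/\alpha)$, Markov's inequality applied to $e^{\lambda(X-\mu)}$ together with the defining bound $\E[e^{\lambda(X-\mu)}] \le e^{\nu^2\lambda^2/2}$ gives
\[
    \Pr(X - \mu \ge t) \le e^{-\lambda t}\,\E\!\left[e^{\lambda(X-\mu)}\right] \le \exp\!\left(-\lambda t + \tfrac{\nu^2\lambda^2}{2}\right).
\]
The task then reduces to minimizing $\psi(\lambda) = -\lambda t + \nu^2\lambda^2/2$ over the admissible range $\lambda \in (0, 1/\alpha)$.

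The core step is the usual two-case analysis forced by this constrained optimization. The unconstrained minimizer is $\lambda^\star = t/\nu^2$. If $t < \nu^2/\alpha$ then $\lambda^\star \in (0, 1/\alpha)$ is admissible, and substituting it yields $\Pr(X-\mu \ge t) \le \exp(-t^2/(2\nu^2))$. Otherwise $t \ge \nu^2/\alpha$, $\psi$ is strictly decreasing on $(0,1/\alpha)$, so I would let $\lambda \to (1/\alpha)^-$ to obtain the exponent $-t/\alpha + \nu^2/(2\alpha^2)$; since $\nu^2/(2\alpha^2) \le t/(2\alpha)$ in this regime, this is at most $-t/(2\alpha)$. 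Combining the two cases, $\Pr(X - \mu \ge t) \le \exp\!\big(-\min(t^2/(2\nu^2),\, t/(2\alpha))\big)$.

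Finally I would get the two-sided statement by symmetry: because the hypothesis gives $\E[e^{\lambda(X-\mu)}] \le e^{\nu^2\lambda^2/2}$ for all $\abs{\lambda} < 1/\alpha$, running the same Chernoff argument with $\lambda$ replaced by $-\lambda$ bounds $\Pr(X - \mu \le -t)$ by the identical quantity, and a union bound over the upper and lower tails supplies the leading factor of $2$. I do not anticipate a genuine obstacle; the only mild care needed is at the boundary $t \ge \nu^2/\alpha$, where the optimal $\lambda$ is pinned to the endpoint $1/\alpha$ and one relies on the inequality $\nu^2/(2\alpha^2) \le t/(2\alpha)$ to recover the clean linear rate $t/(2\alpha)$.
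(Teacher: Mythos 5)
Your Chernoff-bound derivation is correct, including the careful handling of the constrained optimum at $\lambda \to (1/\alpha)^-$ and the union bound giving the factor of $2$; it is exactly the standard argument behind the result the paper invokes, since the paper's proof simply cites Equation 2.18 of \citet{wainwright_2019}, which is established by this same computation. No gap to report.
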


\begin{proof}
Direct from multiplying the result derived in Equation 2.18 of \citet{wainwright_2019} by a scalar.
\end{proof}

\begin{corollary} \label{bernstein_in_delta}
For a random variable $X \sim SE(\nu, \alpha)$, the following inequality holds with probability at least $1-\delta$:
\[
\abs{X - \mu} < \max \left(\nu \sqrt{2 \log \frac{2}{\delta}}, 2\alpha \log \frac{2}{\delta} \right)
.\]
\end{corollary}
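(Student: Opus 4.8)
The plan is to simply invert the tail bound of \cref{bernsetin_ineq}: choose the threshold $t$ large enough that the right-hand side $2\exp(-\min(t^2/(2\nu^2), t/(2\alpha)))$ drops below $\delta$, and then pass to the complementary event. Concretely, I would set
\[
    t^\star = \max\!\left( \nu \sqrt{2 \log \tfrac{2}{\delta}},\; 2\alpha \log \tfrac{2}{\delta} \right)
\]
and argue that with this choice $\min\!\big( (t^\star)^2/(2\nu^2),\, t^\star/(2\alpha) \big) \ge \log(2/\delta)$.

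The verification is the only step with any content, and it reduces to checking the two branches of the minimum separately. Since $t^\star \ge \nu\sqrt{2\log(2/\delta)}$, squaring and dividing by $2\nu^2$ gives $(t^\star)^2/(2\nu^2) \ge \log(2/\delta)$. Since also $t^\star \ge 2\alpha\log(2/\delta)$, dividing by $2\alpha$ gives $t^\star/(2\alpha) \ge \log(2/\delta)$. Hence the minimum of the two quantities is at least $\log(2/\delta)$, so \cref{bernsetin_ineq} yields
\[
    \Pr\!\left( \abs{X - \mu} \ge t^\star \right) \le 2\exp\!\left( -\log \tfrac{2}{\delta} \right) = \delta .
\]
Taking the complement of this event gives $\Pr(\abs{X-\mu} < t^\star) \ge 1-\delta$, which is exactly the claimed statement.

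I do not expect any real obstacle here: this is a routine tail-to-confidence-interval conversion, and the only thing to be careful about is that the bound in \cref{bernsetin_ineq} is stated for the event $\{\abs{X-\mu}\ge t\}$, so that its complement is the open event $\{\abs{X-\mu}< t^\star\}$, matching the strict inequality in the corollary. One could equally absorb everything into a single $\max$ of the two candidate thresholds without loss, which is what the statement records.
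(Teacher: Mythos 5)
Your proposal is correct and is exactly the standard inversion the paper intends (it omits the proof as routine): plug $t = \max\bigl(\nu\sqrt{2\log\frac{2}{\delta}},\, 2\alpha\log\frac{2}{\delta}\bigr)$ into \cref{bernsetin_ineq}, note both branches of the minimum are at least $\log\frac{2}{\delta}$, and take complements. No gaps.
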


\subsection{Bound on Growth of ReLU Networks' NTKs} \label{sec:relu-ntk-growth}

We will now specialize to fully-connected ReLU networks,
and show the remainder of the required results in that setting.

Let's denote a neural network with $L$ dense hidden layers whose width is $n$ as:
\begin{align}
\begin{split}
    f^0(x) &= x \\
    f^{l+1}(x) &= \phi(W^{(l+1)}f^{l}(x)) \\
    f(x) &= f^L(x) = W^{(L)} f^{L-1}(x)
\end{split}
\end{align}
such that $\phi$ is a differentiable coordinate-wise activation function.

\begin{setting}[ReLU-MLP] \label{sett:supp:relu_mlp_setting}
We make the following assumptions about the network $f$:
\begin{itemize}
    \item We assume $W^{(l)} \in \R^{n_l \times n_{l-1}}$ for $l \in {1, \dots, L}$ is initialized according to the \citet{he2016kaiming} initialization (``fan-in''), meaning that each scalar parameter is distributed according to $\mathcal{N}(0, 1/n_{l-1})$. 
    \item We assume the width of all hidden layers are identical (and equal to $n$). The proof extends naturally to the case of non-equal widths as long as $n_{l+1}/n_l \to c_l \in (0, \infty)$ for each consecutive pair of layers.
    \item We assume $\phi$ is the ReLU activation. \textcolor{black}{This can be generalized to 1-Lipschitz, ReLU-like functions such as GeLU, PReLU, and so on, as discussed in \cref{app:other-archs}.}
    \item We assume the training data $\cX$ is finite and contained in a compact set and there are no overlapping datapoints.
\end{itemize}
\end{setting}

\textbf{A Note On Parameterization} Although we assume a Gaussian distribution for each scalar variable, the proofs in this section apply to any other distribution used for scalar parameters as long as:
\begin{itemize}
    \item The variance of the parameters is set according to \citet{he2016kaiming}, and the mean is zero.
    \item Each scalar parameter is initialized independently of all other ones.
    \item The distribution used is sub-Gaussian; specifically, $w_{ij}^{l+1} \in SG(1 / n_l)$.
\end{itemize}
This applies to all bounded initialization methods, like truncated normal or uniform on an interval.

In general, the product of two sub-Gaussian distributions has a sub-exponential distribution.
For the product of two independent weights, $w^{l+1}_{ij} w^{l+1}_{ab}$ with $i \ne a$ and/or $j \ne b$,
we denote the parameters as
$\frac{1}{n_l} SE(\nu_p, \alpha_p)$.
For $(w^{l+1}_{ij})^2$, we use the parameters $\frac{1}{n_l} SE(\nu_s, \alpha_s)$, whose mean is $\mu_s \ne 0$.

Note that we can recursively define the \eNTK~of $f^{l+1}$ using the \eNTK~of $f^l$ as
\begin{align}
\label{eq:supp:ntk_chain_expansion}
\begin{split}
    \Theta^{(l+1)}(x_1, x_2) &= \sum_{i=1}^{l} \frac{\partial f^{l+1}(x_1)}{\partial W^{(i)}} \, {\frac{\partial f^{l+1}(x_2)}{\partial W^{(i)}}}^\top + \overbrace{\frac{\partial f^{l+1}(x_1)}{\partial W^{l+1}} \, {\frac{\partial f^{l+1}(x_2)}{\partial W^{l+1}}}^\top}^{K_D^{l+1}(x_1, x_2)} \\
    &= \sum_{i=1}^{l} \frac{\partial \phi(W^{(l+1)} f^{l}(x_1))}{\partial W^{(i)}} \, {\frac{\partial \phi(W^{(l+1)} f^{l}(x_2))}{\partial W^{(i)}}}^\top + K_D^{l+1}(x_1, x_2) \\
    &= \sum_{i=1}^l \frac{\partial \phi(W^{(l+1)} f^{l}(x_1))}{\partial f^{l}(x_1)} \frac{\partial f^l(x_1)}{\partial W^{(i)}} \, {\frac{\partial f^l(x_2)}{\partial W^{(i)}}}^\top {\frac{\partial \phi(W^{(l+1)} f^{l}(x_2))}{\partial f^{l}(x_2)}}^\top + K_D^{l+1}(x_1, x_2) \\
    &= \frac{\partial \phi(W^{(l+1)} f^{l}(x_1))}{\partial f^{l}(x_1)} \left [ \sum_{i=1}^l \frac{\partial f^l(x_1)}{\partial W^{(i)}} \, {\frac{\partial f^l(x_2)}{\partial W^{(i)}}}^\top \right ] {\frac{\partial \phi(W^{(l+1)} f^{l}(x_2))}{\partial f^{l}(x_2)}}^\top + K_D^{l+1}(x_1, x_2) \\
    &= \frac{\partial \phi(W^{(l+1)} f^{l}(x_1))}{\partial f^{l}(x_1)} \Theta^{(l)}(x_1, x_2) {\frac{\partial \phi(W^{(l+1)} f^{l}(x_2))}{\partial f^{l}(x_2)}}^\top + K_D^{l+1}(x_1, x_2)
\end{split}
\end{align}
where
$K_D^{l+1}(x_1, x_2) = f^{l}(x_1)^\top f^{l}(x_2) I_n$ is a diagonal matrix,
and
\begin{align}
\begin{split}
    \frac{\partial \phi(W^{(l+1)} f^{l}(x))}{\partial f^{l}(x)}
    &= W^{(l+1)} \odot \left[\dot{\phi}(W^{(l+1)} f^{l}(x))\right]_{1 \times n}
    = \left[
    W^{(l+1)}_{ij}
    \dot{\phi}\left( W^{(l+1)}_{i,:} \cdot f^l(x) \right)
    \right]_{ij}
\end{split}
\end{align}
with $\odot$ the elementwise (Hadamard) product using ``broadcasting,''
and $\dot\phi$ the derivative of $\phi$.
We can think of the last layer as following the same equations with $\phi$ the identity function, so that $\dot\phi(x) = 1$.

Before moving on, it's useful to first show a simple inequality on the elements of a tangent kernel based on the Lipschitz-ness of the activation function; this will help us further in deriving the aforementioned bounds. Define $V^{(l)}(x) = W^{(l)} \odot \left[\dot{\phi}(W^{(l)} f^{l-1}(x))\right]_{1 \times n}$.
We can write each entry of $\Theta^{(l+1)}(x_1, x_2)$ as 
\begin{align}
{\Theta^{(l+1)}(x_1, x_2)}_{ij} &= \sum_{a=1}^n \sum_{b=1}^n {V^{(l+1)}(x_1)}_{ia} {V^{(l+1)}(x_2)}_{jb}{\Theta^{(l)}(x_1, x_2)}_{ab} + {{f^l(x_1)}^\top {f^l(x_2)}} \, \mathcal{I}(i=j)
\notag{}
\\
\abs{{\Theta^{(l+1)}(x_1, x_2)}_{ij}}
&\le \Abs{\sum_{a=1}^n \sum_{b=1}^n {V^{(l)}(x_1)}_{ia} {V^{(l)}(x_2)}_{jb} {\Theta^{(l)}(x_1, x_2)}_{ab}} + \abs{{f^l(x_1)}^\top {f^l(x_2)}} \, \mathcal{I}(i=j). \label{eq:supp:entk_expansion_v}
\end{align}

\begin{Lemma}[Diagonality of the first layer's tangent kernel] \label{Lemma:supp:elements_of_first_layer_2}
For a NN under \cref{sett:supp:relu_mlp_setting}, the \eNTK~of the first layer $\Theta^{(1)}(x_1, x_2)$ is diagonal. Moreover, there is a corresponding constant $C^{(1)} > 0$ such that for \textcolor{black}{all} diagonal elements ${\Theta^{(1)}(x_1, x_2)}_{ii}$, we have that
\begin{equation*}
    \abs{{\Theta^{(1)}(x_1, x_2)}_{ii}} \le C^{(1)}.
\end{equation*}
\end{Lemma}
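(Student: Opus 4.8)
The plan is a direct computation: unlike the later layers, the first layer's tangent kernel has a completely explicit form, and both claims fall out immediately. Writing $W := W^{(1)} \in \R^{n \times D}$, the $i$-th coordinate of the first hidden layer is $f^1_i(x) = \phi(W_{i,:}\cdot x)$, so (in the vectorized-parameter convention of \eqref{eq:ntk_summation})
\[
\frac{\partial f^1_i(x)}{\partial W_{jk}} = \dot\phi(W_{i,:}\cdot x)\, x_k\, \mathcal{I}(i = j).
\]
First I would substitute this into $\Theta^{(1)}(x_1,x_2)_{ij} = \sum_{a=1}^{n}\sum_{k=1}^{D} \frac{\partial f^1_i(x_1)}{\partial W_{ak}}\,\frac{\partial f^1_j(x_2)}{\partial W_{ak}}$; the product of indicators $\mathcal{I}(i=a)\,\mathcal{I}(j=a)$ forces $i = j = a$, so every off-diagonal entry is identically zero and $\Theta^{(1)}(x_1,x_2)$ is diagonal regardless of the random draw of $W$.

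For the diagonal entries, the same substitution gives the closed form
\[
\Theta^{(1)}(x_1,x_2)_{ii} = \dot\phi(W_{i,:}\cdot x_1)\,\dot\phi(W_{i,:}\cdot x_2)\,\langle x_1, x_2 \rangle .
\]
Next I would bound the three factors. Under \cref{sett:supp:relu_mlp_setting}, $\phi$ is ReLU (or, more generally as in \cref{app:other-archs}, $1$-Lipschitz), so $\dot\phi$ is bounded in absolute value by $1$ wherever it is defined; the non-differentiability of ReLU at the origin occurs with probability zero under the (continuous) initialization of $W$, and in any case choosing either one-sided derivative keeps the bound $\abs{\dot\phi} \le 1$. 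Hence $\abs{\Theta^{(1)}(x_1,x_2)_{ii}} \le \abs{\langle x_1, x_2 \rangle} \le \norm{x_1}\,\norm{x_2}$ by Cauchy--Schwarz. Since $\cX$ is contained in a compact set by assumption, $R := \sup_{x \in \cX}\norm{x} < \infty$, and I would take $C^{(1)} := R^2$.

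There is essentially no obstacle here; the only points worth a sentence are the ReLU-at-zero technicality (handled above) and the observation that $C^{(1)}$ can be chosen \emph{deterministically}, depending only on the data. This is slightly stronger than a ``with high probability'' statement and is exactly what makes this lemma a clean base case for the inductive control of $\norm{\Theta^{(l)}(x_1,x_2)}_F$ in the subsequent lemmas.
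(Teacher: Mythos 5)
Your proposal is correct and follows essentially the same route as the paper: an explicit computation of the first-layer Jacobian showing that the indicator structure forces all off-diagonal entries to vanish, followed by a bound on the diagonal entries via the $1$-Lipschitzness of $\phi$ (so $\abs{\dot\phi}\le 1$) and the boundedness of the data, yielding a deterministic constant $C^{(1)}$ (the paper takes $C^{(1)} = \max_{(x_1,x_2)\in\cX\times\cX}\sum_{a=1}^D \abs{x_{1a}x_{2a}}$, while your Cauchy--Schwarz bound $R^2$ is an equally valid, marginally looser choice). Your extra remarks on the ReLU non-differentiability at the origin and on the constant being deterministic are harmless refinements of the same argument.
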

\begin{proof}
Consider the one layer NN $f^{1}(x) = \phi(W^{(1)}x)$. For this case, we have:
\begin{align}
\begin{split}
    &{\Theta^{(1)}(x_1, x_2)}_{ij} = 
        \begin{dcases*}
            \sum_{a=1}^D x_{1a} \dot{\phi}(W_i x_1) x_{2a} \dot{\phi}(W_i x_2)  & if $i = j$ \\
            0 & if $i \ne j$
        \end{dcases*}
\end{split}
\end{align}
and thus, since the activation function $\phi$ is 1-Lipschitz we can conclude that \textcolor{black}{for all $i, j$}
\begin{align}
\begin{split}
    \abs{{\Theta^{(1)}(x_1, x_2)}_{ij}} \le
        \begin{dcases*} 
            \sum_{a=1}^D \abs{x_{1a}} \abs{x_{2a}} & if $i = j$ \\
            0 & if $i \ne j$
        \end{dcases*}
\end{split}.
\end{align}
Thus, the tangent kernel of the first layer is a diagonal matrix whose entries are independent of the width of the first layer ($n$), and can be bounded by a positive constant, \textcolor{black}{given by $C^{(1)} = \max_{(x_1, x_2) \in \cX \times \cX} \sum_{a=1}^D |x_{1a} x_{2a}|$}.
\end{proof}

\begin{Lemma} \label{Lemma:supp:entk_fro_norm_upper_bound}
Consider a NN $f$ under \cref{sett:supp:relu_mlp_setting}. For every small constant $\delta > 0$, $l \in [L-1]$, and arbitrary datapoints $x_1$ and $x_2$, it holds that
\begin{align}
\begin{split}
    \norm{\Theta^{(l)}(x_1, x_2)}_F \le \bigO(n\sqrt{n})
\end{split}
\end{align}
with probability at least $1 - \delta$ over random initialization for any $n > n_0$, where this lower bound $n_0$ is $\bigO\left(\polylog(L/\delta) \right)$. 
\end{Lemma}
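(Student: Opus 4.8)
The plan is to induct on the layer index $l$ using the recursion \eqref{eq:supp:ntk_chain_expansion}, which with the notation $V^{(l)}(x) = W^{(l)} \odot \big[\dot\phi(W^{(l)} f^{l-1}(x))\big]_{1\times n}$ reads $\Theta^{(l+1)}(x_1,x_2) = V^{(l+1)}(x_1)\,\Theta^{(l)}(x_1,x_2)\,V^{(l+1)}(x_2)\tp + f^{l}(x_1)\tp f^{l}(x_2)\,I_n$. Taking Frobenius norms and using $\norm{ABC}_F \le \norm{A}_{\op}\norm{B}_F\norm{C}_{\op}$, the identity $\norm{c\,I_n}_F = \abs{c}\sqrt n$, and Cauchy--Schwarz, I would reduce everything to the one-step bound
\[
\norm{\Theta^{(l+1)}(x_1,x_2)}_F \le \norm{V^{(l+1)}(x_1)}_{\op}\,\norm{V^{(l+1)}(x_2)}_{\op}\,\norm{\Theta^{(l)}(x_1,x_2)}_F + \norm{f^{l}(x_1)}\,\norm{f^{l}(x_2)}\,\sqrt n .
\]
So it suffices to control the operator norms $\norm{V^{(k)}(x)}_{\op}$ and the forward-pass norms $\norm{f^{k}(x)}$ on a single high-probability event and then unroll, with the base case $l=1$ supplied by \cref{Lemma:supp:elements_of_first_layer_2}.

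The key step is to define a ``good event'' $\mathcal E$ depending only on the weight matrices (so that the bound then holds simultaneously for all pairs $x_1,x_2$). It consists of (i) $\norm{W^{(k)}}_{\op} \le 3$ for each $k = 2,\dots,L$, which holds with probability at least $1 - 2e^{-cn}$ per $k$ by the standard concentration of the top singular value of a square matrix with i.i.d.\ mean-zero sub-Gaussian (here $\N(0,1/n)$) entries \citep[Thm.~4.4.5]{vershynin2018high}; and (ii) $\norm{W^{(1)}x}^2 \le 2\norm{x}^2 n/D$ for every $x\in\cX$, which holds with probability at least $1 - e^{-cn}$ per point by $\chi^2$ concentration, since $\norm{W^{(1)}x}^2 = (\norm{x}^2/D)\,\chi^2_n$. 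On $\mathcal E$, since $\phi$ is ReLU-type we may write $V^{(k)}(x) = \diag\!\big(\dot\phi(W^{(k)}f^{k-1}(x))\big)\,W^{(k)}$, whose diagonal factor has operator norm $\le 1$, so $\norm{V^{(k)}(x)}_{\op} \le \norm{W^{(k)}}_{\op} \le 3$ for $k\ge 2$; and $\norm{f^{1}(x)} \le \norm{W^{(1)}x} \le \sqrt{2/D}\,\norm{x}\,\sqrt n$, hence by submultiplicativity $\norm{f^{l}(x)} \le 3^{\,l-1}\norm{f^{1}(x)} = \bigO(\sqrt n)$ for all $l \le L$, with the hidden constant depending on $L$ and $\max_{x\in\cX}\norm{x}$ but not on $n$. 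In particular the additive term above is $\norm{f^{l}(x_1)}\norm{f^{l}(x_2)}\sqrt n = \bigO(n\sqrt n)$.

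Union-bounding the $\bigO(L)$ events comprising $\mathcal E$ gives $\Pr(\mathcal E)\ge 1-\delta$ whenever $2Le^{-cn}\le\delta$, i.e.\ for all $n > n_0$ with $n_0 = \bigO(\log(L/\delta)) = \bigO(\polylog(L/\delta))$. It remains to unroll on $\mathcal E$: the base case is $\norm{\Theta^{(1)}(x_1,x_2)}_F \le C^{(1)}\sqrt n = \bigO(n\sqrt n)$ by \cref{Lemma:supp:elements_of_first_layer_2}, and the one-step bound becomes $\norm{\Theta^{(l+1)}}_F \le 9\,\norm{\Theta^{(l)}}_F + \bigO(n\sqrt n)$, so for every $l \le L-1$,
\[
\norm{\Theta^{(l)}(x_1,x_2)}_F \le 9^{\,l-1}C^{(1)}\sqrt n + \Big(\textstyle\sum_{j=0}^{l-2} 9^{\,j}\Big)\bigO(n\sqrt n) = \bigO(n\sqrt n),
\]
with the implied constant depending on $L$ and the data but not on $n$, which is the claim.

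The only point requiring real care --- and the only place the exponent on $n$ could degrade --- is that the $n$-power must not grow through the recursion; this works precisely because $\norm{V^{(k)}(x)}_{\op}$ is bounded by an absolute constant (independent of $n$), so each layer only multiplies in a harmless constant factor (a power of $9$, absorbed since $L$ is fixed), while the entire $\bigO(n\sqrt n)$ magnitude originates from the diagonal ``data'' term $f^{l}(x_1)\tp f^{l}(x_2)\,I_n$. A more hands-on alternative --- presumably the motivation for the sub-exponential background above --- would instead bound $\abs{\Theta^{(l)}(x_1,x_2)_{ij}}$ entrywise via \eqref{eq:supp:entk_expansion_v} (off-diagonal entries $\bigO(\sqrt n)$, diagonal entries $\bigO(n)$, both from Bernstein-type inequalities for sums of sub-exponential variables, \cref{prop:supp:prob_sum_subexp,bernsetin_ineq,bernstein_in_delta}) and then sum the $n^2$ squared entries, reaching the same $\bigO(n\sqrt n)$ with more bookkeeping.
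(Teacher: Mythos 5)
Your proposal is correct and takes essentially the same route as the paper's proof: the layerwise recursion from \eqref{eq:supp:ntk_chain_expansion}, a width-independent bound on the conjugating factors $V^{(l)}$, the diagonal data term $f^{l}(x_1)\tp f^{l}(x_2)\,I_n$ contributing the dominant $\Theta(n)\cdot\sqrt{n}$, and the $\bigO(\sqrt n)$ base case from \cref{Lemma:supp:elements_of_first_layer_2}. The only difference is in how the two inputs are justified: you prove the constant bound on $V^{(l)}$ (via $V^{(l)}=\diag(\dot\phi)\,W^{(l)}$ and operator-norm concentration of $W^{(l)}$) and the $\bigO(\sqrt n)$ forward-pass norms (via $\chi^2$ concentration plus submultiplicativity) from scratch, whereas the paper cites \citet[Appendix G.3]{linntk2019lee} for the former and \cref{app:Lemma_relu_post_activation} (which invokes a result of Arpit et al.) for the latter; since only an upper bound on the post-activation inner product is needed here, your self-contained substitutes suffice.
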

\begin{proof}
Using the recursive definition of~\eNTK~in \eqref{eq:supp:entk_expansion_v} we can see that for all $l \ge 2$,
\begin{align}
\begin{split}
||\Theta^{(l)}(x_1, x_2)||_F &\le \; ||{V^{(l)}}^\top \Theta^{(l-1)}(x_1, x_2) V^{(l)} ||_F + || {f^{(l-1)}(x_1)}^\top {f^{(l-1)}(x_2)} \, I_n ||_F \\
& \le \bigO(||\Theta^{(l-1)}(x_1, x_2)||_F \log \frac 2\delta) + \Theta(n) \sqrt{n} \log \frac {2l}{\delta}
\end{split}
\end{align}
with probability at least $1 - 2\delta$. To derive the inequality, note that for all $2 \le l \le L - 1$, $\norm{V^{(l)}}_F = \bigO(1)$ with high probability \citep[Appendix G.3]{linntk2019lee} and that the inner product of post-activations grows linearly as shown in \cref{app:Lemma_relu_post_activation}. Since for the first layer we have that $\norm{\Theta^{(1)}(x_1, x_2)}_F = \bigO(\sqrt{n})$ (refer to \cref{Lemma:supp:elements_of_first_layer_2}), we can conclude the proof as long as the minimum width $(n_0)$ is chosen appropriately to satisfy the linear growth of post-activations with high probability as in \cref{app:Lemma_relu_post_activation}.
\end{proof}

The following is a version of \cref{theorem:pntk_fro_norm}, which we are now ready to prove.
\begin{theorem} \label{supp:lower_bound_proof}
Consider a NN $f$ under \cref{sett:supp:relu_mlp_setting}. For every arbitrary small $\delta > 0$ and the arbitrary datapoints $x_1$ and $x_2$, there exists $n_0$ such that

\begin{equation}
    \frac{\norm{\Theta^{(L)}(x_1, x_2) - \hat\Theta^{(L)}(x_1, x_2) \otimes I_O}_F}{\norm{\Theta^{(L)}(x_1, x_2)}_F} = \bigO \left(\frac{1}{\sqrt{n}} \right)
\end{equation}

with probability at least $1 - \delta$ for $n > n_0$.
\end{theorem}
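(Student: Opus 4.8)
The plan is to bound the numerator and the denominator of the ratio separately, each with probability at least $1-\delta$, and then divide. For the numerator, write the last layer out explicitly: setting $g := f^{L-1}$ we have $f^L(x) = W^{(L)} g(x)$, so $\Theta_f = \Theta^{(L)}$, $\Theta_g = \Theta^{(L-1)}$, $\hat\Theta_f = \hat\Theta^{(L)}$, and $W^{(L)}$ has i.i.d.\ $\N(0,1/n)$ entries, hence is zero-mean sub-gaussian with variance parameter $\nu = 1/n$ and independent of $g$. Applying \cref{Lemma:supp:hw-main} conditionally on the weights of the first $L-1$ layers (for the sum-of-logits choice of $v$ we additionally invoke the Gaussian distributional equivalence with the first-logit case noted after that lemma) gives $\norm{\Theta^{(L)}(x_1,x_2) - \hat\Theta^{(L)}(x_1,x_2) \otimes I_O}_F = \bigO_p\!\left(\tfrac1n \norm{\Theta^{(L-1)}(x_1,x_2)}_F\right)$. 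Then \cref{Lemma:supp:entk_fro_norm_upper_bound} applied with $l = L-1$ gives $\norm{\Theta^{(L-1)}(x_1,x_2)}_F = \bigO(n^{3/2})$ with probability at least $1-\delta$ once $n > n_0$ for some $n_0 = \bigO(\polylog(L/\delta))$, so the numerator is $\bigO(\sqrt n)$ with high probability.

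The remaining and harder task is a matching lower bound $\norm{\Theta^{(L)}(x_1,x_2)}_F = \Omega(n)$. By \eqref{eq:explained_entk}, $\Theta^{(L)}(x_1,x_2) = W^{(L)} \Theta^{(L-1)}(x_1,x_2) W^{(L)\top} + c\, I_O$ with $c := f^{L-1}(x_1)^\top f^{L-1}(x_2)$, so
\[
  \norm{\Theta^{(L)}(x_1,x_2)}_F^2 = \norm{W^{(L)} \Theta^{(L-1)} W^{(L)\top}}_F^2 + 2c \,\tr\!\left(W^{(L)} \Theta^{(L-1)} W^{(L)\top}\right) + O c^2 .
\]
As a sum of products of nonnegative ReLU activations, $c \ge 0$, and by \cref{app:Lemma_relu_post_activation} it concentrates around a positive multiple of $n$, so $O c^2 = \Theta(n^2)$ with high probability. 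Writing $\tr(W^{(L)} \Theta^{(L-1)} W^{(L)\top}) = \sum_{i=1}^O {W^{(L)}_i}^\top \Theta^{(L-1)} W^{(L)}_i$, applying \cref{lemma:supp:hanson_wright_bound} to each summand, and using $\E[{W^{(L)}_i}^\top \Theta^{(L-1)} W^{(L)}_i] = \tfrac1n \tr \Theta^{(L-1)}$ together with $\norm{\Theta^{(L-1)}}_F = \bigO(n^{3/2})$, we obtain $\tr(W^{(L)} \Theta^{(L-1)} W^{(L)\top}) = \tfrac On \tr\Theta^{(L-1)} + \bigO_p(\sqrt n)$. Hence, provided $\tr\Theta^{(L-1)}(x_1,x_2) \ge 0$ with high probability, the cross term is at least $-\bigO_p(n^{3/2})$ and the first term is nonnegative, so $\norm{\Theta^{(L)}(x_1,x_2)}_F^2 \ge O c^2 - \bigO_p(n^{3/2}) = \Omega(n^2)$, i.e.\ $\norm{\Theta^{(L)}(x_1,x_2)}_F = \Omega(n)$. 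Dividing the two bounds and taking a union bound over the $\bigO(1)$ favourable events gives the claimed $\bigO(n^{-1/2})$ rate; all thresholds on $n$ are $\bigO(\polylog(L/\delta))$, which fixes $n_0$.

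The main obstacle is therefore the positivity statement $\tr\Theta^{(L-1)}(x_1,x_2) \ge 0$ — in fact of order $\Theta(n^2)$ — with high probability, which is nontrivial because $\Theta^{(L-1)}(x_1,x_2)$ is neither symmetric nor positive semidefinite for $x_1 \ne x_2$. I would establish it by induction on depth using the layerwise recursion \eqref{eq:supp:ntk_chain_expansion},
\[
  \tr\Theta^{(l+1)}(x_1,x_2) = \tr\!\left( V^{(l+1)}(x_1)\, \Theta^{(l)}(x_1,x_2)\, V^{(l+1)}(x_2)^\top \right) + n\, f^l(x_1)^\top f^l(x_2) .
\]
The second term is nonnegative, concentrates around $\Theta(n^2)$ by \cref{app:Lemma_relu_post_activation}, and dominates. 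For the first term, conditioning on $W^{(l+1)}$ it equals $\sum_a \mathbf{1}[\text{neuron } a \text{ active at both } x_1, x_2]\, {W^{(l+1)}_a}^\top \Theta^{(l)} W^{(l+1)}_a$, whose conditional mean is $p^{(l)} \tr\Theta^{(l)} + \bigO(n^{3/2})$ with $p^{(l)} = \Pr[\text{both active}] \in [0, \tfrac12]$, and whose fluctuation is controlled by the sub-exponential machinery of \cref{prop:supp:prob_sum_subexp,bernstein_in_delta} (and \cref{lemma:supp:hanson_wright_bound}) at $\bigO_p(n^{3/2})$. Since the base case $\tr\Theta^{(1)}(x_1,x_2) = \bigO(n)$ is dominated one layer later by the positive $\Theta(n^2)$ contribution, the induction yields $\tr\Theta^{(l)}(x_1,x_2) = \Theta(n^2) > 0$ with high probability for all $2 \le l \le L-1$, which is exactly what the denominator bound needs. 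Degenerate pairs (e.g.\ antipodal inputs, where an early NNGP kernel vanishes) are excluded under \cref{sett:supp:relu_mlp_setting} or can be handled by beginning the induction one layer later.
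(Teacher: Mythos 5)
Your proposal is correct and its skeleton matches the paper's: the numerator is handled exactly as in the paper, by applying \cref{Lemma:supp:hw-main} to the last layer with $\nu = 1/n$ and invoking \cref{Lemma:supp:entk_fro_norm_upper_bound} to get $\norm{\Theta^{(L-1)}}_F = \bigO(n^{3/2})$, hence a numerator of $\bigO_p(\sqrt n)$, and the denominator is shown to be $\Omega(n)$ using concentration of the last-layer quadratic forms plus the $\Theta(n)$ post-activation inner product from \cref{app:Lemma_relu_post_activation}. Where you differ is in how the lower bound $\norm{\Theta^{(L)}}_F = \Omega(n)$ is organized: the paper argues entrywise, showing each diagonal entry concentrates around $\tfrac1n\tr\Theta^{(L-1)}$ (claimed to be $\Theta(n)$) while off-diagonal entries are $\bigO(\sqrt n \log(1/\delta))$, whereas you expand $\norm{W\Theta^{(L-1)}W^\top + cI_O}_F^2$ and lower-bound it by the nonnegative $Oc^2 = \Theta(n^2)$ term minus a controlled cross term. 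Both routes hinge on the same non-obvious fact, namely that $\tr\Theta^{(L-1)}(x_1,x_2)$ is not a large negative quantity (it must be $\gtrsim -(1-\epsilon)\,n\,c$); the paper asserts $\tr\Theta^{(L-1)} = \Theta(n^2)$ citing only \cref{app:Lemma_relu_post_activation}, which directly controls only the diagonal data term $n\,f^{L-2}(x_1)^\top f^{L-2}(x_2)$ of the trace recursion, while you explicitly flag the issue and propose a depth induction on $\tr\Theta^{(l)}$ to control the $\tr(V\Theta^{(l-1)}V^\top)$ contribution as well (noting correctly that the base case $\tr\Theta^{(1)} = \bigO(n)$ may be signed but is dominated one layer up). This extra step is the right instinct — your induction sketch still has a loose end in the dependence between the activation indicators and the quadratic forms $W_a^\top\Theta^{(l)}W_a$, which you wave at with an $\bigO(n^{3/2})$ correction, but it is of lower order than the $\Theta(n^2)$ positive term and fixable with the sub-exponential tools already in \cref{lemma:supp:hanson_wright_bound} and \cref{prop:supp:prob_sum_subexp} — so your treatment is, if anything, more careful than the paper's on this point, at the cost of a somewhat longer argument.
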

\begin{proof}
We will start by showing that $\norm{\Theta^{(L)}(x_1, x_2)}_F$ where $L$ denotes the last layer is $\Theta(n)$ with high probability over random initialization. 

Using \cref{eq:supp:entk_expansion_v} we can show that for the off-diagonal elements we have
\begin{align}
\begin{split}
\abs{\Theta^{(L)}_{ij}(x_1, x_2)} \le \frac{\norm{\Theta^{(L-1)}(x_1, x_2)}_F}{n} \log \frac{2}{\delta}
\end{split}
\end{align}
with probability at least $1 - \delta$. Applying a union bound over the $O^2 - O$ off-diagonal elements we have:
\begin{align}
\begin{split}
\forall i \ne j \in [O]; \; \abs{\Theta^{(L)}_{ij}(x_1, x_2)} \le \frac{\norm{\Theta^{(L-1)}(x_1, x_2)}_F}{n} \log \frac{2O^2}{\delta}
\end{split}
\end{align}
Likewise, for the diagonal elements we have 
\begin{align}
\begin{split}
\abs{\Theta^{(L)}_{ii}(x_1, x_2) - \frac{1}{n} \tr(\Theta^{(L-1)}(x_1, x_2))} \le \frac{\norm{\Theta^{(L-1)}(x_1, x_2)}_F}{n} \log \frac{2}{\delta}
\end{split}
\end{align}
with probability at least $1 - \delta$ where $\E[{V^{(L)}_i}^\top \Theta^{(L-1)}(x_1, x_2) V^{(L)}_i] = \frac{1}{n} \tr(\Theta^{(L-1)}(x_1, x_2))$. Using \cref{app:Lemma_relu_post_activation}, we can see that $\tr(\Theta^{(L-1)}(x_1, x_2)) = \Theta(n^2) \log \frac{2n}{\delta}$ with probability at least $1 - \delta$ as long as minimum width $n_0$ satisfies the conditions for \cref{app:Lemma_relu_post_activation}. Putting it all together and applying a union bound over the diagonal elements of $\Theta^{(L)}(x_1, x_2)$ we can see that
\begin{align}
\begin{split}
\forall i \in [O]; \; \abs{\Theta^{(L)}_{ii}(x_1, x_2) - \Theta(n \log n)} \le \sqrt{n} \log \frac{2O}{\delta}
\end{split}
\end{align}
with probability at least $1 - \delta$. Combining the bounds on off-diagonal and diagonal elements of the kernel matrix, we can see that $\norm{\Theta^{(L)}}_F = \Theta(n)$ with high probability over random initialization.
The result follows from \cref{Lemma:supp:hw-main}.
\end{proof}

\begin{Lemma} \label{app:Lemma_relu_post_activation}
Consider a NN under \cref{sett:supp:relu_mlp_setting} with $L \ge 2$ and ReLU activation function. The dot product of two post-activations $\abs{{f^{(l)}(x_1)}^\top f^{(l)}(x_2)}$ grows linearly with the width of the network with high probability over random initialization.
\end{Lemma}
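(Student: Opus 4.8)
The plan is to prove the stronger two‑sided claim $\abs{f^{(l)}(x_1)\tp f^{(l)}(x_2)} = \Theta(n)$ for every layer $l \in \{1,\dots,L-1\}$ and every pair $x_1, x_2 \in \cX$, by induction on $l$, carrying along the companion bound $\norm{f^{(l)}(x)}^2 = \Theta(n)$ for all $x \in \cX$ (this is precisely what controls the variance of the next layer's preactivations). All hidden constants may depend on $L$, on $\cX$, and on the activation, but not on $n$ or on the target failure probability $\delta$; the minimum width should come out as $n_0 = \bigO(\polylog(L/\delta))$ from the union bounds over layers.

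\textbf{Base case.} I would write $f^{(1)}(x)\tp f^{(1)}(x') = \sum_{i=1}^n \phi(\langle W^{(1)}_i, x\rangle)\,\phi(\langle W^{(1)}_i, x'\rangle)$. The vectors $(\langle W^{(1)}_i, x\rangle, \langle W^{(1)}_i, x'\rangle)$ are i.i.d.\ across $i$ and centered bivariate Gaussian, with covariance equal to $\tfrac1D$ times the Gram matrix of $x$ and $x'$, whose entries are $\Theta(1)$ since $\cX$ is compact. Because $\phi(z)$ is sub‑Gaussian when $z$ is, each summand is a product of two sub‑Gaussians and hence sub‑exponential with $\Theta(1)$ parameters (invoking \cref{Lemma_scaled_sub_exponential} for the scaling and the standard fact that a product of sub‑Gaussians is sub‑exponential). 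Then \cref{prop:supp:prob_sum_subexp} shows the sum is $SE(\Theta(\sqrt n),\Theta(1))$, and \cref{bernstein_in_delta} gives, with probability at least $1-\delta$, $\abs{f^{(1)}(x)\tp f^{(1)}(x') - n\,\mu^{(1)}(x,x')} \le \bigO\!\big(\sqrt n \log\tfrac2\delta\big)$ with $\mu^{(1)}(x,x') = \E[\phi(z)\phi(z')]$. It then remains to see $\mu^{(1)} = \Theta(1)$: the upper bound is Cauchy--Schwarz, $\mu^{(1)} \le \sqrt{\E\phi(z)^2\,\E\phi(z')^2} = \Theta(1)$, and the lower bound uses the closed form of the ReLU dual (arc‑cosine) kernel, $\E[\phi(z)\phi(z')] = \tfrac1{2\pi}\sqrt{\mathrm{Var}(z)\mathrm{Var}(z')}\,\big(\sin\vartheta + (\pi-\vartheta)\cos\vartheta\big)$ with $\vartheta$ the angle between $x$ and $x'$, which is strictly positive and, by compactness of $\cX$ and the absence of overlapping points, bounded below by an $n$‑free constant (the relevant angles stay bounded away from $\pi$). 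Taking $x' = x$ gives $\norm{f^{(1)}(x)}^2 = \Theta(n)$ with high probability.

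\textbf{Inductive step.} Suppose that on an event $E_l$, measurable with respect to $W^{(1)},\dots,W^{(l)}$ and of probability at least $1-\delta_l$, one has $c_1 n \le f^{(l)}(x_a)\tp f^{(l)}(x_b) \le c_2 n$ for all $a,b \in \{1,2\}$ with a fixed $c_1 > 0$. Conditioning on the earlier weights so that $E_l$ holds, the vectors $f^{(l)}(x_1), f^{(l)}(x_2)$ are frozen and $W^{(l+1)}$ (entries $\N(0,1/n)$) is fresh and independent, so the preactivation pairs $z_i = (\langle W^{(l+1)}_i, f^{(l)}(x_1)\rangle, \langle W^{(l+1)}_i, f^{(l)}(x_2)\rangle)$ are i.i.d.\ centered bivariate Gaussian with covariance $\tfrac1n$ times the Gram matrix of $f^{(l)}(x_1), f^{(l)}(x_2)$ --- which on $E_l$ has $\Theta(1)$ diagonal and, crucially, correlation in $[c_1/c_2, 1]$, hence bounded away from $-1$. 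I would then repeat the base‑case computation verbatim with $z_i$ in place of the first‑layer preactivations: $f^{(l+1)}(x_1)\tp f^{(l+1)}(x_2)$ concentrates to within $\bigO(\sqrt n \log\tfrac2\delta)$ of $n\,\mu^{(l+1)}$, where $\mu^{(l+1)}$ is again sandwiched between two positive $n$‑free constants --- the lower one because the arc‑cosine mean is increasing in the correlation and the correlation is at least $c_1/c_2 > 0$ --- and likewise $\norm{f^{(l+1)}(x_a)}^2 = \Theta(n)$. A union bound over the finitely many pairs (including the diagonal $x_a = x_b$) then yields $E_{l+1}$; and since the ReLU dual map $\rho \mapsto \kappa_1(\rho)$ satisfies $\kappa_1(\rho) > \rho$ on $[-1,1)$, the correlation lower bound does not deteriorate with depth, so $c_1, c_2$ can be chosen uniformly over $l$ and the only residual $L$‑dependence is in the count of union‑bounded events. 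Iterating to $l = L-1$ and taking $n_0 = \bigO(\polylog(L/\delta))$ large enough that every $\bigO(\sqrt n \log)$ fluctuation is $o(n)$ completes the argument.

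\textbf{Where the difficulty lies.} The upper bound $\abs{f^{(l)}(x_1)\tp f^{(l)}(x_2)} = \bigO(n)$ is immediate from Cauchy--Schwarz and $\norm{f^{(l)}(x)}^2 = \Theta(n)$; the genuine work is the lower bound $\Omega(n)$, which forces the mean $\mu^{(l)} = \E[\phi(z_i^{(1)})\phi(z_i^{(2)})]$ to stay bounded away from $0$ across the whole depth. That is exactly the statement that the layerwise preactivation correlation never approaches $-1$, which I would establish from compactness of $\cX$ (no near‑antipodal pair of representations past the first layer), the closed form of the arc‑cosine kernel, and its monotonicity $\kappa_1(\rho) > \rho$; a secondary, bookkeeping‑heavy point is setting up the freezing‑on‑the‑good‑event conditioning cleanly so that the sub‑exponential concentration bounds of the preceding subsection apply conditionally and can then be integrated out. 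For merely zero‑mean sub‑Gaussian weights as permitted by \cref{sett:supp:relu_mlp_setting}, the same outline should survive with the exact arc‑cosine formula replaced by two‑sided estimates of $\E[\phi(z)\phi(z')]$ and a symmetry argument for $\E[\phi(z)^2] = \tfrac12\mathrm{Var}(z)$, at the cost of messier constants.
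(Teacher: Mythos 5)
Your route is genuinely different from the paper's. The paper never tracks the cross inner product directly: it first sandwiches $f^{(l)}(x_1)\tp f^{(l)}(x_2)$ between $\min_{x\in\cX}\norm{f^{(l)}(x)}^2$ and $\max_{x\in\cX}\norm{f^{(l)}(x)}^2$ using the entrywise nonnegativity of ReLU post-activations, thereby reducing everything to the squared norms; it then handles the first layer with a Hoeffding bound on the rectified-Gaussian coordinates, and outsources the propagation through depth to Theorem~1 of Arpit and Bengio (approximate norm preservation layer to layer, $(1\pm\varepsilon)^{l-1}$ factors). You instead run a self-contained induction on the cross terms themselves, conditioning on the earlier weights so the next layer's preactivations are bivariate Gaussian with covariance $\tfrac1n$ times the previous Gram matrix, concentrating via the sub-exponential machinery of the preceding subsection, and controlling the means with the arc-cosine kernel and its monotonicity $\kappa_1(\rho)>\rho$. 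Your version buys two-sided control of the off-diagonal quantity without the paper's sandwich step (whose lower inequality is itself not justified by positivity alone: nonnegative vectors can be nearly orthogonal) and without the external citation; the paper's version is shorter and avoids any correlation analysis, at the price of leaning on that sandwich and on an imported theorem.

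One step of yours would fail as justified: in the base case you claim the first-layer angle $\vartheta$ stays bounded away from $\pi$ ``by compactness of $\cX$ and the absence of overlapping datapoints.'' Neither assumption rules out (near-)antipodal inputs $x_2\approx -x_1$, for which $\vartheta=\pi$, the arc-cosine mean vanishes, and in fact $f^{(1)}(x_1)\tp f^{(1)}(x_2)$ is identically zero (the two rectified preactivations have disjoint supports), so the $\Omega(n)$ lower bound at $l=1$ is simply false in that case. Your own machinery repairs this from layer~2 onward: once the representations are entrywise nonnegative the correlation is $\ge 0$, and $\kappa_1$ maps $[0,1]$ into $[1/\pi,1]$, so the inductive lower bound holds for all $l\ge 2$ without any angle assumption — you should start the cross-term induction there (consistent with the lemma's $L\ge 2$ hypothesis) or add an explicit non-antipodality assumption, rather than attribute the bound to compactness. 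The diagonal case $x_1=x_2$, which is what the downstream trace estimate in the main theorem actually needs, is unaffected.
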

\begin{proof}
We begin by showing that the dot product of the post-activations of the first layer of the NN under setting \cref{sett:supp:relu_mlp_setting} grow linearly using a simple Hoeffding bound. Next, we apply Thorem 1 of \citet{arpit2019overparameterization} to show that the magnitude of this dot product is preserved in the next layers. 
First, note that as we assume the data lies in a compact set and as the post-activations are all positive, one can easily see that for each $x_1, x_2 \in \cX$ and for all $l \in [L]$ we have that:
\begin{equation} \label{eq:post_activation_bounded_by_norm_of_po}
    \min_{x \in \cX}{\norm{f^{(l)}(x)}}^2 \le {f^{(l)}(x_1)}^\top f^{(l)}(x_2) \le \max_{x \in \cX}{\norm{f^{(l)}(x)}}^2.
\end{equation}
To simplify the proofs in this Lemma, we use this fact and instead work with the norm of the post-activations and we note that the final result on the norms can be accordingly applied to dot products of post-activations of different inputs.
For the first layer, we have that $f^{(1)}(x) = \phi(W^{(1)}x)$ where $W^{(l+1)}_{ij} \sim \mathcal{N}(0, \frac{1}{n_l})$ and $x \in \R^{n_0}$. Hence, each ${f^{(1)}(x)}_i$ is i.i.d and distributed as $\mathcal{N}^R(0, \frac{\norm{x}^2}{n_0})$ where $\mathcal{N}^R$ is the Rectified Normal Distribution. Using the properties of the Rectified Normal distribution we get that:
\begin{equation}
    \E[\norm{f^{(1)}(x)}^2] = \frac{n \norm{x}^2}{n_0}
\end{equation}

Next, as the Rectified Normal is a sub-gaussian distribution, we can apply the Hoeffding bound to see that 
\begin{equation} \label{eq:first_layer_postac_norm_bound}
    \Pr \left[ \abs{\norm{f^{(1)}(x)}^2 - n \mu_1} \le \varepsilon_1 \right] \ge 1 - \delta_1
\end{equation}
where $\delta_1 = 2 \exp \left( - \frac{\varepsilon_1^2}{2\sigma^2} \right)$, $\mu_1 = \frac{\norm{x}^2}{n_0}$ and $\sigma$ is the standard deviation of $\norm{f^{(1)}(x)}^2$ over random initialization of the weights of the first layer. 
Next, we can adapt Theorem 1 from \cite{arpit2019overparameterization} to see that for post activations of layer $l \in [2-L]$
\begin{equation}
    \Pr \left[(1 - \varepsilon)^{l-1} \norm{f^{(1)}(x)}^2 \le \norm{f^{(l)}(x)}^2  \le (1 + \varepsilon)^{l-1} \norm{f^{(1)}(x)}^2 \right] \ge 1 - \delta_2
\end{equation}
where $\delta_2 = 2N(l-1) \exp \left (-n \left( \frac{\varepsilon}{4} + \log \frac{2}{1 + \sqrt{1 + \varepsilon}} \right) \right)$, $N$ is the size of our dataset and $\varepsilon$ is any positive small constant. Combining this with the result from the first layer's post-activations we can see that
\begin{equation}
    (1 - \varepsilon_2)^{l-1} (n \mu_1 - \varepsilon_1) \le \norm{f^{(l)}(x)}^2  \le (1 + \varepsilon_2)^{l-1} (n \mu_1 + \varepsilon_1)
\end{equation}
with probability at least $1 - \delta_1 - \delta_2$. Hence, for any $\delta > 0$, $n = \Omega \left( \log \frac{1}{\delta} \right)$ and $(x_1, x_2) \in \cX \times \cX$ one can come up with constants $G^{(l)}_1 = \Omega \left(\log(\frac{n}{\delta}) \right), G^{(l)}_2 = \bigO \left(\log(\frac{n}{\delta}) \right)$ for post activations of layer $l$ such that
\begin{equation}
    G^{(l)}_1 n \le {f^{(l)}(x_1)}^\top f^{(l)}(x_2) < G^{(l)}_2 n
\end{equation}
with probability at least $1 - \delta$ (note that exact values of $G_1^{(l)}$ and $G_1^{(l)}$ depend on $l$ and $N$ too).
\end{proof}

\section{pNTK's Maximum Eigenvalue Converges to eNTK's Maximum Eigenvalue as Width Grows}
\label{app:eigs}

\begin{proof}[Proof of \cref{theorem:pntk_max_eigval}]
Note that, as both \pNTK~and \eNTK~are symmetric PSD matrices, their maximum eigenvalues are equal to their spectral norm. Furthermore, the spectral norm of a matrix is upper-bounded by its Frobenius norm. Now, note that according to the triangle inequality, we have

\begin{align}
\begin{split}
\norm{\Theta(x_1, x_2)} &= \norm{\hat{\Theta}(x_1, x_2) \otimes I_O + \left( \Theta(x_1, x_2) - \hat{\Theta}(x_1, x_2) \otimes I_O \right)} \\ &\le \norm{ \hat{\Theta}(x_1, x_2) \otimes I_O} + \norm{ \Theta(x_1, x_2) - \hat{\Theta}(x_1, x_2) \otimes I_O}
\end{split}
\end{align}

Thus
\begin{equation}
    \norm{\Theta(x_1, x_2)} - \norm{\hat{\Theta}(x_1, x_2) \otimes I_O} \le  \norm{ \Theta(x_1, x_2) - \hat{\Theta}(x_1, x_2) \otimes I_O}.
\end{equation}
which according to \eqref{supp:fro_norm_proof} together with the fact that for any matrix $A$, $\lambda_{\max} (A \otimes I) = \lambda_{\max} (A)$ implies that
with probability at least $1 - \delta$,
\begin{align}
\begin{split}
    &\left \lvert \lambda_{\max} \left( \Theta(x_1, x_2) \right) - \lambda_{\max} \left( \hat{\Theta}(x_1, x_2) \right) \right \rvert \le \\
    &\qquad \qquad \qquad \qquad 4 O \left( C^{(L-1)}_1 + C^{(L-1)}_2 \right) \sqrt{n} \max \left( \sqrt{ \log \frac{4 O^2}{\delta}}, \sqrt{2} \log \frac{4 O^2}{\delta} \right).
\end{split}
\end{align}
Moreover, as mentioned in the proof of \cref{supp:lower_bound_proof}, combining the previous inequality with the fact that $\lambda_{\max} \left({\Theta(x_1, x_2)} \right) \ge \Omega(n)$ with high probability shows that there exists $\delta'$ and $n_0$ such that

\begin{equation} \label{eq:max_eigval_proof}
    \left \lvert \frac{ \lambda_{\max} \left( \Theta(x_1, x_2) \right) - \lambda_{\max} \left( \hat{\Theta}(x_1, x_2) \right)}{\lambda_{\max} \left({\Theta(x_1, x_2)} \right)}\right \rvert
    \le \bigO(1/\sqrt{n})
\end{equation}

with probability $1 - \delta'$ over random initialization for $n > n_0$ as desired.

\end{proof}

\section{Kernel Regression Using pNTK vs Kernel Regression Using eNTK} 

\begin{proof}[Proof of \cref{theorem:kr_norm_diff}]
We start by proving a simpler version of a theorem, and then show a correspondence that expands the result of the simpler proof to the original Theorem. Assuming $\abs{\cX} = \abs{\cY} = N$ (training data), we define
\begin{equation}
    h(x) = \Theta(x_1, \cX) {\Theta(\cX, \cX)}^{-1} \cY \text{  and  } \hat{h}(x) = \left(\hat{\Theta}(x_1, \cX) \otimes I_O\right) \left({\hat{\Theta}(\cX, \cX) \otimes I_O}\right)^{-1} \cY.
\end{equation}

Note that as the result of kernel regression (without any regularization) does not change with scaling the kernel with a fixed scalar, we can use a weighted version of the kernels mentioned in the previous equation without loss of generality. Accordingly, we define 
\begin{equation}
    \alpha = {\left( \frac{1}{n}\Theta(\cX, \cX) \right)}^{-1} \cY \text{ and } \hat{\alpha} = \left({\frac{1}{n} \hat{\Theta}(\cX, \cX) \otimes I_O}\right)^{-1} \cY.
\end{equation}

Using the fact that $\hat{M}^{-1} - M^{-1} = -\hat{M}^{-1} (\hat{M} - M) M^{-1}$ and $(A \otimes I)^{-1} = A^{-1} \otimes I$ we can show that

\begin{align}
\begin{split}
\hat{\alpha} - \alpha = - {\hat{\Theta}(\cX, \cX)}^{-1} \otimes I_O \left ( \frac{1}{n} \hat{\Theta}(\cX, \cX) \otimes I_O - \frac{1}{n} \Theta(\cX, \cX) \right )^{-1} \Theta(x_1, x_2) \cY
\end{split}
\end{align}

Assume $\lambda = \min \left( \lambda_{\min}(\Theta(\cX, \cX)), \lambda_{\min}(\hat{\Theta}(\cX, \cX))  \right)$. Then

\begin{align}
\begin{split}
    \norm{\hat{\alpha} - \alpha} \le \frac{1}{\lambda^2} \norm{\frac{1}{n} \hat{\Theta}(\cX, \cX) \otimes I_O - \frac{1}{n} \Theta(\cX, \cX)} \norm{\cY}
\end{split}
\end{align}

Plugging into the formula for kernel regression, we get that
\begin{align}
\begin{split}
    \hat{h}(x) - h(x) &= \left( \frac{1}{n} \hat{\Theta}(x, \cX) \otimes I_O \right) \hat{\alpha} - \frac{1}{n} \Theta(x, \cX) \alpha \\
    &= \left ( \frac{1}{n} \hat{\Theta}(x, \cX) \otimes I_O - \frac{1}{n} \Theta(x, \cX) \right) \hat{\alpha} + \frac{1}{n} \Theta(x, \cX) (\hat{\alpha} - \alpha) 
\end{split}
\end{align}

Thus

\begin{align}
\begin{split}
      \norm{\hat{h}(x) - h(x)} &\le \norm{\frac{1}{n} \hat{\Theta}(x, \cX) \otimes I_O - \frac{1}{n} \Theta_f(x, \cX)} \norm{\hat{\alpha}} + \norm{\frac{1}{n} \Theta(x, \cX)} \norm{\hat{\alpha} - \alpha}  \\
    &\le \frac{1}{\lambda} \norm{\frac{1}{n}\hat{\Theta}(x, \cX) \otimes I_O - \frac{1}{n} \Theta(x, \cX)} \norm{\cY} \\
    & \quad + \frac{1}{\lambda^2} \norm{\frac{1}{n} \Theta(x, \cX)} \norm{\frac{1}{n} \hat{\Theta}(\cX, \cX) \otimes I_O - \frac{1}{n} \Theta(\cX, \cX)} \norm{\cY}.
\end{split}
\end{align}

Now, note that as for a block matrix $A$ of $A_{ij}$ blocks we have that $\norm{A} \le \sum_{i,j} \norm{A_{ij}}$ it follows that for any matrix valued kernel $K$
\begin{equation}
\norm{K(\cX, \cX)} \le \sum_{x_1, x_2 \in \cX} \norm{K(x_1, x_2)}.
\end{equation}
Using this fact, we can rewrite the bound as
\begin{align}
\begin{split}
      \norm{\hat{h}(x) - h(x)} &\le \frac{N}{\lambda} \norm{\frac{1}{n}\hat{\Theta}(x, x^*_1) \otimes I_O - \frac{1}{n} \Theta(x, x^*_1)} \norm{\cY} \\
    & \qquad + \frac{N^2}{\lambda^2} \norm{\frac{1}{n}\Theta(x, \cX)} \norm{\frac{1}{n}\hat{\Theta}(x_2^*, x_3^*) \otimes I_O - \frac{1}{n}\Theta(x_2^*, x_3^*)} \norm{\cY}
\end{split}
\end{align}
for some particular $x_1^*, x_2^*, x_3^* \in \cX$. Using \eqref{supp:fro_norm_proof},
we can see with probability at least  $1 - \delta$ that
\begin{equation} \label{eq:kr_fro_diff}
    \left \lVert \hat{h}(x) - h(x) \right \rVert
    \le
    \frac{4 N O \alpha}{\lambda \sqrt{n}} \max \left( \sqrt{ \log \frac{4 O^2}{\delta}}, \sqrt{2} \log \frac{4 O^2}{\delta} \right)
    \norm{\cY} \left(1 + \frac{N}{\lambda} \norm{\frac{1}{n}\Theta(x, \cX)} \right)
.\end{equation}

To show the correspondence between $\hat{h}(x)$ and  $\hat{f}^\mathit{lin}(x)$,
as in \eqref{eq:pseudo_lin}, note that
\begin{align}
\begin{split}
    \hat{h}(x) &= \left( \hat{\Theta}(x, \cX) \otimes I_O \right) \left( {\hat{\Theta}(\cX, \cX)}^{-1} \otimes I_O \right) \cY \\
    &= \left(\hat{\Theta}(x, \cX) {\hat{\Theta}(\cX, \cX)}^{-1} \otimes I_O \right) \cY \\
    &= \operatorname{vec} \left(I_O \cY_v \hat{\Theta}(x, \cX) {\hat{\Theta}(\cX, \cX)}^{-1} \right)
\end{split}
\end{align}

where $\cY_v = \operatorname{vec}^{-1} (\cY)$ is the result of inverse of the vectorization operation, converting the $NO \times 1$ vector to a $O \times N$ matrix. Thus, $\hat{h}(x) = \hat{\Theta}(x, \cX) {\hat{\Theta}(\cX, \cX)}^{-1} \cY'$, where $\cY'$ is the $N \times O$ matrix reshaped from the $NO \times 1$ vector $\cY$.
\end{proof}

\label{app:kernel-regression}

\begin{figure*}[t!]
    \centering
    \begin{subfigure}[b]{0.24\textwidth}
        \includegraphics[width=\textwidth]{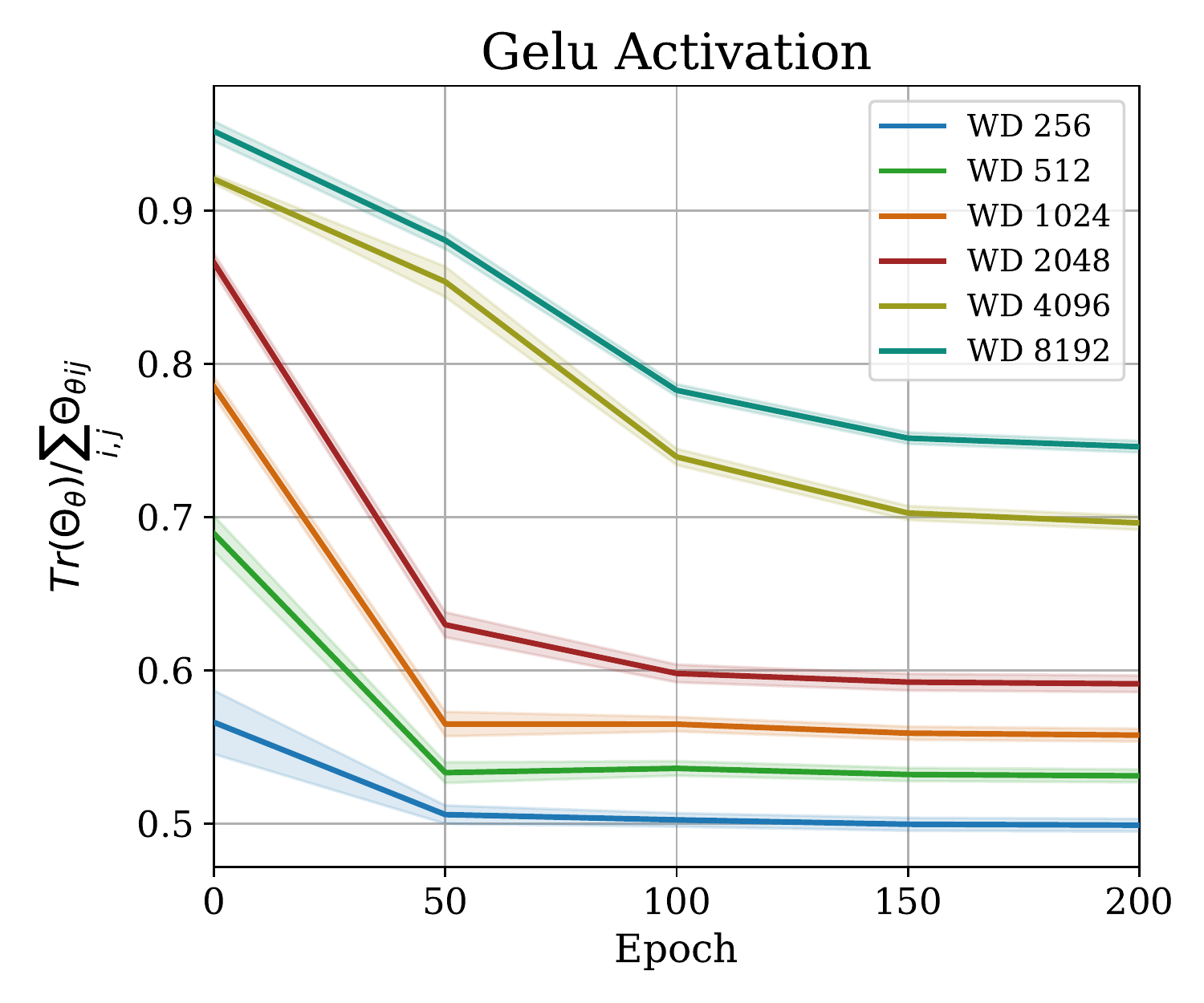}
    \end{subfigure}
    \hfill
    \begin{subfigure}[b]{0.24\textwidth}
        \includegraphics[width=\textwidth]{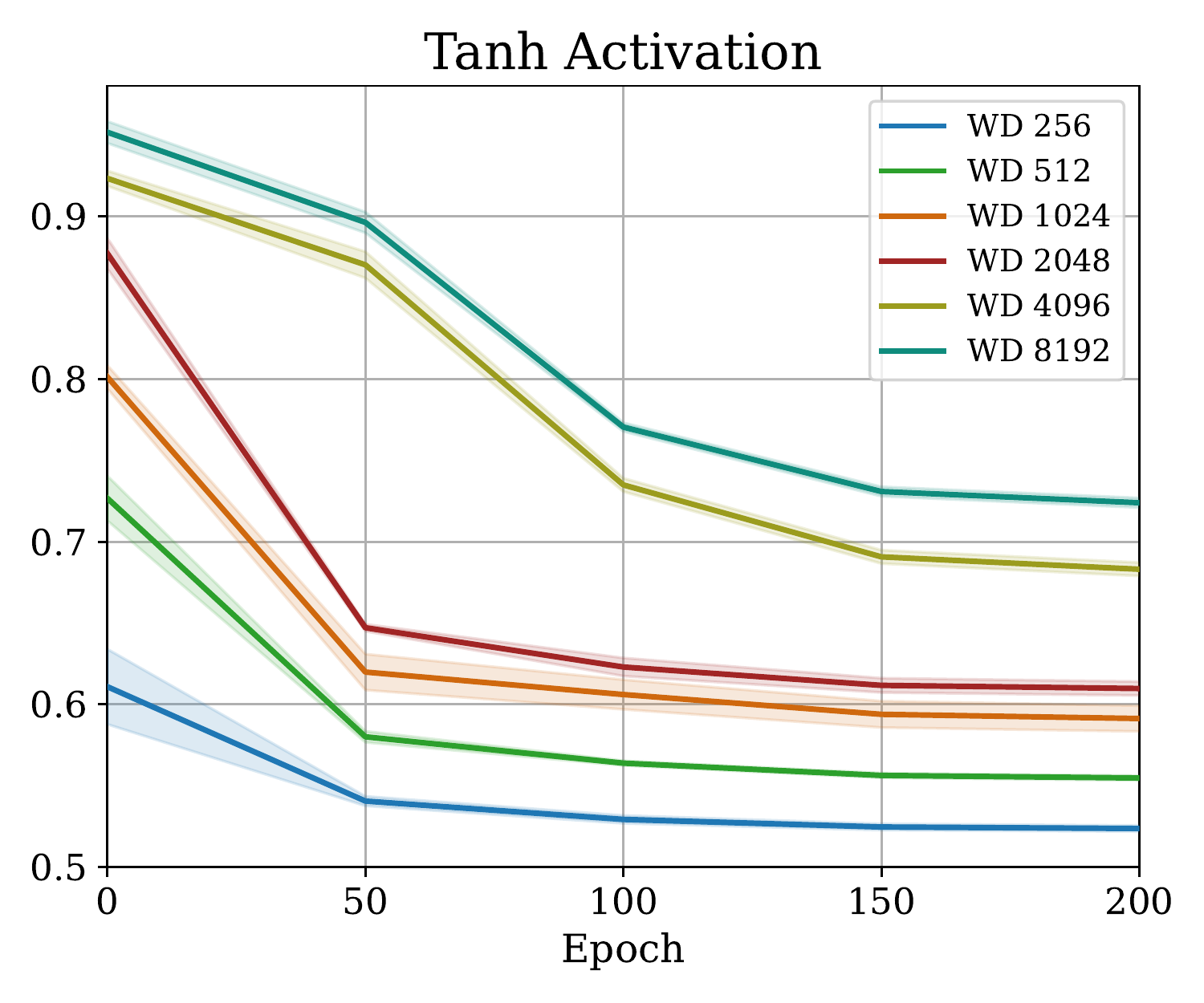}
    \end{subfigure}
    \hfill
    \begin{subfigure}[b]{0.24\textwidth}
        \includegraphics[width=\textwidth]{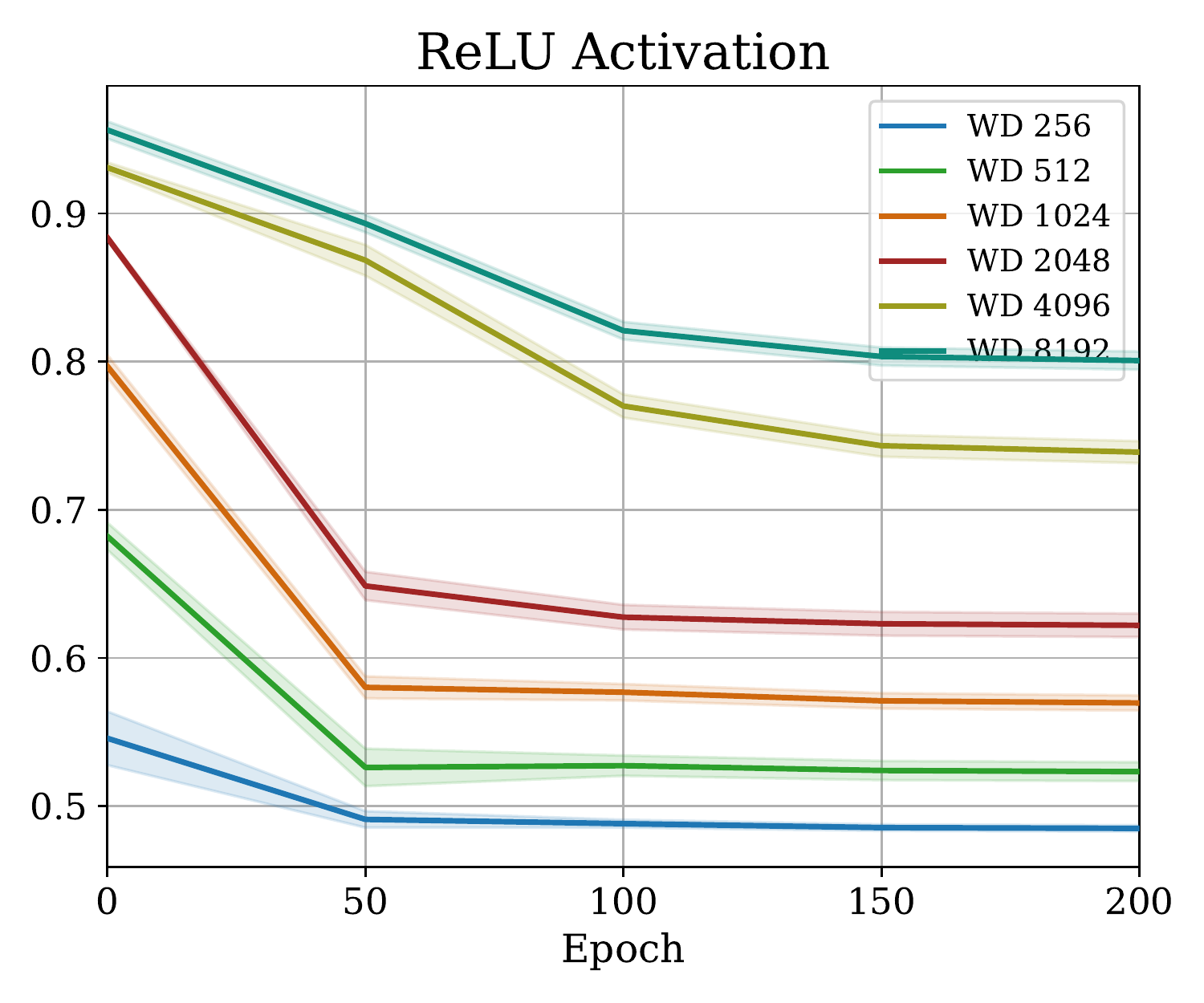}
    \end{subfigure}
    \hfill
    \begin{subfigure}[b]{0.24\textwidth}
        \includegraphics[width=\textwidth]{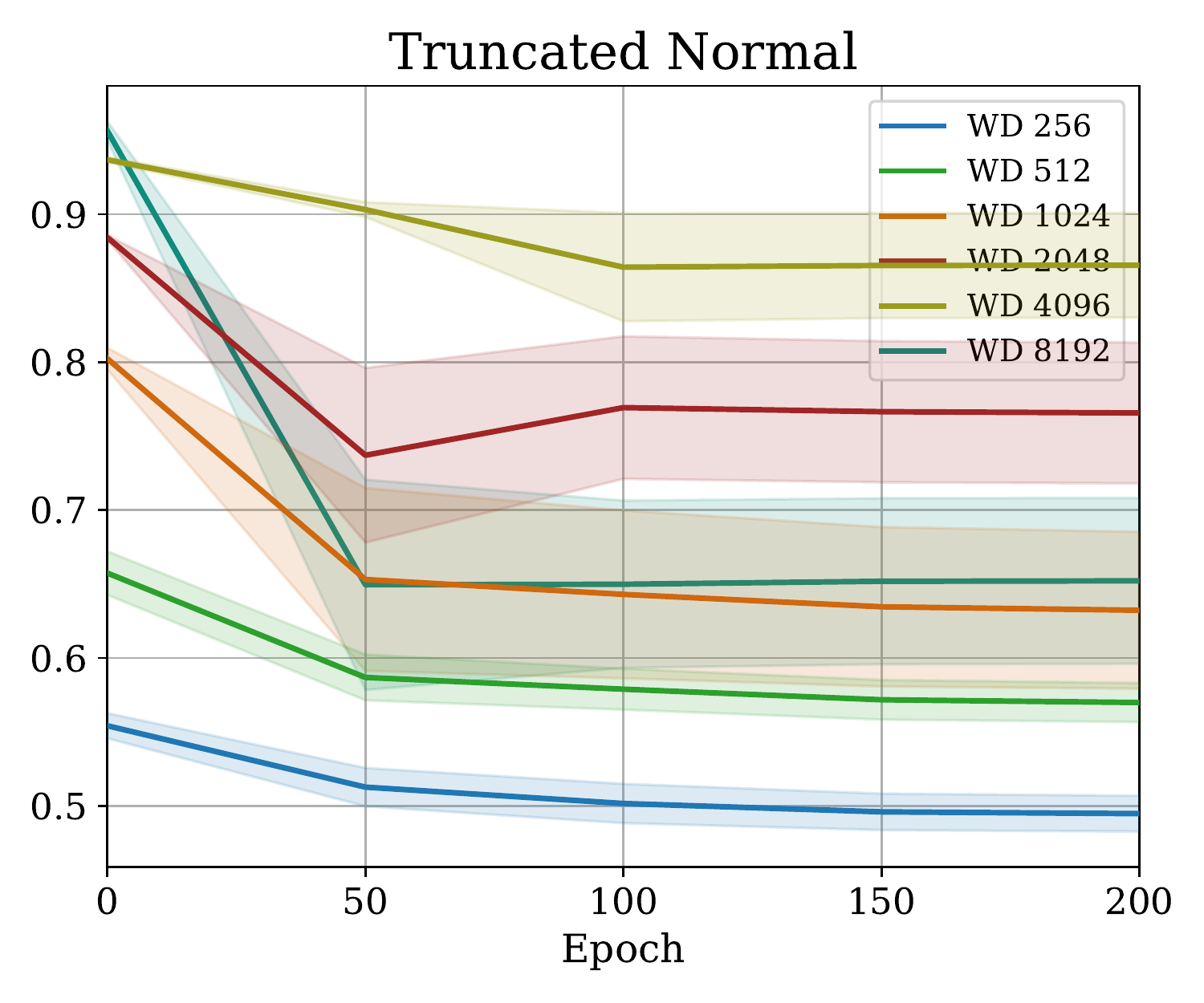}
    \end{subfigure}
    \caption{%
        \textcolor{black}{Comparing the \textbf{magnitude of sum of on-diagonal and off-diagonal elements of $\meNTK$} at initialization and throughout training,
        based on $1000$ points from CIFAR-10.
        The reported numbers are the average of $1000 \times 1000$ kernels each having a shape of $10 \times 10$.
        The same subset has then been used to train the NN using SGD.}
    }
    \label{fig:ablation_diagonality}
\end{figure*}

\begin{figure*}[t!]
    \centering
    \begin{subfigure}[b]{0.24\textwidth}
        \includegraphics[width=\textwidth]{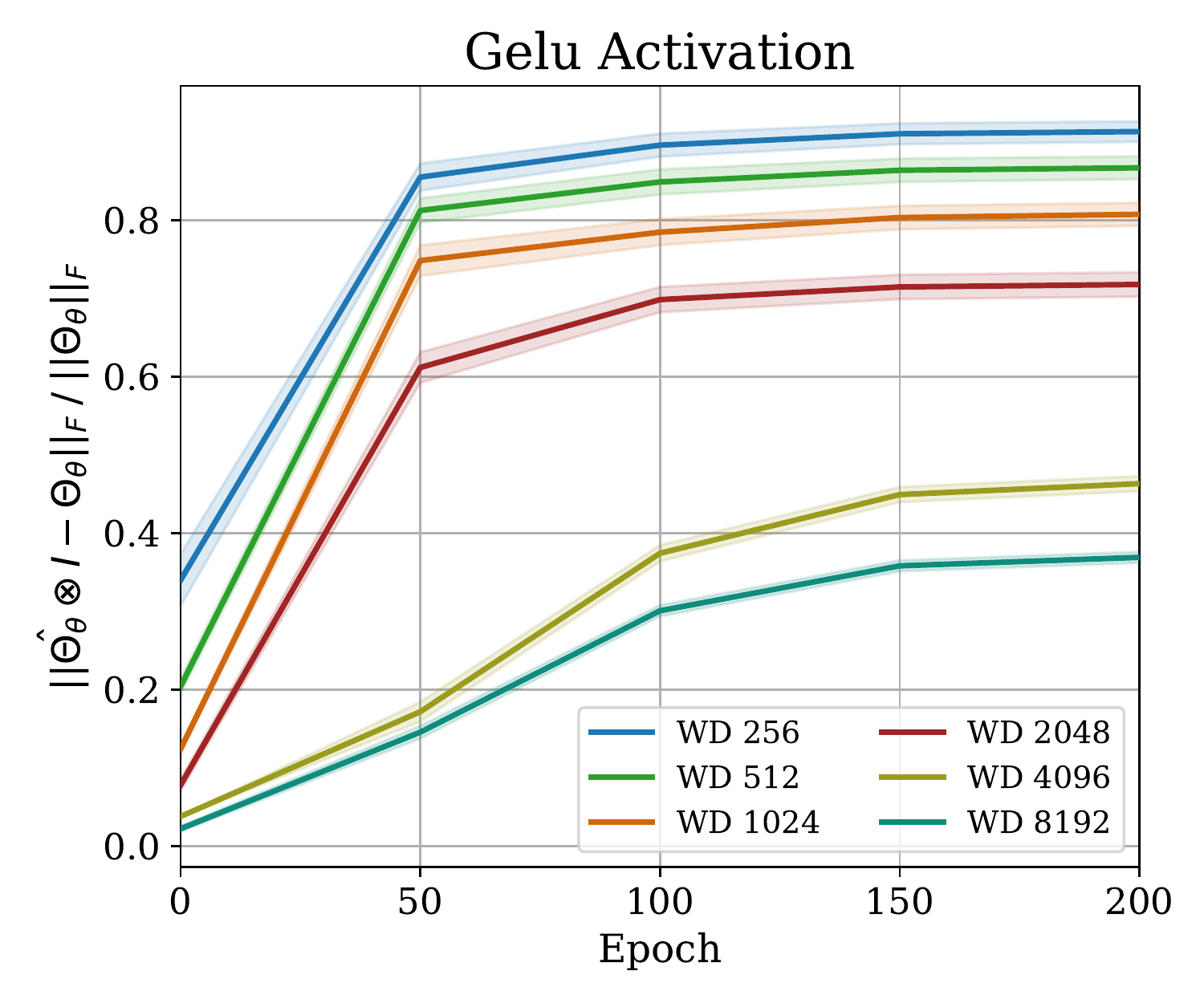}
    \end{subfigure}
    \hfill
    \begin{subfigure}[b]{0.24\textwidth}
        \includegraphics[width=\textwidth]{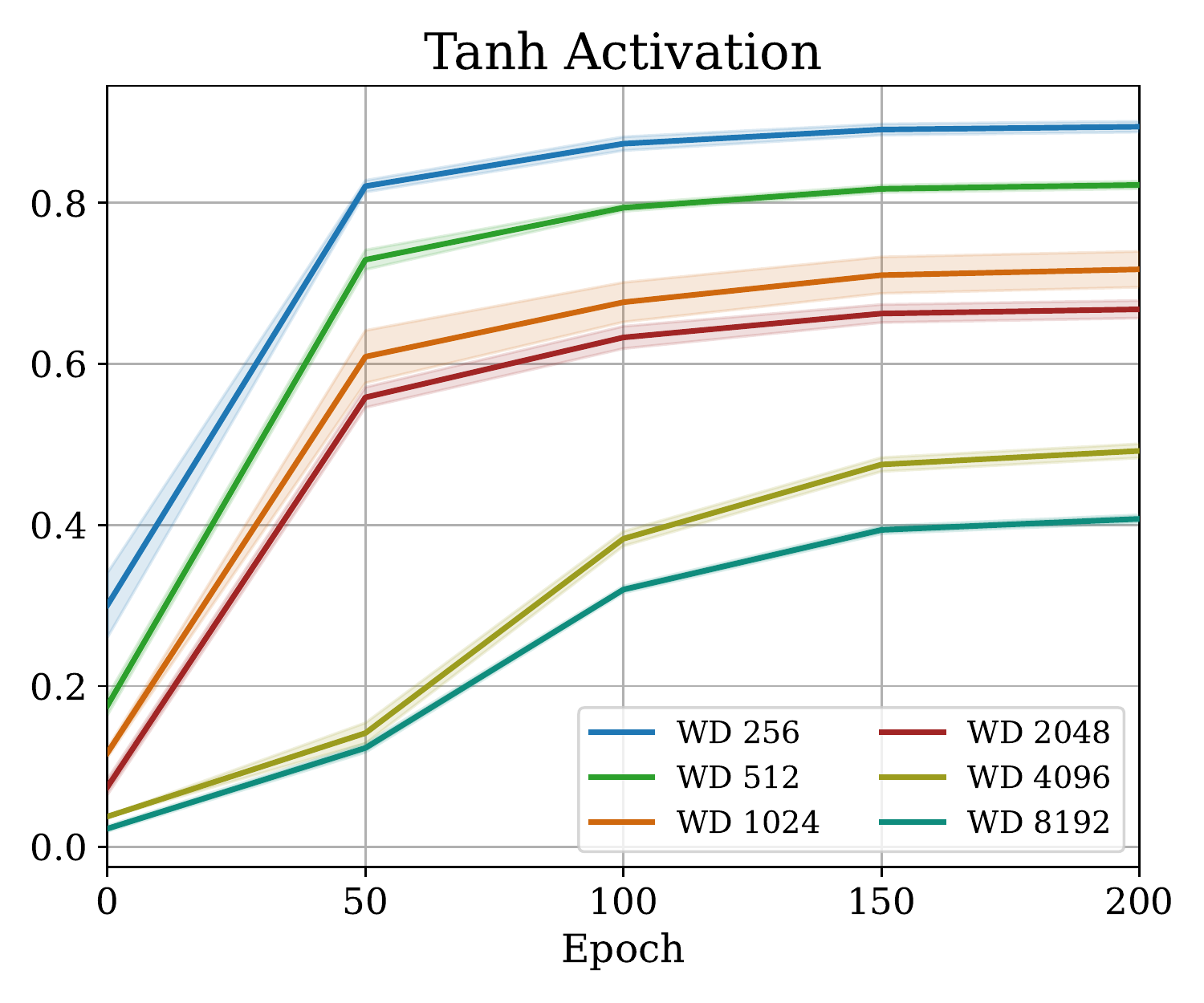}
    \end{subfigure}
    \hfill
    \begin{subfigure}[b]{0.24\textwidth}
        \includegraphics[width=\textwidth]{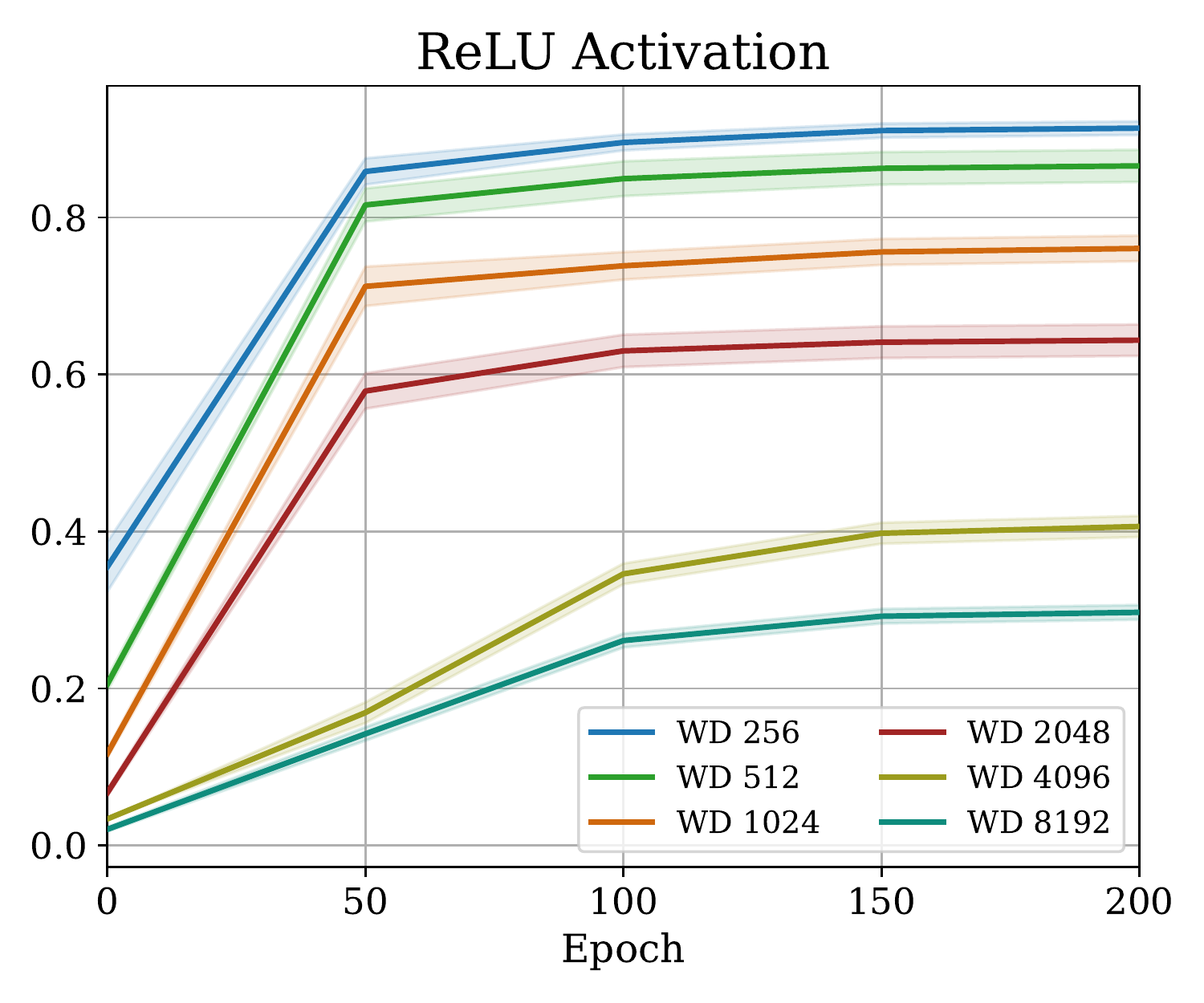}
    \end{subfigure}
    \hfill
    \begin{subfigure}[b]{0.24\textwidth}
        \includegraphics[width=\textwidth]{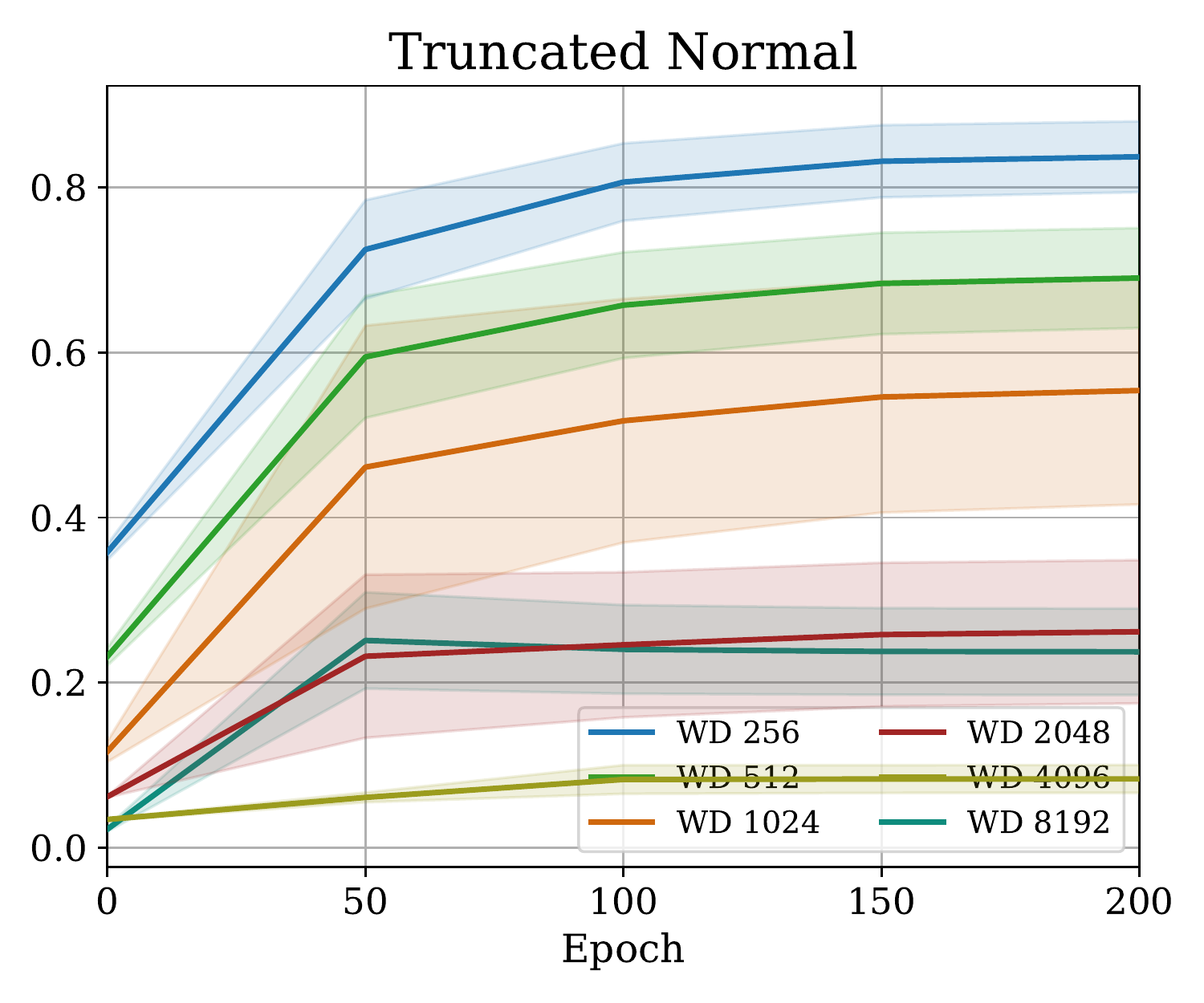}
    \end{subfigure}
    \caption{\textcolor{black}{Evaluating the \textbf{relative difference of Frobenius norm of $\meNTK(\dset, \dset)$ and $\mpNTK(\dset, \dset) \otimes I_O$} at initialization and throughout training, based on $1000$ points from CIFAR-10.}}
    \label{fig:ablation_normdiff}
\end{figure*}

\begin{figure*}[t!]
    \centering
    \begin{subfigure}[b]{0.24\textwidth}
        \includegraphics[width=\textwidth]{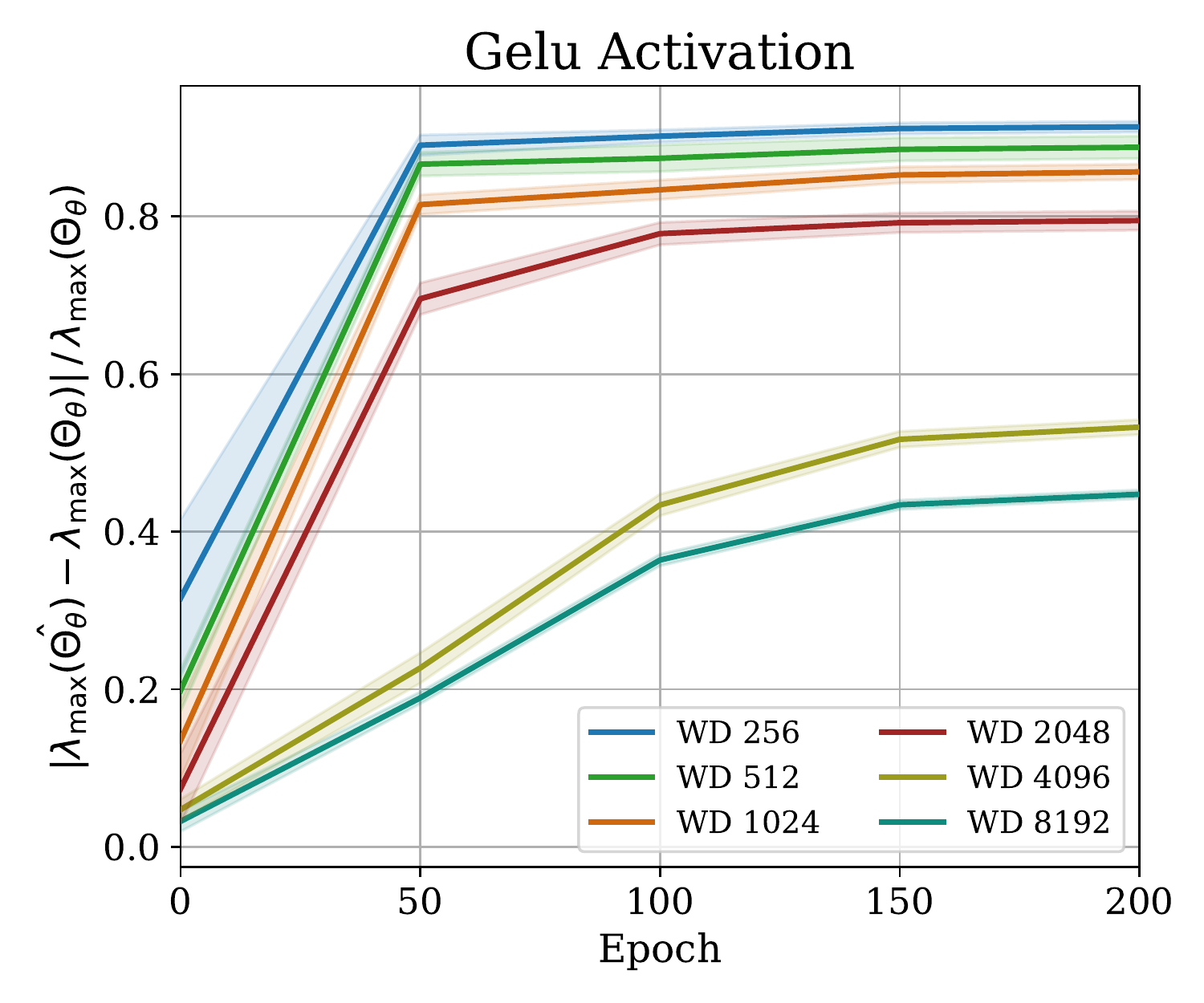}
    \end{subfigure}
    \hfill
    \begin{subfigure}[b]{0.24\textwidth}
        \includegraphics[width=\textwidth]{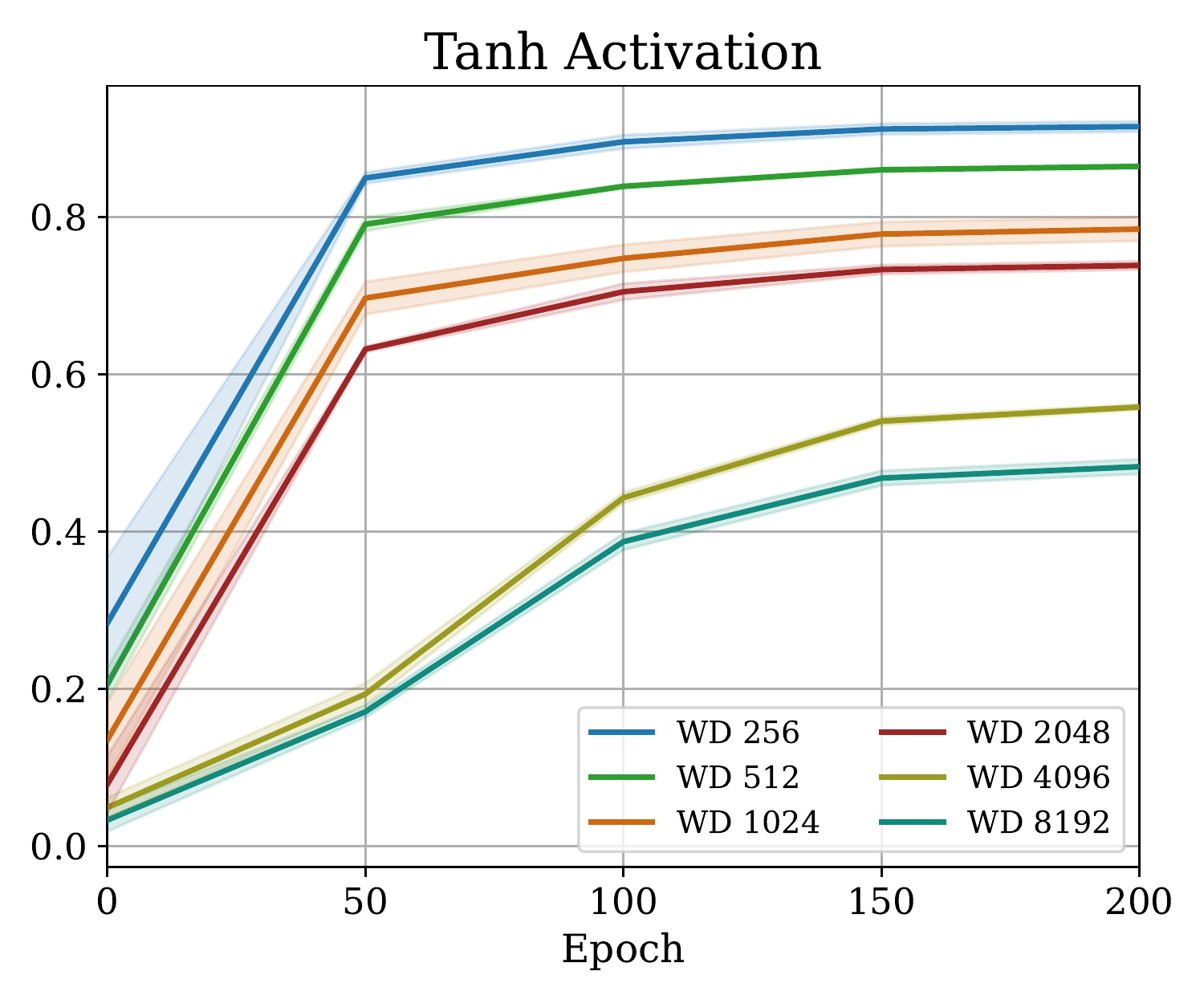}
    \end{subfigure}
    \hfill
    \begin{subfigure}[b]{0.24\textwidth}
        \includegraphics[width=\textwidth]{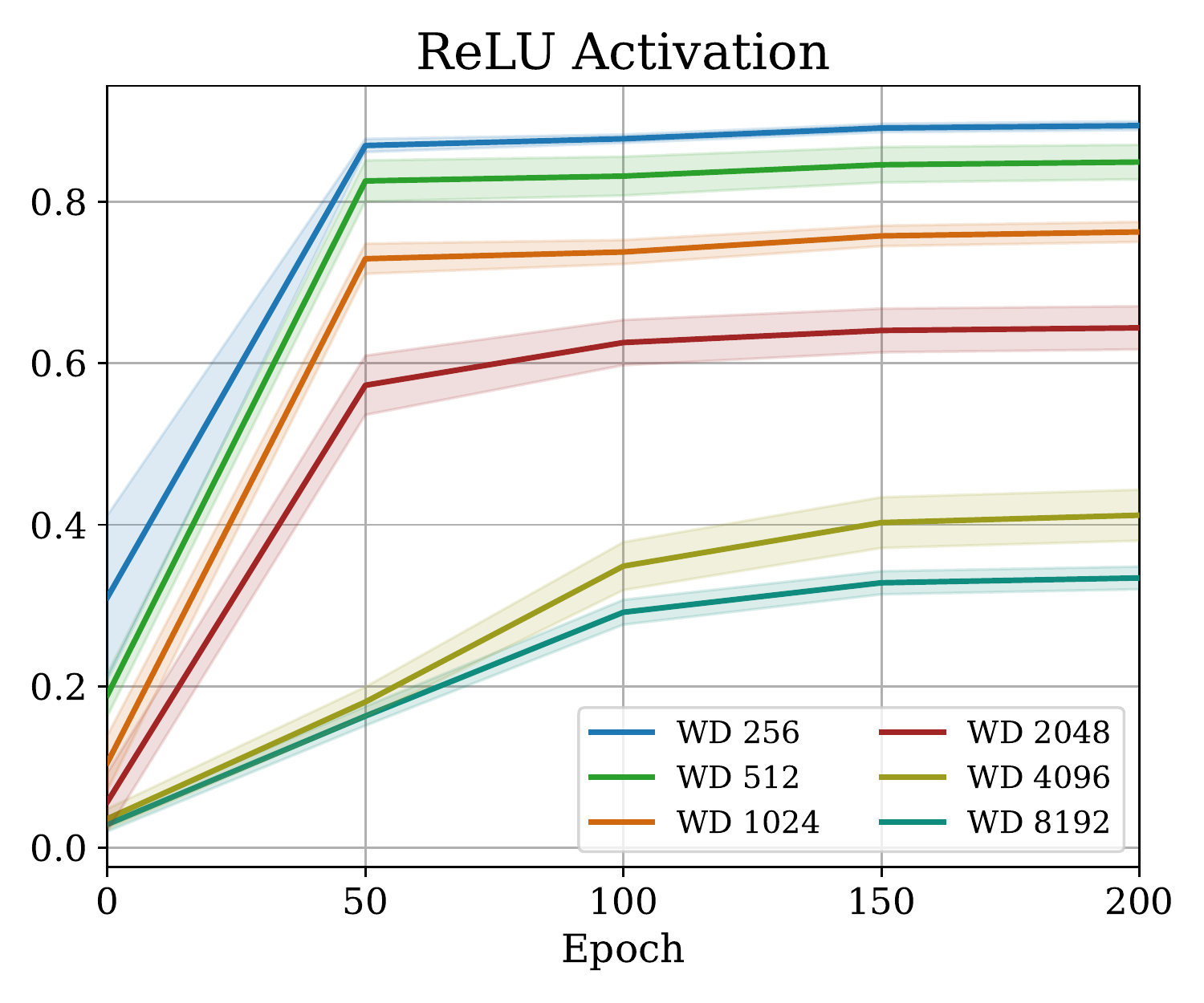}
    \end{subfigure}
    \hfill
    \begin{subfigure}[b]{0.24\textwidth}
        \includegraphics[width=\textwidth]{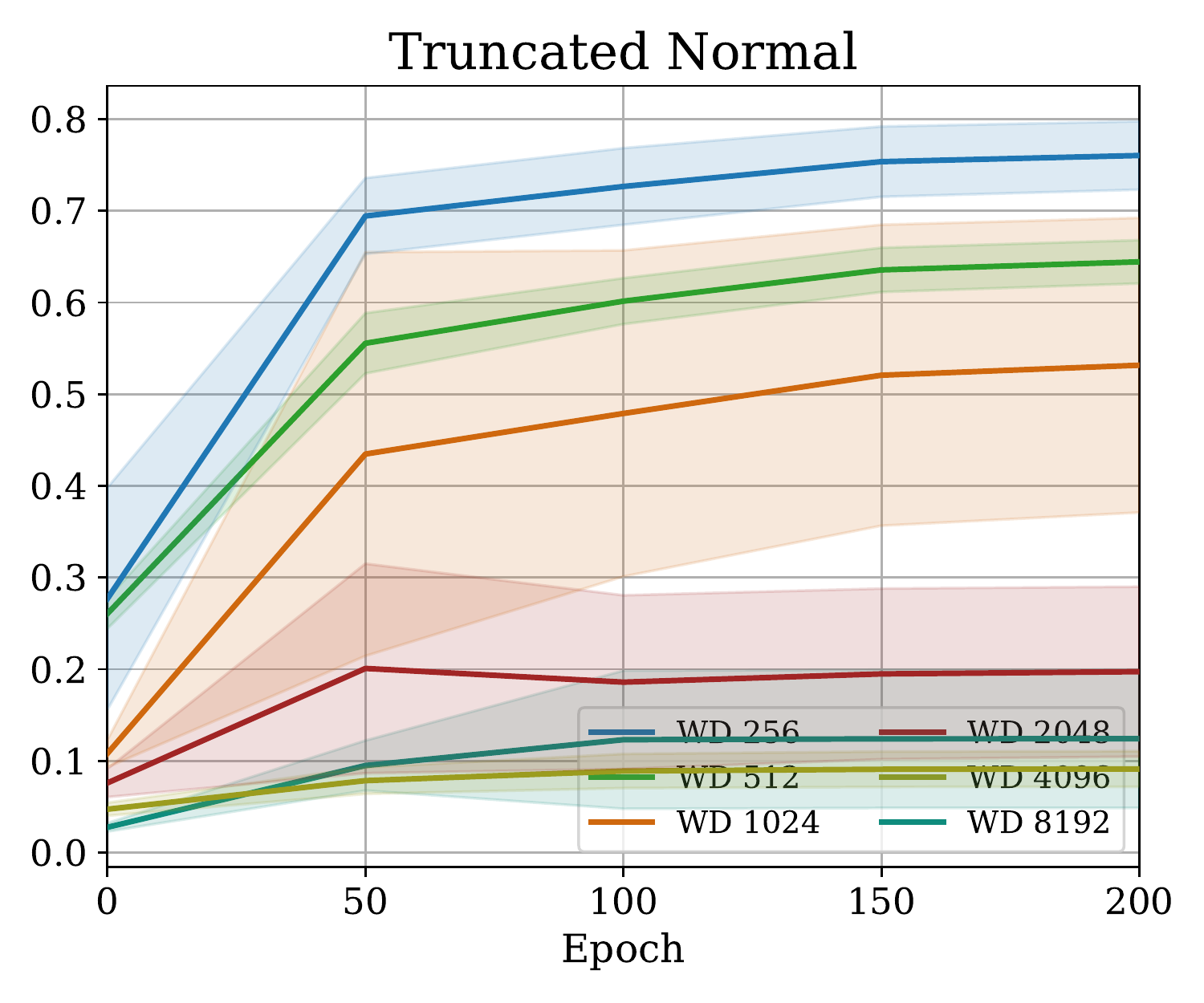}
    \end{subfigure}
    \caption{\textcolor{black}{Evaluating the \textbf{relative difference of $\lambda_{\max}$ of $\meNTK(\dset, \dset)$ and $\mpNTK(\dset, \dset)$} at initialization and throughout training, based on kernels on a subset ($\abs{\dset} = 1000$) of points from CIFAR-10.}}
    \label{fig:ablation_maxeigval_diff}
\end{figure*}

\begin{figure*}[t!]
    \centering
    \begin{subfigure}[b]{0.24\textwidth}
        \includegraphics[width=\textwidth]{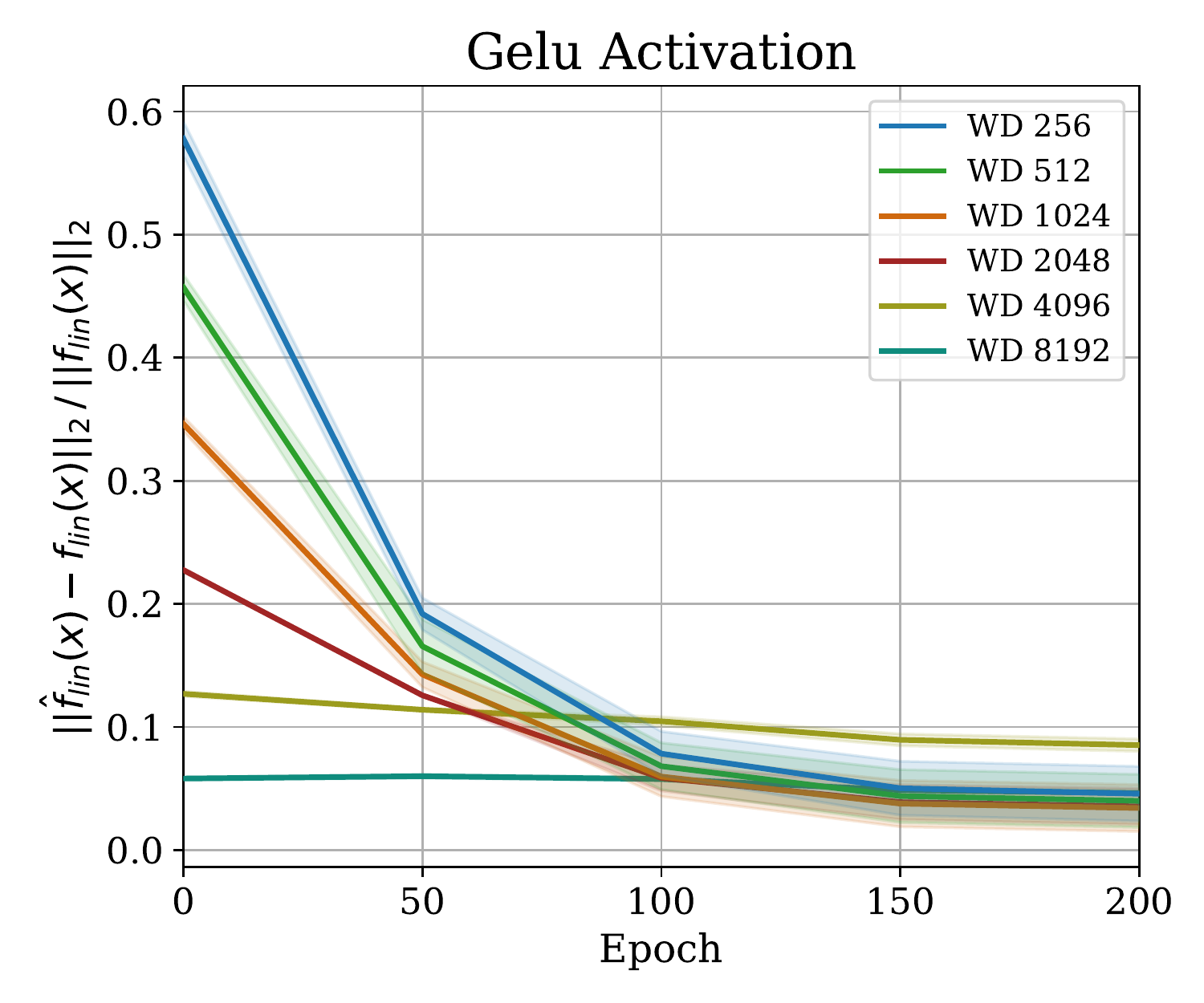}
    \end{subfigure}
    \hfill
    \begin{subfigure}[b]{0.24\textwidth}
        \includegraphics[width=\textwidth]{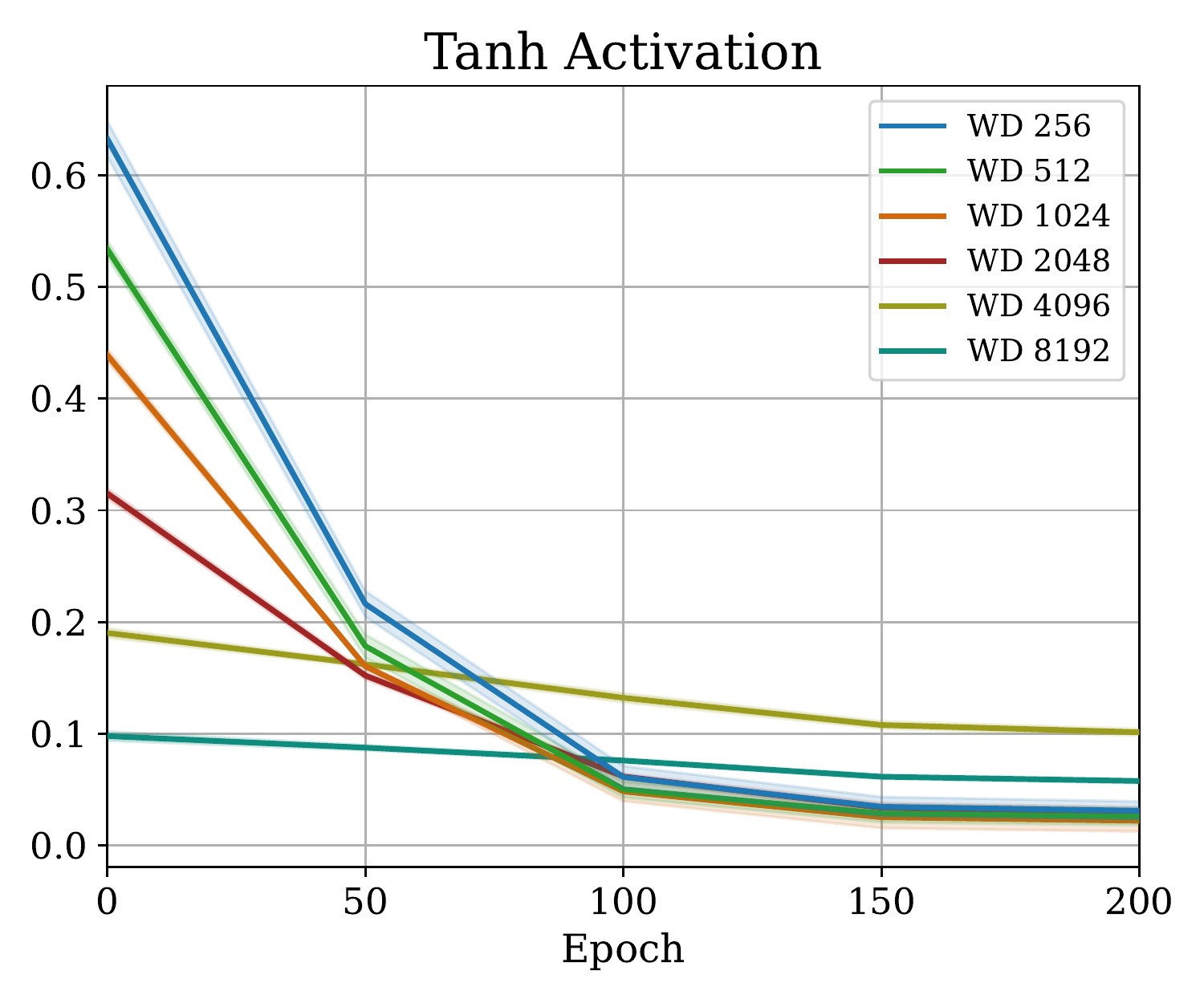}
    \end{subfigure}
    \hfill
    \begin{subfigure}[b]{0.24\textwidth}
        \includegraphics[width=\textwidth]{figs/appendix_figs/nonlin_and_init_experiments/gelu/gelu_preddiff.pdf}
    \end{subfigure}
    \hfill
    \begin{subfigure}[b]{0.24\textwidth}
        \includegraphics[width=\textwidth]{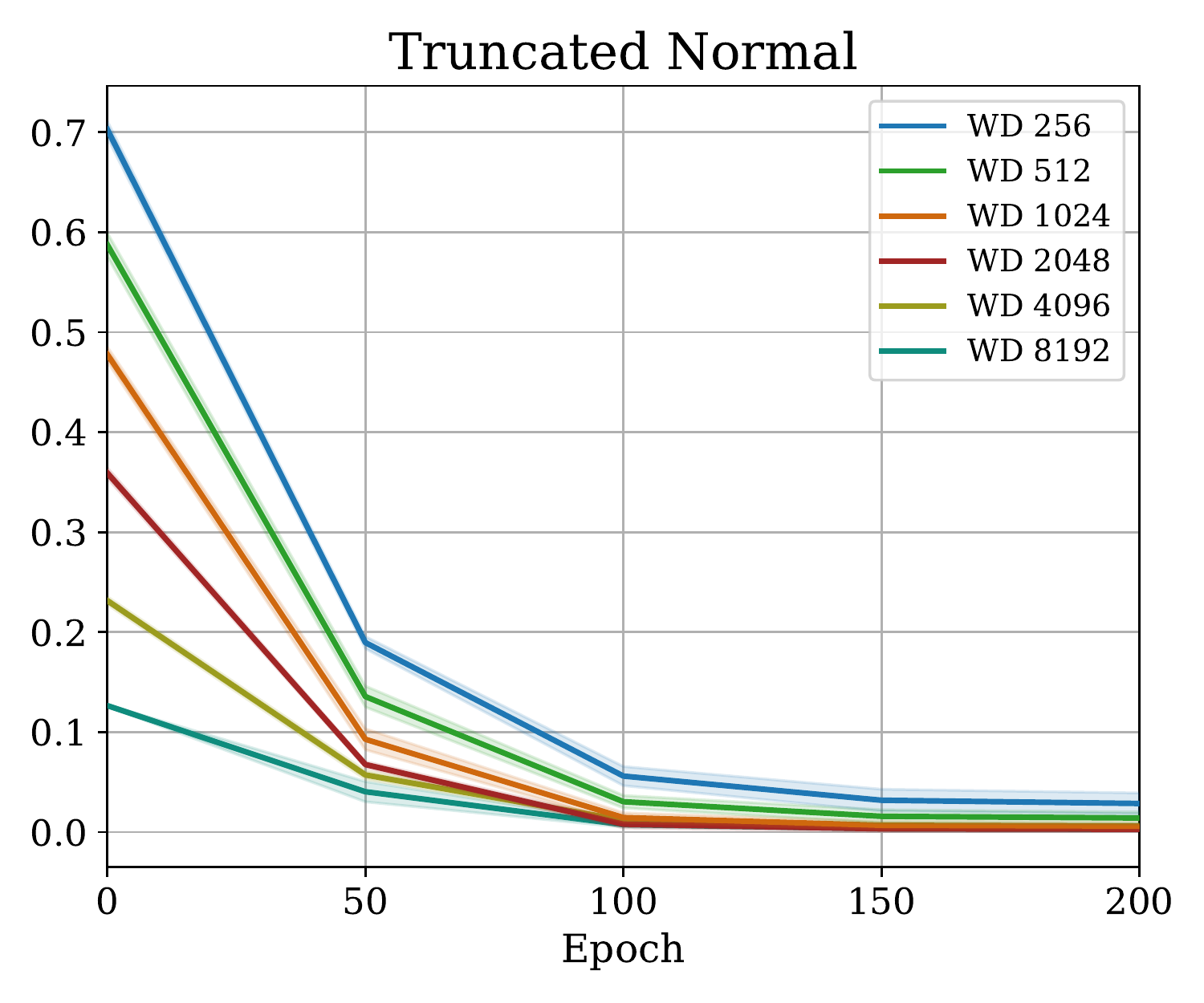}
    \end{subfigure}
    \caption{\textcolor{black}{Evaluating the \textbf{relative norm difference of kernel regression outputs using \eNTK~and \pNTK~as in \cref{eq:f_lin} and \cref{eq:pseudo_lin}} at initialization and throughout training. The kernel regression has been done on $\abs{\dset} = 1000$ training points and $\abs{\cX} = 500$ test points randomly selected from CIFAR-10's train and test sets.}}
    \label{fig:ablation_prediction_diff}
\end{figure*}

\section{Extending the Proofs to Other Architectures} \label{app:other-archs}
We start our description of how to extend the proofs to other architectures by providing a sketch on how the dense weight vectors can be replaced by other layers of choice like convolutions. First, note how the linear weights are used in \cref{eq:supp:ntk_chain_expansion}. As mentioned in Section 6 of \citet{yang2020tensor}, we can accordingly write the same expansion for other forward computational graphs and derive the corresponding canonincal decomposition for them. In subsection 6.2.1, \citet{yang2020tensor} provides a concrete example on how one can derive this expansion for a general RNN-like architecture. As the proofs provided in this section depend on the MLP structure only by means of the canonical decomposition, one can extend them to a general architecture by deriving the corresponding canonical decomposition of that architecture.

\textbf{Non-Gaussian Weights}: According to the strategy used in the proofs, we need the individual weights to be distributed such that the product of two independent scalar weights (as in \cref{eq:supp:ntk_chain_expansion}) remain sub-exponential. Hence, any sub-gaussian initialization method, such as any bounded initialization (\eg\ truncated normal or uniform on an interval) can be used, and the same proof structure would support the same convergence rate, albeit with different constants in convergence (independent of $n$).

\textbf{Non-ReLU activations}: In general, the proofs rely on the ReLU activation through \cref{app:Lemma_relu_post_activation}, which gives a concentration bound on the absolute value of the dot product of post-activations of each layer of the NN. To use other nonlinearities, we would only need an analagous result for that nonlinearity; the other proofs follow without requiring any other significant change.

\textbf{Experimental Evaluation}: To provide further experimental support for this argument, we have conducted an ablation study on the FCN architecture with different nonlinearities and with truncated Gaussian initialization (\cref{fig:ablation_diagonality,,fig:fro_norm_diff,,fig:ablation_maxeigval_diff,,fig:ablation_prediction_diff}). As seen in the provided figures, the impact of nonlinearity and initialization method as long as they follow the provided setting in \cref{sett:supp:relu_mlp_setting}, is marginal.

\section{More Details on Kernel Regression Using \pNTK~on Full CIFAR-10 Dataset}
\label{app:full-cifar}

\begin{figure*}[t]
    \centering
    \begin{subfigure}[b]{0.24\textwidth}
        \includegraphics[width=\textwidth]{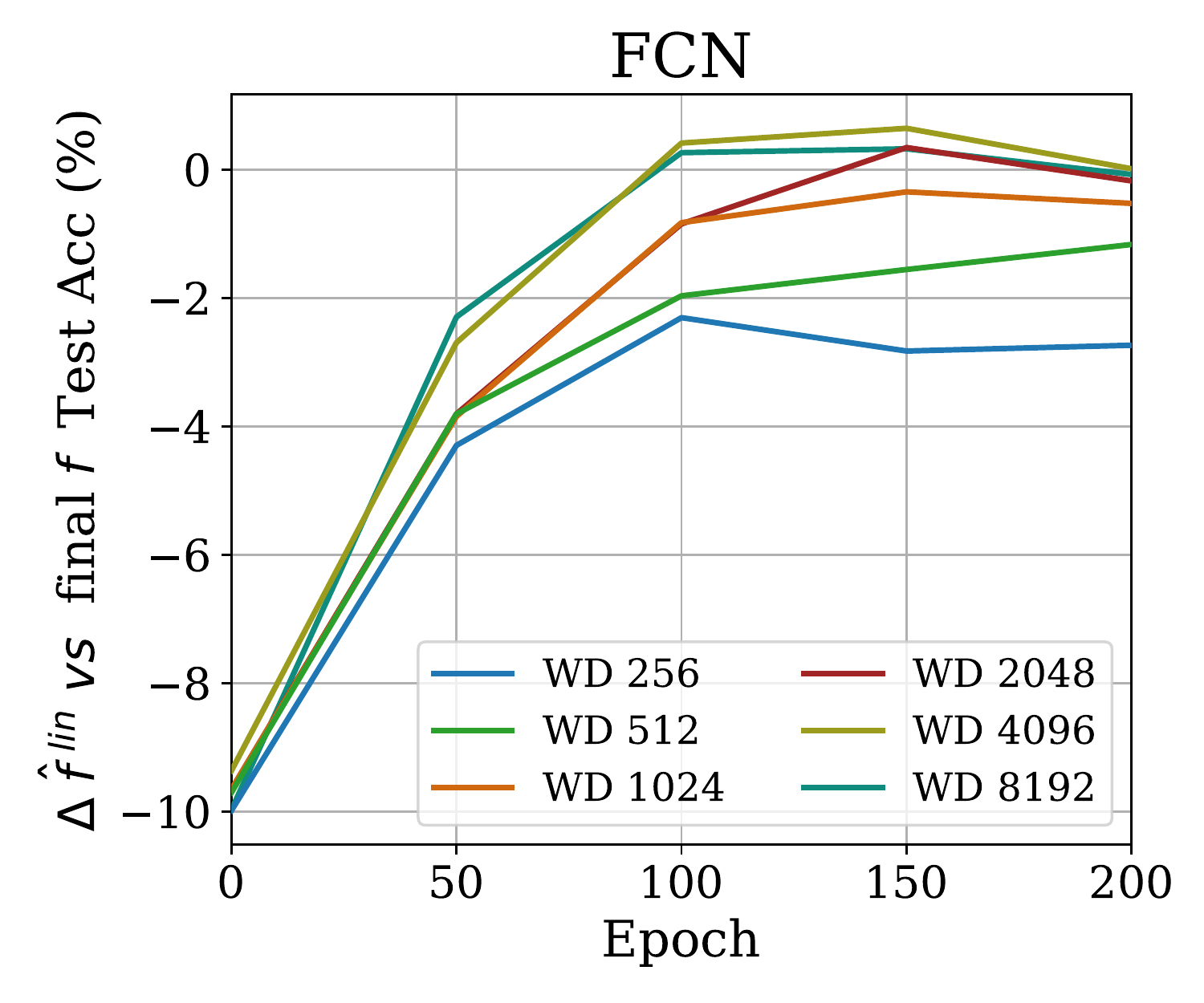}
    \end{subfigure}
    \hfill
    \begin{subfigure}[b]{0.24\textwidth}
        \includegraphics[width=\textwidth]{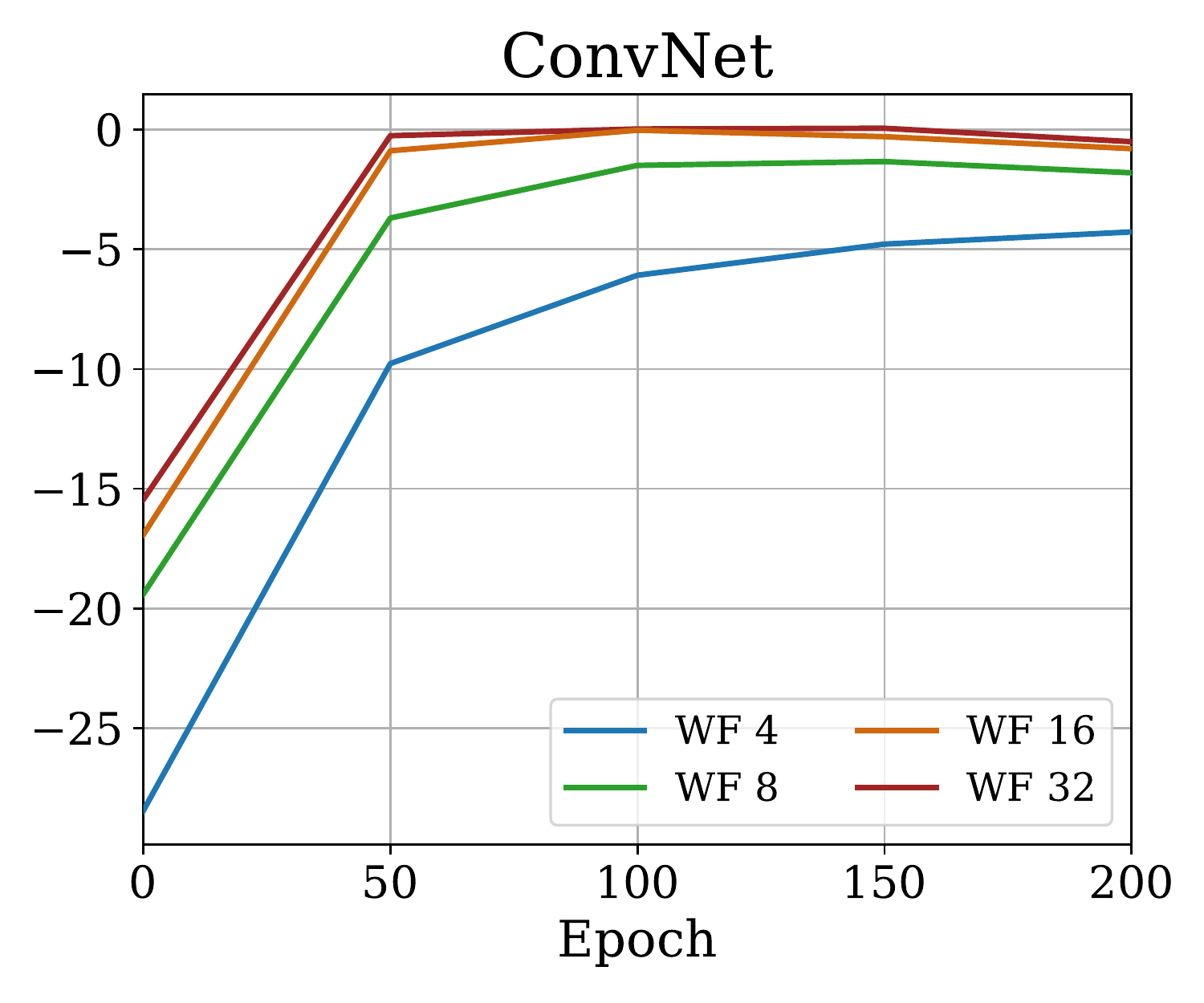}
    \end{subfigure}
    \hfill
    \begin{subfigure}[b]{0.24\textwidth}
        \includegraphics[width=\textwidth]{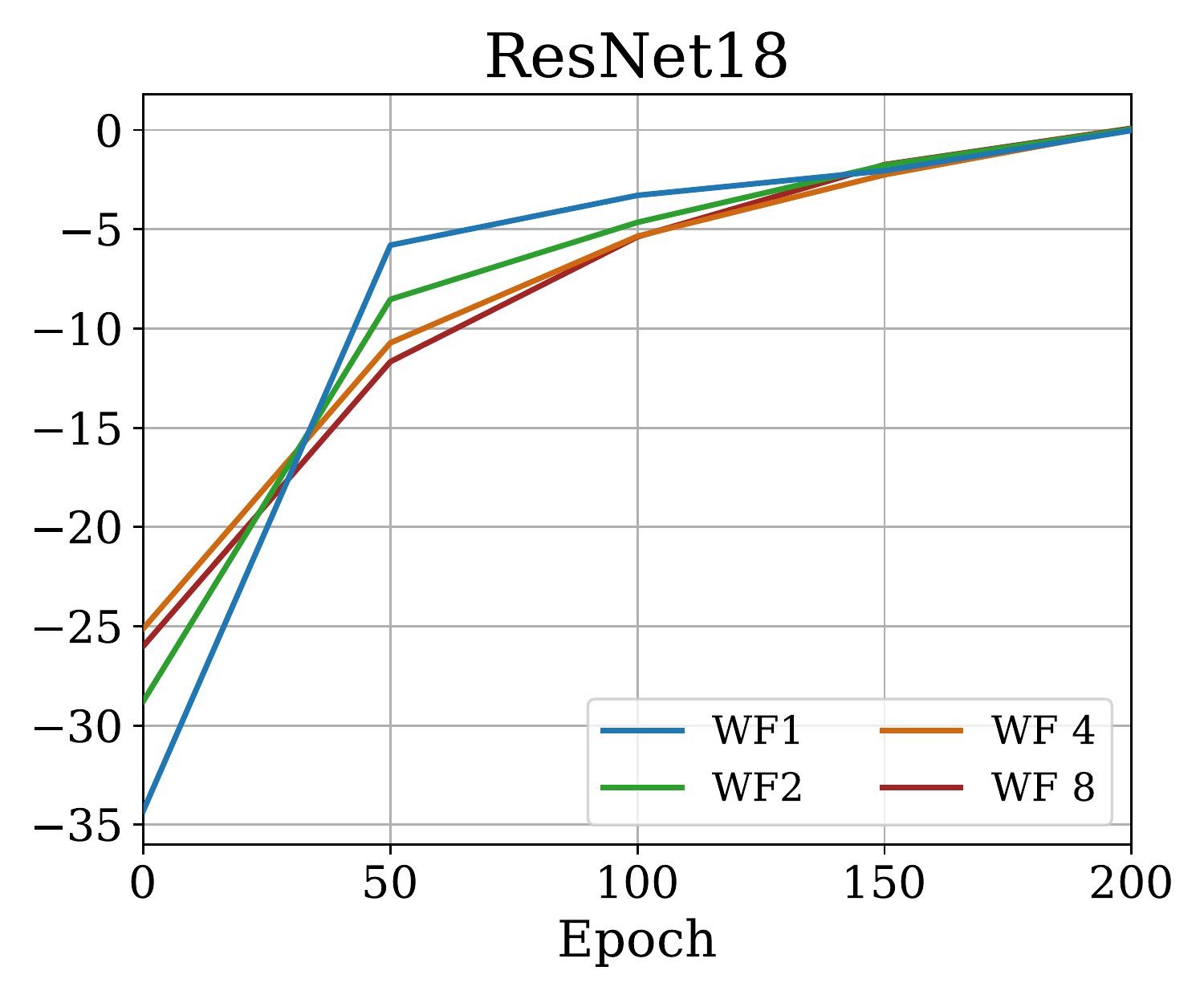}
    \end{subfigure}
    \hfill
    \begin{subfigure}[b]{0.24\textwidth}
        \includegraphics[width=\textwidth]{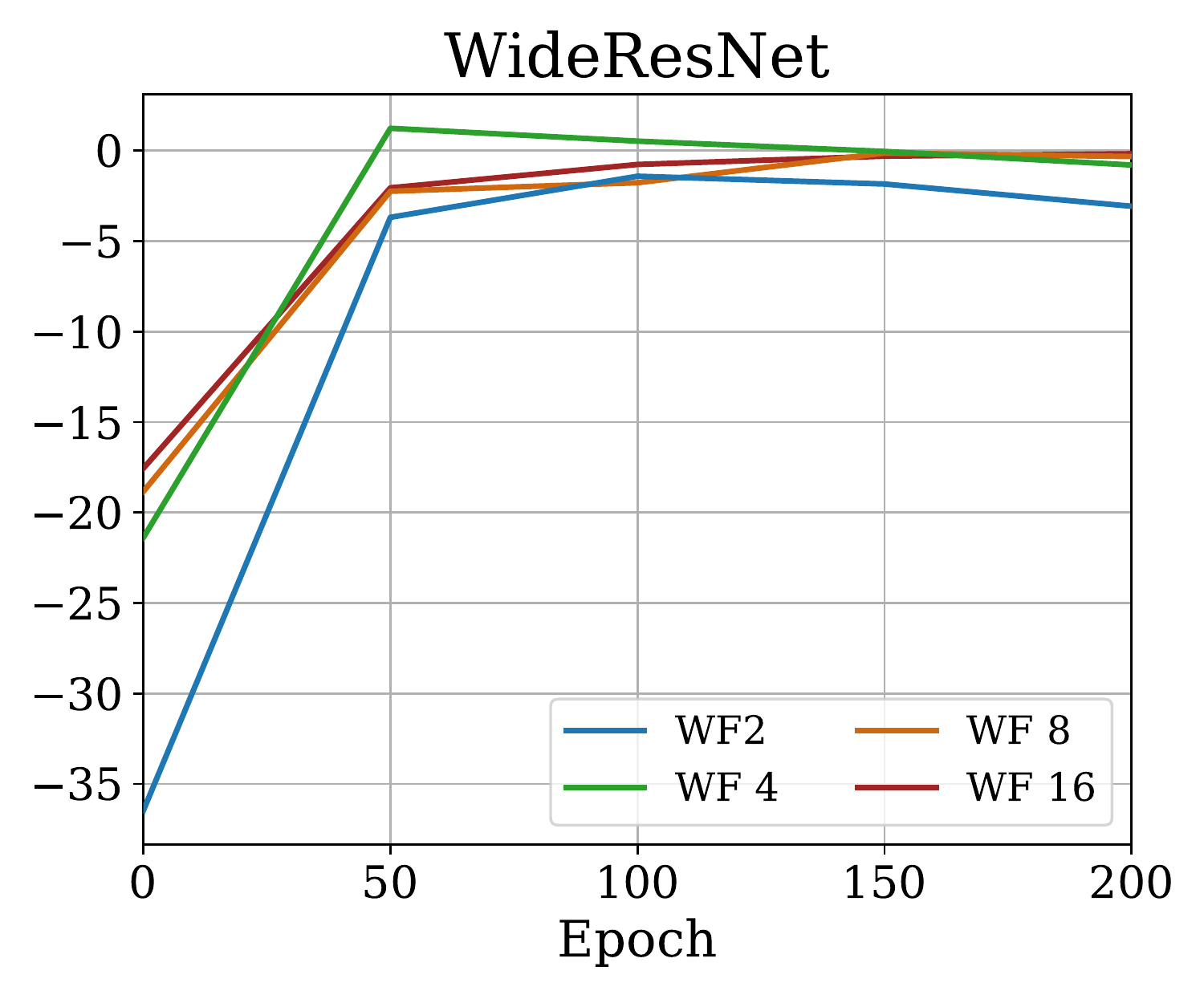}
    \end{subfigure}
    \caption{Evaluating the \textbf{difference in test accuracy of kernel regression using \pNTK~as in \eqref{eq:pseudo_lin} vs the final model $f$} throughout SGD training on the full CIFAR-10 dataset. How much worse would it be to ``give up'' on SGD at this point and train $\hat f^\mathit{lin}$ with the current representation?}
    \label{fig:final_f_vs_fhat_prediction_full_cifar10_acc}
\end{figure*}

\Cref{fig:final_f_vs_fhat_prediction_full_cifar10_acc} compares the accuracy of $\hat f^\mathit{lin}(x)$ with parameters derived at epoch $E \in \{0, 50, 100, 150, 200\}$ of training the NN with SGD. On the y-axis, the reported number is $f^\mathit{lin}(x) - f^*(x)$ where $f^*$ denotes the final model obtained after training $f$ for 200 epochs. As seen in \cref{fig:final_f_vs_fhat_prediction_full_cifar10_acc} the architecture of the model has a significant impact on how good the linearization predicts the final accuracy of the fully-trained model. However, as proven in \cref{theorem:pntk_fro_norm} in conjunction with the linearization approximations provided in \citet{linntk2019lee}, as width grows, this approximation becomes more accurate. One unexplored fact regarding this experiment is that fact that lineraization with trained parameters significantly outperforms linearization at initialization, which is intuitive but not rigorously investigated yet.

\section{Experimental Evaluation: Tightness of bounds}
\Cref{fig:entk_vs_pntk_tight} presents experimental evaluations that analyze the tightness of the approximation bound. The results are presented for the fully connected network used in the experiment.
\begin{figure*}[t]
    \centering
    \begin{subfigure}[b]{0.24\textwidth}
        \includegraphics[width=\textwidth]{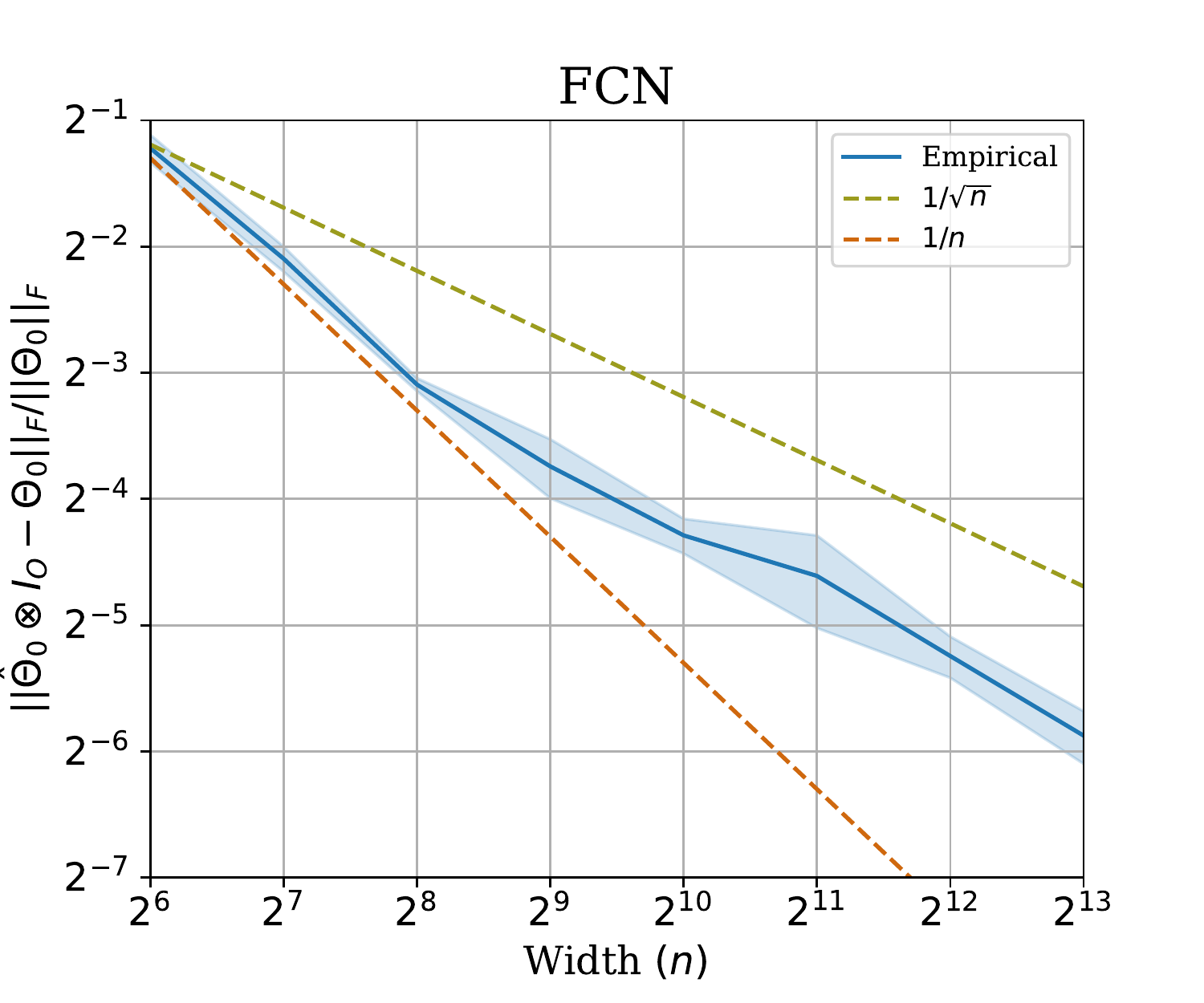}
    \end{subfigure}
    \hfill
    \begin{subfigure}[b]{0.24\textwidth}
        \includegraphics[width=\textwidth]{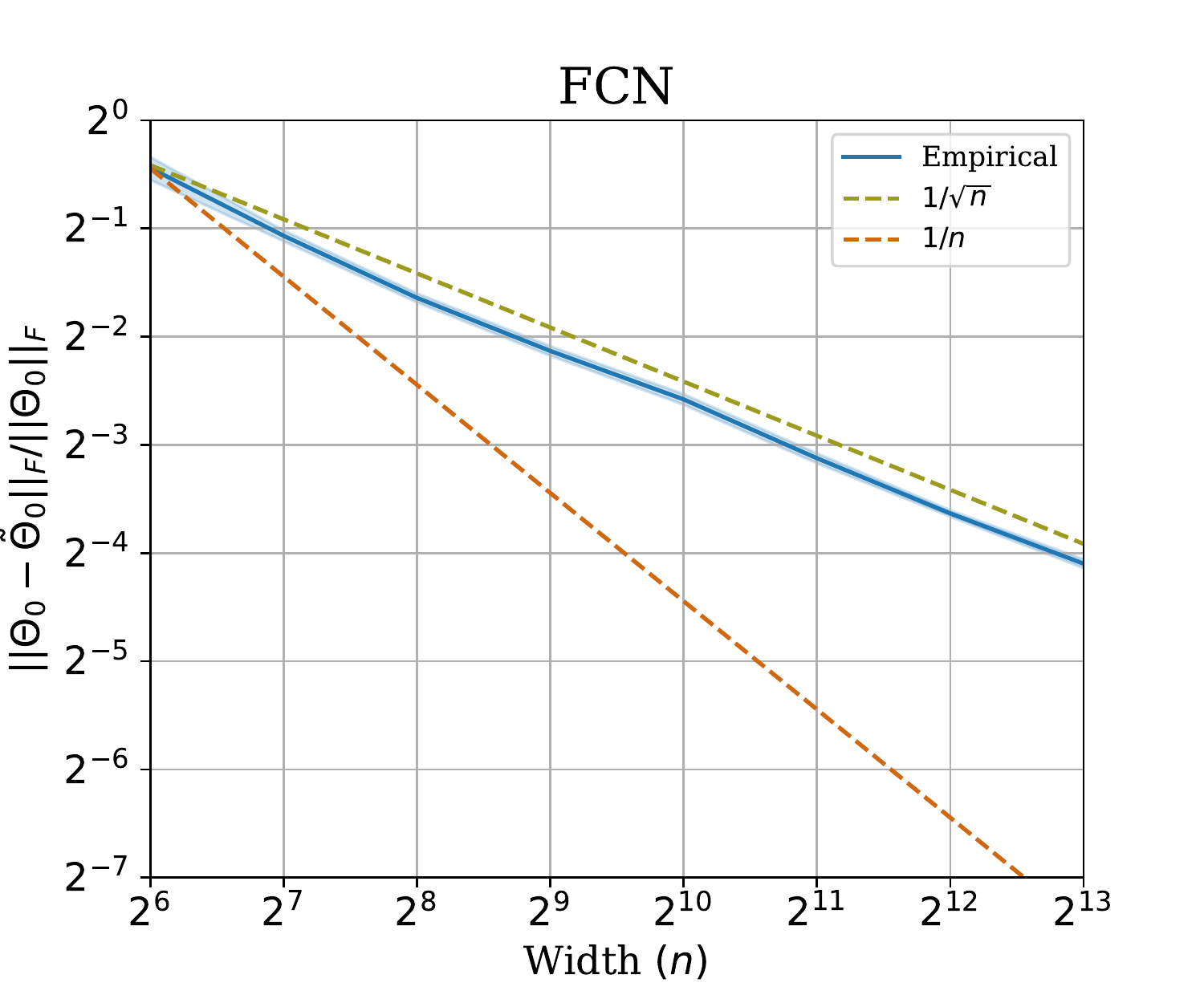}
    \end{subfigure}
    \hfill
    \begin{subfigure}[b]{0.24\textwidth}
        \includegraphics[width=\textwidth]{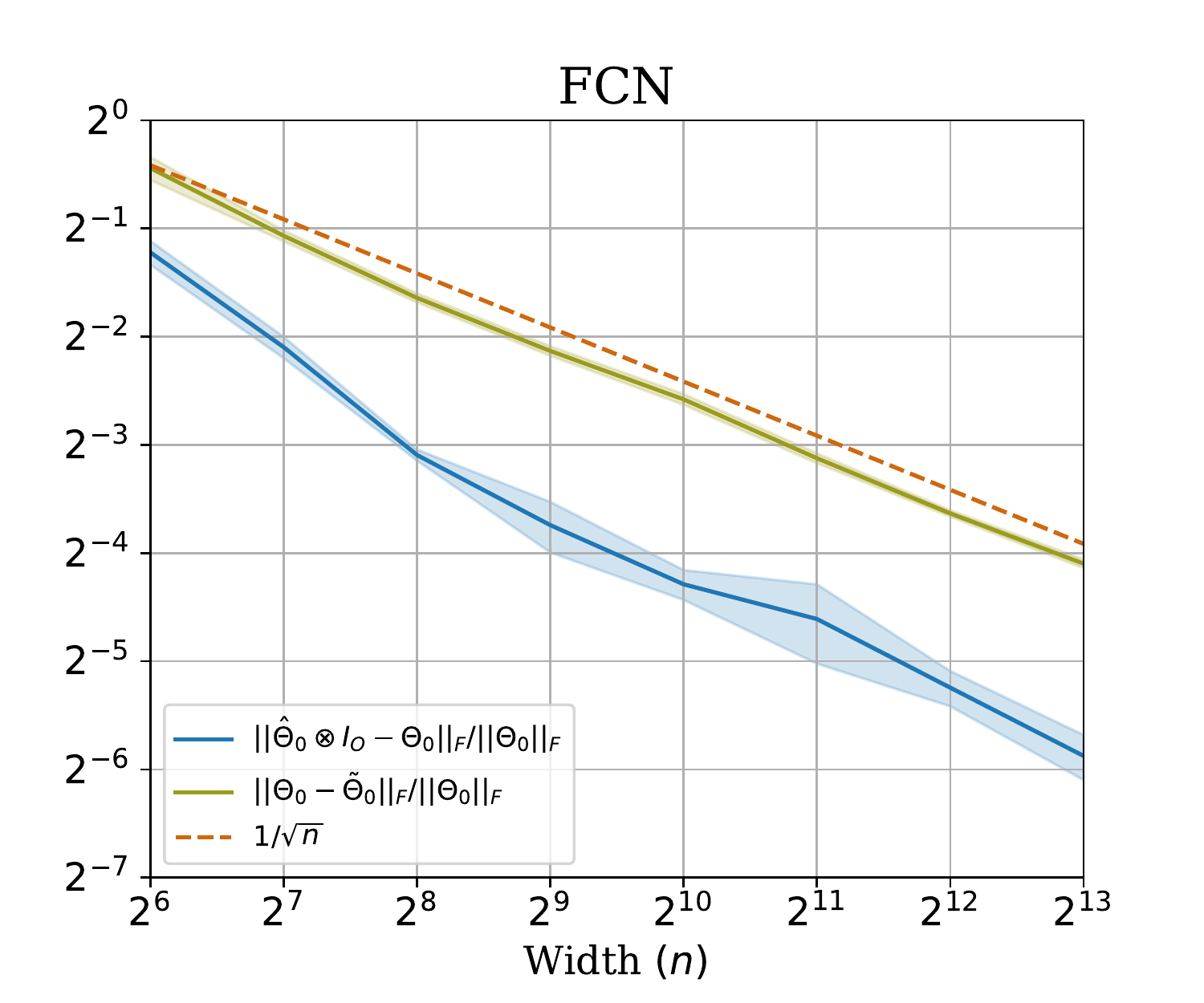}
    \end{subfigure}
    \hfill
    \begin{subfigure}[b]{0.24\textwidth}
        \includegraphics[width=\textwidth]{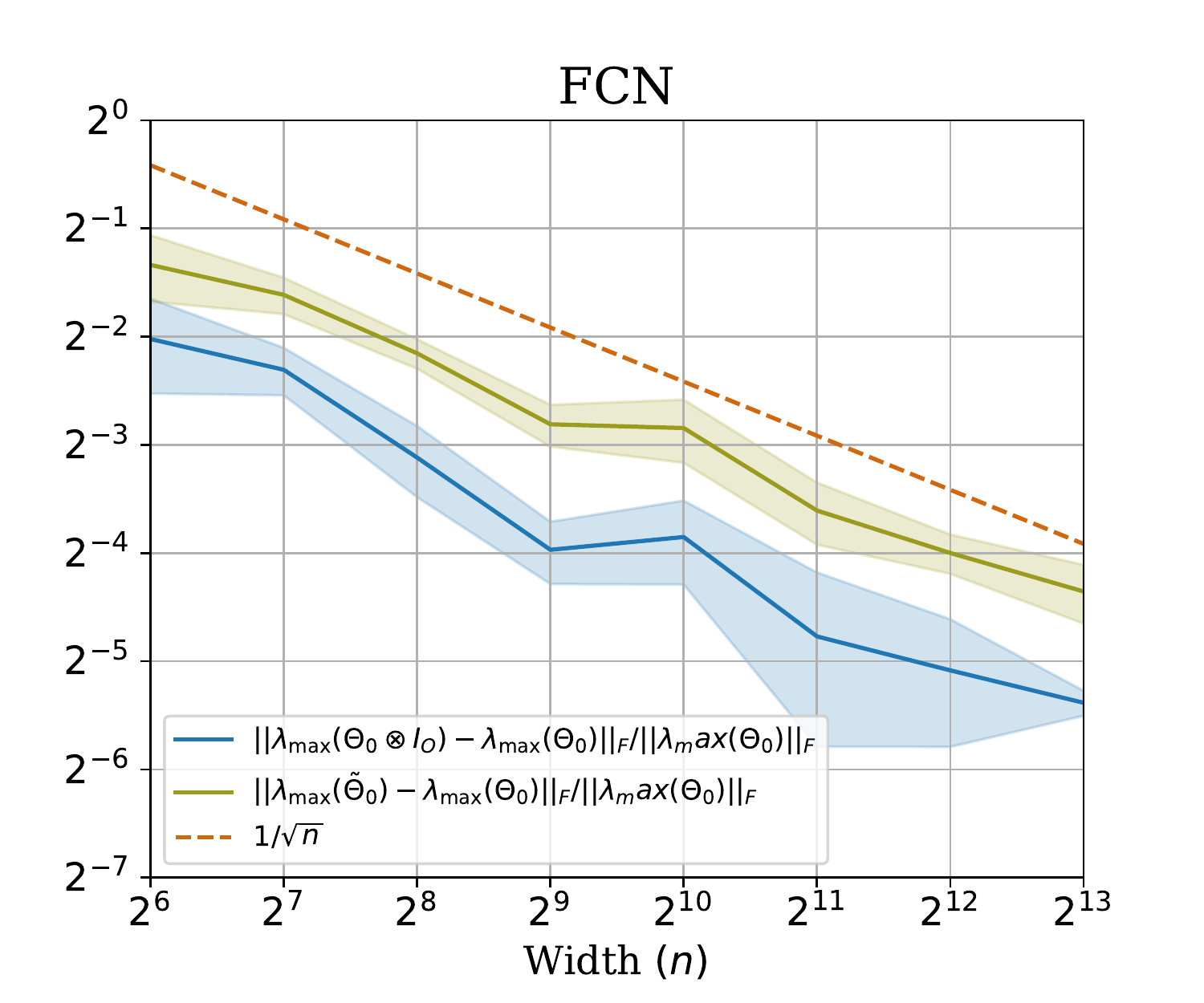}
    \end{subfigure}
    \caption{Experimental evaluation of tightness of approximation bounds}
    \label{fig:entk_vs_pntk_tight}
\end{figure*}

\end{document}